\documentclass[twoside]{article}

\usepackage[accepted]{aistats2026}
\usepackage[round]{natbib}

\usepackage[utf8]{inputenc} 
\usepackage[T1]{fontenc}    
\usepackage{hyperref}       
\usepackage{url}            
\usepackage{booktabs}       
\usepackage{amsfonts}       
\usepackage{nicefrac}       
\usepackage{microtype}      
\usepackage{xcolor}         
\usepackage{bm}

\usepackage{graphicx}
\usepackage{dsfont}
\usepackage{subcaption}
\usepackage{wrapfig}
\usepackage{multirow}
\usepackage{adjustbox}

\newcommand{\mP}{\mathbb{P}}
\newcommand{\mE}{\mathbb{E}}

\newcommand{\mV}{\mathrm{Var}}

\newcommand{\fhat}{\widehat{f}}
\newcommand{\ghat}{\widehat{g}}
\newcommand{\gtilde}{\widetilde{g}}

\newcommand{\lb}{\left[}
\newcommand{\rb}{\right]}
\newcommand{\lp}{\left(}
\newcommand{\rp}{\right)}
\newcommand{\tr}{\mathrm{tr}}

\newcommand{\given}{\,|\,}

\newcommand{\convP}{\overset{P}{\longrightarrow}}
\newcommand{\convD}{\overset{D}{\longrightarrow}}

\usepackage{amsmath}
\usepackage{amssymb}
\usepackage{mathtools}
\usepackage{amsthm}

\newcommand{\xssMMD}{\mathrm{xss}\widehat{\mathrm{MMD}}^2}
\newcommand{\xMMD}{\mathrm{x}\widehat{\mathrm{MMD}}^2}

\usepackage[capitalize,noabbrev]{cleveref}
\theoremstyle{plain}
\newtheorem{theorem}{Theorem}[section]

\newtheorem{lemma}[theorem]{Lemma}
\newtheorem{corollary}[theorem]{Corollary}
\theoremstyle{definition}

\newtheorem{assumption}[theorem]{Assumption}
\theoremstyle{remark}
\newtheorem{remark}[theorem]{Remark}

\begin{document}

%

%

\twocolumn[

\aistatstitle{A Semi-Supervised Kernel Two-Sample Test}

\aistatsauthor{ Gyumin Lee \And Shubhanshu Shekhar \And  Ilmun Kim }

\aistatsaddress{ Dept. of Statistics \\ Pennsylvania State University \And  Dept. of EECS \\ University of Michigan \And  Dept. of Mathematical Sciences \\ KAIST } ]

\begin{abstract}
  We consider the problem of two-sample testing in a semi-supervised setting with abundant unlabeled covariate data. Standard two-sample tests neglect covariate information, which has the potential to significantly boost performance. However, incorporating covariates potentially breaks the exchangeability assumption under the null, which further complicates a calibration procedure. To address these issues, we propose a semi-supervised method that produces a test statistic with asymptotic normality, while effectively integrating additional information from covariates. Our test is straightforward to calibrate due to the asymptotic normality under the null and achieves asymptotic power that is often much higher than existing kernel tests without covariates. Furthermore, we formally show that the proposed method is consistent in power against fixed and local alternatives. Simulations confirm the practical and theoretical strengths of our approach.
\end{abstract}

\section{\MakeUppercase{Introduction}}\label{section 1}

In recent years, the realm of statistics and machine learning has seen notable progress in the development of semi-supervised methodologies that exploit both labeled and unlabeled data. These techniques present promising options for addressing numerous issues where labeled data are scarce or costly to gather, while large quantities of unlabeled data are typically available. The integration of both data types in semi-supervised learning has drawn considerable interest for its efficacy in enhancing predictive modeling for various tasks, including classification and regression \citep[e.g.,][]{chapelle2006semi,zhu2008semi,van2020survey}. These methods are widely adopted across various domains. In healthcare, for example, obtaining sufficient labeled medical data is challenging due to privacy concerns and the rarity of certain diseases \citep[e.g.,][]{han2024deep, Jiao2024learning, chebli2018semi}. Medical image annotation also demands substantial time and effort. Similar challenges occur in cyber-security \citep[e.g.,][]{mvula2024survey, watkins2017using}, drug discovery \citep[e.g.,][]{zhang2018seq3seq}, and part-of-speech tagging \citep[e.g.,][]{wang2007semi}. In such fields, semi-supervised learning has allowed practitioners to leverage unlabeled data for more accurate predictions.

Traditionally, semi-supervised learning research has focused on improving classification performance. However, more recent work has expanded its scope to address a wider range of problems, including tasks in statistical estimation and inference. A central goal of this body of work is to use unlabeled data to improve statistical methods when labeled data are scarce but unlabeled data are abundant. In line with this direction, our objective in this paper is to adapt and extend traditional two-sample testing to a semi-supervised setting to effectively incorporate unlabeled data. 

The goal of two-sample testing is to determine whether two samples originate from the same underlying distribution. Numerous methods have been proposed to address this problem \citep[see][for a recent review]{stolte2024methods}. Among them, a popular technique leverages the kernel-based framework introduced by \citet{gretton2012kernel}. This method employs an estimator of the kernel maximum mean discrepancy (MMD) as the test statistic where it measures the maximum difference in expectations over functions within the unit ball of a reproducing kernel Hilbert space (RKHS). The MMD has gained widespread adoption due to its nonparametric nature and the strong theoretical guarantees provided by RKHS theory. Despite these advantages, its application is often constrained by the intractability of its null distribution, which complicates the implementation of direct inference.

To circumvent this issue, permutation-based methods are commonly used to determine the threshold, setting $\tau:=\tau(\alpha)$ as the $(1-\alpha)$-quantile of the statistic over $B$ permuted datasets. While these methods ensure finite-sample validity, they are computationally intensive, requiring $B+1$ evaluations of the test statistic, with $B$ typically exceeding $100$. Moreover, incorporating additional covariates (say $V$ and $W$), which are possibly correlated with the primary samples ($X$ and $Y$), introduces further complications: under the null hypothesis, the distributions of \(V\) and \(W\) do not need to match, which violates the \emph{exchangeability} assumption. This lack of exchangeability undermines the validity of permutation-based methods, which poses a critical challenge in the semi-supervised setting.

Various permutation-free methods for determining the threshold $\tau$ have been proposed specifically to address the computational challenges associated with the permutation test. However, these methods entail notable limitations that warrant further attention. Some are overly conservative, resulting in a type-I error rate much smaller than the target size $\alpha$ \citep[e.g.,][]{gretton2006kernel, kim2021comparing}. Others rigorously control the size only under restrictive conditions, such as when the kernel remains fixed as the sample size increases \citep[e.g.,][]{gretton2009fast, chwialkowski2015fast, jitkrittum2016interpretable}. Additionally, some methods lack theoretical guarantees of size control, as they are heuristic in nature \citep[e.g.,][]{gretton2006kernel, gretton2009fast}.
 In contrast to these approaches, \citet{shekhar2022permutation} proposed a permutation-free kernel two-sample test that leverages the dimension-agnostic framework introduced by \citet{kim2024dimension}. It offers rigorous theoretical guarantees, achieving consistency and minimax-rate optimality against local alternatives, but it only utilizes labeled data. To our knowledge, there exists no method for two-sample testing in a semi-supervised setting, and in this paper we develop a general framework that addresses this gap in the literature.

\subsection{Contributions}
With the preceding background in place, the main contributions of this work are summarized below. 

\textbf{General framework.} We propose a general framework for semi-supervised two-sample testing that effectively leverages unlabeled data. This framework does not require permutation-based inference and is provably more powerful than the corresponding supervised methods in various scenarios. Additionally, we introduce a cross-fitting procedure to broaden the applicability of our method. \\[.5em]
\textbf{Semi-supervised kernel two-sample test.} As a specific implementation of our general framework, we propose a semi-supervised kernel two-sample test. This test can be seen as a natural extension of the method introduced by \citet{shekhar2022permutation}, adapted to effectively leverage the additional information available from unlabeled data in a semi-supervised setting.  \\[.5em]
\textbf{Power analysis.} We provide both theoretical and empirical evidence showing that our method retains the desirable properties of existing approaches while achieving higher power across diverse scenarios. A key element of our analysis is the asymptotic normality of a studentized test statistic. Unlike prior work, we establish the asymptotic normality of the test statistic under both the null and alternative hypotheses, which is critical to our power analysis. Furthermore, we demonstrate that our test statistic maintains consistency in power against both fixed and local alternatives.

\subsection{Related Work} \label{section: related work}

\noindent \textbf{Semi-Supervised Inference.}
Semi-supervised inference has emerged as an important area in statistics, with numerous studies exploring how unlabeled data can enhance estimation and testing. For instance, \citet{zhang2019semi} introduced methods for semi-supervised mean estimation, demonstrating the potential of unlabeled data to improve inference accuracy. \citet{chakrabortty2019high} examined the use of semi-supervised techniques in high-dimensional settings, while \citet{tony2020semisupervised} focused on variance estimation. In the context of linear regression, \citet{chakrabortty2018efficient,azriel2022semi} improved standard estimators by incorporating unlabeled data, and \citet{chakrabortty2022semi} extended this idea to quantile estimation. Further advancements were made by \citet{angelopoulos2023prediction,zrnic2023cross}, who introduced the concept of \textit{prediction-powered inference}, providing a unified framework for constructing predictive models that leverage both labeled and unlabeled data. More recently, \citet{kim2025semi} analyzed semi-supervised U-statistics, which offers a comprehensive framework for integrating unlabeled data into nonparametric inference. 

\noindent \textbf{Kernel Two-Sample Tests.}
Kernel-based two-sample testing has gained widespread attention for handling complex and high-dimensional data. Since its introduction by \citet{gretton2012kernel}, numerous advancements have followed: developing optimal kernels to enhance test power \citep{gretton2012optimal,sutherland2017generative,liu2020learning}, extending applicability to manifold data \citep{cheng2024kernel}, and devising alternative methods to reduce computational overhead \citep{zaremba2013b,song2021fast,schrab2022efficient,choi2024computational}. A more recent line of work has focused on boosting test power by aggregating MMD estimates over multiple kernels~\citep{schrab2023mmd,biggs2024mmd,hagrass2024spectral,chatterjee2025boosting}. Taking a different approach, \citet{tian2024unified} proposed a unified representation learning framework that utilizes the entire dataset to learn discriminative features. While their approach focuses on embedding learning via self-supervised learning, our method explicitly leverages the functional relationship between the target variable and abundant unlabeled covariates to reduce the variance of the test statistic.

\noindent \textbf{Permutation-Free Approaches.} 
Permutation-free methods have been actively studied to address the computational challenge of traditional permutation-based methods for large-scale analyses. Among these, \citet{kim2024dimension} proposed a dimension-agnostic framework that uses sample-splitting to construct a studentized test statistic, asymptotically Gaussian under the null. This approach proves especially valuable when the null distribution is intractable or computationally expensive to estimate. Building on this, \citet{shekhar2022permutation} extended these ideas to the kernel-MMD setting, introducing the cross-MMD statistic. This innovative approach overcomes the degeneracy issues of the classical kernel-MMD statistic under the null.
A similar framework has been applied in several studies to develop kernel-based independence testing \citep{shekhar2023permutation}, conditional independence testing \citep{lundborg2024projected}, and kernel-based treatment effect testing \citep{martinez2023efficient}. 

The above works collectively form the foundation of our approach, integrating semi-supervised inference, kernel-based testing, and advanced estimation techniques to address challenges in two-sample testing with additional covariates.

\section{\MakeUppercase{General Semi-Supervised Two-Sample Test}}\label{Section: General Semi-Supervised Two-Sample Test}
In this section, we start by describing the problem setup and presenting the key idea underlying our approach. We then formally introduce a general semi-supervised two-sample test, which serves as the cornerstone for the semi-supervised kernel two-sample test detailed in \Cref{Section: Semi-Supervised Kernel Two-Sample Test}. 

We first clarify the terminology of labeled and unlabeled data as used in this paper. While the term \textit{label} often refers to class variables in supervised learning tasks such as classification, we adopt a broader usage that is standard in recent literature on semi-supervised inference \citep[e.g.,][]{zhang2019semi, angelopoulos2023prediction, kim2025semi}. In our setting, labeled data refers to observations for which the primary response variable is available, whereas unlabeled data consists of covariates without associated responses. Although these covariates are not directly analyzed, they are typically easier to obtain and exhibit meaningful associations with the primary variables of interest. 

In particular, we refer to the paired samples $(X, V)$ and $(Y, W)$ as labeled data, where $X$ and $Y$ are the primary variables of interest, and $V$ and $W$ denote covariates associated with the responses $X$ and $Y$, respectively. While $V$ and $W$ come from the same feature space, we distinguish them to reflect the two different groups. In this context, supervised approaches rely solely on the labeled pairs, while semi-supervised methods additionally exploit the unlabeled covariates to improve statistical power. 

\noindent \textbf{Problem Setting.} Let us formalize the setting of semi-supervised two-sample testing where we observe mutually independent labeled and unlabeled datasets as follows:

\noindent $\bullet$ \emph{Labeled data}: $\mathcal{L}_{XV} \coloneqq \{(X_i,V_i)\}_{i = 1}^{n_1} \overset{\mathrm{i.i.d.}}{\sim} P_{XV}$ and $\mathcal{L}_{YW}\coloneqq \{(Y_i,W_i)\}_{i=1}^{n_2} \overset{\mathrm{i.i.d.}}{\sim} P_{YW}$

\noindent $\bullet$ \emph{Unlabeled data}: $\mathcal{U}_{V} \coloneqq \{V_i\}_{i=n_1+1}^{n_1+m_1} \overset{\mathrm{i.i.d.}}{\sim} P_{V}$ and $\mathcal{U}_{W} \coloneqq \{W_i\}_{i=n_2+1}^{n_2+m_2} \overset{\mathrm{i.i.d.}}{\sim} P_{W}$ 


Using these observations, we would like to test 
the null hypothesis that the marginal distributions of $X$ and $Y$ are equal, that is, $H_0: P_X = P_Y$ against the alternative $H_1: P_X \neq P_Y$. 
Unlike the classical two-sample testing, covariates $V$ and $W$ are available, and our goal is to create a testing procedure that boosts statistical power by incorporating these covariates, while ensuring robustness when they are independent of $X$ and $Y$.

\subsection{Oracle Test}\label{Section: Oracle Test}
Before presenting a practical version, we first build intuition by considering an oracle test, assuming that we know the true conditional expectation.

\noindent \textbf{Key idea.} To clarify the key idea behind our approach, let us revisit semi-supervised mean estimation \citep[e.g.,][]{zhang2019semi,zhang2022high}. Specifically, consider the problem of estimating the population mean of some real-valued function $f(X)$. A natural idea is to use the sample mean, $n_1^{-1}\sum_{i=1}^{n_1} f(X_i)$, which has the minimum variance among all possible unbiased estimators. However, the situation changes when additional unlabeled datasets become available. For simplicity of our discussion, we assume the conditional expectation $\mE[f(X_i) \given V_i]$ is known, and address the unknown case in \Cref{Section: Procedure with CF}. Under this setup, one can construct another estimator 
\begin{align*}
	\widehat{\mu}_{X,f}  \coloneqq  \, & \frac{1}{n_1} \sum_{i=1}^{n_1} \{f(X_i) - \mE[f(X_i) \given V_i]\} \\
	& + \frac{1}{n_1 + m_1}\sum_{i=1}^{n_1+m_1} \mE[f(X_i) \given V_i],
\end{align*}
which is also an unbiased estimator. Importantly, the variance of $\widehat{\mu}_{X,f}$ is never greater than that of the ordinary sample mean. This property arises from the observation that the two summations in $\widehat{\mu}_{X,f}$ are uncorrelated. As a result, the variance of $\widehat{\mu}_{X,f}$ can be expressed as
$\sigma_{X,f}^2 \coloneqq n_1^{-1} \sigma_{1,X,f}^2 + (n_1+m_1)^{-1}\sigma_{2,X,f}^2$ where $\sigma_{1,X,f}^2 \coloneqq \mE[\mV\{f(X) \given V\}] $ and $\sigma_{2,X,f}^2 \coloneqq \mV[\mE\{ f(X) \given V \}]$. Moreover, the ordinary sample mean is equivalent to $\widehat{\mu}_{X,f}$ with $m_1=0$, whose variance equals $n_1^{-1} \mV[f(X)] = n_1^{-1}\sigma_{1,X,f}^2 +  n_1^{-1}\sigma_{2,X,f}^2 $ by the law of total variance. This directly confirms the variance reduction achieved by incorporating the additional unlabeled data. Furthermore, $\widehat{\mu}_{X,f}$ is a linear statistic that is expected to converge to a normal distribution under regularity conditions. As a result, statistical inference based on $\widehat{\mu}_{X,f}$ would be more efficient than that based on the ordinary sample mean, which lies at the heart of recent advancements in semi-supervised inference.  

\noindent \textbf{Oracle Test Construction.} Building on the idea that incorporating unlabeled data can lead to variance reduction, we now introduce a general semi-supervised two-sample test. To delineate the procedure, define $\widehat{\mu}_{Y,f}$ analogously to $\widehat{\mu}_{X,f}$ using $\mathcal{L}_{YW}$ and $\mathcal{U}_W$. Here, $f$ is treated as a certain feature map, mapping inputs to $\mathbb{R}$, chosen to effectively distinguish $P_X$ and $P_Y$ under the alternative. For instance, $f$ can be a certain basis function~\citep{zhou2017}, an estimated witness function of an integral probability metric~\citep{kim2024dimension} or a deep kernel feature map~\citep{liu2020learning}. Importantly, we assume that $f$ is independent of $\mathcal{L}_{XV}$, $\mathcal{L}_{YW}$, $\mathcal{U}_V$, and $\mathcal{U}_W$. 
\medskip 
\begin{remark} \label{Remark: conditions for f}
	When $f$ is a random function, all expectations and variances below are \emph{implicitly conditional on $f$} unless stated otherwise. For example, $\mE[f(X)]$ then denotes the conditional expectation of $f(X)$ given the $\sigma$-algebra generated by the randomness of $f$.
\end{remark}

Our general procedure compares the studentized difference between $\widehat{\mu}_{X,f}$ and $\widehat{\mu}_{Y,f}$. To this end, we estimate the variance of $\widehat{\mu}_{X,f}$ as $\widehat{\sigma}^2_{X,f} = n_1^{-1}\widehat{\sigma}_{1,X,f}^2 + (n_1+m_1)^{-1} \widehat{\sigma}_{2,X,f}^2$ by combining two components defined in \eqref{eq: semi-supervised variance estimate} of \Cref{section: proof of main theorem - general power expression}. We similarly define $\widehat{\sigma}^2_{Y,f} = n_2^{-1} \widehat{\sigma}_{1,Y,f}^2 + (n_2+m_2)^{-1} \widehat{\sigma}_{2,Y,f}^2$ as an estimator of $\sigma_{Y,f}^2 \coloneqq n_2^{-1} \sigma_{1,Y,f}^2 + (n_2+m_2)^{-1} \sigma_{2,Y,f}^2$, which is the variance of $\widehat{\mu}_{Y,f}$. Using these estimates, we define an oracle test statistic as
\begin{align}
	T_{\mathrm{oracle}} = \frac{\widehat{\mu}_{X,f} - \widehat{\mu}_{Y,f}}{\sqrt{\widehat{\sigma}^2_{X,f} + \widehat{\sigma}^2_{Y,f}}}.
    \label{eq:oracle-test}
\end{align} 
Given $\alpha \in (0,1)$, the resulting oracle test rejects the null when $T_{\mathrm{oracle}} > z_{1-\alpha}$ (or $|T_{\mathrm{oracle}}| > z_{1-\alpha/2}$ for a two-sided test) without requiring permutations. Here $z_{1-\alpha}$ is the $1-\alpha$ quantile of $N(0,1)$. To analyze the oracle test, we make the following moment assumption, using \( n \coloneqq n_1 \wedge n_2 \) to denote the minimum sample size throughout. Additional notational conventions are provided in \Cref{Section: notation}.

\begin{assumption} \label{Assumption: moment condition}
	Suppose there exists $\delta >0$ such that 
	\begin{align*}
		& \frac{\mE[|f(X) - \mE[f(X)]|^{2+\delta}]}{\sigma_{1,X,f}^{2+\delta} \wedge \sigma_{2,X,f}^{2+\delta}} = o_P\bigl(n_1^{\delta/2}\bigr) \quad \text{and} \\
		&  \frac{\mE[|f(Y) - \mE[f(Y)]|^{2+\delta}]}{\sigma_{1,Y,f}^{2+\delta} \wedge \sigma_{2,Y,f}^{2+\delta}} = o_P\bigl(n_2^{\delta/2}\bigr) \quad \text{as $n \rightarrow \infty$. }
	\end{align*}
\end{assumption}
The $o_P$ notation above is used to accommodate the random case of $f$ discussed in \Cref{Remark: conditions for f}. The next theorem derives an asymptotic power expression of the oracle test and highlights the power gain obtained through the unlabeled dataset. 

\begin{theorem} \label{Theorem: General Power Expression}
	Under \Cref{Assumption: moment condition}, the power function of the oracle test defined in~\eqref{eq:oracle-test} approximates, unconditionally on $f$, that
	\begin{align*}
		\Phi \biggl( z_{\alpha} + \frac{\mE[f(X)] - \mE[f(Y)]}{\sqrt{\smash[b]{\sigma_{X,f}^2 + \sigma_{Y,f}^2}}} \biggr) \quad \text{as $n \rightarrow \infty$. }
	\end{align*}
\end{theorem}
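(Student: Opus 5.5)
Conditional on $f$, write $\mu_{X}=\mE[f(X)]$, $\mu_Y=\mE[f(Y)]$ and $g_X(v)=\mE[f(X)\given V=v]$, with $g_Y$ defined analogously. The plan is to establish a conditional central limit theorem for the numerator, show ratio consistency of the variance estimator, and then average over $f$ by dominated convergence.

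\textbf{Step 1: decomposition of the numerator.} Write $\widehat{\mu}_{X,f}-\mu_X$ as a sum of independent, mean-zero summands. Splitting the second average into labeled and unlabeled indices gives
\[
\widehat{\mu}_{X,f}-\mu_X=\sum_{i=1}^{n_1}\Bigl\{\tfrac{1}{n_1}\bigl(f(X_i)-g_X(V_i)\bigr)+\tfrac{1}{n_1+m_1}\bigl(g_X(V_i)-\mu_X\bigr)\Bigr\}+\sum_{i=n_1+1}^{n_1+m_1}\tfrac{1}{n_1+m_1}\bigl(g_X(V_i)-\mu_X\bigr).
\]
The summands are independent across $i$. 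Because $f(X_i)-g_X(V_i)$ and $g_X(V_i)$ are uncorrelated, the total variance is exactly $\sigma^2_{X,f}$. The same holds for $Y$, and since the $X$ and $Y$ samples are independent, $\widehat{\mu}_{X,f}-\widehat{\mu}_{Y,f}-(\mu_X-\mu_Y)$ is a triangular-array sum of independent terms with variance $s^2:=\sigma^2_{X,f}+\sigma^2_{Y,f}$.

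\textbf{Step 2: conditional Lyapunov CLT.} Verify Lyapunov's condition given $f$. Each summand's $(2+\delta)$-th absolute moment is bounded, via Jensen's inequality (using $|g_X-\mu_X|^{2+\delta}\le \mE[|f(X)-\mu_X|^{2+\delta}\given V]$) and $c_r$-inequalities, by a constant times $M_X:=\mE|f(X)-\mu_X|^{2+\delta}$, multiplied by $n_1^{-(2+\delta)}$ for labeled indices and by $(n_1+m_1)^{-(2+\delta)}$ for unlabeled ones.

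The key bookkeeping is to bound $s^{2+\delta}$ from below by the appropriate pieces: $s^2\ge n_1^{-1}\sigma^2_{1,X,f}$ and $s^2\ge (n_1+m_1)^{-1}\sigma^2_{2,X,f}$. Both pieces dominate $n_1^{-1}(\sigma_{1,X,f}^2\wedge\sigma_{2,X,f}^2)$ up to the relevant scale. Summing, the labeled part contributes at most $n_1\cdot n_1^{-(2+\delta)}M_X/(n_1^{-1}\sigma^2_{1}\wedge\sigma^2_2)^{1+\delta/2}=M_X/\bigl(n_1^{\delta/2}(\sigma_1\wedge\sigma_2)^{2+\delta}\bigr)$. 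The unlabeled part contributes at most $m_1(n_1+m_1)^{-(2+\delta)}M_X/\bigl((n_1+m_1)^{-1}\sigma_2^2\bigr)^{1+\delta/2}\le M_X/\bigl((n_1+m_1)^{\delta/2}\sigma_2^{2+\delta}\bigr)$. By \Cref{Assumption: moment condition}, both bounds are $o_P(1)$.

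Hence, by a subsequence argument for the $o_P$, $\sup_t|\mP(s^{-1}\{\widehat\mu_{X,f}-\widehat\mu_{Y,f}-(\mu_X-\mu_Y)\}\le t\given f)-\Phi(t)|\to0$ in probability, using Berry--Esseen or Pólya's theorem for uniformity.

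\textbf{Step 3: variance estimator.} Show $(\widehat\sigma^2_{X,f}+\widehat\sigma^2_{Y,f})/s^2\to1$ in probability. I expect this to be the main obstacle, since it depends on the exact form of the estimators in \eqref{eq: semi-supervised variance estimate}. The natural route is to show $\widehat\sigma^2_{1,X,f}/\sigma^2_{1,X,f}\to1$ and $\widehat\sigma^2_{2,X,f}/\sigma^2_{2,X,f}\to1$ separately, using a weak law of large numbers for triangular arrays under the $(2+\delta)$ moment bound. The same normalized moment ratio controls the truncation argument (von Bahr--Esseen with exponent $1+\delta/2\wedge 1$), and the centering errors are of smaller order. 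Ratio consistency of the combination then follows because it is a positively weighted sum.

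\textbf{Step 4: assembling the power expression.} Rejection occurs when
\[
s^{-1}\bigl\{\widehat\mu_{X,f}-\widehat\mu_{Y,f}-(\mu_X-\mu_Y)\bigr\}>z_{1-\alpha}\sqrt{\widehat\sigma^2/s^2}-(\mu_X-\mu_Y)/s.
\]
By Step 3, the right-hand side equals $z_{1-\alpha}-(\mu_X-\mu_Y)/s+o_P(1)$. Combining this with the uniform CLT of Step 2 and Slutsky's lemma shows that the conditional power equals $\Phi\bigl(z_\alpha+(\mu_X-\mu_Y)/s\bigr)+o_P(1)$, even when $(\mu_X-\mu_Y)/s$ diverges. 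The uniformity in $t$ is what handles a possibly diverging shift. Since all quantities are bounded by $1$, taking expectations over $f$ with bounded convergence yields the unconditional approximation stated in the theorem.
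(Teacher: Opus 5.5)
Your proposal is correct and follows essentially the same route as the paper: the same decomposition of the centered numerator into independent mean-zero summands (yours matches the paper's $G_i$ and $H_i$ after rearranging), a conditional Lyapunov CLT verified through conditional Jensen and lower bounds on $s^2$, ratio consistency of the variance estimator via a weak law of large numbers, and then Slutsky. The differences are cosmetic. The paper cites Lemmas S8 and S9 of Lundborg et al.\ for the conditional CLT and WLLN, where you argue via Berry--Esseen and truncation. Your Step 4 (uniformity in $t$ for a possibly diverging shift, then bounded convergence over $f$) spells out the final assembly step that the paper leaves implicit.
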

Under the null, \Cref{Theorem: General Power Expression} indicates that the oracle test asymptotically maintains the correct size $\alpha$. Regarding power, the oracle test achieves an asymptotic power \emph{never less} than that of the standard two-sample $t$-test. This is evident from the fact that the standard $t$-test is a special case of the oracle test when $m_1 = m_2 = 0$, and that $\sigma_{X,f}^2 + \sigma_{Y,f}^2$ is non-decreasing in $m_1$ and $m_2$. In other words, incorporating unlabeled data reduces variance while keeping the mean unchanged, ultimately resulting in increased power. \Cref{Theorem: General Power Expression} allows $f$ to change with the sample sizes. This flexibility requires conditions stronger than the finite second moment of $f(X)$ and $f(Y)$ as specified in \Cref{Assumption: moment condition}. It is also worth highlighting that \Cref{Theorem: General Power Expression} puts no restrictions on $m_1$ and $m_2$, which can grow much faster than $n = n_1 \wedge n_2$. 

\subsection{Procedure with Cross-Fitting} \label{Section: Procedure with CF}
In the previous subsection, we constructed the oracle test under the assumption that both the conditional expectations $\mE[f(X) \given V]$ and $\mE[f(Y) \given W]$ are known. We now eliminate this assumption and propose a practical procedure using the estimated conditional expectations $\widehat{\mE}[f(X) \given V]$ and $\widehat{\mE}[f(Y) \given W]$. For this purpose, we employ cross-fitting, a commonly used technique in semi-parametric statistics. Cross-fitting is a practical, efficient method of data splitting, typically applied to correct for bias arising from nuisance estimation, ease stringent conditions on the parameter space, and regain full efficiency \citep[e.g.,][]{zheng2010asymptotic, chernozhukov2018double, newey2018cross, wasserman2020universal, kennedy2020towards, kim2025semi}. This method involves partitioning the dataset into two segments: one is used to estimate nuisance parameters, while the other is employed to form an initial estimator. The roles of these partitions are then alternated, and the procedure is repeated. Finally, the two resulting statistics are aggregated to yield the final estimator. For simplicity, we assume that $n_1, n_2, m_1, m_2$ are even numbers, which allows us to avoid asymmetry in the cross-fitting procedure and thus simplifies the analysis.

\noindent \textbf{Cross-Fit Test Construction.} To describe the idea, we split the dataset $\mathcal{L}_{XV}$ into two parts: $\mathcal{L}_{XV,a} \coloneqq \{(X_i,V_i): i \in [n_1], \, \text{$i$ is odd} \}$ and $\mathcal{L}_{XV,b} \coloneqq  \{(X_i,V_i): i \in [n_1], \, \text{$i$ is even} \}$. Write
$\widehat{\mE}[f(X_i) \given V_i]$
as an estimator of $\mE[f(X_i) \given V_i]$ trained on $\mathcal{L}_{XV,a}$ if the index $i$ is even and on $\mathcal{L}_{XV,b}$ if $i$ is odd. This estimator can be obtained using methods such as neural nets or random forests by regressing $f(X)$ on $V$. We similarly construct $\widehat{\mE}[f(Y_i) \given W_i]$ as an estimator of $\mE[f(Y_i) \given W_i]$. The test statistic ${T}_{\mathrm{cross}}$ is then computed in the same way as $T_{\mathrm{oracle}}$, replacing $\mE[f(X_i) \given V_i]$ and $\mE[f(Y_i) \given W_i]$ with their estimators. We finally reject the null if $T_{\mathrm{cross}}$ exceeds $z_{1-\alpha}$. This cross-fit test retains the same asymptotic properties as the oracle test, provided that the estimated conditional expectations satisfy the required convergence conditions.
\begin{corollary} \label{Corollary: power expression for cross-fit test}
	Suppose \Cref{Assumption: moment condition} holds and, additionally, the following condition is satisfied:
	\begin{equation}
	\begin{aligned} \label{Eq: cross-fit condition}
		\frac{ \mE\bigl[ \{\widehat{\mE}[f(X) \given V] - \mE[f(X) \given V]\}^2\bigr]}{\sigma_{1,X,f}^{2} \wedge \sigma_{2,X,f}^{2}} = o_P(1),
	\end{aligned}
	\end{equation}
	as $n \to \infty$, and the analogous condition holds for $(Y, W)$. Then the power function of the cross-fit test approximates that of the oracle test as in \Cref{Theorem: General Power Expression}.
\end{corollary}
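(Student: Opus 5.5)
We reduce the cross-fit statistic to the oracle one and then invoke \Cref{Theorem: General Power Expression}. Write $g(v) \coloneqq \mE[f(X)\given V=v]$, and let $\ghat_a, \ghat_b$ be the regression estimates trained on $\mathcal{L}_{XV,a}$ and $\mathcal{L}_{XV,b}$. The estimate $\ghat_b$ is used on the odd-indexed points and on one half of $\mathcal{U}_V$, and $\ghat_a$ on the rest. Let $\Delta \coloneqq \ghat - g$ denote the relevant fold's error. The cross-fit mean estimator $\widetilde{\mu}_{X,f}$ then satisfies
\begin{align*}
\widetilde{\mu}_{X,f} - \widehat{\mu}_{X,f} = -\frac{1}{n_1}\sum_{i=1}^{n_1} \Delta(V_i) + \frac{1}{n_1+m_1}\sum_{i=1}^{n_1+m_1}\Delta(V_i).
\end{align*}
It suffices to show $(\widetilde{\mu}_{X,f}-\widehat{\mu}_{X,f})/\sigma_{X,f} = o_P(1)$ (and likewise for $Y$), together with $\widetilde{\sigma}^2_{X,f}/\sigma^2_{X,f} \to 1$ in probability. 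Since $\sigma_{X,f}^2+\sigma_{Y,f}^2 \ge \sigma_{X,f}^2$, these facts let us replace $T_{\mathrm{cross}}$ by $T_{\mathrm{oracle}}$ up to an $o_P(1)$ perturbation in the numerator, and up to a factor $1+o_P(1)$ in the denominator, after centering by $\mE[f(X)]-\mE[f(Y)]$. Slutsky's lemma, applied conditionally on $f$ and then integrated by dominated convergence since $\Phi$ is bounded, then gives the same power expression as \Cref{Theorem: General Power Expression}.

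\textbf{Bias term.} Condition on the training fold, so that $\Delta$ is a fixed function evaluated at independent draws of $V$. The two averages above share the same mean $\mE[\Delta(V)]$. Indeed, each fold's $\Delta$ appears with total weight $1/2$ in both the labeled and the full averages, because $n_1, m_1$ are even and the split is balanced. Hence the means cancel exactly, and the difference is mean zero given the folds. Its conditional variance is at most a constant times $n_1^{-1}\mE[\Delta(V)^2]$. The labeled average has variance of order $n_1^{-1}\mE[\Delta^2]$, and the full average has variance at most that. The cross covariance only involves the overlapping labeled points and is bounded by Cauchy--Schwarz.

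Dividing by $\sigma_{X,f}^2 \ge n_1^{-1}\sigma_{1,X,f}^2 \ge n_1^{-1}(\sigma^2_{1,X,f}\wedge\sigma^2_{2,X,f})$ gives a ratio of order $\mE[\Delta^2]/(\sigma^2_{1,X,f}\wedge\sigma^2_{2,X,f}) = o_P(1)$ by \eqref{Eq: cross-fit condition}. Conditional Chebyshev then yields $o_P(1)$. The key point is the orthogonality: the unlabeled term is not merely small in $L^2$ but centered, because the same estimated function is averaged over labeled and unlabeled $V$'s from the same distribution $P_V$. The unlabeled part therefore needs no condition involving $m_1$.

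\textbf{Variance estimator (main obstacle).} Here $\widetilde{\sigma}^2_{1,X,f}$ is the empirical variance of $f(X_i)-\ghat(V_i)$ over the labeled data, and $\widetilde{\sigma}^2_{2,X,f}$ is the empirical variance of $\ghat(V_i)$ over all covariates. Write each as the oracle version plus a perturbation. By Minkowski's inequality, the square-root empirical second moments differ by at most $\{(\text{avg})\Delta(V_i)^2\}^{1/2}$. The conditional expectation of $(\text{avg})\Delta^2$ is $\mE[\Delta^2] = o_P(\sigma^2_{1,X,f}\wedge\sigma^2_{2,X,f})$, so Markov's inequality gives a relative error of $o_P(1)$ for each component separately. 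Each component is normalized by its own $\sigma^2_{j,X,f}$, which is why the minimum appears in the denominator of \eqref{Eq: cross-fit condition}. Combined with consistency of the oracle components, which we take from the proof of \Cref{Theorem: General Power Expression} under \Cref{Assumption: moment condition}, this gives $\widetilde{\sigma}^2_{X,f}/\sigma^2_{X,f}\to 1$.

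The delicate points are twofold. First, the empirical means inside the variances involve the same $\Delta$, and these must be handled by the same Minkowski bound. Second, the mixing of two folds' estimators has to be handled by treating each half separately and summing.
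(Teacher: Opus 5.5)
Your proposal is correct and follows essentially the same route as the paper. You reduce $T_{\mathrm{cross}}$ to the oracle statistic by showing two things: the numerator perturbation is $o_P(\sigma_{X,f})$, and the cross-fitted variance estimates are ratio-consistent. Your fold-by-fold mean cancellation is exactly what the paper's ``location-shift invariance, so without loss of generality mean zero'' step implicitly relies on, and it is likewise followed by Chebyshev and the lower bound on $\sigma^2_{X,f}$. Your Minkowski bound on empirical standard deviations, normalized componentwise by $\sigma^2_{j,X,f}$, is only a cosmetic variant of the paper's Cauchy--Schwarz expansion of the squared differences.
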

Similarly to \Cref{Assumption: moment condition}, the $o_P$ notation accounts for the randomness of $f$. The validity of \Cref{Corollary: power expression for cross-fit test} primarily depends on accurately estimating the conditional expectation associated with $f$, a problem well-studied in the statistical literature \citep[e.g.,][]{gyorfi2006distribution, wainwright2019high}.

Up to this point, we have developed a general semi-supervised two-sample test with a generic function $f$ and demonstrated its power gain through the incorporation of unlabeled data. We next focus on a specific instantiation of $f$ constructed as the difference between empirical kernel mean embeddings.

\section{\MakeUppercase{Semi-Supervised Kernel Test}} \label{Section: Semi-Supervised Kernel Two-Sample Test}
In this section, we introduce a semi-supervised kernel two-sample test, regarded as a semi-supervised extension of the xMMD test \citep{shekhar2022permutation}. In \Cref{Section: xMMD Test}, we first provide a brief overview of the xMMD test and then describe our proposed semi-supervised extension. \Cref{Section: Theoretical Analysis} presents the theoretical analysis of the proposed test.

\subsection{Testing Procedure} \label{Section: xMMD Test}
As mentioned earlier, one notable method for addressing the two-sample testing problem involves using an empirical version of the kernel-MMD \citep{gretton2012kernel}. For a positive definite kernel $k$ and its associated RKHS $\mathcal{H}_k$, the kernel-MMD quantifies the distance between distributions $P$ and $Q$ by computing the supremum of the difference in expectations $\mathbb{E}_{X \sim P}[f(X)] - \mathbb{E}_{Y \sim Q}[f(Y)]$ over all functions $f$ in the unit ball of $\mathcal{H}_k$.
The empirical MMD statistic, based on U- or V-statistics, has an intractable limiting distribution under the null, which is often addressed using the permutation method. However, this resampling method is computationally expensive due to repeated evaluation of a test statistic. Beyond the computational issue, the permutation method may not be valid in the semi-supervised setting where $V$ and $W$ do not necessarily share the same distribution under the null. This violates the exchangeability assumption, which is crucial for the validity of the permutation test.

To address the computational issue of permutation-based MMD tests, \citet{shekhar2022permutation} introduced the xMMD test, which is essentially the two-sample $t$-test applied to data projected onto the optimal witness function. To provide a brief overview, let $\{\widetilde{X}_i\}_{i=1}^{n_1}$ and $\{\widetilde{Y}_i\}_{i=1}^{n_2}$ be i.i.d.~copies of $\{X_i\}_{i=1}^{n_1}$ and $\{Y_i\}_{i=1}^{n_2}$, respectively, which can be obtained through sample splitting. The xMMD test is then implemented through the following two steps:

\textbf{1.~Optimal Witness Function Estimation.} Estimate the optimal witness function that achieves the supremum in the definition of MMD based on $\{\widetilde{X}_i\}_{i=1}^{n_1}$ and $\{\widetilde{Y}_i\}_{i=1}^{n_2}$:
\begin{align*}
	\fhat(\cdot) := \frac{1}{n_1}\sum_{i=1}^{n_1} k(\widetilde{X}_i,\cdot) - \frac{1}{n_2} \sum_{i=1}^{n_2} k(\widetilde{Y}_i,\cdot).
\end{align*}
 \textbf{2.~Projection and $t$-Test.} Project $\{X_i\}_{i=1}^{n_1}$ and $\{Y_i\}_{i=1}^{n_2}$ onto the direction $\fhat$, which results in (conditionally independent) univariate two samples:
\begin{align} \label{Eq: projected samples}
	\fhat(X_1),\ldots,\fhat(X_{n_1}) \quad \text{and} \quad \fhat(Y_1),\ldots,\fhat(Y_{n_2}).
\end{align}
The xMMD test rejects the null when the corresponding two-sample $t$-statistic exceeds $z_{1-\alpha}$.

\citet{shekhar2022permutation} showed that the xMMD test is asymptotically level $\alpha$ under a certain moment condition, consistent in power and minimax-rate optimal against local $L_2$ alternatives. Moreover, the xMMD test offers a notable computational advantage over the permutation-based MMD test as it avoids the need for repeated resampling to determine a critical value. 


\noindent \textbf{xssMMD Test.} Building on the work of \citet{shekhar2022permutation}, we propose a new method called the xssMMD test, which extends the xMMD test to a semi-supervised setting. The main idea is to apply the general semi-supervised two-sample test introduced in \Cref{Section: General Semi-Supervised Two-Sample Test} to the two projected samples based on the estimated witness function $\widehat{f}$. Specifically, we define the cross-fit statistic $T_{\mathrm{cross}}$ based on the projected samples in \eqref{Eq: projected samples} as 
\begin{align} \label{Eq: xssMMD}
	\xssMMD =  \frac{\widehat{\mu}_{X,\fhat}^{\dagger} - \widehat{\mu}_{Y,\fhat}^\dagger}{\sqrt{\widehat{\sigma}^{\dagger 2}_{X,\fhat} + \widehat{\sigma}^{\dagger 2}_{Y,\fhat}}},
\end{align}
where $\dagger$ denotes the use of cross-fitting. The exact mathematical formulas for the cross-fitted components are detailed in Appendix \ref{appendix: proof of the main theorem}. The xssMMD test then rejects the null when $\xssMMD > z_{1-\alpha}$. A schematic illustration is provided in \Cref{fig:summary_of_testing} of \ref{appendix: overview and roadmap}.

\subsection{Theoretical Analysis} \label{Section: Theoretical Analysis}
We now shift our focus to the theoretical analysis of the xssMMD test. It is already clear from the results of \Cref{Section: General Semi-Supervised Two-Sample Test} that the xssMMD test is asymptotically level $\alpha$ and provably more powerful than the xMMD test under certain conditions. Our goal is to present more concrete conditions for these properties tailored to the kernel-MMD setting.

\noindent \textbf{Comparison with xMMD.} To establish the asymptotic properties of the tests, we require certain regularity and moment conditions. First, we assume that the witness function $\fhat$ is measurable and that the Bochner integral $\int_{\mathcal{X}} \|k(x,\cdot)\|_{\mathcal{H}} dP(x)$ is finite to ensure the existence of the mean embedding. Note that for bounded kernels, such as the Gaussian kernel, this condition is automatically satisfied since $\|k(x,\cdot)\|_{\mathcal{H}} = \sqrt{k(x,x)}$ is bounded.

Next, we state the assumptions required for our theoretical guarantees. These involve key quantities defined through the centered kernel $\overline{k}_X$ and its expected product $\overline{g}_X.$ Formal definitions are deferred to \Cref{appendix: Assumptions for Theorem xssMMD}.

\begin{assumption}[Null Condition] \label{Assumption: xssMMD under H0}
	Assume $X_1,X_2,X_{3}\overset{\mathrm{i.i.d.}}{\sim} P_{X,n}$ where $P_{X} \coloneqq P_{X,n}$ and the kernel $k \coloneqq k_n$ potentially changing with $n$ satisfy 
	\begin{align*} 
		& \frac{\mE[\overline{k}(X_1,X_2)^4] + n_1\mE[\overline{k}(X_1,X_2)^2 \overline{k}(X_1,X_3)^2]}{n_1^2\{\mE[\overline{g}_X(X,X)]\}^2} = o(1).
	\end{align*}
\end{assumption}

\begin{assumption}[Consistency of Conditional Expectation] \label{Assumption: consistency of conditional expectation}
	Assume that the estimated conditional expectation of $\fhat(X)$ given $V,\fhat$ and that of $\fhat(Y)$ given $W,\fhat$ satisfy 
\begin{equation}\label{Eq: cross-fitting condition}
\begin{aligned}
	\frac{ \mE\bigl[ \{\mE[\fhat(X) \given V, \fhat] - \widehat{\mE}[\fhat(X) \given V, \fhat]\}^2 \given \fhat \bigr]}{\mV\{\fhat(X) \given \fhat\}} &= o_P(1),\;\text{and} \\
	\frac{ \mE\bigl[ \{\mE[\fhat(Y) \given W, \fhat] - \widehat{\mE}[\fhat(Y) \given W, \fhat]\}^2 \given \fhat \bigr]}{\mV\{\fhat(Y) \given \fhat\}} &= o_P(1).
\end{aligned}
\end{equation} 
\end{assumption}

\begin{assumption}[Alternative Condition] \label{Assumption: xssMMD under H1}
	Assume that $P_{X} \coloneqq P_{X,n}$ and $P_{Y} \coloneqq P_{Y,n}$ have Lebesgue density functions $p_X$ and $p_Y$, respectively, satisfying $\|p_X/p_Y\|_{L_{\infty}} \vee \|p_Y/p_X\|_{L_{\infty}} < C$ for some constant $C>0$, with $\|f\|_{L_\infty}$ denoting $\inf \{M\geq 0: \text{Leb.}(\{x: |f(x)| > M\})=0\}$. Furthermore, assume that
	\begin{align} \label{Eq: xssMMD under H1}
		\frac{\mathrm{MMD}(P_X,P_Y)^4\times \mE[\overline{k}_X(X,X)^2] }{\{n_1\mE[\overline{g}_X(X,X)] + n_1^2\mE[\overline{g}_X(Y_1,Y_2)]\}^2} = o(1),
	\end{align}
	where $X \sim P_X$ and $Y_1,Y_2 \overset{\mathrm{i.i.d.}}{\sim} P_Y$.   
\end{assumption}

\Cref{Assumption: xssMMD under H0} is essentially a Lyapunov-type condition required for the Central Limit Theorem. At a high level, it ensures that the ``tails'' of the test statistic's distribution are not too heavy relative to its variance, guaranteeing that no single data point dominates the statistic, thus allowing it to converge to a normal distribution. \Cref{Assumption: consistency of conditional expectation} guarantees that the estimation error from the cross-fitting procedure decays sufficiently fast relative to the variance, preventing it from dominating the asymptotic behavior of the test statistic. \Cref{Assumption: xssMMD under H1} provides sufficient conditions to establish asymptotic normality under the alternative. The bounded density ratio simplifies the mathematical derivations, while the moment condition \eqref{Eq: xssMMD under H1} ensures the Lyapunov central limit theorem holds for a broad range of alternatives. We discuss further implications of these assumptions in \Cref{appendix: Assumptions for Theorem xssMMD}.

The next theorem compares the xssMMD test and the xMMD test based on their asymptotic properties under specific conditions. For brevity, these conditions are presented and discussed in \Cref{appendix: Assumptions for Theorem xssMMD}. In the following, we let $\Psi_{\mathrm{x}} \coloneqq \mathds{1}(\xMMD > z_{1-\alpha})$ and $\Psi_{\mathrm{xss}} \coloneqq \mathds{1}(\xssMMD > z_{1-\alpha})$ denote the xMMD and xssMMD tests, respectively.

\begin{theorem} \label{Theorem: xssMMD}
	 The tests $\Psi_{\mathrm{x}}$ and $\Psi_{\mathrm{xss}}$ satisfy the following asymptotic guarantees: \\[.5em]
    \textbf{Level.} Suppose \Cref{Assumption: xssMMD under H0} and \Cref{Assumption: consistency of conditional expectation} hold with $n_1 \asymp m_1$ and $n_2 \asymp m_2$. Then both tests control the size $\alpha$ under $H_0$ such that
	$\lim_{n \rightarrow \infty} \mE_{H_0}[\Psi_{\mathrm{x}}] = \lim_{n\rightarrow \infty} \mE_{H_0}[\Psi_{\mathrm{xss}}] = \alpha$. \\[.5em]	 
    \textbf{Power.} Suppose \Cref{Assumption: xssMMD under H1} also holds under $H_1$. Then the asymptotic power of $\Psi_{\mathrm{xss}}$ is at least as that of $\Psi_{\mathrm{x}}$, satisfying $\lim_{n \rightarrow \infty} \{\mE_{H_1}[\Psi_{\mathrm{xss}}] - \mE_{H_1}[\Psi_{\mathrm{x}}] \} \geq 0$.
\end{theorem}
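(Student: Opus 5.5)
The plan is to reduce everything to \Cref{Theorem: General Power Expression} and \Cref{Corollary: power expression for cross-fit test}, applied conditionally on the random witness function $\fhat$. Since $\fhat$ is built from the independent copies $\{\widetilde X_i\}$ and $\{\widetilde Y_i\}$, it is independent of $\mathcal{L}_{XV},\mathcal{L}_{YW},\mathcal{U}_V,\mathcal{U}_W$, as \Cref{Remark: conditions for f} requires. The xMMD statistic is the special case $m_1=m_2=0$, which needs no nuisance estimation. So it suffices to check three things. First, that \Cref{Assumption: moment condition} holds with $f=\fhat$, under $H_0$ via \Cref{Assumption: xssMMD under H0} and under $H_1$ via \Cref{Assumption: xssMMD under H1}. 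Second, that the cross-fit condition \eqref{Eq: cross-fit condition} follows from \Cref{Assumption: consistency of conditional expectation}. Third, that the resulting power expressions can be compared.

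\textbf{Level.} Take $\delta=2$. Under $H_0$ we have $P_X=P_Y$, so $\fhat(X)-\mE[\fhat(X)\given\fhat]$ is the centered kernel $\overline{k}$ integrated against the empirical measures of $\widetilde X,\widetilde Y$. This is a degenerate-type sum, so its conditional fourth moment can be computed by expanding and taking expectations over $\fhat$. The expansion gives
\[
\mE\bigl[\mE[\{\fhat(X)-\mE\fhat(X)\}^4\given\fhat]\bigr]\lesssim n_1^{-4}\bigl(n_1\mE[\overline{k}^4]+n_1^2\mE[\overline{k}(X_1,X_2)^2\overline{k}(X_1,X_3)^2]\bigr),
\]
with analogous $n_2$ terms. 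The conditional variance concentrates around a quantity of order $n^{-1}\mE[\overline{g}_X(X,X)]$, which I will show by a second-moment (Chebyshev) argument in the spirit of \citet{shekhar2022permutation}. Dividing the fourth moment by the square of this variance produces exactly the ratio in \Cref{Assumption: xssMMD under H0}, so by Markov's inequality the moment ratio is $o_P(n_1)$.

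The main obstacle is that \Cref{Assumption: moment condition} normalizes by $\sigma_{1}^{4}\wedge\sigma_{2}^{4}$ rather than by the total variance $\sigma_1^2+\sigma_2^2$. If $\sigma_{2,X,\fhat}^2$ is tiny, for instance when the covariates are nearly uninformative, the ratio can blow up. I would handle this by reproving the CLT step of \Cref{Theorem: General Power Expression} directly, using a Lyapunov condition on the combined linear statistic. That statistic is a sum of independent terms with total variance $\sigma_{X,\fhat}^2$, and the Lyapunov ratio is controlled by the fourth moment over $\mV\{\fhat(X)\given\fhat\}^2$ because $n_1\asymp m_1$. This is where the assumption $n_1\asymp m_1$ is used. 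The same reasoning explains why \Cref{Assumption: consistency of conditional expectation} is stated relative to $\mV\{\fhat(X)\given\fhat\}$. With this normalization, the nuisance error contributes $o_P(1)$ to both the numerator and the variance estimator, mirroring the proof of \Cref{Corollary: power expression for cross-fit test}. Slutsky's lemma and dominated convergence, applied to the conditional probabilities, then give an asymptotic size of exactly $\alpha$ for both tests.

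\textbf{Power.} Under $H_1$ the same conditional CLT applies. The bounded density ratio lets me compare moments under $P_X$ and $P_Y$, so all moments can be expressed through $\overline{k}_X$. Condition \eqref{Eq: xssMMD under H1} provides the Lyapunov bound once the mean part of $\fhat$, of size $\mathrm{MMD}^2$, is separated from its fluctuation. This gives the power approximations
\[
\Phi\bigl(z_\alpha+\Delta/\sqrt{\sigma_{X,\fhat}^2+\sigma_{Y,\fhat}^2}\bigr)\quad\text{and}\quad\Phi\bigl(z_\alpha+\Delta/\sqrt{\mV\fhat(X)/n_1+\mV\fhat(Y)/n_2}\bigr),
\]
where $\Delta=\mE[\fhat(X)-\fhat(Y)\given\fhat]$.

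By the law of total variance, $\sigma_{X,\fhat}^2\le \mV\{\fhat(X)\given\fhat\}/n_1$, and similarly for $Y$. Hence, pointwise in $\fhat$ and whenever $\Delta\ge0$, the first expression is at least the second. Finally, $\Delta\ge0$ with probability tending to one, because $\mE\Delta=\mathrm{MMD}^2$ dominates its fluctuation under \eqref{Eq: xssMMD under H1}. Taking expectations over $\fhat$ and applying dominated convergence yields $\liminf\{\mE_{H_1}\Psi_{\mathrm{xss}}-\mE_{H_1}\Psi_{\mathrm{x}}\}\ge0$.
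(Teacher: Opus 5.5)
Your overall structure matches the paper's Steps 1, 3 and 4:
\begin{itemize}
\item a Lyapunov CLT conditional on $\fhat$, normalized through $s_{n+m}^2\ge\mV\{\fhat(X)\given\fhat\}/(n_1+m_1)$, which is exactly where $n_1\asymp m_1$ enters;
\item a fourth-moment expansion in $\overline{k}_X$, with the mean-shift part of $\fhat$ split off under $H_1$;
\item cross-fit errors measured against $\mV\{\fhat(X)\given\fhat\}$;
\item the law of total variance for the power comparison.
\end{itemize}
Two steps, however, do not go through as you describe them.

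\textbf{The lower bound on the random conditional variance.} Bounding $\mE[\{\fhat(X)-\mE[\fhat(X)\given\fhat]\}^4]/(\mE\,\mV\{\fhat(X)\given\fhat\})^2$ and applying Markov closes the Lyapunov condition only if you also know $\mE[\mV\{\fhat(X)\given\fhat\}]/\mV\{\fhat(X)\given\fhat\}=O_P(1)$. The same is true for ratio-consistency of the oracle variance estimator, the paper's Step 2. This is the paper's claim (ii), and a Chebyshev argument cannot give it in general. Write $\mV\{\fhat(X)\given\fhat\}=\sum_i\lambda_i^2(\overline{\phi}_{i,X}-\overline{\phi}_{i,Y})^2$. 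Under $H_0$ its relative variance is of order $o(1)+\sum_i\lambda_i^4/(\sum_i\lambda_i^2)^2$, and \Cref{Assumption: xssMMD under H0} does not control the second term.

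For a fixed bounded kernel, e.g.\ a Gaussian kernel with fixed bandwidth, the assumption holds. Yet the normalized conditional variance converges to a nondegenerate weighted chi-square, so it does not concentrate. What you need is anti-concentration, i.e.\ a small-ball bound, and the paper gets it in three moves:
\begin{itemize}
\item it keeps only the leading coordinate;
\item it applies Berry--Esseen to $\overline{\phi}_{1,X}-\overline{\phi}_{1,Y}$, whose error is controlled by the second term of \Cref{Assumption: xssMMD under H0} because $\lambda_1^4\mE[\phi_1(X)^4]\le\mE[\overline{k}_X(X_1,X_2)^2\overline{k}_X(X_1,X_3)^2]$;
\item it uses $\sup_a\mP(|Z-a|\le b)\le 2b/\sqrt{2\pi}$.
\end{itemize}
Your Chebyshev bound covers only the complementary regime $\lambda_1^2/\sum_i\lambda_i^2\to0$. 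A complete argument combines the two regimes along subsequences, as in \citet{kim2024dimension}. It must also be redone under $H_1$, where the mean shift enters through $\mE[\overline{g}_X(Y_1,Y_2)]$.

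\textbf{The sign of $\Delta=\mE[\fhat(X)-\fhat(Y)\given\fhat]$.} \Cref{Assumption: xssMMD under H1} is an \emph{upper} bound on $\mathrm{MMD}(P_X,P_Y)$. It gives no reason for $\mathrm{MMD}^2$ to dominate the fluctuation of $\Delta$, whose variance is at least $n_1^{-1}\mE[\overline{g}_X(Y_1,Y_2)]$.

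A concrete counterexample: take a fixed kernel and the local alternative $p_Y=p_X(1+\epsilon_n\phi_1)$ with $\phi_1$ bounded and $\epsilon_n^2\asymp n^{-1}$. This is exactly the regime where both powers are nontrivial. Here $\mathrm{MMD}^2\asymp n^{-1}$ and the standard deviation of $\Delta$ is of the same order. The assumption, including the bounded density ratio, holds, yet $\mP(\Delta<0)$ stays bounded away from zero. On $\{\Delta<0\}$ the xssMMD expression $\Phi(z_\alpha+\Delta/\sqrt{\sigma_{X,\fhat}^2+\sigma_{Y,\fhat}^2})$ is the \emph{smaller} one. So your pointwise comparison does not yield $\liminf\{\mE_{H_1}\Psi_{\mathrm{xss}}-\mE_{H_1}\Psi_{\mathrm{x}}\}\ge0$.

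The paper's Step 4 simply compares the two normal-approximation formulas via $\sigma^2_{X,\fhat}\le\mV\{\fhat(X)\given\fhat\}/n_1$ and does not address the sign either. Making this step rigorous needs a genuine additional argument, or an added condition forcing $\mP(\Delta<0)\to0$; it cannot rest on \Cref{Assumption: xssMMD under H1}.
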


The above theorem confirms that $\Psi_{\mathrm{xss}}$ is asymptotically level $\alpha$ under the null and achieves at least the same power as $\Psi_{\mathrm{x}}$ under the alternative. As for the general test, the key insight behind the power gain of $\Psi_{\mathrm{xss}}$ lies in the effective use of unlabeled data, which reduces the variance of the test statistic while preserving the same mean. This insight, together with the asymptotic normality, enables a direct comparison of the power of the xssMMD and xMMD tests.

The novelty of \Cref{Theorem: xssMMD} is in extending the conditions for asymptotic normality to the alternative, whereas prior work has primarily focused on the null. This extension is crucial for power comparisons and requires substantial effort to establish. Unlike \Cref{Theorem: General Power Expression}, \Cref{Theorem: xssMMD} additionally assumes $n_1 \asymp m_1$ and $n_2 \asymp m_2$. These conditions are imposed to facilitate a comparison of $\Psi_{\mathrm{x}}$ and $\Psi_{\mathrm{xss}}$ under common and concrete moment assumptions, which could be relaxed under more abstract conditions. Alternatively, when $m_1 \geq n_1$ and $m_2 \geq n_2$, one could discard a portion of the unlabeled samples to ensure the asymptotic balance condition.

\noindent \textbf{Consistency in Power.} The power property of the xssMMD test, as stated in \Cref{Theorem: xssMMD}, is established under the assumptions that the centered test statistic converges to a normal distribution under the alternative. Here, we present independent conditions under which the xssMMD test remains consistent in power (i.e., the power approaches one), without relying on the asymptotic normality. Below, a subscript $n$ is added to indicate that the corresponding sequence may vary with $n = n_1 \wedge n_2$.

\begin{lemma} \label{Lemma: consistency}
	Let $\{\delta_{n}: n \geq 2\}$ be any positive sequence such that $\delta_n \rightarrow 0$, and $\gamma_{n} \coloneqq \mathrm{MMD}(P_{X,n},P_{Y,n})$. If
	\begin{align*}
		\sup_{(P_{X,n},P_{Y,n}) \in \mathcal{P}_n} \Biggl\{&  \frac{\mE_{P_{X,n},P_{Y,n}}\bigl[\widehat{\sigma}^{\dagger 2}_{X,\fhat} + \widehat{\sigma}^{\dagger 2}_{Y,\fhat}\bigr]}{\delta_n\gamma_n^4}  \\
		& + \frac{\mathrm{Var}_{P_{X,n},P_{Y,n}}\bigl[\widehat{\mu}_{X,\fhat}^{\dagger} - \widehat{\mu}_{Y,\fhat}^\dagger\bigr]}{\gamma_n^4} \Biggr\} = o(1),
	\end{align*}
	then $\Psi_{\mathrm{xss}}$ is consistent in power uniformly over $\mathcal{P}_n$ as $\inf_{(P_{X,n},P_{Y,n}) \in \mathcal{P}_n} \mE_{P_{X,n},P_{Y,n}}[\Psi_{\mathrm{xss}}] = 1$.
\end{lemma}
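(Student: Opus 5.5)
The plan is a Chebyshev/Markov argument on numerator and denominator separately. Write $N \coloneqq \widehat{\mu}_{X,\fhat}^{\dagger} - \widehat{\mu}_{Y,\fhat}^\dagger$ and $D^2 \coloneqq \widehat{\sigma}^{\dagger 2}_{X,\fhat} + \widehat{\sigma}^{\dagger 2}_{Y,\fhat}$. The key observation is that the mean of the numerator is of order $\gamma_n^2$. Conditionally on $\fhat$, the cross-fitted estimator $\widehat{\mu}^\dagger_{X,\fhat}$ is unbiased for $\mE[\fhat(X)\given \fhat]$: the correction term $\widehat{\mE}[\fhat(X_i)\given V_i]$ is trained on the other fold, so its labeled and unlabeled averages have equal conditional means. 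The same holds for $Y$. Hence
\[
\mE[N] = \mE\bigl[\mE[\fhat(X)\given\fhat] - \mE[\fhat(Y)\given\fhat]\bigr] = \bigl\langle \mu_X - \mu_Y, \mE[\fhat]\bigr\rangle_{\mathcal{H}} = \|\mu_X-\mu_Y\|_{\mathcal{H}}^2 = \gamma_n^2,
\]
using the independence of $\fhat$ from the projected samples and the reproducing property with mean embeddings $\mu_X,\mu_Y$.

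\textbf{Step 1.} Show that $N$ is not too small. By Chebyshev,
\[
\mP\bigl(N \le \gamma_n^2/2\bigr) \le \frac{4\,\mathrm{Var}[N]}{\gamma_n^4},
\]
and this is $o(1)$ uniformly over $\mathcal{P}_n$ by the second term in the hypothesis.

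\textbf{Step 2.} Show that $D$ is small. By Markov,
\[
\mP\bigl(D^2 \ge \gamma_n^4/(4 z_{1-\alpha}^2)\bigr) \le \frac{4 z_{1-\alpha}^2\,\mE[D^2]}{\gamma_n^4}.
\]
The hypothesis gives $\mE[D^2]/\gamma_n^4 = o(\delta_n) = o(1)$ uniformly, so this bound vanishes. (If one wants a threshold decaying with $n$, use $\mP(D^2\ge \delta_n^{1/2}\gamma_n^4)\le \delta_n^{1/2}\cdot o(1)\to0$ instead. The constant threshold is enough, and the factor $\delta_n$ only makes the condition stronger than needed.) For $z_{1-\alpha}\le 0$, Step 2 is unnecessary.

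\textbf{Step 3.} Combine the two steps. On the intersection of the complements of the two events,
\[
\xssMMD = N/D > \frac{\gamma_n^2/2}{\gamma_n^2/(2z_{1-\alpha})} = z_{1-\alpha},
\]
so
\[
1-\mE[\Psi_{\mathrm{xss}}] \le \frac{4\,\mathrm{Var}[N]}{\gamma_n^4} + \frac{4 z_{1-\alpha}^2\,\mE[D^2]}{\gamma_n^4}.
\]
Taking the supremum over $\mathcal{P}_n$ shows this tends to zero, which gives uniform consistency.

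\textbf{Main obstacle.} The only delicate point is the unbiasedness identity $\mE[N]=\gamma_n^2$ under cross-fitting. One must check that each correction term $\widehat{\mE}[\fhat(X_i)\given V_i]$ is evaluated at an observation independent of its training fold, and that the unlabeled $V$'s share the law $P_V$. Then the corrections cancel in expectation conditionally on $\fhat$ and the training folds. One must also confirm that $\mE[\fhat]=\mu_X-\mu_Y$ (Bochner integral), which holds because $\fhat$ is built from independent copies. If the exact formulas in the appendix introduce a slight weighting mismatch between odd and even folds, I would absorb the resulting bias into the variance term. Failing that, I would require only $\mE[N]\ge c\gamma_n^2$, which suffices for the same argument.
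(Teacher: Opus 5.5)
Your proof is correct and follows the paper's route. The paper omits the details and simply invokes the Chebyshev/Markov argument of Shekhar et al.'s Theorem 8, together with the observation that the numerator of $\xssMMD$ has mean $\mathrm{MMD}(P_X,P_Y)^2$. Your cross-fitting cancellation check supplies that observation explicitly. To be precise, condition on $\fhat$ and only the fold that trained each correction term; conditioning on both folds at once fixes the labeled $V_i$. Your constant-threshold Markov step is a harmless simplification showing that the $\delta_n$ factor is not needed for fixed $\alpha$.
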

The lemma above corresponds to \citet[][Theorem 8]{shekhar2022permutation}, which forms the primary foundation for their other results, including minimax-rate optimality. In \Cref{Section: Consistency in Power}, we show that the condition in \citet[][Theorem 8]{shekhar2022permutation} is stronger than that in \Cref{Lemma: consistency}, provided that $\widehat{\mE}[\fhat(X) \given V, \fhat]$ and $\widehat{\mE}[\fhat(Y) \given W, \fhat]$ exhibit ``well-behaved'' properties. This implies that $\Psi_{\mathrm{xss}}$ is consistent in power whenever $\Psi_{\mathrm{x}}$ is. Importantly, this result does not rely on the consistency of $\widehat{\mE}[\fhat(X) \given V, \fhat]$ and $\widehat{\mE}[\fhat(Y) \given W, \fhat]$ with the true conditional expectations. Instead, it requires that the residuals $\fhat(X) - \widehat{\mE}[\fhat(X) \given V, \fhat]$ and $\fhat(Y) - \widehat{\mE}[\fhat(Y) \given W, \fhat]$ have second moments comparable to the variances of $\fhat(X)$ and $\fhat(Y)$, respectively---a much weaker condition than the full consistency of the conditional expectations. We discuss further implications of \Cref{Lemma: consistency} in \Cref{Section: Consistency in Power}.

\section{\MakeUppercase{Experiments}} \label{Section: Experiments}

We now experimentally validate the theoretical results stated in the previous sections. In particular, our experiments show that (i) the limiting null distribution of the proposed test statistic in \eqref{Eq: xssMMD} follows a $N(0,1)$ distribution across a wide range of dimensions $d$, sample sizes $n_1, n_2, m_1, m_2$, and kernel $k$, and (ii) the power of the xssMMD test is comparable to and often much higher than that of the xMMD test and the kernel-MMD permutation (MMD-perm) test. Moreover, we examine its performance on several real-world datasets. Additional experimental findings can be found in \Cref{appendix: Additional Experiments}.

\noindent \textbf{Limiting Null Distribution.} We demonstrated in \Cref{Theorem: xssMMD} that the xssMMD test is asymptotically level $\alpha$ under the null, given some assumptions. We empirically validate this result by considering the case where $P_{XV}=P_{YW}=N\left(\mathbf{0}_{2d}, I_{2d}\right)$. We study the effects of dimensionality, sample skewness, labeled-unlabeled sample size ratio, methods for estimating conditional expectation, and choice of kernel on the null distribution of the test statistic. Specifically, we consider two scenarios: 




\noindent $\bullet$ \textbf{Scenario 1 (Null).}  $d=10$, $n_1/n_2=1$, $n_1/m_1=n_2/m_2=1$, Gaussian kernel with the median heuristic.

\noindent $\bullet$ \textbf{Scenario 2 (Null).}  $d=100$, $n_1/n_2=0.1$, $n_1/m_1=n_2/m_2=0.5$, bilinear kernel.



Note that we applied a bandwidth determined by the median heuristic when using a Gaussian kernel. Each scenario considers different methods for estimating the conditional expectation, including $k$-nearest neighbors (knn), kernel regression (kernel), and random forest (rf). As shown in \Cref{fig:null_dist}, the null distribution of $\xssMMD$ is robust to all these factors and closely approximates $N(0,1)$. This confirms that our test, calibrated using the normal quantile, successfully controls the level $\alpha$ in the scenarios considered. Additional results and implementation details are provided in \Cref{appendix: Additional Experiments}.

\begin{figure}[t!]	\centering
	\includegraphics[width=\columnwidth]{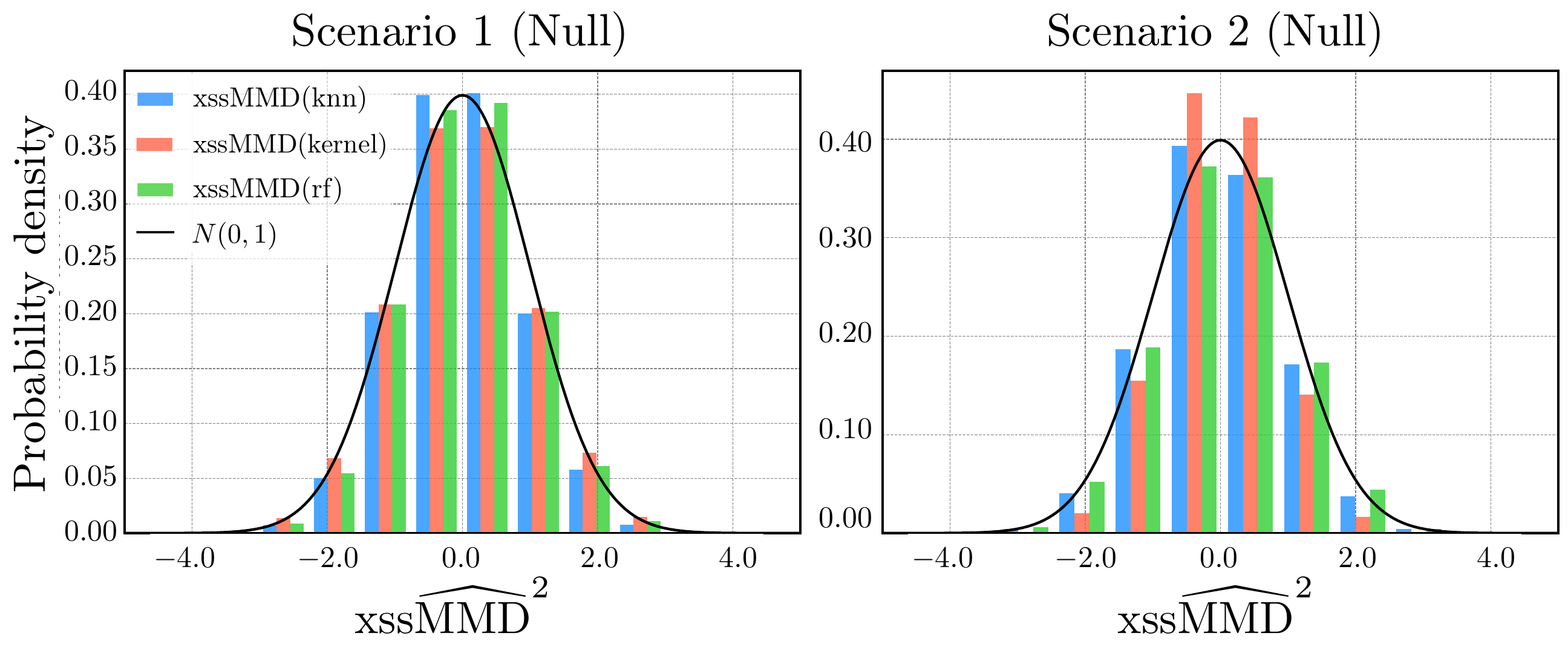}
	\caption{Experimental results for the distribution of $\xssMMD$ under the null hypothesis. The plots demonstrate that the test statistic closely follows a $N(0,1)$ distribution across various parameter settings, confirming the validity of the xssMMD test.} 
    \label{fig:null_dist}
    \vspace{-0.5cm}
\end{figure}

\begin{figure*}[t!]
    \centering
    \begin{subfigure}[t]{0.22\textwidth} 
        \centering
        \includegraphics[width=\linewidth]{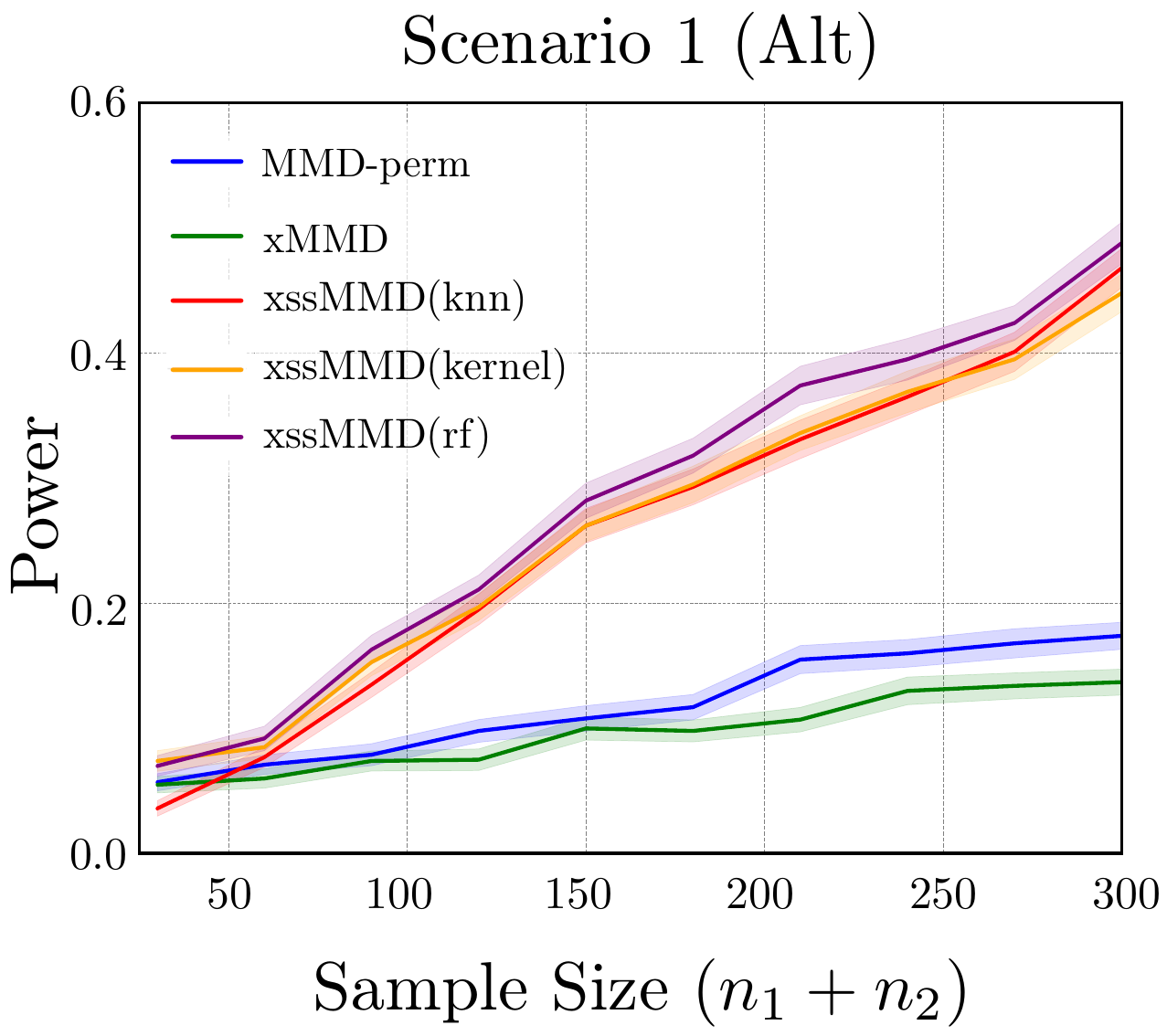}
    \end{subfigure}%
    \hfill%
    \begin{subfigure}[t]{0.21\textwidth}
        \centering
        \includegraphics[width=\linewidth]{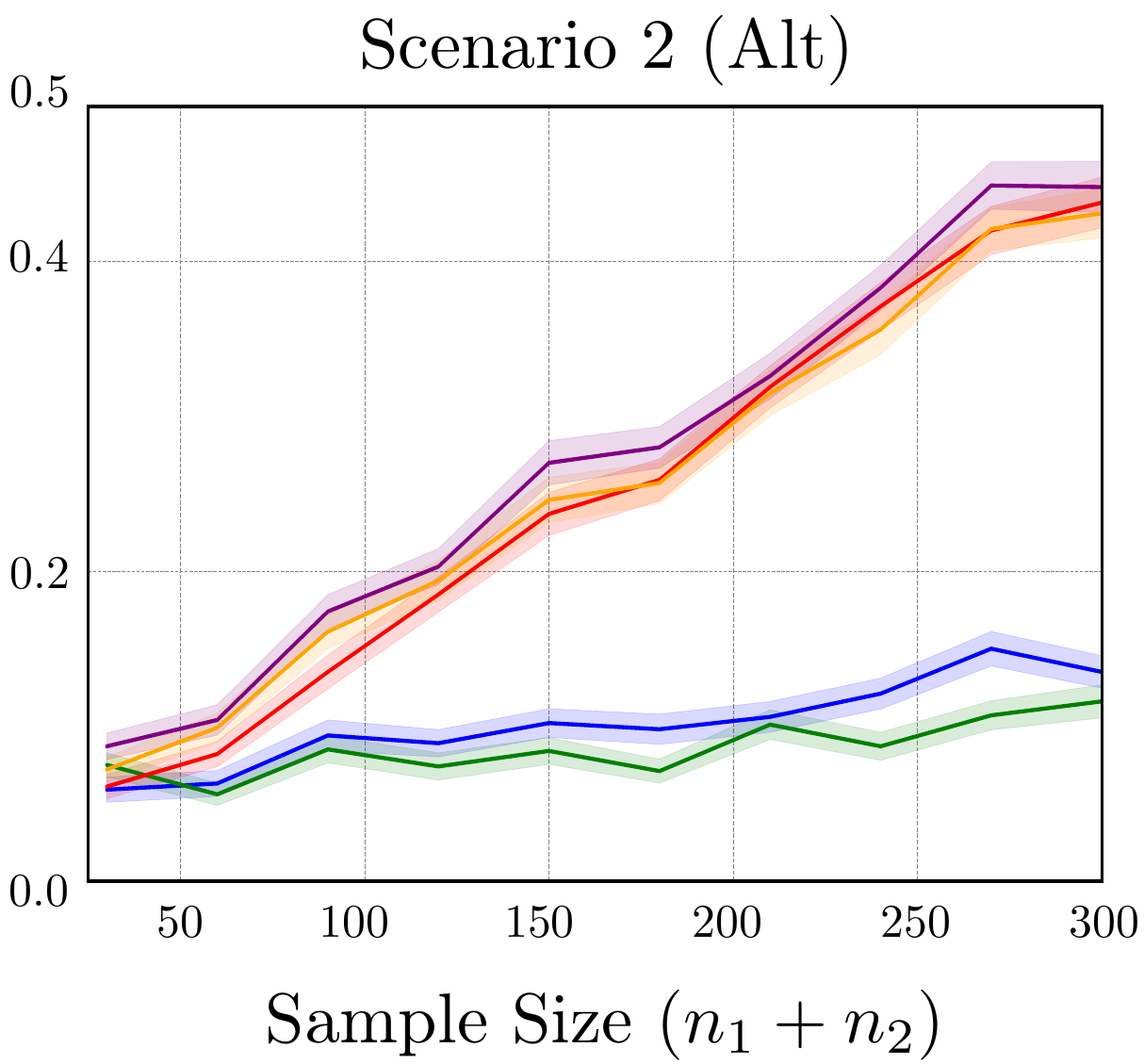}
    \end{subfigure}%
    \hfill%
    \begin{subfigure}[t]{0.21\textwidth} 
        \centering
        \includegraphics[width=\linewidth]{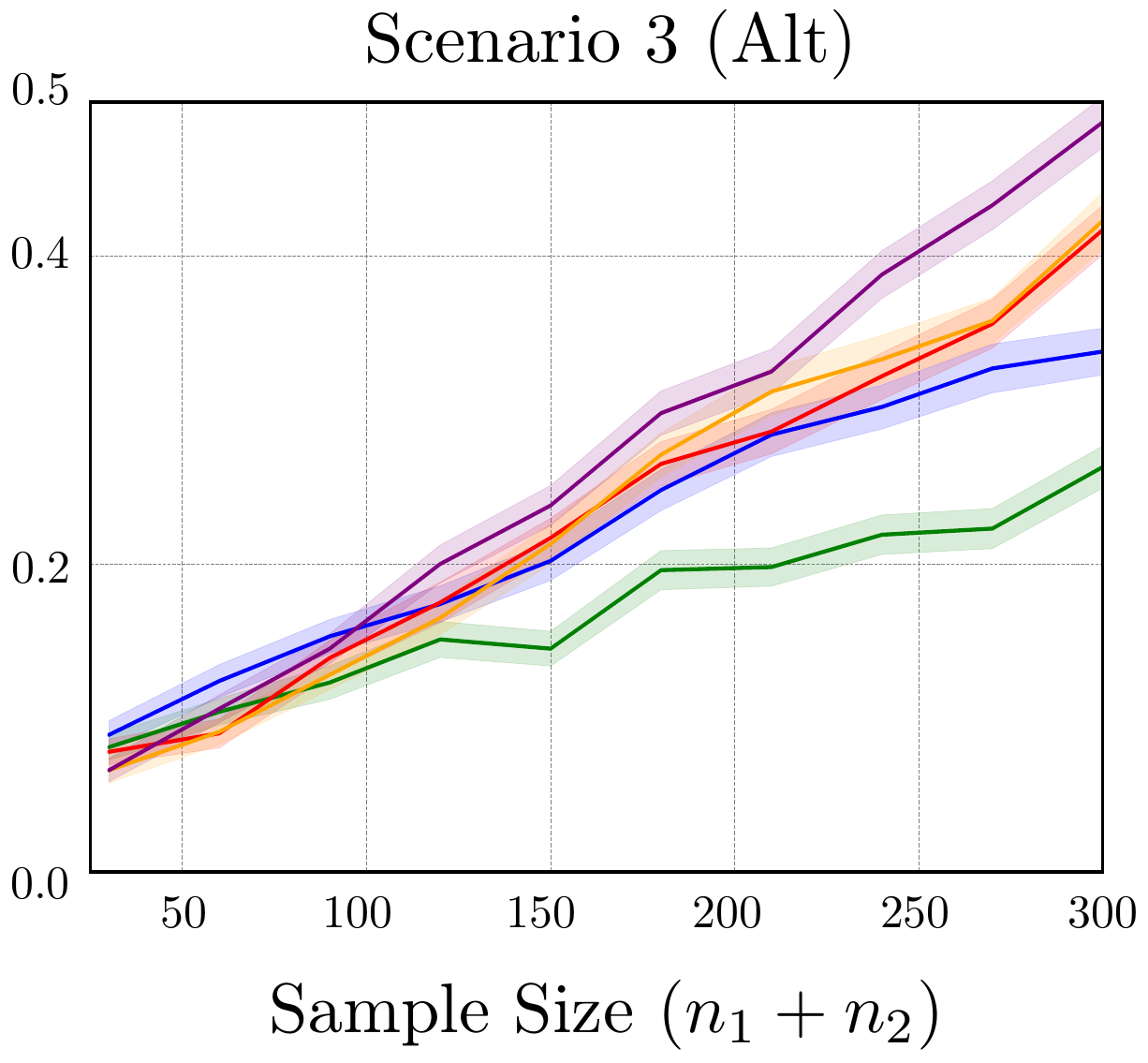}
    \end{subfigure}%
    \hfill%
    \begin{subfigure}[t]{0.21\textwidth} 
        \centering
        \includegraphics[width=\linewidth]{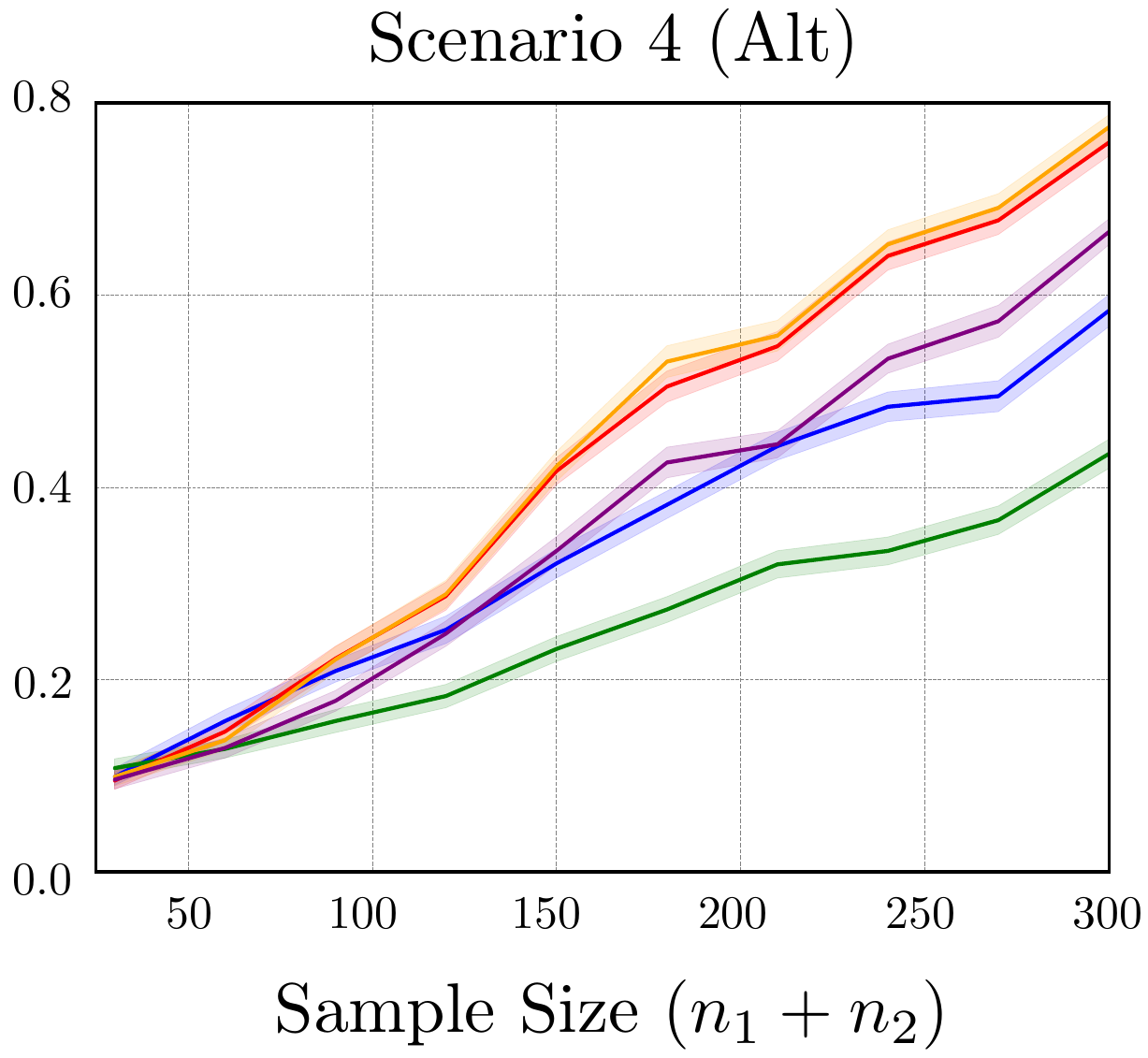}
    \end{subfigure}%
    \caption{Power comparisons across different dependence scenarios. The xssMMD tests, employing various regression methods, outperform existing approaches in the considered scenarios, particularly when $X$ and $Y$ exhibit strong dependence on $V$ and $W$.}
    \label{fig:power curves}
\end{figure*}

\noindent \textbf{Power Analysis.} \Cref{Theorem: xssMMD} shows that the xssMMD test achieves power at least as high as the xMMD test under certain conditions. We empirically validate this by comparing our test ($\Psi_{\mathrm{xss}}$) with the xMMD test ($\Psi_{\mathrm{x}}$) using a Gaussian kernel and the MMD-perm test ($B=200$ permutations). We perform simulations at $\alpha=0.05$ and present results from 1,000 trials in \Cref{fig:power curves}.

We consider the case of  $P_{V}=N(\mathbf{0}_{d}, \Sigma_V)$ and $P_{W}=N(\mathbf{a}_{\epsilon, j}, \Sigma_W)$, where $\mathbf{a}_{\epsilon, j} \in \mathbb{R}^d$ has its first $j$ entries equal to $\epsilon$ and the rest zero. We let $\Sigma_V=\Sigma_W=\rho\mathds{1}_d\mathds{1}^\top_d+(1-\rho)I_d$ and obtain $\{V_i\}_{i=1}^{n_1+m_1}$ by sampling $n_1+m_1$ independent samples from $P_V$. We then construct $\mathbb{V} = (V^\top_1, \ldots, V^\top_{n_1})^\top \in \mathbb{R}^{n_1 \times d}$ and obtain a set of $n_1$ labeled samples, $\mathbb{X} = \mathbb{V} \cdot \mathbf{b}$, where $\mathbf{b} = (b_i)_{i=1}^{d} \in \mathbb{R}^d$ with $b_i=1$ if $i$ belongs to an index set $\mathcal{I}$ and $b_i=0$ otherwise. A similar construction is applied to $\mathbb{Y}$ and $\mathbb{W}.$ Each scenario differs based on how we construct $X$ and $V$, which determines the dependence between the labeled and unlabeled data. In particular, we consider four scenarios: 

\noindent $\bullet$ \textbf{Scenario 1 (Alt).} $\rho=0.95$, $\mathcal{I}=\{1, d-1, d\}$

\noindent $\bullet$ \textbf{Scenario 2 (Alt).} $\rho=0.95$, $\mathcal{I}=[d]$

\noindent $\bullet$ \textbf{Scenario 3 (Alt).} $\rho=0.1$, $\mathcal{I}=\{1, d-1, d\}$

\noindent $\bullet$ \textbf{Scenario 4 (Alt).} $\rho=0.1$, $\mathcal{I}=[d]$

In all scenarios in \Cref{fig:power curves}, we fix parameters at $\epsilon=0.3$, $j=3$, $d=10$, $n_1/n_2=1$, and $n_1/m_1=n_2/m_2=0.1$, using a Gaussian kernel with the median heuristic. The main factors controlling the dependence between $X$ and $V$ are $\rho$ and $\mathbf{b}$. For example, in Scenarios 1 and 3 (Alt), $X$ and $Y$ are sums of the first and last two entries of $V$ and $W$, so the covariance vector has $1+2\rho$ for the first and last two entries and $3\rho$ otherwise. In contrast, in Scenarios 2 and 4 (Alt), $X$ and $Y$ are sums of all entries, yielding uniform covariance of $1+(d-1)\rho$. This leads to stronger dependence when $d\geq 3$, with larger $\rho$ further enhancing it and improving the performance of the xssMMD test.


As shown in \Cref{fig:power curves}, the xssMMD test significantly outperforms other methods when additional covariates strongly correlate with the labeled data (Scenarios 1 and 2). Even when the correlation is weaker (Scenarios 3 and 4), the xssMMD still demonstrates consistently better or comparable performance. These results highlight the advantage of leveraging auxiliary covariates, particularly when dependencies are strong. Further implementation details are provided in \Cref{appendix: Additional Experiments}.

\textbf{Experiment on HTRU2 dataset.} We next evaluate the performance of xssMMD using the HTRU2 dataset \citep{htru2_372}, which involves the classification of pulsars versus non-pulsars based on radio signal features of the integrated pulse profile (IP) and DM-SNR curve (DM). We examine several scenarios of labeled data with various levels of Gaussian noise added. A detailed description of the experimental setup and results are provided in \Cref{appendix: HTRU2 dataset}. As shown in \Cref{table: pulsar experiment}, xssMMD consistently outperforms baseline methods across most of the settings and noise levels. Even when Gaussian noise degrades the labeled data, xssMMD maintains a high power, leveraging auxiliary covariates effectively. These results highlight the strength of the method in extracting signals from complementary, unlabeled information under semi-supervised conditions.

\textbf{Experiment on Caltech-UCSD Bird dataset.} We also examine the performance of our proposed methods on the Caltech-UCSD Bird dataset~\citep{wah2011caltech}, which contains 11,788 images of 200 bird species. Each image has 10 detailed single-sentence descriptions collected through the Amazon Mechanical Turk (AMT) platform \citep{reed2016learning}. In this experiment, we conduct two-sample tests to detect differences among groups of birds categorized by their diet and habitat. A detailed description of the experimental setup is provided in \Cref{appendix: CUB dataset}. The MMD-perm and xMMD tests rely solely on textual descriptions, while the xssMMD test additionally incorporates image data as auxiliary covariates. As shown in \Cref{table: cub dataset}, the xssMMD test consistently outperforms the other tests in all cases. This confirms that the use of additional covariates improves the power of the test.  
This superior performance is likely due to strong dependency between the textual descriptions (labeled data) and images (unlabeled data), allowing xssMMD to extract informative representations from the additional covariates. 

\textbf{Experiment on MNIST dataset.} We further evaluate the performance of our proposed methods on the MNIST dataset \citep{lecun2010mnist}. We construct a testing problem to detect distributional differences between two distinct groups of handwritten digits. We use clean images as labeled data and images with Gaussian noise as unlabeled data. A detailed description of the experimental setup and results are provided in \Cref{appendix: MNIST dataset}. As shown in \Cref{table: mnist experiment}, the xssMMD test outperforms the MMD-perm and xMMD tests across most of the tested conditions, especially when the noise level is low. Despite the increase of the noise level, the power of the xssMMD test is still higher than that of the xMMD test. This result demonstrates that xssMMD effectively utilizes information from the additional covariates, even when those covariates are corrupted by Gaussian noise. The successful integration of noisy auxiliary data underscores the strength of the method, boosting the power in semi-supervised settings.

\begin{table}[t]
    \footnotesize
    \setlength{\tabcolsep}{3pt}
	\caption{Estimated test power for detecting the difference between two bird groups with test level $\alpha= 0.05$.}
	\label{table: cub dataset}
	\begin{center}
	\begin{tabular}{lllr}
	\toprule
	Group 1         & Group 2     & Test        & Power \\
	\midrule
	                &             & MMD-perm    & 0.957 \\
	Insect          & Forest      & xMMD        & 0.837 \\
                    &             & xssMMD      & \textbf{0.989} \\
	\hline
	                &               & MMD-perm    & 0.626 \\
	Fish			& Wetland       & xMMD        & 0.471 \\
					&             & xssMMD      & \textbf{0.808} \\
	\hline
    	           &             & MMD-perm    & 0.992 \\
	Seed		    & Scrub      & xMMD        & 0.920 \\
					&             & xssMMD      & \textbf{0.998} \\
	\bottomrule
	\end{tabular}
	\end{center}
\end{table}

\section{\MakeUppercase{Discussion}} \label{Section: Discussion}
In this paper, we present a semi-supervised framework for two-sample testing that incorporates both labeled and unlabeled covariate data to improve power while maintaining asymptotic level control. Leveraging sample-splitting and cross-fitting, the proposed method integrates covariate information and achieves asymptotic properties such as power consistency. Our analysis highlights the benefits of utilizing unlabeled data and provides conditions ensuring the validity of our tests. Along with numerical experiments, these results emphasize the potential of the framework as a theoretically sound tool for semi-supervised inference.

Several promising directions remain for exploration. First, extending the framework to broader contexts, such as $k$-sample testing and independence testing, would expand its applicability to complex semi-supervised problems. Exploring witness functions beyond MMD offers another avenue for future research. Moreover, studying methods for estimating conditional mean embeddings and exploring alternative variance reduction techniques, such as control covariates, may further refine the proposed framework.

\subsubsection*{Acknowledgements}
Ilmun Kim gratefully acknowledges support from the Korean government (RS-2023-00211073) and KAIST startup funding (KAIST-G04250059).

\bibliography{reference}

\section*{Checklist}

\begin{enumerate}

  \item For all models and algorithms presented, check if you include:
  \begin{enumerate}
    \item A clear description of the mathematical setting, assumptions, algorithm, and/or model. [Yes] The paper clearly states all assumptions for each theorem. Our general framework is outlined in \Cref{Section: General Semi-Supervised Two-Sample Test}. We subsequently detail our method in \Cref{Section: Semi-Supervised Kernel Two-Sample Test} and discuss assumptions in \Cref{appendix: Assumptions for Theorem xssMMD}.
    \item An analysis of the properties and complexity (time, space, sample size) of any algorithm. [Yes] In \Cref{Section: Theoretical Analysis}, we prove several asymptotic properties of our model. Further theoretical results and all proofs are provided in \Cref{appendix: Additional Theoretical Results} and \Cref{appendix: proof of main results}.
    \item (Optional) Anonymized source code, with specification of all dependencies, including external libraries. [Yes] The supplementary materials include all code and instructions, with dependencies clearly specified.
  \end{enumerate}

  \item For any theoretical claim, check if you include:
  \begin{enumerate}
    \item Statements of the full set of assumptions of all theoretical results. [Yes] We provide \Cref{Assumption: moment condition} for \Cref{Theorem: General Power Expression} and state additional assumptions for theoretical analysis in \Cref{Section: Theoretical Analysis}.
    \item Complete proofs of all theoretical results. [Yes] All proofs for our theoretical results are provided in \Cref{appendix: proof of main results}.
    \item Clear explanations of any assumptions. [Yes] All assumptions are provided with intuitive sketches and explanations.
  \end{enumerate}

  \item For all figures and tables that present empirical results, check if you include:
  \begin{enumerate}
    \item The code, data, and instructions needed to reproduce the main experimental results (either in the supplemental material or as a URL). [Yes] In the supplementary material, we provide code and detailed instructions to reproduce the experiments.
    \item All the training details (e.g., data splits, hyperparameters, how they were chosen). [Yes] \Cref{Section: Experiments} details our experiments, with further settings and additional results provided in \Cref{appendix: Additional Experiments}.
    \item A clear definition of the specific measure or statistics and error bars (e.g., with respect to the random seed after running experiments multiple times). [Yes] In \Cref{Section: Experiments} and \Cref{appendix: Additional Experiments}, we specify the experimental settings and describe the evaluation measures used in our experiments.
    \item A description of the computing infrastructure used. (e.g., type of GPUs, internal cluster, or cloud provider). [Yes] In \Cref{appendix: Additional Experiments}, we describe the computing environments used in the experiments.
  \end{enumerate}

  \item If you are using existing assets (e.g., code, data, models) or curating/releasing new assets, check if you include:
  \begin{enumerate}
    \item Citations of the creator If your work uses existing assets. [Yes] In \Cref{Section: Experiments}, several datasets used in our experiments are publicly available and cited properly.    
    \item The license information of the assets, if applicable. [Not Applicable]
    \item New assets either in the supplemental material or as a URL, if applicable. [Yes] In \Cref{Section: Experiments}, we demonstrate our method using several datasets. The supplementary materials provide the necessary code.
    \item Information about consent from data providers/curators. [Not Applicable]
    \item Discussion of sensible content if applicable, e.g., personally identifiable information or offensive content. [Not Applicable]
  \end{enumerate}

  \item If you used crowdsourcing or conducted research with human subjects, check if you include:
  \begin{enumerate}
    \item The full text of instructions given to participants and screenshots. [Not Applicable]
    \item Descriptions of potential participant risks, with links to Institutional Review Board (IRB) approvals if applicable. [Not Applicable]
    \item The estimated hourly wage paid to participants and the total amount spent on participant compensation. [Not Applicable]
  \end{enumerate}

\end{enumerate}

\clearpage
\appendix
\thispagestyle{empty}

\onecolumn
\aistatstitle{A Semi-Supervised Kernel Two-Sample Test: \\
Supplementary Materials}




\section{Overview of the xssMMD Framework and Theoretical Contributions} \label{appendix: overview and roadmap}

\subsection{Visual Overview of the xssMMD Framework}
To complement the main text, we include here an illustration of the xssMMD construction.
\begin{figure}[h!]	\centering
	\includegraphics[width=0.5\columnwidth]{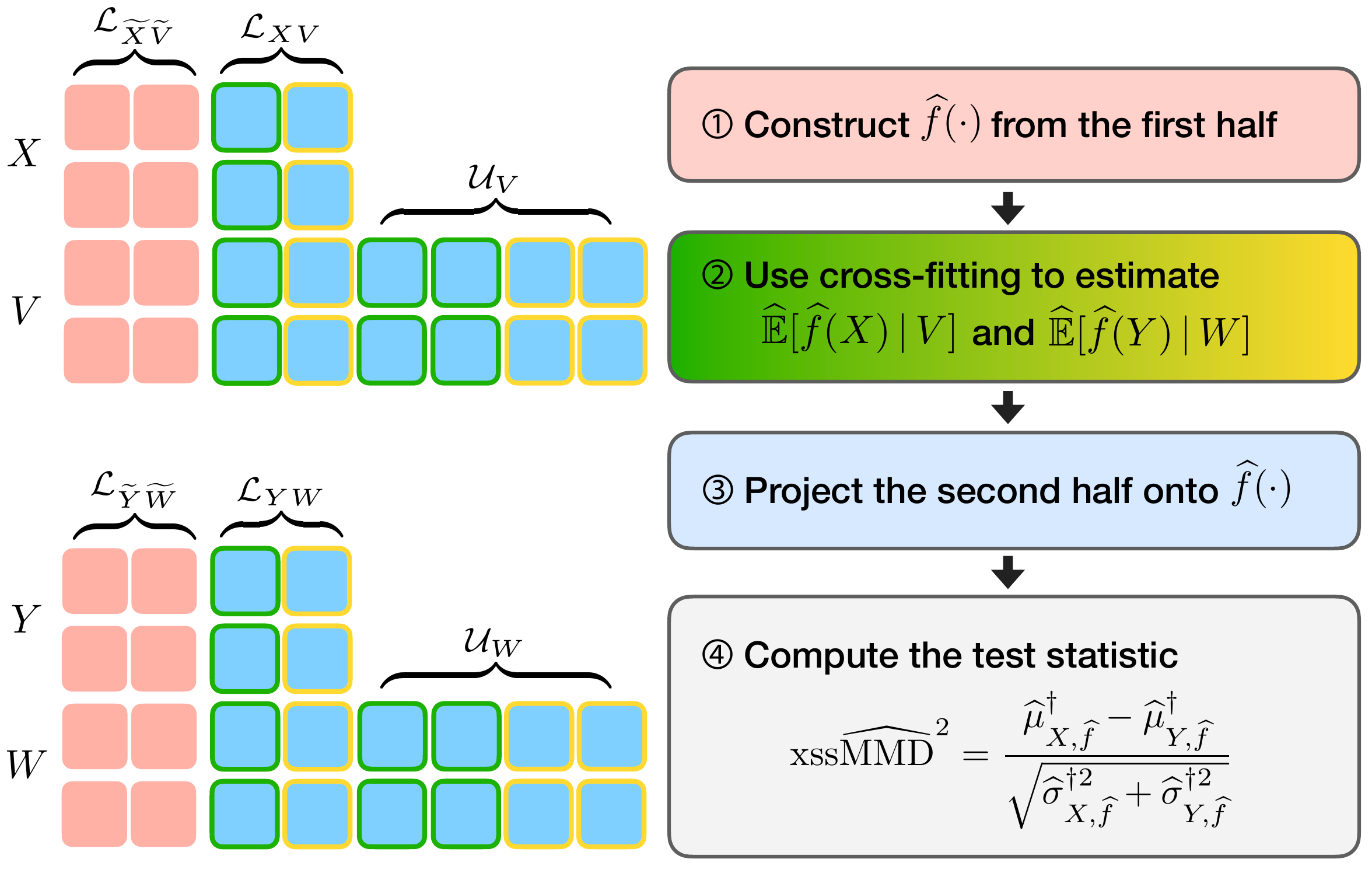}
	\caption{An illustration of the construction of the xssMMD statistic based on the same principles as the general framework and using an empirical estimate of the MMD witness function.}
    \label{fig:summary_of_testing}
\end{figure}

This figure provides a visual breakdown of how the test statistic $\xssMMD$ is derived in practice. Specifically, it illustrates the key steps involved in the cross-fitting procedure: splitting the data, estimating the witness function $f$ from the first half, and computing the statistic by projecting the second half onto the learned function $\fhat$. By leveraging auxiliary covariates (such as $V$ and $W$), the method estimates conditional expectations $\mE[\fhat(X)|V]$ and $\mE[\fhat(Y)|W]$, thereby integrating semi-supervised information into the testing framework. This enables the test to maintain nonparametric flexibility while enhancing power.

\subsection{Roadmap of Theoretical Results}
This section provides a structured overview of the theoretical guarantees established in this work. To facilitate navigation through the various lemmas, propositions, and theorems, we summarize our key theoretical contributions and their exact locations in Table \ref{tab:theoretical_summary_app}.

\begin{table}[!bhtp]
\centering
\footnotesize
\caption{Summary of Theoretical Results and Guarantees.}
\label{tab:theoretical_summary_app}
\begin{adjustbox}{max width=\linewidth}
\begin{tabular}{llc}
\toprule
Result & Brief Description & Location \\
\midrule
\Cref{Theorem: General Power Expression} & Asymptotic power and size control of the oracle semi-supervised test & \Cref{Section: Oracle Test} \\
\Cref{Corollary: power expression for cross-fit test} & Asymptotic power approximation of the cross-fit test to the oracle test & \Cref{Section: Procedure with CF} \\
\Cref{Theorem: xssMMD} & Asymptotic level and power guarantees of the xssMMD test compared to xMMD & \Cref{Section: Theoretical Analysis} \\
\Cref{Lemma: consistency} & Power consistency of the xssMMD test against fixed and local alternatives & \Cref{Section: Theoretical Analysis} \\
\midrule
\Cref{Theorem: consistency condition} & Power consistency of xssMMD under weaker conditions on estimated conditional expectations & \Cref{Section: Consistency in Power} \\
\Cref{corollary:fixed-alternative} & Power consistency of the xssMMD test against fixed alternatives & \Cref{Section: Consistency in Power} \\
\Cref{corollary: smooth local alternatives} & Power consistency of the xssMMD test against smooth local alternatives & \Cref{Section: Consistency in Power} \\
\Cref{Corollary: cross-fitting using linear operator} & Sufficient conditions for Assumption C.2 using linear smoothers & \Cref{Section: Linear Smoother} \\
\Cref{thm: asymptotic power expression} & Explicit asymptotic power expression of xssMMD under Gaussianity and a linear kernel & \Cref{Section: Asymptotic Power Expression using a Linear Kernel} \\
\bottomrule
\end{tabular}
\end{adjustbox}
\end{table}

\section{Notation} \label{Section: notation}

For a sequence of random variables $(X_n)_{n\geq 1}$ and another random variable $X$, we write $X_n \convD X$ when $X_n$ converges in distribution to $X$. Likewise, we write $X_n \convP X$ when $X_n$ converges in probability to $X$. For a sequence of positive numbers $(a_n)_{n\geq 1}$, we denote $a_n \lesssim b_n$ if there exists some constant $C>0$, which may depend on some fixed parameters, such that $a_n \leq C b_n$ for all $n \geq 1$. Also, we write $a_n \asymp b_n$ if there exist some positive constants $C_1, C_2$ such that $C_1 \leq |a_n/b_n| \leq C_2$ for all $n \geq 1$. We say $X_n=o_P(a_n)$ when $X_n/a_n \convP 0$, and $a_n=o(1)$ when $a_n \rightarrow 0$ as $n \rightarrow \infty$. Also, we write $a_n=O(1)$ when $|a_n| \leq C$ for some constant $C>0$ for all large $n$. The symbol $\Phi$ represents the cumulative distribution function of the standard normal random variable $N(0,1)$ and the $\alpha$ quantile of $N(0,1)$ is denoted as $z_{\alpha} = \Phi^{-1}(\alpha)$. For two real numbers $a$ and $b$, we use $a \wedge b$ and $a \vee b$ to denote $\min(a,b)$ and $\max(a,b)$, respectively. In numerical studies, we denote $\mathbf{0}_{d}$ as the all-zeros vector in $\mathbb{R}^{d}$, $\mathds{1}_{d}$ as the all-ones vector in $\mathbb{R}^{d}$, and $I_{d}$ as the $d \times d$ identity matrix.

\section{Detailed Discussion on Assumptions for \Cref{Theorem: xssMMD}} \label{appendix: Assumptions for Theorem xssMMD}

In this section, we formally define the centered kernel quantities and provide a more in-depth discussion of the assumptions introduced in \Cref{Section: Theoretical Analysis}. These assumptions are crucial for establishing the asymptotic properties of the xssMMD test and provide a framework for understanding the conditions under which the xssMMD test achieves improved performance. The assumptions involve key quantities defined through the centered kernel $\overline{k}_X$, which captures pairwise relationships while removing marginal effects. Specifically, $\overline{k} \coloneqq \overline{k}_X$ with respect to $X_1,X_2 \overset{\mathrm{i.i.d.}}{\sim} P_X$ is defined as:
\begin{align} \label{Eq: centered kernel} 
	\overline{k}(x_1,x_2) \coloneqq k(x_1,x_2) - \mE[k(X_1,X_2) \given X_1=x_1] - \mE[k(X_1,X_2) \given X_2=x_2] + \mE[k(X_1,X_2)]. 
\end{align}
Based on this, we further define $\overline{g}_X(x_1,x_2) \coloneqq \mE[\overline{k}(x_1,X_1)\overline{k}(x_2,X_1)]$, which quantifies the dependence structure of $X$ and encodes covariance-like properties. 

\textbf{Discussion on \Cref{Assumption: xssMMD under H0}:}
We note that similar conditions have been considered in the literature. For example, \citet[][page 23, condition (29b)]{kim2024dimension} introduced a related assumption in the context of high-dimensional testing with U-statistics. Likewise, \citet[][Theorem 5]{shekhar2022permutation} proposed a similar but stronger condition to establish the asymptotic normality of the test statistic in the xMMD test. This moment condition is also used in the proof of \citet[][Theorem 1]{li2024optimality}, where it is shown to hold for the Gaussian kernel with bandwidths that grow at specific rates relative to the sample size.

As highlighted in the main text, \Cref{Assumption: xssMMD under H0} is a Lyapunov-type condition. A set of bounded kernels (such as the Gaussian kernel used in our experiments) serves as a primary example of a class satisfying this condition, provided that the variance is lower-bounded by a constant. Because bounded kernels imply that the centered kernel functions are uniformly bounded, all higher-order moments in the numerator of \Cref{Assumption: xssMMD under H0} are bounded by finite constants. Meanwhile, the denominator, with the variance term not vanishing too rapidly, grows with the sample size $n$. Consequently, the ratio vanishes asymptotically ($o(1)$), implying that the assumption is satisfied for this practical class of kernels.

\textbf{Discussion on \Cref{Assumption: consistency of conditional expectation}:}
\Cref{Assumption: consistency of conditional expectation} ensures that the conditional expectations of $\fhat(X)$ and $\fhat(Y)$ are estimated with sufficient accuracy relative to their variances. Under these conditions, we rigorously show that the test statistic, after applying cross-fitting, is asymptotically equivalent to the oracle test statistic using the true conditional expectations.

Importantly, the boundedness of the kernel ensures that the witness function is uniformly bounded. In non-parametric regression theory \citep{gyorfi2006distribution}, the boundedness of the target function is a sufficient condition to establish the consistency of standard estimators like $k$-NN and kernel regression. Therefore, the use of a bounded kernel ensures that the numerator in \Cref{Assumption: consistency of conditional expectation} converges to zero, satisfying the condition for any consistent regression method. In \Cref{Section: Linear Smoother}, we provide a more concrete condition for the consistency of the conditional expectations in the context of linear smoothers.

\textbf{Discussion on \Cref{Assumption: xssMMD under H1}:}
The condition for bounded ratios between density functions is not necessary to obtain the asymptotic normality of the test statistic, but it greatly simplifies our conditions. In particular, this allows the expectations associated with $X$ and $Y$ to be comparable up to a constant factor, which results in a more concise expression of the condition. For instance, under the condition, we can write $\mE[\overline{g}_X(Y,Y)] \asymp \mE[\overline{g}_X(X,X)]$ which helps simplify the necessary conditions. The additional condition in \Cref{Assumption: xssMMD under H1} expected to be satisfied under a broad range of alternatives with small $\mathrm{MMD}(P_X,P_Y)$. At a high-level, this condition is derived while verifying the Lyapunov central limit theorem for the test statistic. For further details, refer to the proof of \Cref{Theorem: xssMMD}.small $\mathrm{MMD}(P_X,P_Y)$.

It is important to clarify the implications if \Cref{Assumption: xssMMD under H1} fails. Technically, this assumption is a sufficient condition for establishing asymptotic normality under the alternative, rather than a strict condition for variance reduction. Consequently, if it fails, we cannot use the analytic formula derived in \Cref{Theorem: xssMMD} to directly compare the power. However, this does not automatically imply that the asymptotic power of $\Psi_{xss}$ is lower than that of $\Psi_{x}$. 

That being said, there are specific scenarios where $\Psi_{xss}$ could underperform relative to $\Psi_{x}$. This typically occurs in finite-sample regimes when the conditional expectation is estimated poorly, leading to increased variance. For instance, if one uses a complex regression model (e.g., a deep neural network) on a small sample size where covariates $V$ are completely independent of $X$, the model may overfit to the noise in $V$. In this case, the estimated residuals will have higher variance than the original data, which can lead to a power loss compared to the supervised baseline $\Psi_{x}$. This highlights the important distinction between the condition required for asymptotic normality and the finite-sample estimation risks.

\section{Additional Theoretical Results}\label{appendix: Additional Theoretical Results}
In this section, we extend our theoretical findings in several directions. First, we compare the conditions under which the xssMMD test achieves power consistency with those of the xMMD test. We demonstrate that the xssMMD test attains consistency under weaker assumptions than the xMMD test, thereby highlighting its broader applicability. Moreover, we consider a linear smoothing approach for estimating conditional expectations, which enables us to derive a simplified version of \Cref{Theorem: xssMMD}.
 
\subsection{Consistency in Power} \label{Section: Consistency in Power}
In this subsection, we further examine the consistency in power of the xssMMD test and its relationship with the xMMD test. Notably, the xMMD statistic can be regarded as a special case of the xssMMD statistic with $m_1 = m_2 = 0$. To explicitly define the xMMD statistic, we set $\widetilde{\mu}_{X,\fhat} = n_1^{-1} \sum_{i=1}^{n_1} \widehat{f}(X_i)$, $\widetilde{\mu}_{Y,\fhat} = n_2^{-1} \sum_{i=1}^{n_2} \widehat{f}(Y_i)$, $\widetilde{\sigma}^2_{X,\fhat} = n_1^{-2} \sum_{i=1}^{n_1} \{\widehat{f}(X_i) - \widetilde{\mu}_{X,\fhat}\}^2$, and $\widetilde{\sigma}^2_{Y,\fhat} = n_2^{-2} \sum_{i=1}^{n_2} \{\widehat{f}(Y_i) - \widetilde{\mu}_{Y,\fhat}\}^2$. The xMMD test statistic is then defined as
\begin{align*}
	\xMMD = \frac{\widetilde{\mu}_{X,\fhat} - \widetilde{\mu}_{Y,\fhat}}{\sqrt{\widetilde{\sigma}^2_{X,\fhat} + \widetilde{\sigma}^2_{Y,\fhat}}}.
\end{align*}
Similarly to \Cref{Lemma: consistency}, \citet[][Theorem 8]{shekhar2022permutation} shows that the xMMD test is consistent in power under the following condition:
\begin{align} \label{Eq: xMMD consistency}
	\sup_{(P_{X,n},P_{Y,n}) \in \mathcal{P}_n} \Biggl\{ \frac{\mE_{P_{X,n},P_{Y,n}}\bigl[\widetilde{\sigma}^2_{X,\fhat} + \widetilde{\sigma}^2_{Y,\fhat}\bigr]}{\delta_n\gamma_n^4}  + \frac{\mathrm{Var}_{P_{X,n},P_{Y,n}}\bigl[\widetilde{\mu}_{X,\fhat} - \widetilde{\mu}_{Y,\fhat}\bigr]}{\gamma_n^4} \Biggr\} = o(1),
\end{align}
where $\delta_n$ is any positive sequence converging to zero and $\gamma_n = \mathrm{MMD}(P_{X,n},P_{Y,n})$. We now show that the condition in \eqref{Eq: xMMD consistency} is stronger than that in \Cref{Lemma: consistency} whenever
the second moments of the residuals $\widehat{f}(X) - \widehat{\mE}[\widehat{f}(X) \given V, \widehat{f}]$ and $\widehat{f}(Y) - \widehat{\mE}[\widehat{f}(Y) \given W, \widehat{f}]$ are comparable to the variances of $\widehat{f}(X)$ and $\widehat{f}(Y)$, respectively. 
\begin{theorem} \label{Theorem: consistency condition}
	Suppose the consistency condition for the xMMD test in \eqref{Eq: xMMD consistency} holds. Moreover, suppose that $\widehat{\mE}[\fhat(X) \given V, \fhat] \coloneqq \widehat{u}_X(V)$ and $\widehat{\mE}[\fhat(Y) \given W, \fhat] \coloneqq \widehat{u}_Y(W)$ satisfy the following conditions:
	\begin{align} \label{Eq: consistency condition}
		& \frac{\mE\bigl[\bigl\{\fhat(X) - \widehat{u}_X(V) \bigr\}^2 \bigr]}{\mE\bigl[\mV\bigl\{\fhat(X) \given \fhat \bigr\}\bigr]} \lesssim 1 \quad \text{and} \quad  \frac{\mE\bigl[\bigl\{\fhat(Y) - \widehat{u}_Y(W) \bigr\}^2 \bigr]}{\mE\bigl[\mV\bigl\{\fhat(Y) \given \fhat \bigr\}\bigr]} \lesssim 1.
	\end{align}
	Then \Cref{Lemma: consistency} remains valid.
	\begin{proof}
		The proof can be found in Appendix \ref{Section: proof of Theorem consistency condition}.
	\end{proof}
\end{theorem}
As emphasized in the main text, the conditions in \eqref{Eq: consistency condition} are much weaker than the full consistency of the conditional expectations $\widehat{\mE}[\fhat(X) \given V, \fhat]$ and $\widehat{\mE}[\fhat(Y) \given W, \fhat]$. In particular, when $\widehat{\mE}[\fhat(X) \given V, \fhat]$ and $\widehat{\mE}[\fhat(Y) \given W, \fhat]$ are the true conditional expectations, the conditions in \eqref{Eq: consistency condition} are automatically satisfied by the law of total variance. This finding demonstrates that the xssMMD test is consistent in power whenever the xMMD test is consistent in power under weak conditions on the residuals of the estimated conditional expectations. Furthermore, this result can be applied to establish other consistency results from \citet{shekhar2022permutation} such as consistency against fixed alternatives and against $L_2$ local alternatives.

\textbf{Fixed alternatives.} We begin by applying \Cref{Theorem: consistency condition} to the setting where $P_X$ and $P_Y$ are fixed distributions, and show that the xssMMD test equipped with a characteristic kernel achieves asymptotic power of one in distinguishing $P_X$ and $P_Y$.
\begin{corollary}
    \label{corollary:fixed-alternative}
	Suppose that distributions $P_X, P_Y$ and a kernel $k$ do not vary with $n$. If $k$ is a characteristic kernel with $\mE_{P_X}[{k}(X_1,X_1)] < \infty$ and $\mE_{P_Y}[{k}(Y_1,Y_1)] < \infty$, and condition \eqref{Eq: consistency condition} holds, then the {\normalfont xssMMD} test is consistent against the fixed alternative $H_1: P_X \neq P_Y$.
\end{corollary}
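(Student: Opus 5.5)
The plan is to reduce the claim to \Cref{Theorem: consistency condition}, so that only the xMMD consistency condition \eqref{Eq: xMMD consistency} needs to be checked in the fixed-alternative setting. Condition \eqref{Eq: consistency condition} is assumed, so \Cref{Theorem: consistency condition} then gives the conclusion of \Cref{Lemma: consistency} with $\mathcal{P}_n=\{(P_X,P_Y)\}$ a singleton. This is exactly consistency in power. The argument therefore mirrors the proof of the fixed-alternative corollary in \citet{shekhar2022permutation}, with the semi-supervised part absorbed by \Cref{Theorem: consistency condition}.

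\emph{Step 1 (signal).} Since $k$ is characteristic and $P_X\neq P_Y$, we have $\gamma:=\mathrm{MMD}(P_X,P_Y)>0$. This constant does not depend on $n$, because the distributions and the kernel are fixed. The finite moments $\mE_{P_X}[k(X_1,X_1)]<\infty$ and $\mE_{P_Y}[k(Y_1,Y_1)]<\infty$ guarantee that the mean embeddings $\mu_X,\mu_Y$ exist. Hence $\gamma=\|\mu_X-\mu_Y\|_{\mathcal{H}}$ is well defined.

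\emph{Step 2 (variance terms).} Bound the numerators in \eqref{Eq: xMMD consistency} by $O(1/n)$.
\begin{itemize}
\item For the variance of the mean difference, use the law of total variance conditional on $\fhat$: $\mathrm{Var}[\widetilde{\mu}_{X,\fhat}-\widetilde{\mu}_{Y,\fhat}]=\mE[\mathrm{Var}(\cdot\given\fhat)]+\mathrm{Var}(\mE[\cdot\given\fhat])$.
\item The first piece is at most $n_1^{-1}\mE[\fhat(X)^2]+n_2^{-1}\mE[\fhat(Y)^2]$.
\item By the reproducing property and Cauchy--Schwarz, $\fhat(x)^2\le \|\fhat\|_{\mathcal{H}}^2 k(x,x)$. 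Also $\mE\|\fhat\|_{\mathcal{H}}^2\le 2\mE k(X,X)+2\mE k(Y,Y)<\infty$, and $\fhat$ is independent of the second half of the data. Together these show the first piece is $O(1/n)$.
\item For the second piece, $\mE[\widetilde{\mu}_{X,\fhat}-\widetilde{\mu}_{Y,\fhat}\given\fhat]=\langle\fhat,\mu_X-\mu_Y\rangle_{\mathcal{H}}$. Its variance is at most $\gamma^2\,\mE\|\fhat-(\mu_X-\mu_Y)\|_{\mathcal{H}}^2$. This is $O(1/n)$ by the Hilbert-space variance of an empirical mean of i.i.d.\ elements $k(\widetilde{X}_i,\cdot)$ with finite second moment $\mE k(X,X)$.
\item The same bound gives $\mE[\widetilde{\sigma}^2_{X,\fhat}+\widetilde{\sigma}^2_{Y,\fhat}]\le n_1^{-1}\mE\fhat(X)^2+n_2^{-1}\mE\fhat(Y)^2=O(1/n)$.
\end{itemize}

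\emph{Step 3 (choice of $\delta_n$).} Take $\delta_n=n^{-1/2}\to0$. The bracket in \eqref{Eq: xMMD consistency} is then $O(n^{-1/2}\gamma^{-4})+O(n^{-1}\gamma^{-4})=o(1)$, since $\gamma$ is fixed and positive.

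The main obstacle I expect is the bound $\mE[\fhat(X)^2]<\infty$ under only the stated first-order moment assumptions. Here $\fhat$ is random and $\fhat(x)^2\le\|\fhat\|^2_{\mathcal{H}}k(x,x)$. Independence of $\fhat$ from the projected sample lets us factor the expectation: $\mE[\fhat(X)^2]\le\mE\|\fhat\|^2_{\mathcal{H}}\,\mE k(X,X)$. Each factor is finite by assumption, which is the point to verify carefully. A second point needing care is how $\mE\|\fhat\|^2_{\mathcal{H}}$ is controlled. Expanding the squared norm produces off-diagonal terms $\mE k(\widetilde X_i,\widetilde X_j)$. These are bounded by $\sqrt{\mE k(X,X)\,\mE k(X',X')}$ via Cauchy--Schwarz in $\mathcal{H}$, so only diagonal moments are needed.
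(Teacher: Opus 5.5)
Your proposal is correct and follows the paper's proof. Both reduce the claim to \Cref{Theorem: consistency condition} and verify \eqref{Eq: xMMD consistency} by showing that $\mE[\widetilde{\sigma}^2_{X,\fhat}+\widetilde{\sigma}^2_{Y,\fhat}]$ and $\mathrm{Var}[\widetilde{\mu}_{X,\fhat}-\widetilde{\mu}_{Y,\fhat}]$ are $O(1/n)$, using only $\mE k(X,X),\mE k(Y,Y)<\infty$, Cauchy--Schwarz in the RKHS, and independence of $\fhat$ from the projected sample. Your two differences are cosmetic: the split via total variance conditional on $\fhat$ is an exactly orthogonal version of the paper's three-term decomposition around $\gamma^2$, and your explicit choice $\delta_n=n^{-1/2}$ makes precise the paper's remark that, with $\gamma>0$ fixed, \eqref{Eq: xMMD consistency} reduces to these quantities vanishing.
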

The proof of this statement is given in~\Cref{proof:fixed-alternative}. 

\textbf{Smooth local alternatives.} We next demonstrate \Cref{Theorem: consistency condition} to the setting where the distributions $P_{X,n}$ and $P_{Y,n}$ admit Lebesgue densities $p_{X,n}$ and $p_{Y,n}$ which belong to a Sobolev ball of order $\beta$ for some $\beta>0$. Specifically, we consider the following class of smooth densities:
\begin{align*}
	\mathcal{W}^{\beta, 2}(M) \coloneqq \left\{ f: \mathcal{X} \rightarrow \mathbb{R} \mid f \text{ is almost surely continuous and} \ \int \left(1+\omega^2\right)^{\beta/2} \|\mathcal{F}(f)(\omega)\|^2 d\omega < M < \infty \right\},
\end{align*}
where $\mathcal{F}(f)$ is the Fourier transform of $f$ and $\|\cdot\|$ denotes the Euclidean norm. We then define a class of alternative distributions that is $\Delta_n$-close to the null hypothesis in the $L_2$-norm:
\begin{equation*}
\begin{aligned}
\mathcal{P}^{(1)}_n=\left\{(P_X, P_Y) \text { with densities } p_X, p_Y \in \mathcal{W}^{\beta, 2}(M):\|p_X-p_Y\|_{L_2} \geq \Delta_n\right\}
\end{aligned}
\end{equation*}
for some sequence $\Delta_n$ decaying to zero. The next theorem, which is the corresponding result to \citet[][Theorem 9]{shekhar2022permutation}, establishes the consistency in power of the xssMMD test against $\mathcal{P}^{(1)}_n$.

\begin{corollary}\label{corollary: smooth local alternatives}
 Consider the {\normalfont xssMMD} test $\Psi_{\mathrm{xss}}$ with the Gaussian kernel $k_{s_n}(x, y)=\exp(-s_n\|x-y\|^2)$ with the scale parameter $s_n \asymp n^{4 /(d+4 \beta)}$. If condition \eqref{Eq: consistency condition} holds and $\lim _{n \rightarrow \infty} \Delta_n n^{2 \beta /(d+4 \beta)} = \infty$ with $n_1=n_2=n$, then $\Psi_{\mathrm{xss}}$  is consistent against $\mathcal{P}^{(1)}_n$ as
$$
\lim _{n \rightarrow \infty} \inf _{\left(P_{X,n}, P_{Y,n} \right) \in \mathcal{P}_n^{(1)}} \mathbb{E}_{P_{X,n}, P_{Y,n}}[\Psi_{\mathrm{xss}}]=1.
$$
\begin{proof}
    We omit the proof of \Cref{corollary: smooth local alternatives} since it is a direct consequence of \Cref{Theorem: consistency condition} and the proof of \citet[][Theorem 9]{shekhar2022permutation}. 
\end{proof}
\end{corollary} 
The corollary shows that the xssMMD test is consistent against smooth local alternatives under the same conditions as the xMMD test. This result highlights that the xssMMD test achieves the same separation rate as the xMMD test, provided the estimated conditional expectations satisfy the requirements in \eqref{Eq: consistency condition}.

\subsection{Linear Smoother for Estimating Conditional Expectations} \label{Section: Linear Smoother}
In this subsection, we consider a linear smoother (e.g., $k$-nearest neighbors and kernel regression) for estimating conditional expectations, which provides a more interpretable condition for the consistency of the conditional expectations in \Cref{Assumption: consistency of conditional expectation}. The simplification of condition (\ref{Eq: cross-fitting condition}) is derived using the spectral decomposition of the centered kernel $\overline{k}$ in \eqref{Eq: centered kernel}:
\begin{align*}
\overline{k}(x_1, x_2) = \sum_{i=1}^\infty \lambda_i \phi_i(x_1) \phi_i(x_2),
\end{align*}
where $\{\lambda_i\}_{i=1}^\infty$ are the eigenvalues, and $\{\phi_i\}_{i=1}^{\infty}$ are the orthonormal eigenfunctions under the marginal distribution of \( X \). Given this decomposition, we express the conditional expectation and variance in terms of eigenfunctions and derive a clearer condition as follows:

\begin{corollary}\label{Corollary: cross-fitting using linear operator}
    Suppose that the estimators for the conditional expectations satisfy $\widehat{\mE}[a \fhat(X) +b \given V, \fhat] = a\widehat{\mE}[\fhat(X) \given V, \fhat] + b$ and $\widehat{\mE}[a \fhat(Y) +b \given W, \fhat] = a\widehat{\mE}[\fhat(Y) \given W, \fhat] + b$ for all $a,b \in \mathbb{R}$. Suppose further that
    \begin{align}
	   \sup_{i \geq 1} \mE[\Delta_{X,i}^2] = o(1),\quad\text{and}\quad \sup_{i \geq 1} \mE[\Delta_{Y,i}^2] = o(1) \label{Eq: linear operator}
    \end{align} 
	where $\Delta_{X,i} = \mE[\phi_i(X) \given V]  - \widehat{\mE}[\phi_i(X) \given V]$ and $\Delta_{Y,i} = \mE[\phi_i(Y) \given W]  - \widehat{\mE}[\phi_i(Y) \given W]$, and \Cref{Assumption: xssMMD under H0} holds with $n_1 \asymp m_1$ and $n_2 \asymp m_2$. Then \Cref{Assumption: consistency of conditional expectation} is satisfied.
    \begin{proof} 
        The proof can be found in \Cref{Section: proof of Corollary linear operator}. 
    \end{proof}
\end{corollary}
The condition in \eqref{Eq: linear operator} essentially states that if the regression estimator is linear and consistent for estimating the conditional expectations of the eigenfunctions $\mE[\phi_i(X) \given V]$ and $\mE[\phi_i(Y) \given W]$, then \Cref{Assumption: consistency of conditional expectation} holds. This condition is particularly notable as it translates the relatively abstract stochastic requirement in \eqref{Eq: cross-fitting condition} into a deterministic one that does not depend on $\fhat$. Moreover, we note that the linearity of the regression estimator can be relaxed to asymptotic linearity with more technical efforts.

\subsection{Asymptotic Power Expression using a Linear Kernel}\label{Section: Asymptotic Power Expression using a Linear Kernel}
In the main text, we showed that the xssMMD test achieves asymptotic power at least as high as that of the xMMD test while maintaining controlled type-I error. To further support this finding, we formalize the asymptotic expression of the power of our statistic using the linear kernel $k(x,y) =  \langle x, y \rangle$, and compare it to the heuristic results of the kernel-MMD test and the xMMD test. Before proceeding with further discussion, let us assume the following conditions to ease our analysis.

\begin{assumption}\label{Assumption: assumptions for asymptotic power expression}
Suppose that the following assumptions are satisfied.

(a) Gaussianity: We observe $d$-dimensional i.i.d.~copies of random vectors $(X,V)^\top$ and $(V,W)^\top$ from a Gaussian distributions \begin{align*}
    P_{XV}=\begin{pmatrix}P_X\\
P_V
\end{pmatrix} &\sim  N
\begin{pmatrix}
\begin{pmatrix}
\mu_X\\
\mu_V
\end{pmatrix}\!\!,&
\begin{pmatrix}
\Sigma_{11} & \Sigma_{12}\\
\Sigma_{21} & \Sigma_{22}
\end{pmatrix}
\end{pmatrix} \quad \text{and} \\
P_{YW}=\begin{pmatrix}P_Y\\
P_W
\end{pmatrix} &\sim  N
\begin{pmatrix}
\begin{pmatrix}
\mu_Y\\
\mu_W
\end{pmatrix}\!\!,&
\begin{pmatrix}
\Sigma_{11} & \Sigma_{12}\\
\Sigma_{21} & \Sigma_{22}
\end{pmatrix}
\end{pmatrix}.
\end{align*}

(b) Bounded eigenvalues: For $i=1,2,$ there exist constants $c$ and $C>0$ such that $c \leq \lambda_1(\Sigma_{ii})\leq \cdots \leq \lambda_d(\Sigma_{ii}) \leq C.$

(c) Local alternative: $\mu_X^\top \mu_X=O(\sqrt{d}/n_1),$ $\mu_Y^\top \mu_Y=O(\sqrt{d}/n_2).$ 

(d) Dimension-to-sample size ratio: $d/n_1 \rightarrow \tau_1 \in (0,\infty),$ $d/n_2 \rightarrow \tau_2 \in (0,\infty).$

(e) Labeled-unlabeled sample size ratio: $m_1/(n_1+m_1) \rightarrow r_1 \in (0,1),$ $m_2/(n_2+m_2) \rightarrow r_2 \in (0,1)$.
\end{assumption}

We note that these conditions are only necessary for deriving the concrete, asymptotic power expression of the proposed test. These conditions are analogous to those given by \citet[][Assumption 2.5 ]{kim2024dimension} and \Cref{Assumption: assumptions for asymptotic power expression} can be seen as its two-sample testing version extension in the semi-supervised setting. 

For the xssMMD test defined as $\Psi_{\mathrm{xss}} \coloneqq \mathds{1}(\xssMMD > z_{1-\alpha})$, we analyze its power assuming the previous \Cref{Assumption: assumptions for asymptotic power expression} holds. 

\begin{theorem}\label{thm: asymptotic power expression}
    Suppose that \Cref{Assumption: consistency of conditional expectation} and \Cref{Assumption: assumptions for asymptotic power expression} are fulfilled under the alternative. Assume that $(X,V)^\top$ and $(V,W)^\top$ have equal sample sizes and equal covariance matrices, i.e., $n_1=n_2=m_1=m_2=n$ and $\Sigma_{ij}=\tilde{\Sigma}_{ij}$ for $i,j\in \{1,2\}$. Then, it holds that 
    \begin{align*}
        \mE[\Psi_{\mathrm{xss}}]
    =\Phi\Biggl(z_\alpha + \frac{n(\mu_X-\mu_Y)^\top(\mu_X-\mu_Y)}{\sqrt{4\tr(\Sigma_{11}^2)-2\tr(\Sigma_{12}\Sigma^{-1}_{22}\Sigma_{21}\Sigma_{11})}}\Biggr)+o_P(1).
    \end{align*}
\begin{proof}
    The proof can be found in \Cref{Section: proof of power using linear kernel}
\end{proof}
\end{theorem}

We note that the constant value of $2$ comes from $4r$ where $r$ denotes the labeled-unlabeled sample size ratio $r \coloneqq r_1 = r_2$ and we assumed $r=1/2$ in the above theorem. From this, we can also obtain the result when there is no unlabeled data, meaning $r=0,$ 
which is identical to the heuristic result of the permutation-free kernel two-sample test from \citet{shekhar2022permutation}. This suggests that our finding can be viewed as an extension of the earlier results from the xMMD test, incorporating additional covariates, which may result in a reduction of variance calculated in the denominator and an increase in power.

Conversely, the asymptotic expressions for the power of the kernel two-sample test using permutation on MMD, as suggested by \citet{gretton2012kernel}, denoted as $\Psi_{\mathrm{perm}}$, and the permutation-free kernel two-sample test using studentized MMD by \citet{shekhar2022permutation}, represented as $\Psi_{\mathrm{x}}$, can be expressed as follows:
\begin{align*}
    & \mE[\Psi_{\mathrm{perm}}]
     \approx \Phi\Biggl(z_\alpha + \frac{n(\mu_X-\mu_Y)^\top(\mu_X-\mu_Y)}{\sqrt{2\tr(\Sigma_{11}^2)}}\Biggr),\quad\text{and} \\
    & \mE[\Psi_{\mathrm{x}}]
    \approx \Phi\Biggl(z_\alpha + \frac{n(\mu_X-\mu_Y)^\top(\mu_X-\mu_Y)}{\sqrt{4\tr(\Sigma_{11}^2)}}\Biggr).
\end{align*} 
We note that the result for $\Psi_{\mathrm{perm}}$ is estimated in heuristic manner, while that of $\Psi_{\mathrm{x}}$ is derived from its asymptotic normality under the alternative, shown in \Cref{Theorem: xssMMD}. Observe that the power of $\Psi_{\mathrm{x}}$ is $\sqrt{2}$ times lower than that of $\Psi_{\mathrm{perm}}$, a result stemming from sample-splitting. In this context, the value of our test statistic is straightforward, as we initially assumed there were $2n_1$ and $2n_2$ labeled samples at first, then utilized one half to construct the witness function and the other to compute the studentized statistic. These distinctions in power highlight the inherent trade-off introduced by sample-splitting, where the power of the test is reduced in exchange for computational efficiency. However, incorporating additional covariates into our xssMMD framework mitigates this drawback by utilizing the unlabeled data to reduce variance, thereby narrowing the gap in power performance while maintaining robustness. This is evident when comparing the powers of $\Psi_{\mathrm{x}}$ and $\Psi_{\mathrm{xss}},$ where $\tr(\Sigma_{12}\Sigma^{-1}_{22}\Sigma_{21}\Sigma_{11}) > 0 $ indicates that $\Psi_{\mathrm{xss}}$ has greater power than $\Psi_{\mathrm{x}}.$ Furthermore, under \Cref{Assumption: assumptions for asymptotic power expression}, we identified explicit conditions under which the power of $\Psi_{\mathrm{xss}}$ exceeds that of $\Psi_{\mathrm{perm}}.$ Specifically, when $2\tr(\Sigma_{11}^2) < \tr(\Sigma_{12}\Sigma^{-1}_{22}\Sigma_{21}\Sigma_{11}),$ $\Psi_{\mathrm{xss}}$ demonstrates superior power compared to $\Psi_{\mathrm{perm}},$ which implies that leveraging highly informative additional covariates can enhance the power. This underscores the significant practical advantage of our proposed test, especially in scenarios where unlabeled data is abundant and can be effectively utilized.

\section{Additional Experiments}\label{appendix: Additional Experiments}
In this section, we present additional numerical results and provide detailed information about our experiments. In implementing our proposal, we incorporate various methods for conditional expectation estimation, including $k$-nearest neighbors (knn), kernel regression, and random forest (rf). In all tables throughout this section, boldface indicates the best performance: the highest test power under the alternative, and the lowest Type-I error rate under the null.

We have limited our scope to a standard kernel-based baseline (e.g., MMD with a Gaussian kernel using median heuristic) to clearly isolate and evaluate the contribution of unlabeled data to statistical power. Incorporating recent advanced methods \citep[e.g.,][]{biggs2024mmd,schrab2023mmd,kubler2022witness} could potentially yield stronger empirical performance. However, doing so at this stage may conflate gains attributable to semi-supervised information with those arising from more refined kernel choices. We believe that establishing the value of unlabeled data in a controlled setting is a necessary first step.

Experiments under the null and alternative are lightweight and can be conducted on a local machine without GPU acceleration, taking approximately an hour each. The HTRU2 experiments run efficiently on CPU and complete within minutes. The MNIST experiments also run on CPU but complete within an hour. In contrast, the CUB experiments require computing image embeddings using a pretrained model, for which GPU acceleration is beneficial. Once embeddings are obtained, the remaining computations are lightweight. We used an NVIDIA RTX A6000 GPU and it is done within minutes for each embedding setting. Reproducible code is available at \href{https://github.com/gyumin-lee68/ssk2st}{https://github.com/gyumin-lee68/ssk2st} under the MIT License.

\subsection{Limiting Null Distribution with Different Settings}\label{appendix: experiments under the null}

Along with \Cref{Section: Experiments}, this subsection examines the asymptotic normality of the xssMMD test statistic under the null across various scenarios and parameter settings. 

\begin{figure*}[t!]
    \centering
    \begin{subfigure}[t]{0.225\textwidth} 
        \centering
        \includegraphics[width=\linewidth]{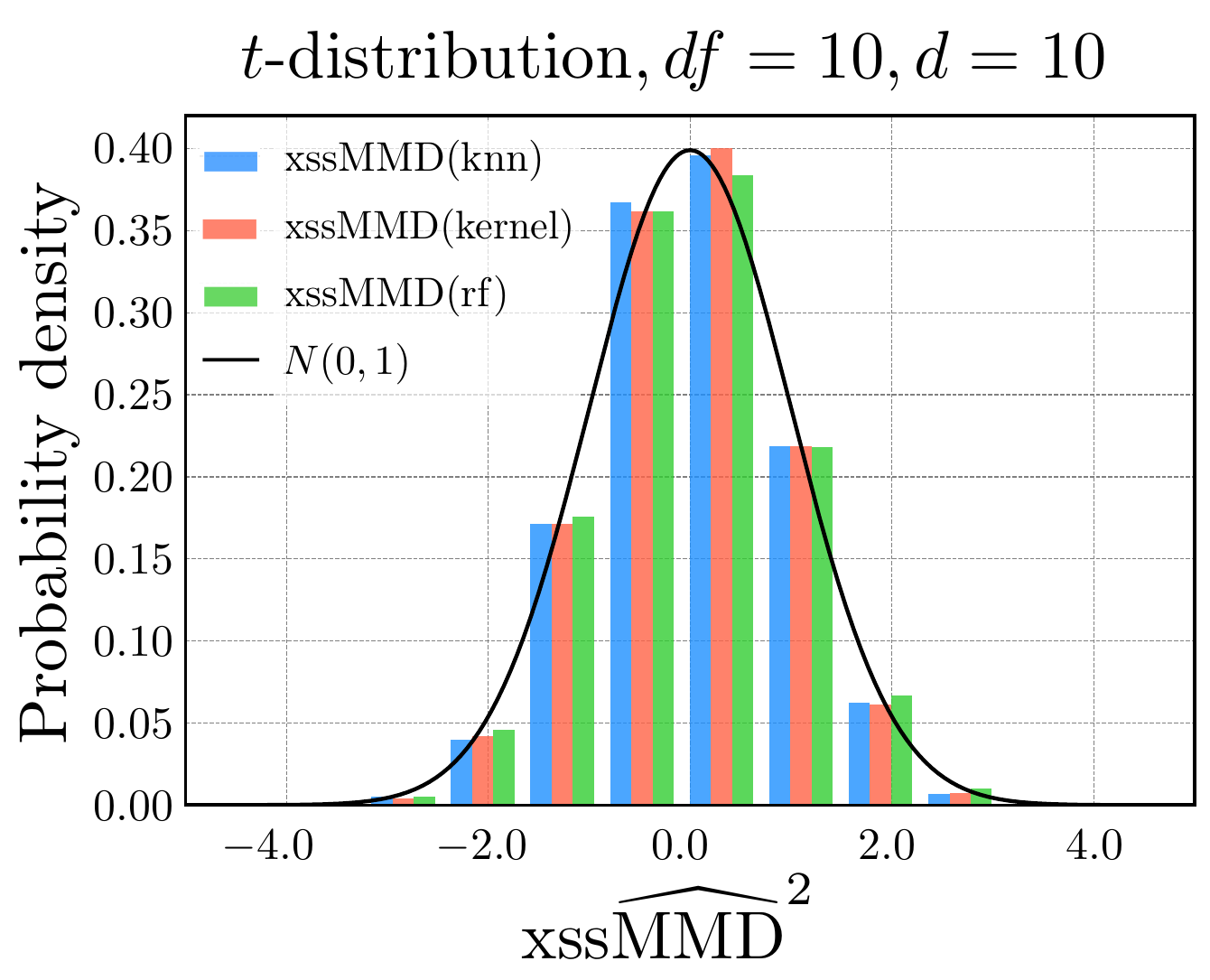}
    \end{subfigure}
    \hfill
    \begin{subfigure}[t]{0.21\textwidth}
        \centering
        \includegraphics[width=\linewidth]{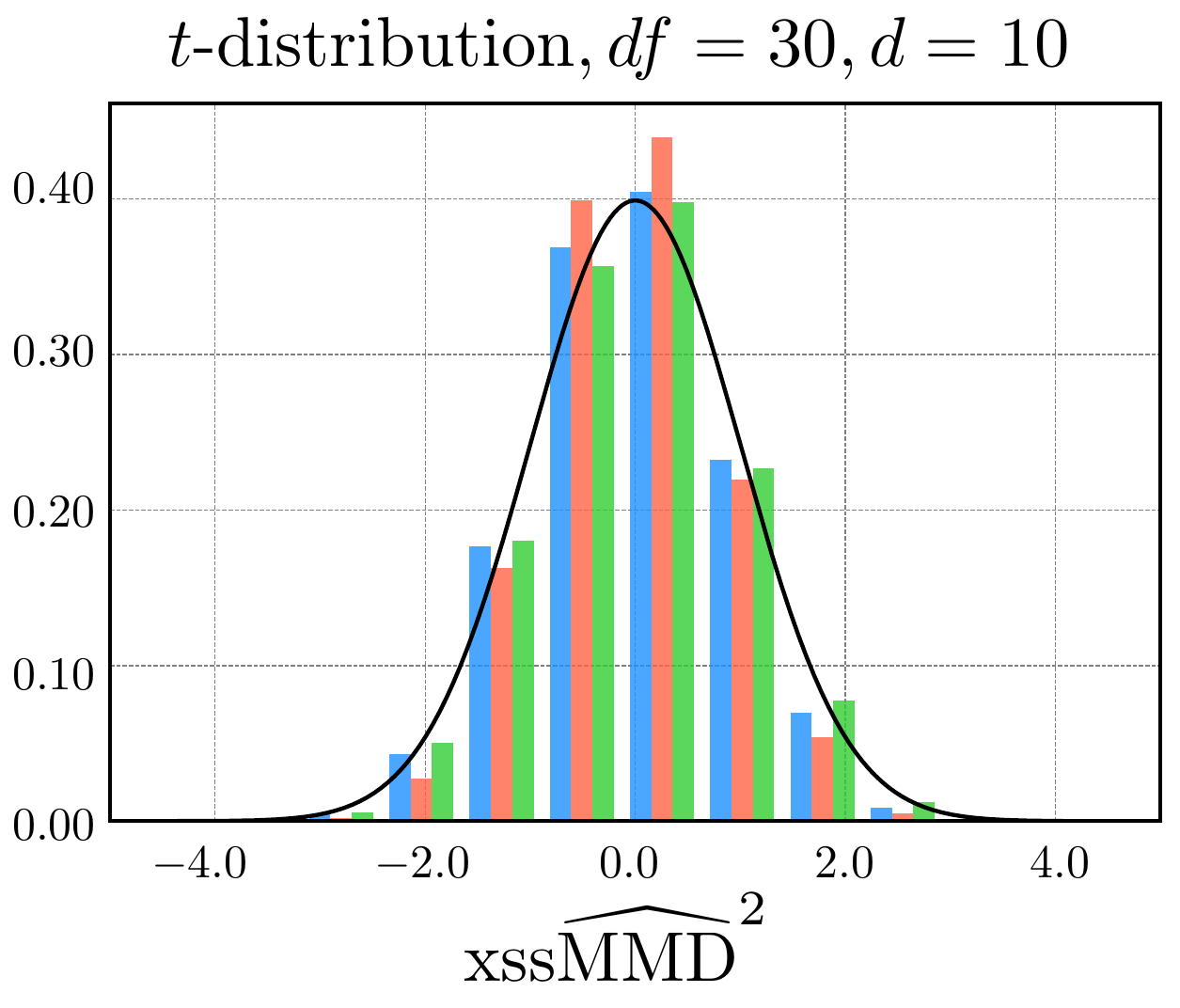}
    \end{subfigure}
    \hfill
    \begin{subfigure}[t]{0.21\textwidth} 
        \centering
        \includegraphics[width=\linewidth]{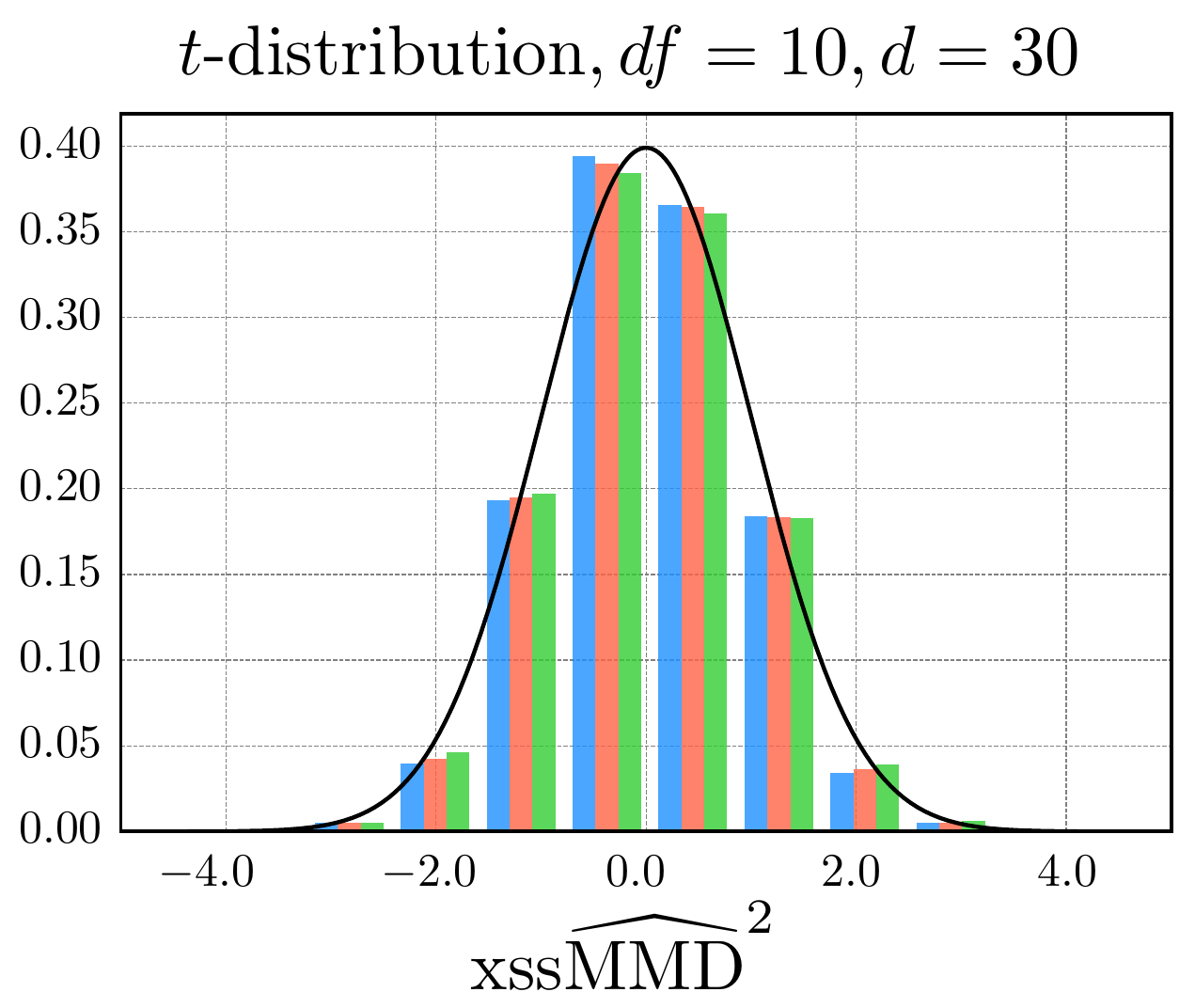}
    \end{subfigure}
    \hfill
    \begin{subfigure}[t]{0.21\textwidth} 
        \centering
        \includegraphics[width=\linewidth]{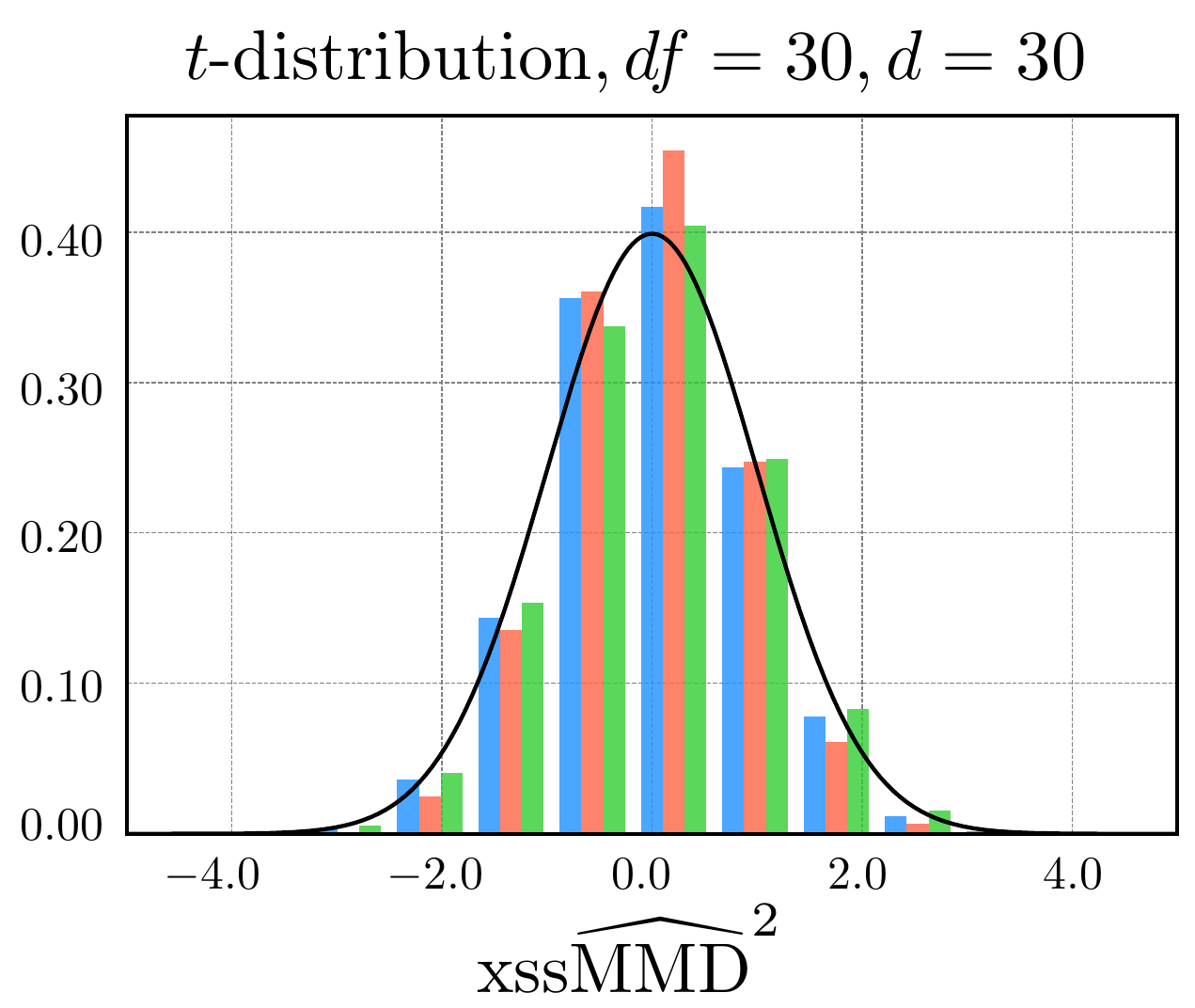}
    \end{subfigure}
    \caption{Experimental results for the distribution of $\xssMMD$ under the null, using $t$-distributed data with varying dimension and degrees of freedom. The plots demonstrate that the test statistic $\xssMMD$ asymptotically follows a $N(0,1)$ distribution under the null, even when the data deviates from Gaussianity.}
    \label{fig:additional limiting null distributions 1}
\end{figure*}
To validate the asymptotic normality of the xssMMD test under the null, we conduct experiments across various settings, including different data distributions, dimensions, and dependency structures. First, we examine the behavior of the test statistic when the data follows a $t$-distribution. The results are presented in \Cref{fig:additional limiting null distributions 1}, with varying degrees of freedom $df$ and dimension $d$: $df=10$ \& $d=10,$ $df=30$ \& $d=10,$ $df=10$ \& $d=30,$ and $df=30$ \& $d=30$ from left to right. Note that we fixed the other parameters as $n_1=n_2=100$ and $m_1=m_2=200$ using a Gaussian kernel with the median heuristic. The results confirm that $\xssMMD$ follows $N(0,1)$ under the null, even when the data deviate from Gaussianity. This demonstrates that the asymptotic normality of the proposed statistic remains robust across different distributional settings.

Next, we show that the xssMMD test consistently achieves asymptotic normality under the null, regardless of the specific factors outlined in \Cref{Section: Experiments}. In detail, we demonstrate that its asymptotic normality is consistently achieved despite the effects of dimensionality, sample skewness, labeled-unlabeled sample size ratio, methods for estimating conditional expectation, and choice of kernel on the null distribution of the test statistic. The experimental results are summarized in \Cref{fig:additional limiting null distributions 2} whose each column represents the different cases of sample skewness $r_{sample}\coloneqq n_1/n_2$ and labeled-unlabeled sample size ratio $r_{label}\coloneqq n_1/m_1 = n_2/m_2$: $r_{sample}=1$ \& $r_{label}=1,$ $r_{sample}=0.1$ \& $r_{label}=0.5,$ $r_{sample}=1$ \& $r_{label}=1,$ $r_{sample}=0.1$ \& $r_{label}=0.5$ from left to right with fixed $n_1=100.$ Each row corresponds to the different cases based on dimension $d$ and kernel choice: $d=10$ \& bilinear kernel, $d=100$ \& bilinear kernel, $d=10$ \& Gaussian kernel, $d=100$ \& Gaussian kernel from top to bottom. Note that we used the median heuristic as the bandwidth for the Gaussian kernel. These results confirm that $\xssMMD$ follows $N(0,1)$ under the null, regardless of variations in dimensionality, sample skewness, the labeled-unlabeled sample size ratio, estimation methods for conditional expectations, and kernel choice. This robustness further supports the validity of our theoretical findings, demonstrating that the asymptotic normality of the xssMMD test consistently holds across diverse settings. 

\begin{figure*}[b!]
    \centering
    \begin{subfigure}[t]{0.22\textwidth} 
        \centering
        \includegraphics[width=\linewidth]{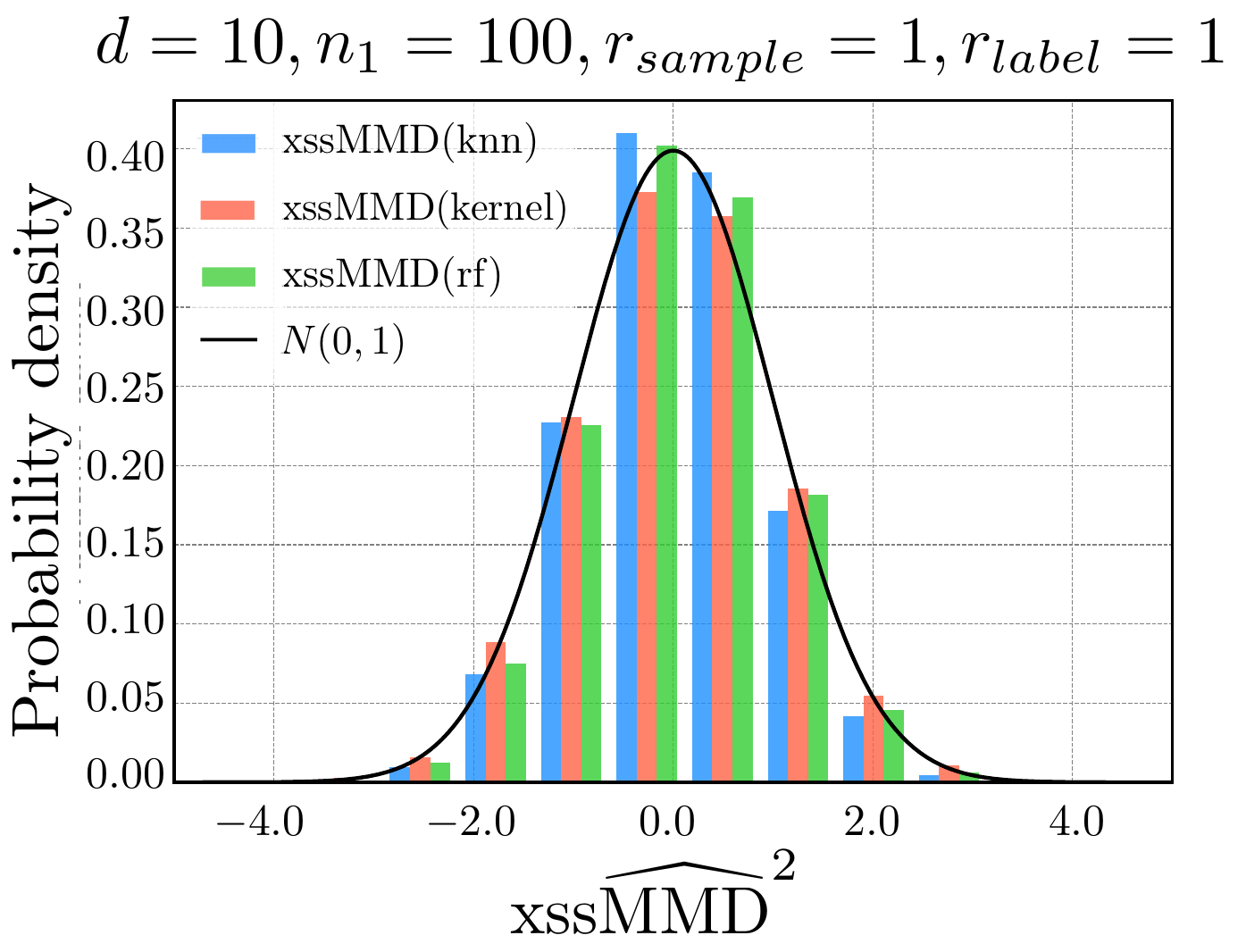}
    \end{subfigure}
    \hspace{0.01\textwidth}
    \begin{subfigure}[t]{0.21\textwidth}
        \centering
        \includegraphics[width=\linewidth]{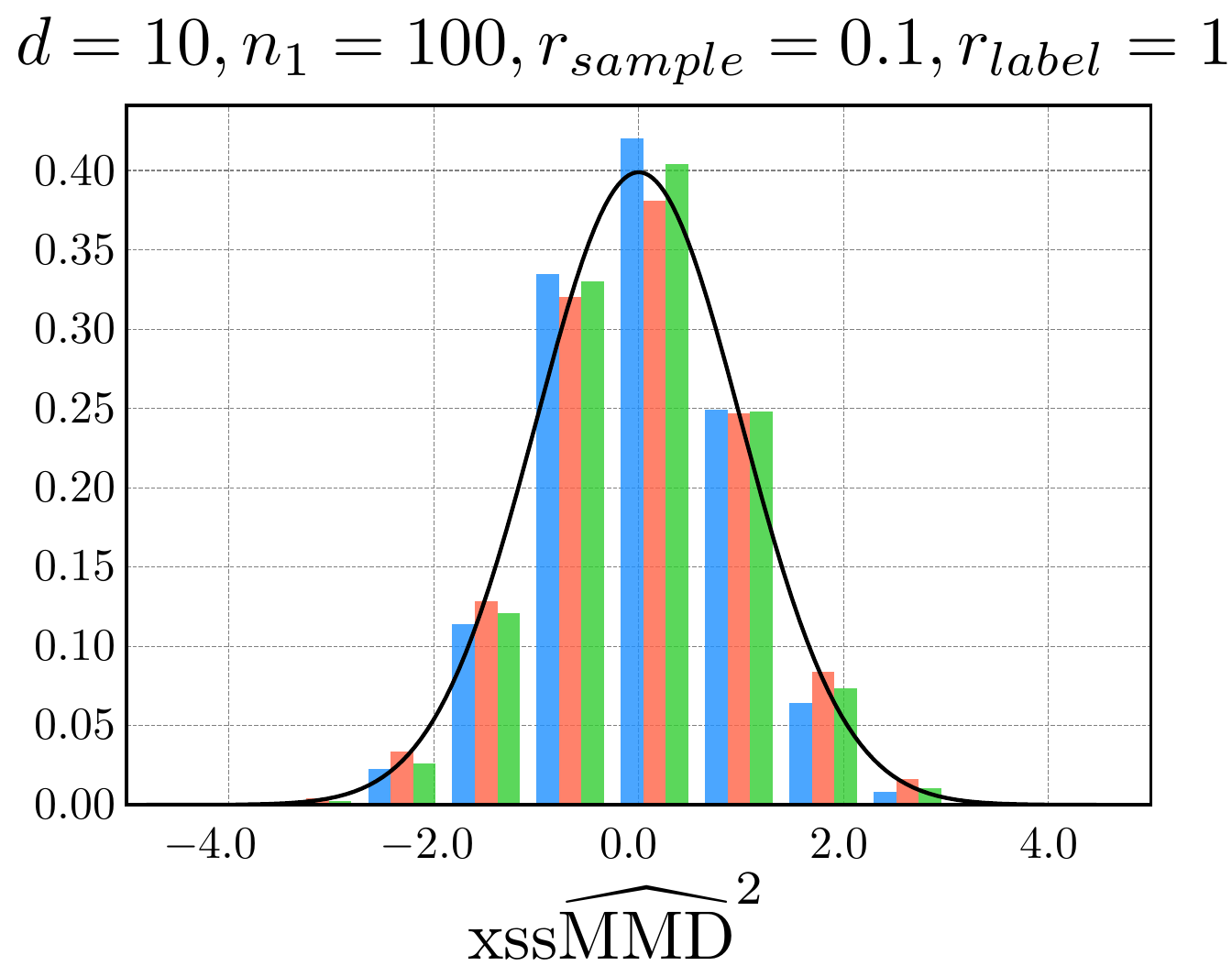}
    \end{subfigure}
    \hspace{0.01\textwidth}
    \begin{subfigure}[t]{0.21\textwidth} 
        \centering
        \includegraphics[width=\linewidth]{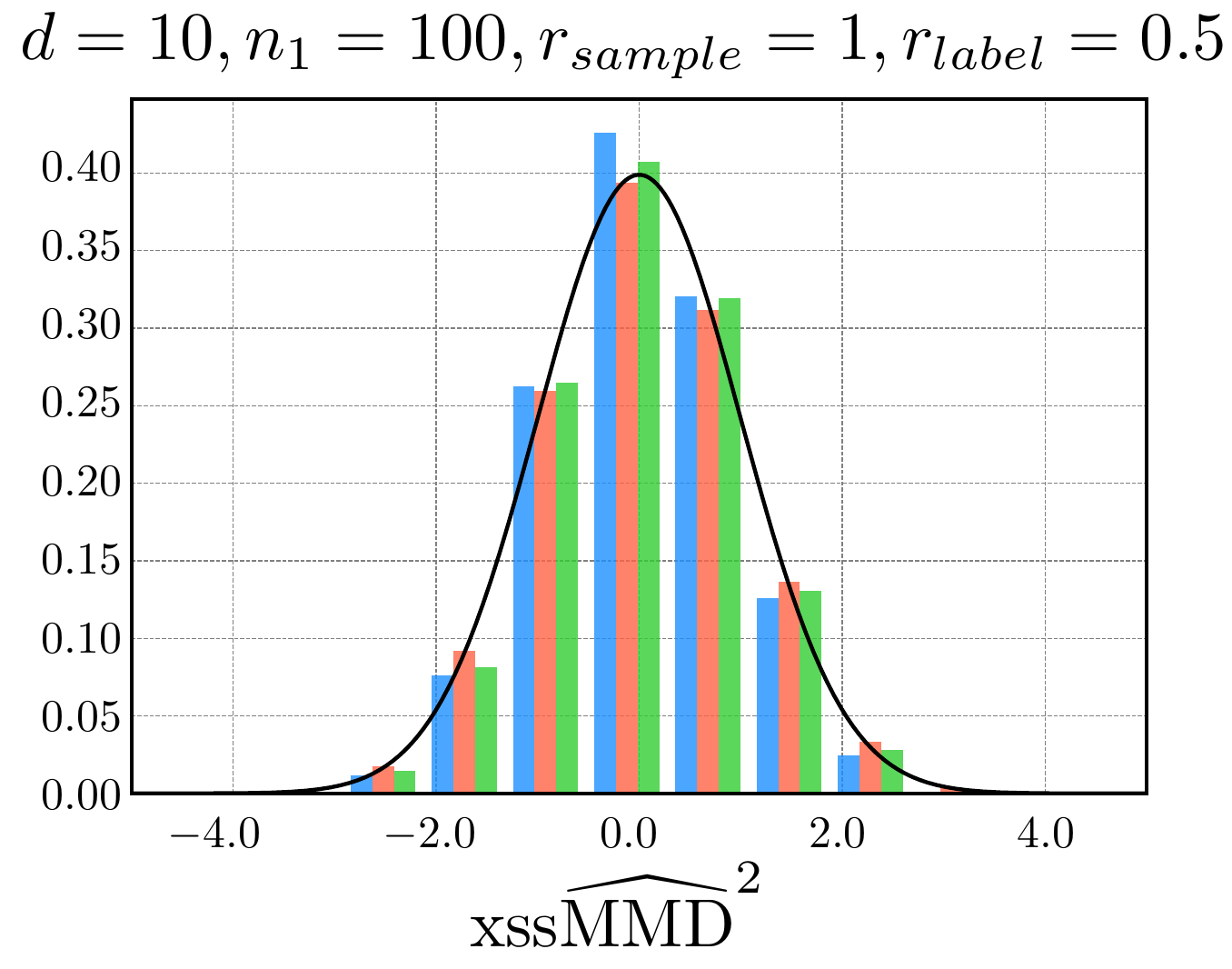}
    \end{subfigure}
    \hspace{0.01\textwidth}
    \begin{subfigure}[t]{0.21\textwidth} 
        \centering
        \includegraphics[width=\linewidth]{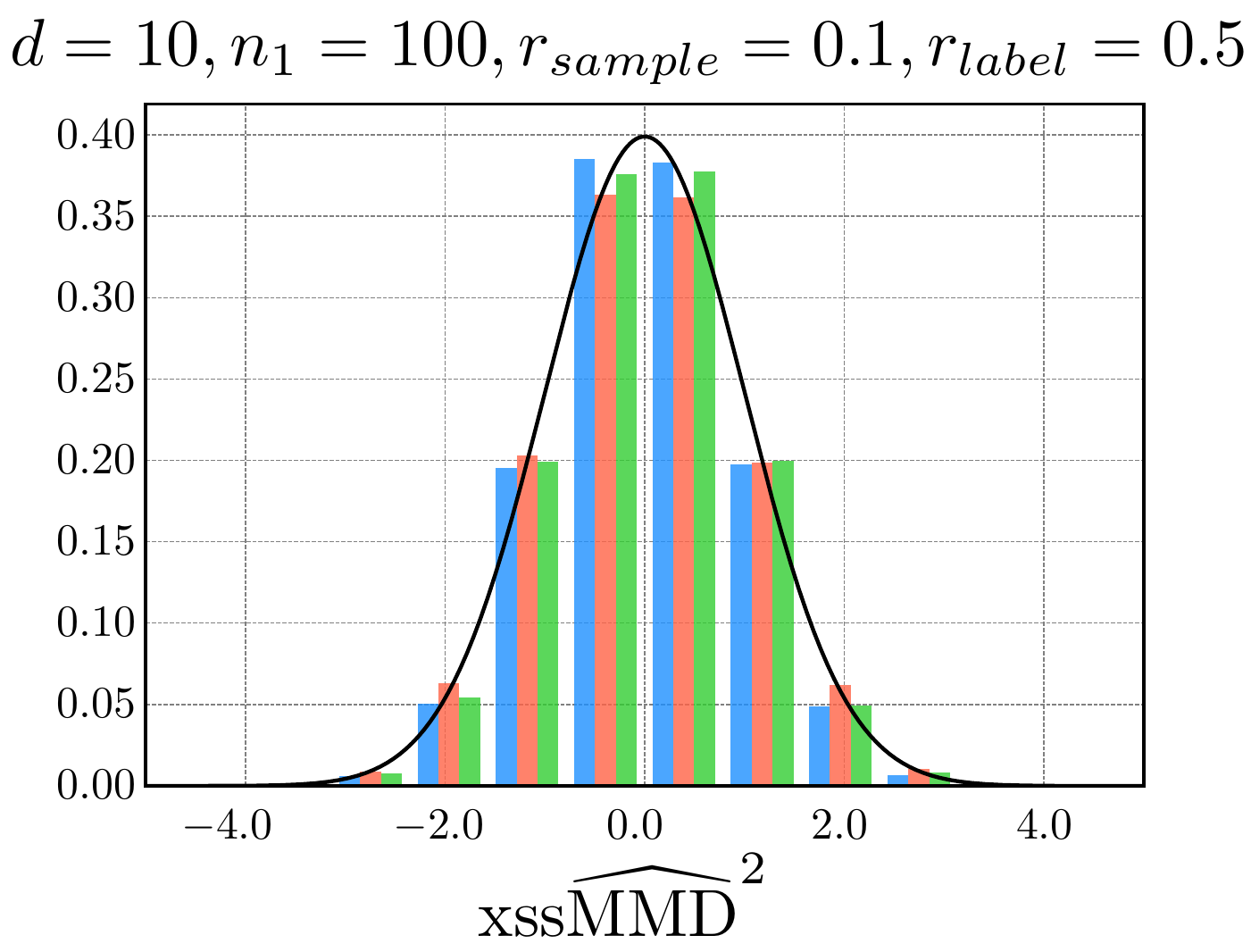}
    \end{subfigure}
    \begin{subfigure}[t]{0.21\textwidth} 
        \centering
        \includegraphics[width=\linewidth]{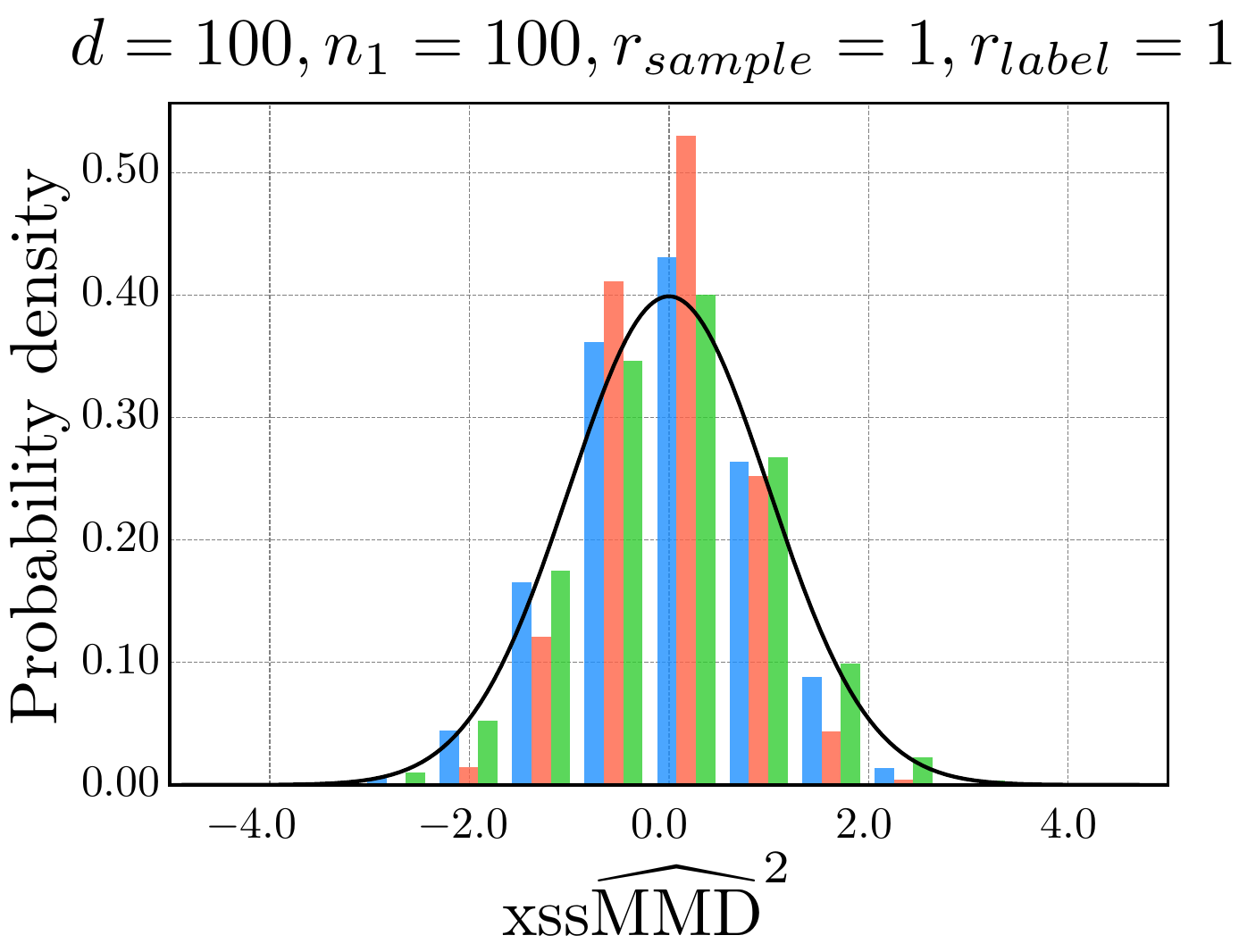}
    \end{subfigure}
    \hspace{0.01\textwidth}
    \begin{subfigure}[t]{0.21\textwidth}
        \centering
        \includegraphics[width=\linewidth]{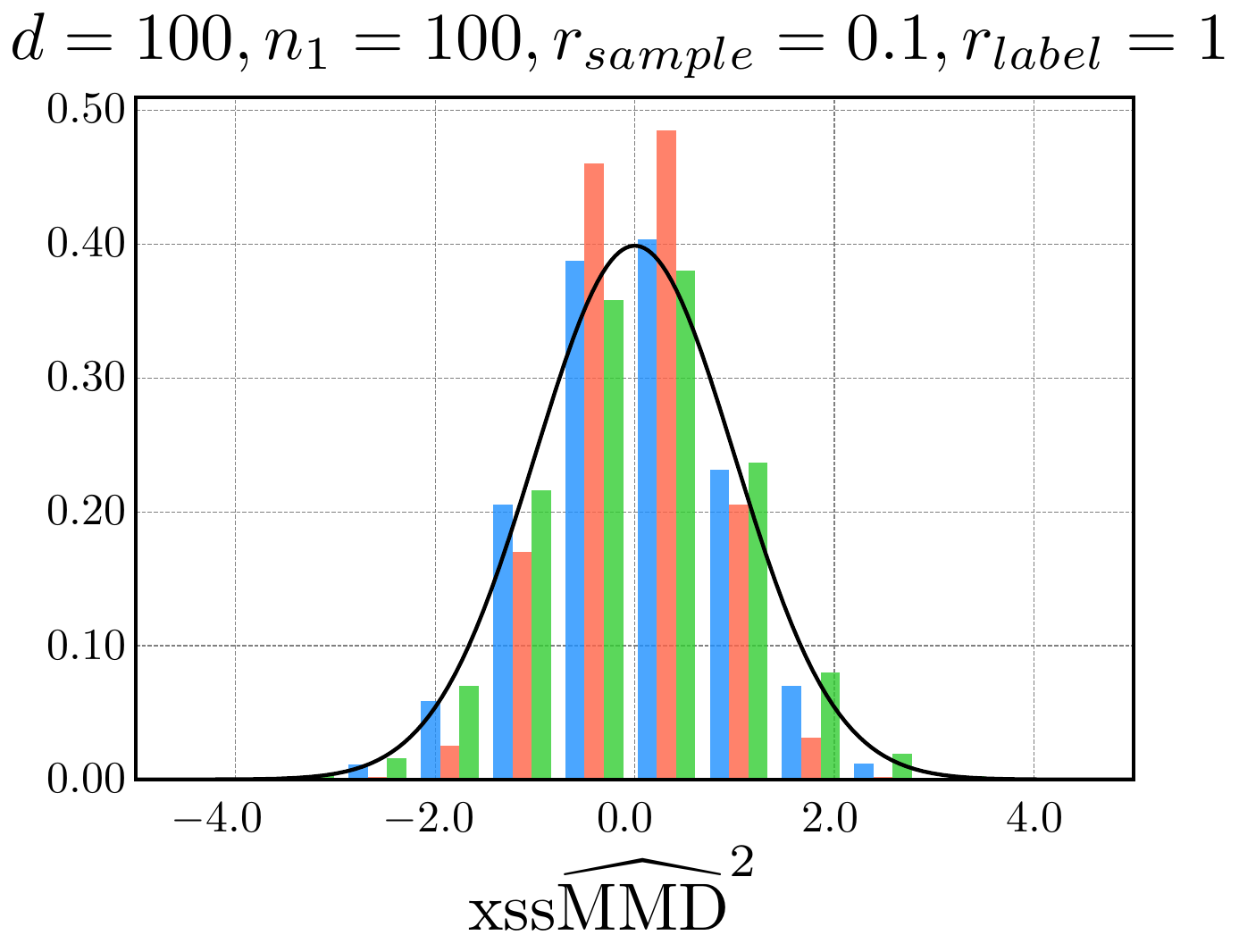}
    \end{subfigure}
    \hspace{0.01\textwidth}
    \begin{subfigure}[t]{0.21\textwidth} 
        \centering
        \includegraphics[width=\linewidth]{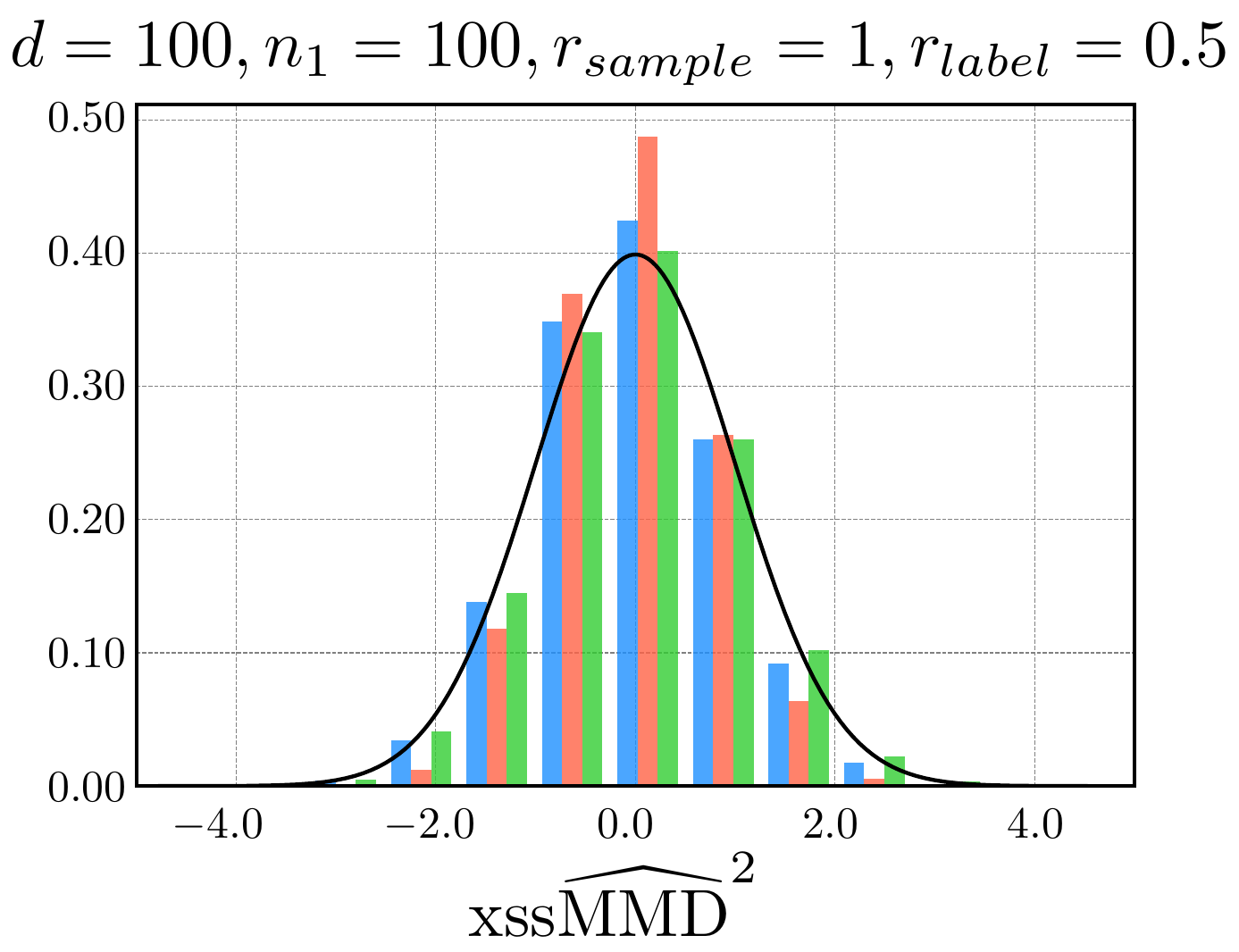}
    \end{subfigure}
    \hspace{0.01\textwidth}
    \begin{subfigure}[t]{0.21\textwidth} 
        \centering
        \includegraphics[width=\linewidth]{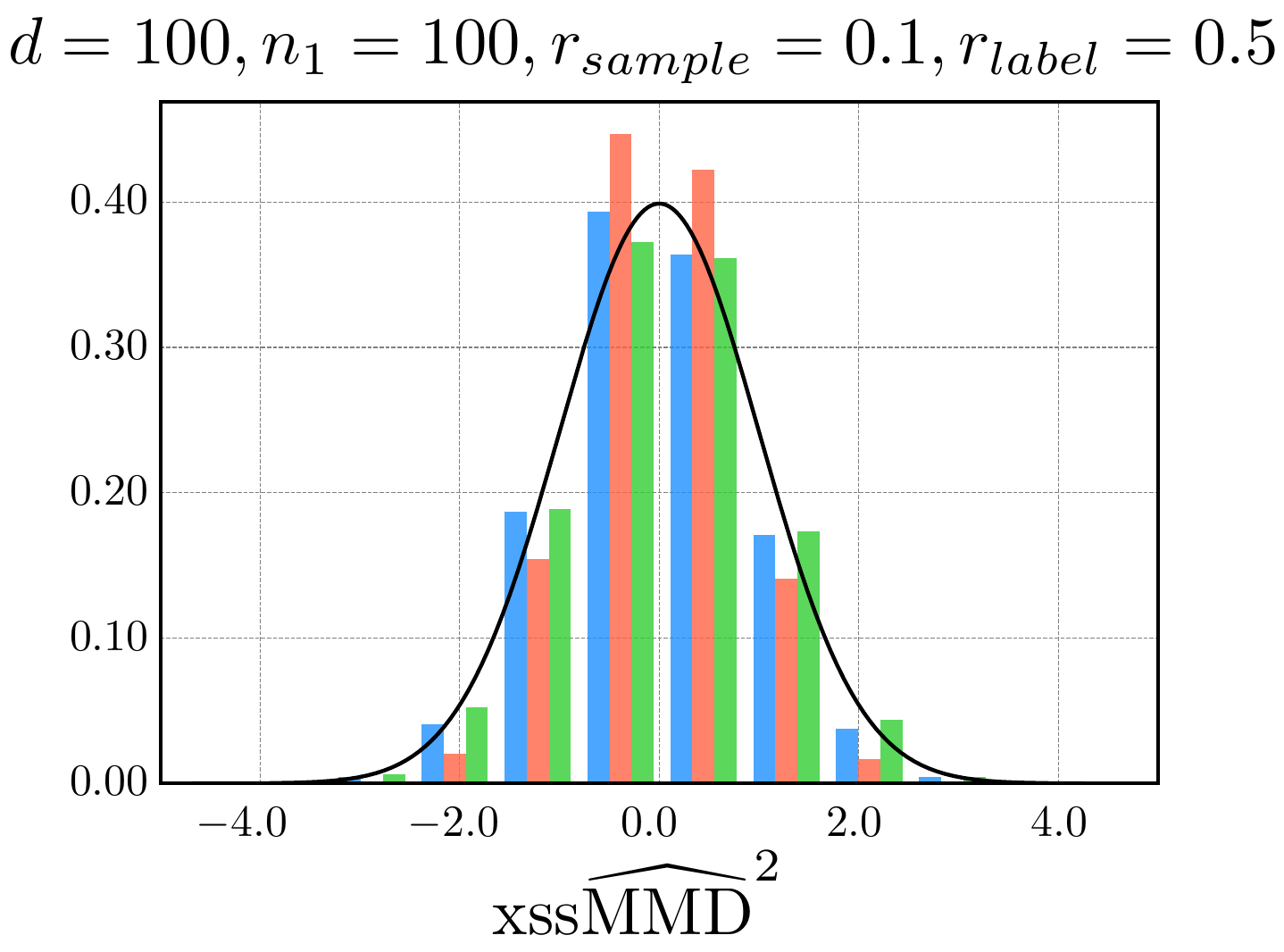}
    \end{subfigure}
    \begin{subfigure}[t]{0.22\textwidth} 
        \centering
        \includegraphics[width=\linewidth]{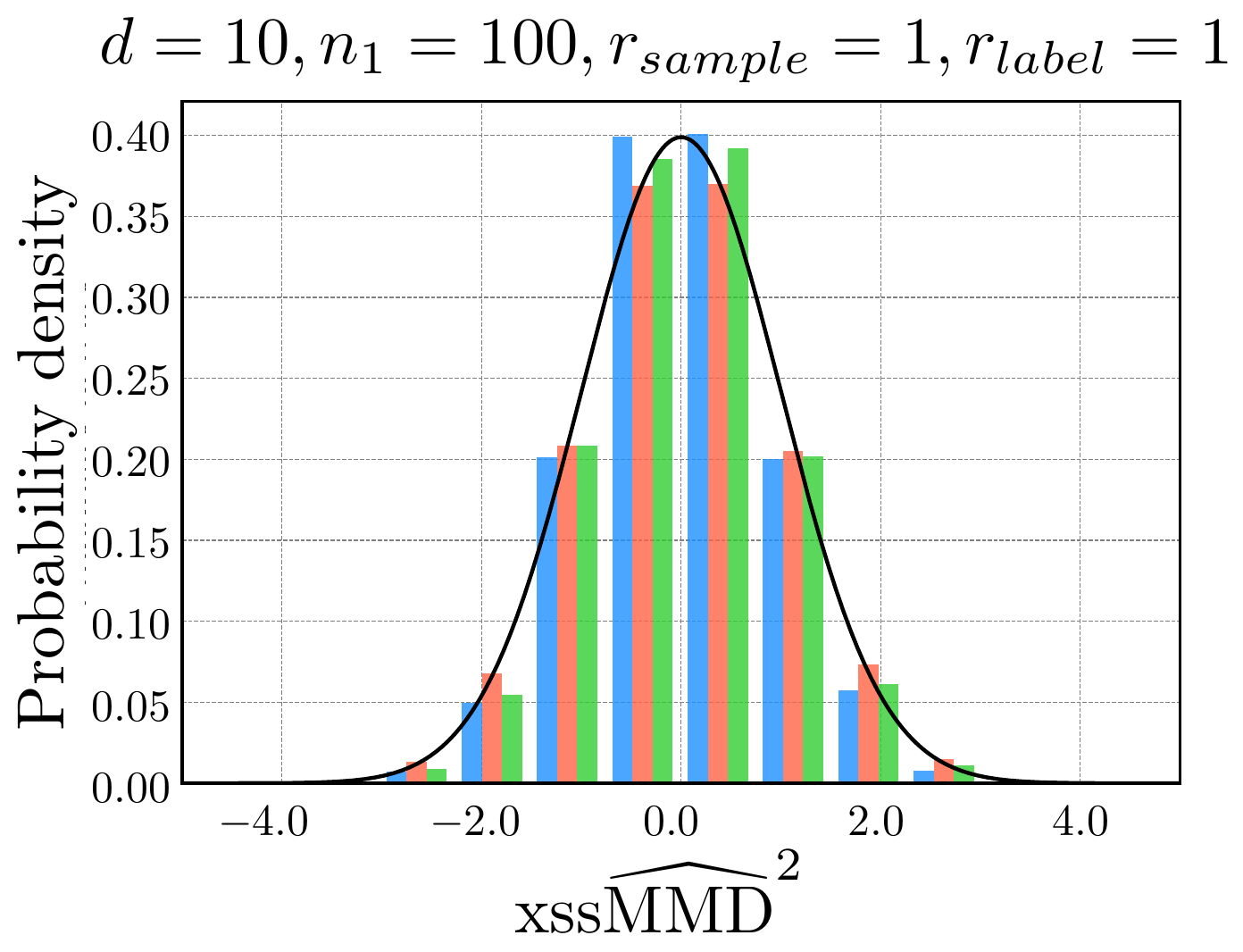}
    \end{subfigure}
    \hspace{0.01\textwidth}
    \begin{subfigure}[t]{0.21\textwidth}
        \centering
        \includegraphics[width=\linewidth]{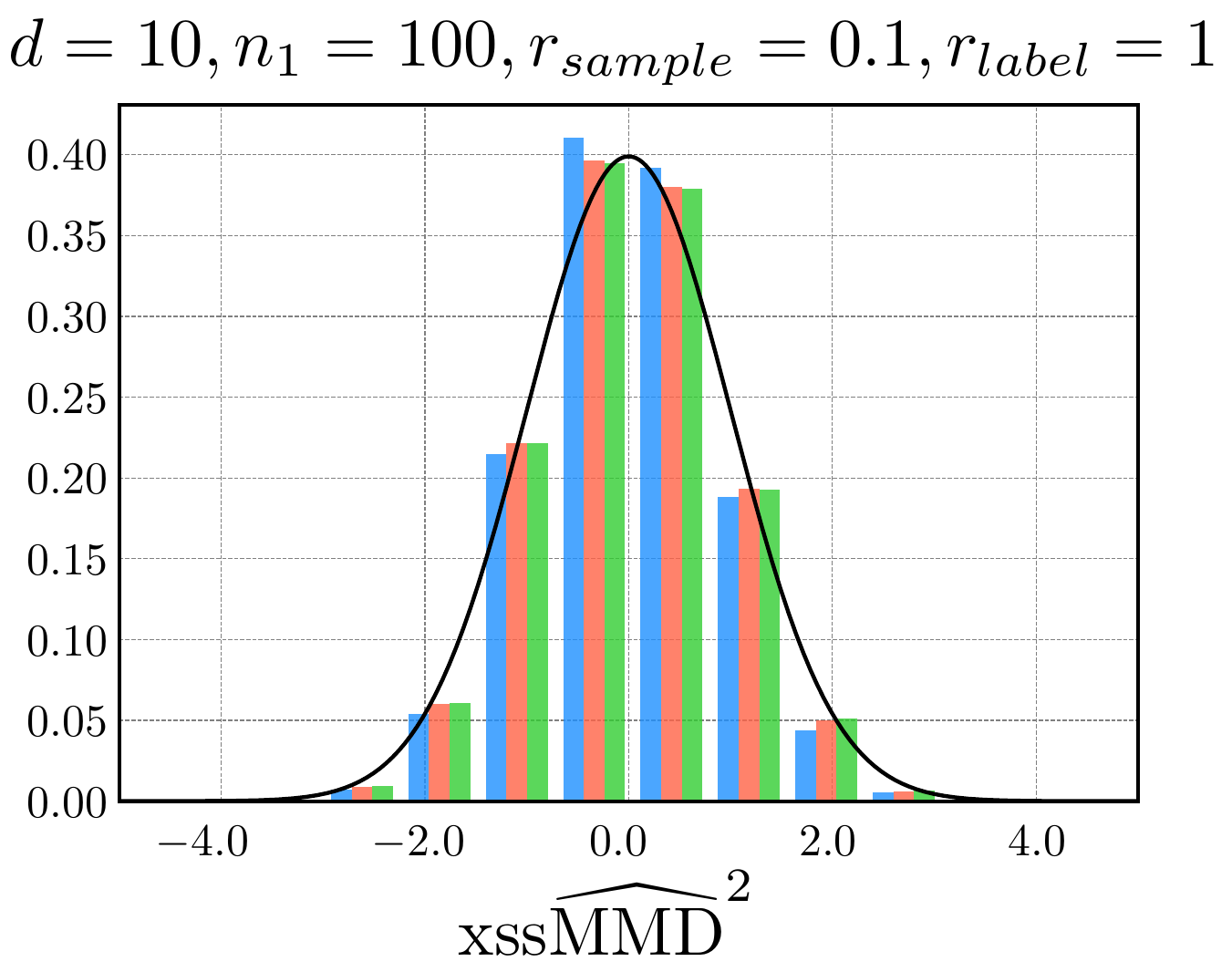}
    \end{subfigure}
    \hspace{0.01\textwidth}
    \begin{subfigure}[t]{0.21\textwidth} 
        \centering
        \includegraphics[width=\linewidth]{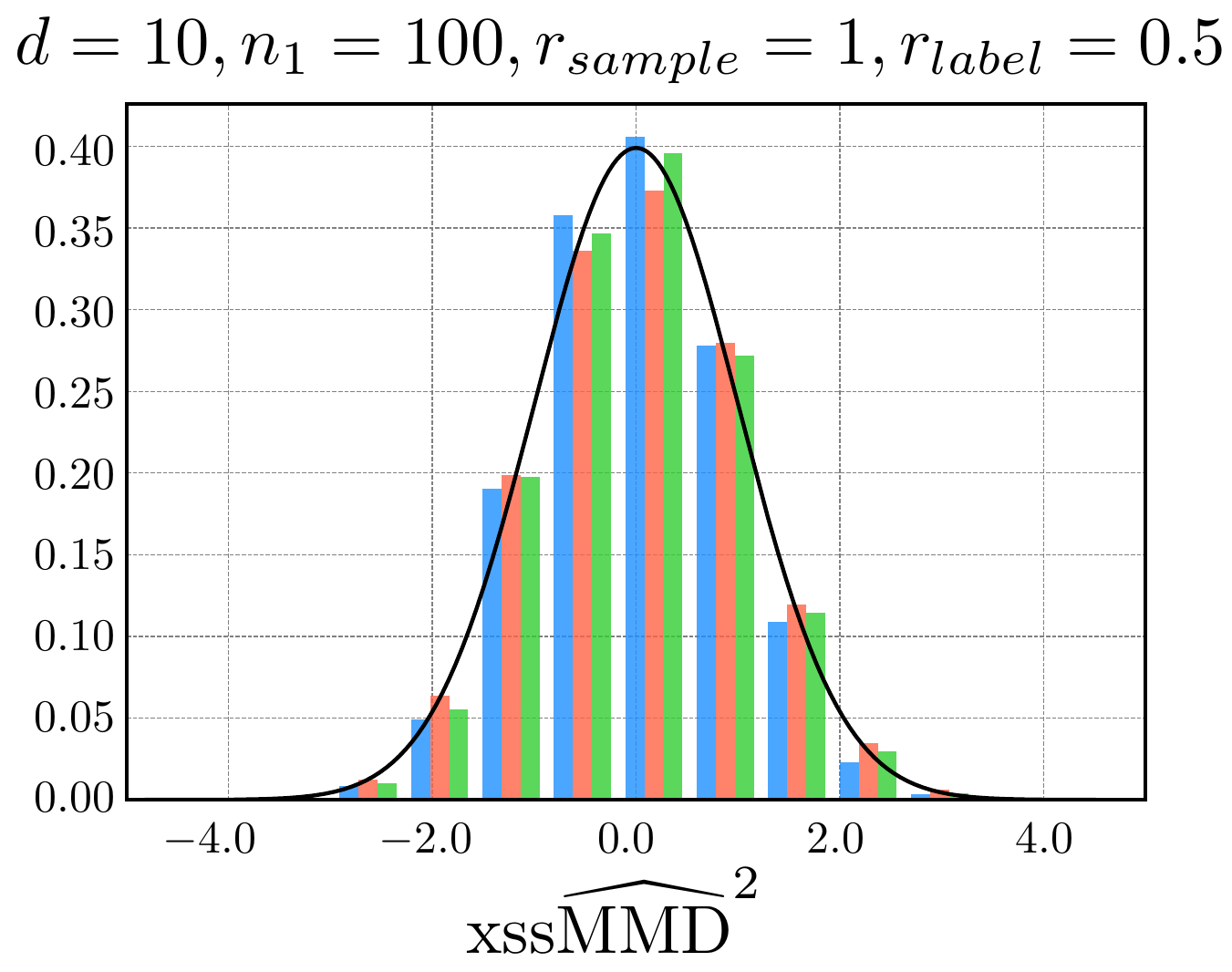}
    \end{subfigure}
    \hspace{0.01\textwidth}
    \begin{subfigure}[t]{0.21\textwidth} 
        \centering
        \includegraphics[width=\linewidth]{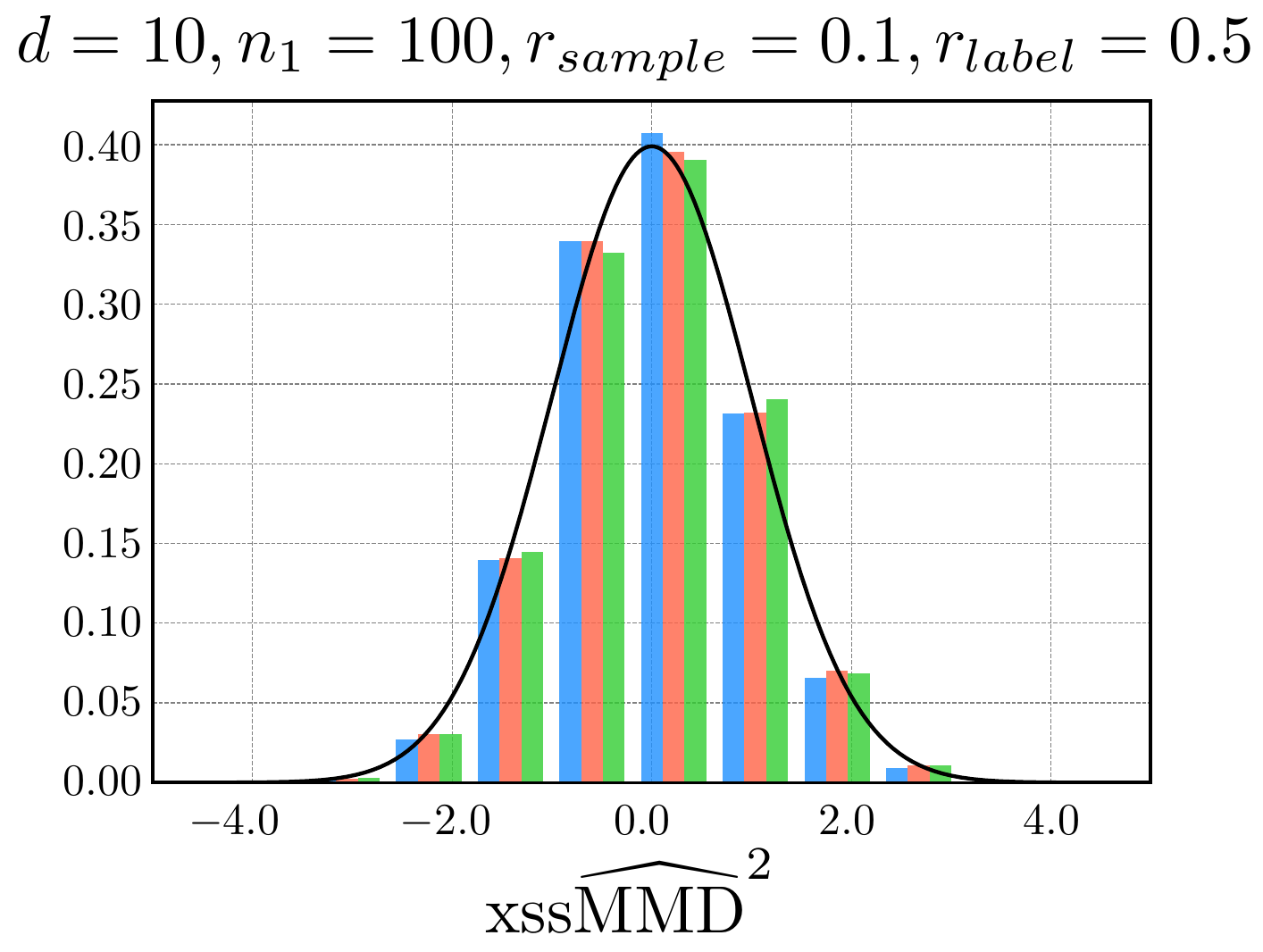}
    \end{subfigure}
    \begin{subfigure}[t]{0.22\textwidth} 
        \centering
        \includegraphics[width=\linewidth]{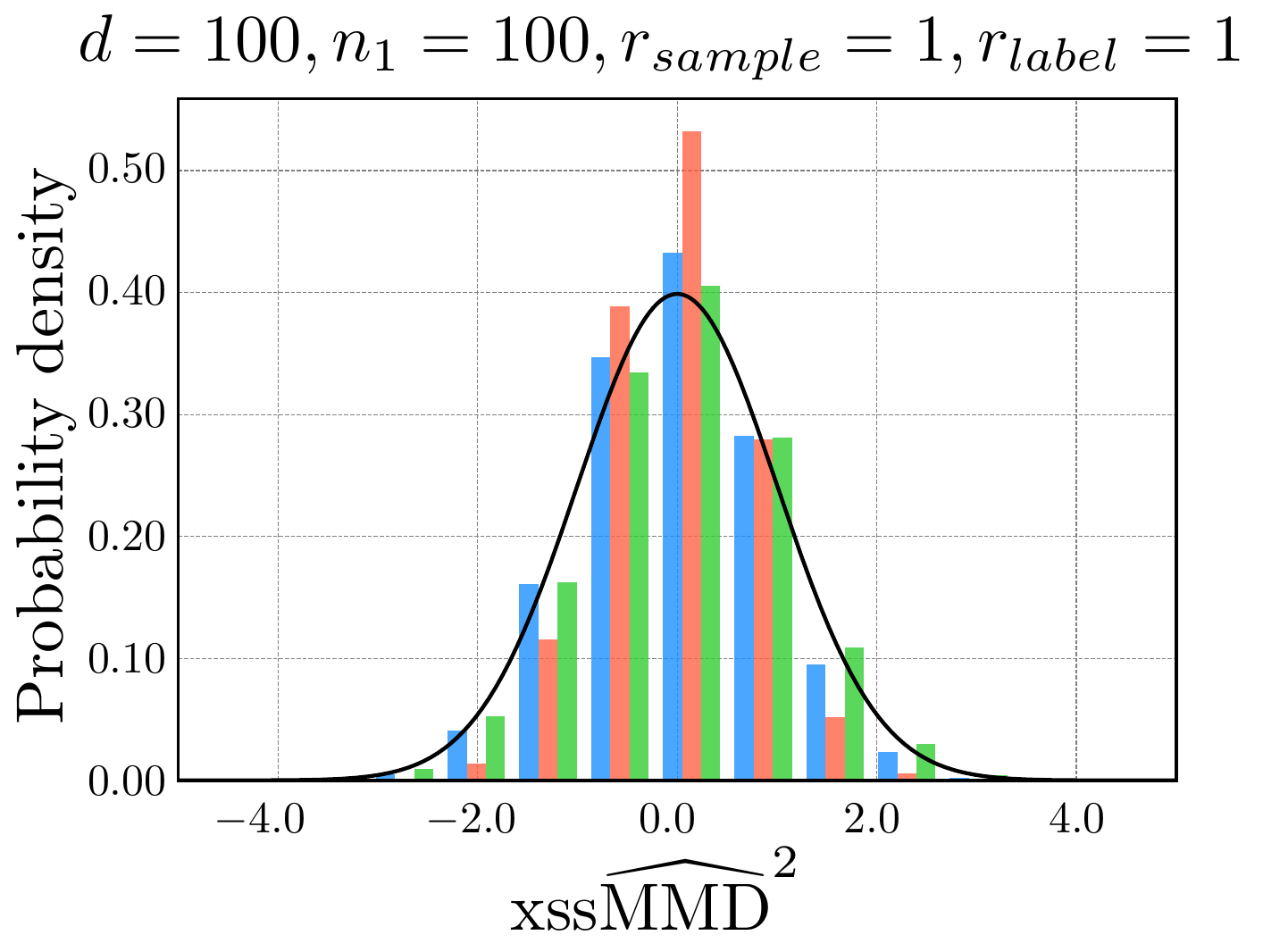}
    \end{subfigure}
    \hspace{0.01\textwidth}
    \begin{subfigure}[t]{0.21\textwidth}
        \centering
        \includegraphics[width=\linewidth]{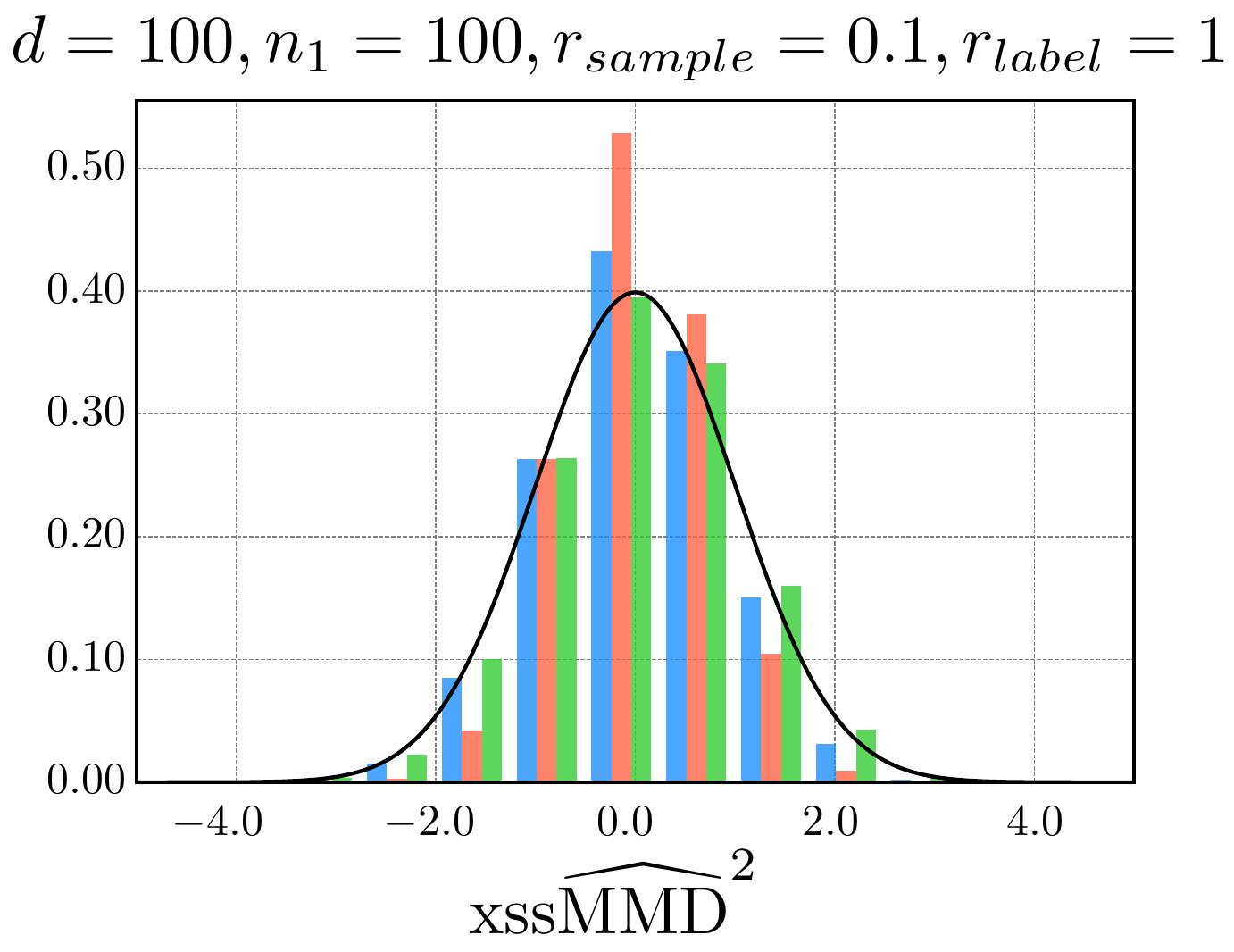}
    \end{subfigure}
    \hspace{0.01\textwidth}
    \begin{subfigure}[t]{0.21\textwidth} 
        \centering
        \includegraphics[width=\linewidth]{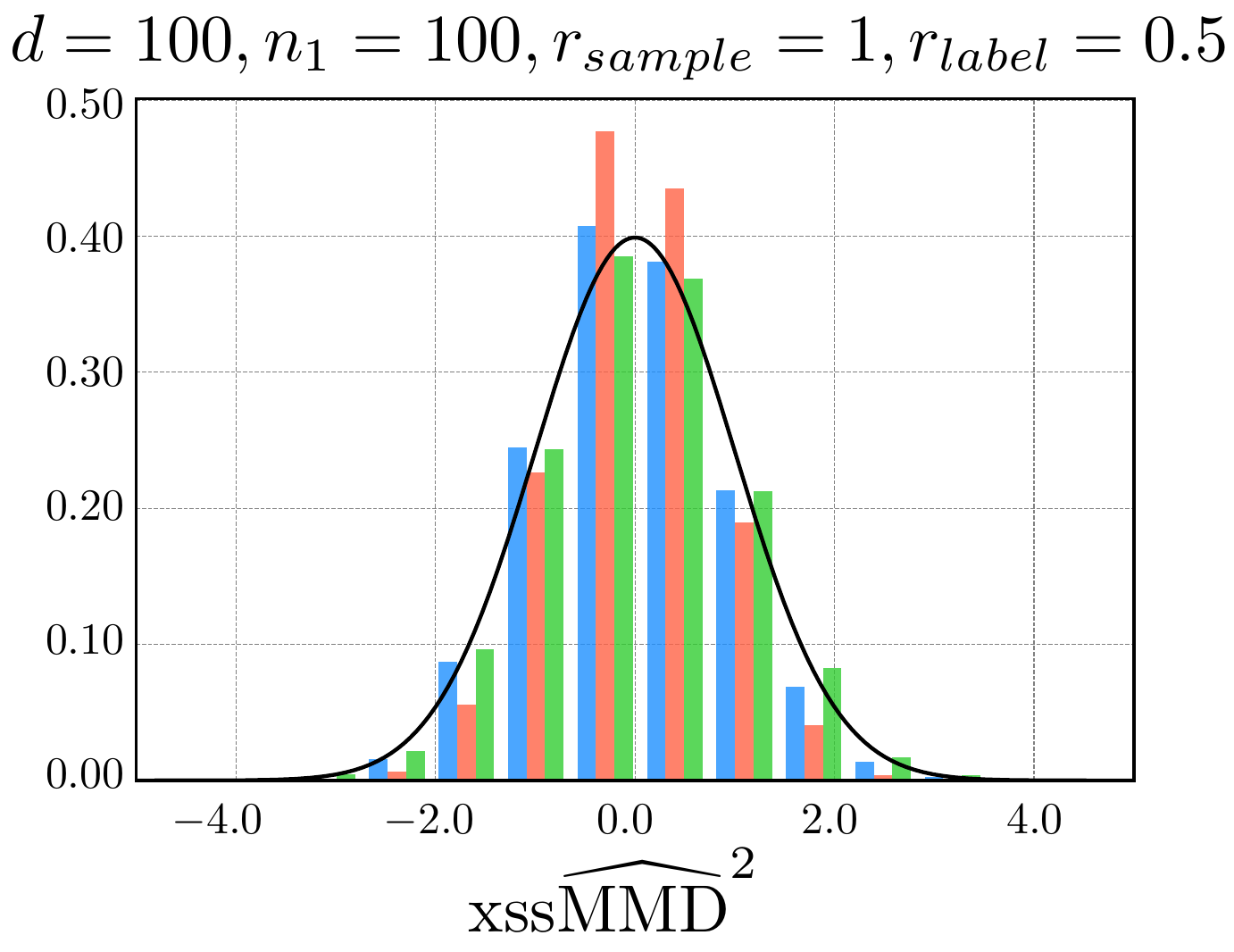}
    \end{subfigure}
    \hspace{0.01\textwidth}
    \begin{subfigure}[t]{0.21\textwidth} 
        \centering
        \includegraphics[width=\linewidth]{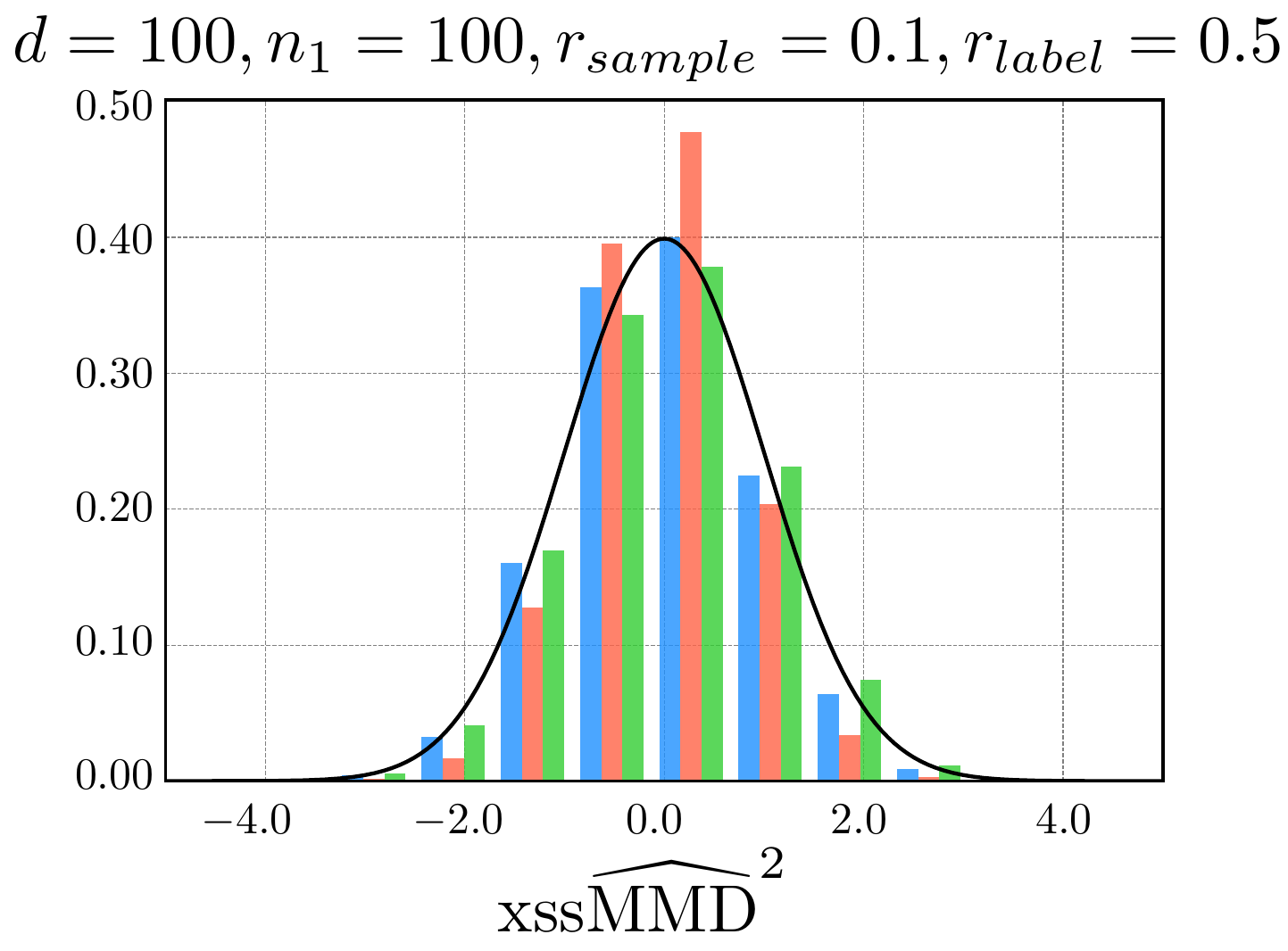}
    \end{subfigure}
    \caption{Experimental results for the distribution of $\xssMMD$ under the null hypothesis across all scenarios explained in \Cref{Section: Experiments}. The plots illustrate that the test statistic consistently adheres to a $N(0,1)$ distribution under various parameter settings. These comprehensive results confirm the validity of the xssMMD test across a broad range of conditions.}
    \label{fig:additional limiting null distributions 2}
\end{figure*}
Lastly, we investigate the robustness of the xssMMD test under different structural dependencies. We conduct experiments where $X$ and $V$ (or $Y$ and $W$) exhibit dependence which correspond to the settings used in the power analysis (Scenario 1 (Alt) to Scenario 4 (Alt) in \Cref{Section: Experiments}). \Cref{fig:additional limiting null distributions 3} displays the empirical distribution of the xssMMD test statistic in these scenarios. The results demonstrate that even in the presence of dependencies, the standardized test statistic consistently follows $N(0,1)$. This confirms that the asymptotic normality of $\xssMMD$ holds even when the covariates are not independent, thereby supporting the validity of the xssMMD test across a wide range of dependency structures.

\begin{figure*}[htb!]
    \centering
    \begin{subfigure}[t]{0.22\textwidth} 
        \centering
        \includegraphics[width=\linewidth]{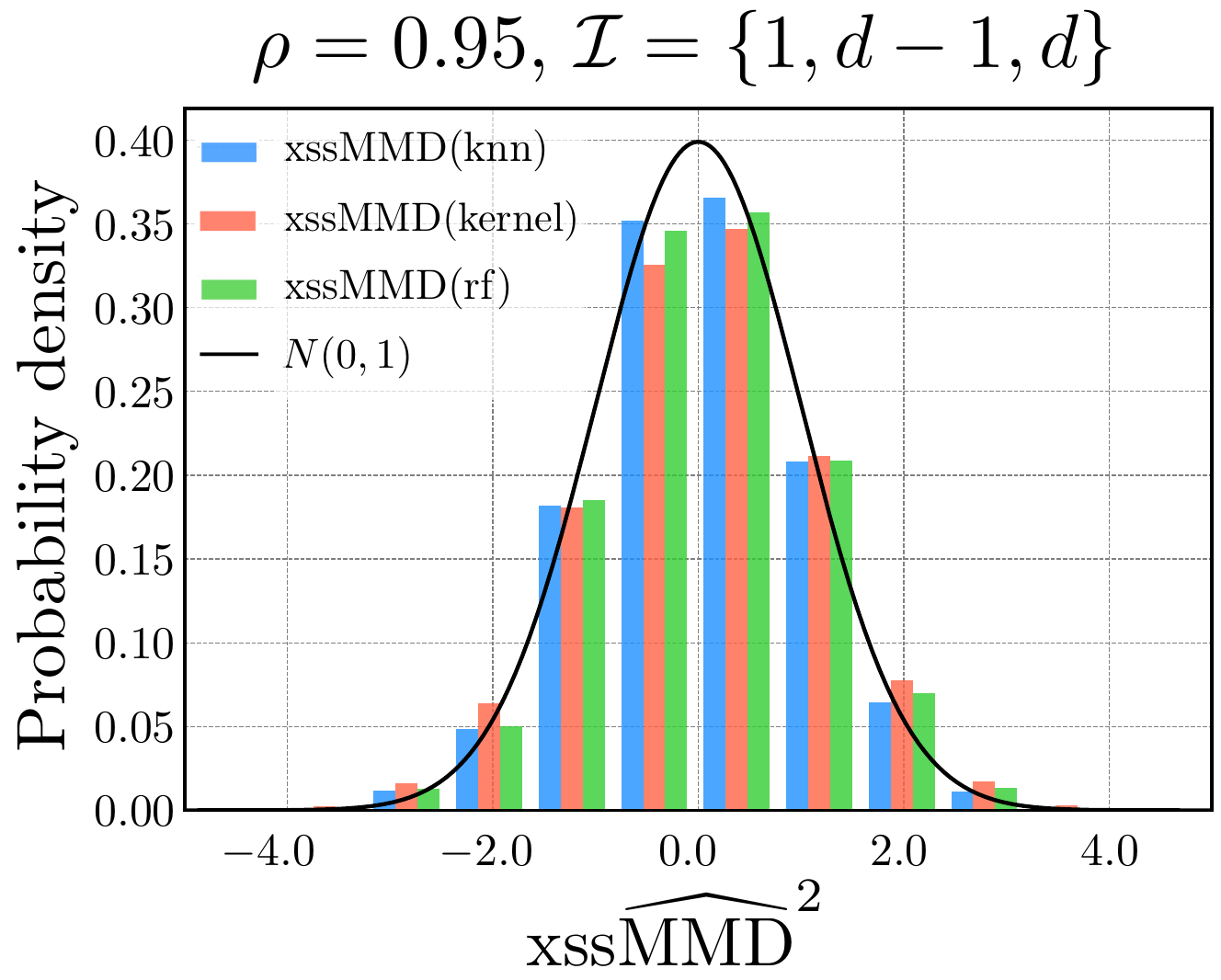}
    \end{subfigure}
    \hspace{0.01\textwidth}
    \begin{subfigure}[t]{0.21\textwidth}
        \centering
        \includegraphics[width=\linewidth]{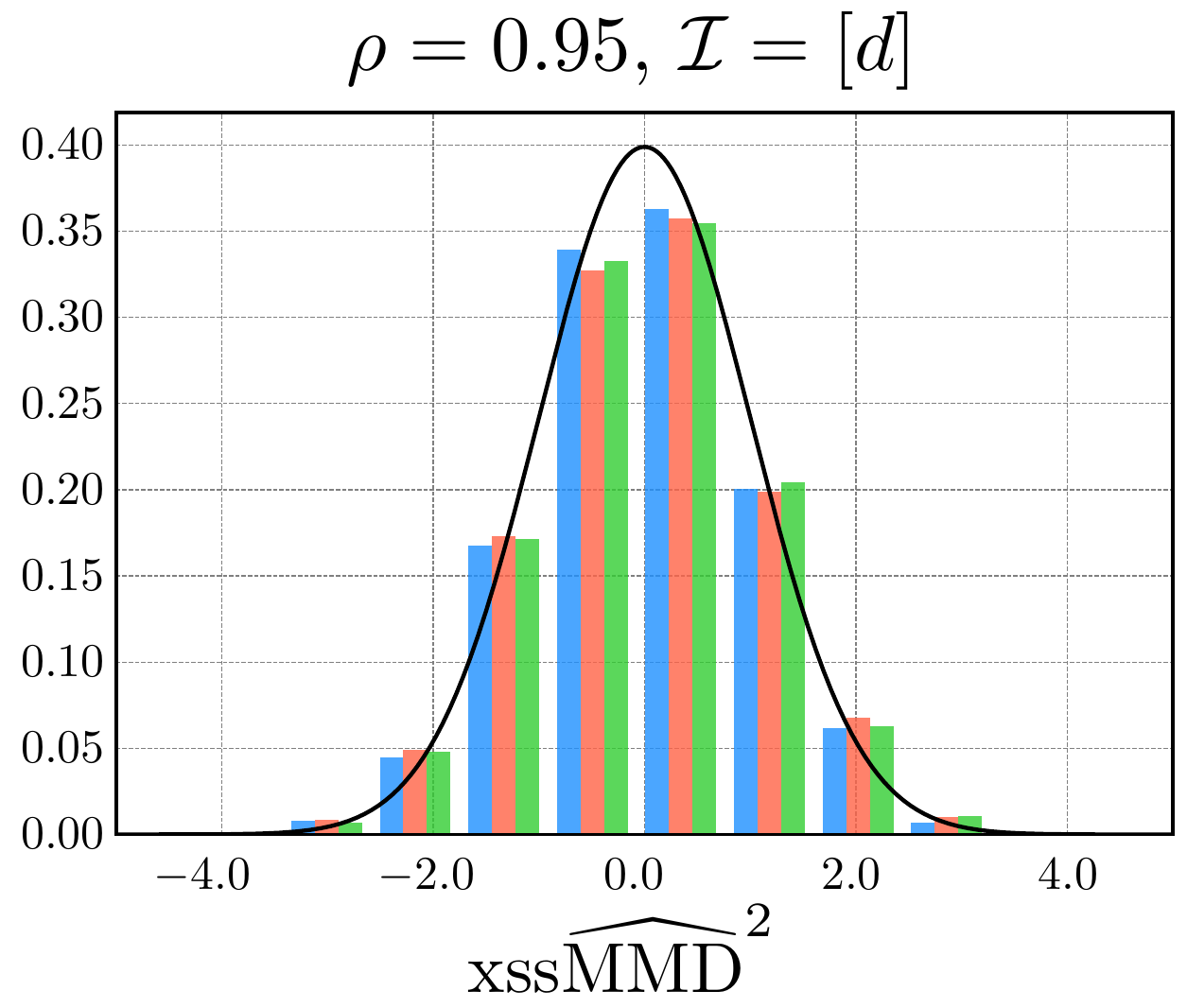}
    \end{subfigure}
    \hspace{0.01\textwidth}
    \begin{subfigure}[t]{0.21\textwidth} 
        \centering
        \includegraphics[width=\linewidth]{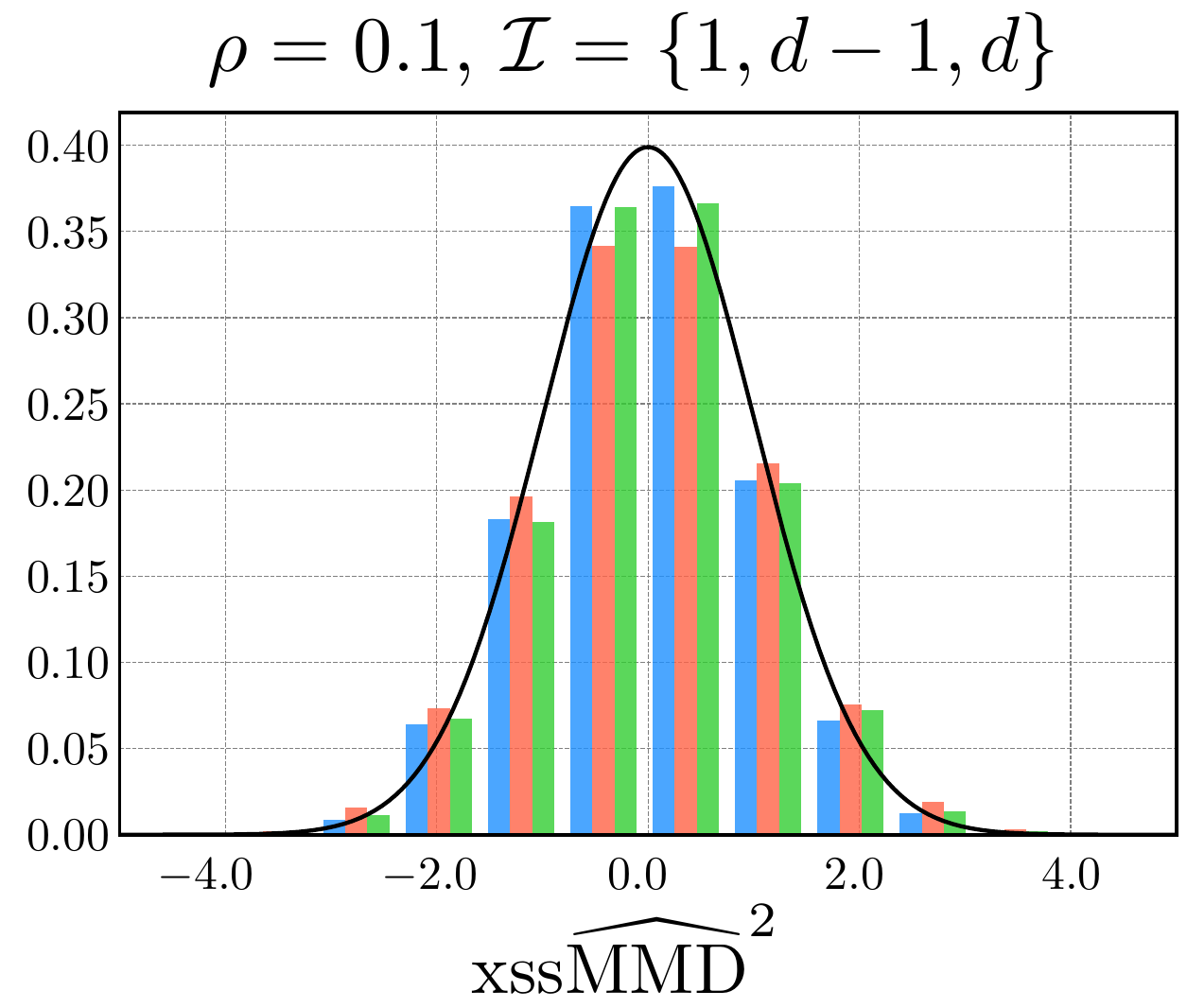}
    \end{subfigure}
    \hspace{0.01\textwidth}
    \begin{subfigure}[t]{0.21\textwidth} 
        \centering
        \includegraphics[width=\linewidth]{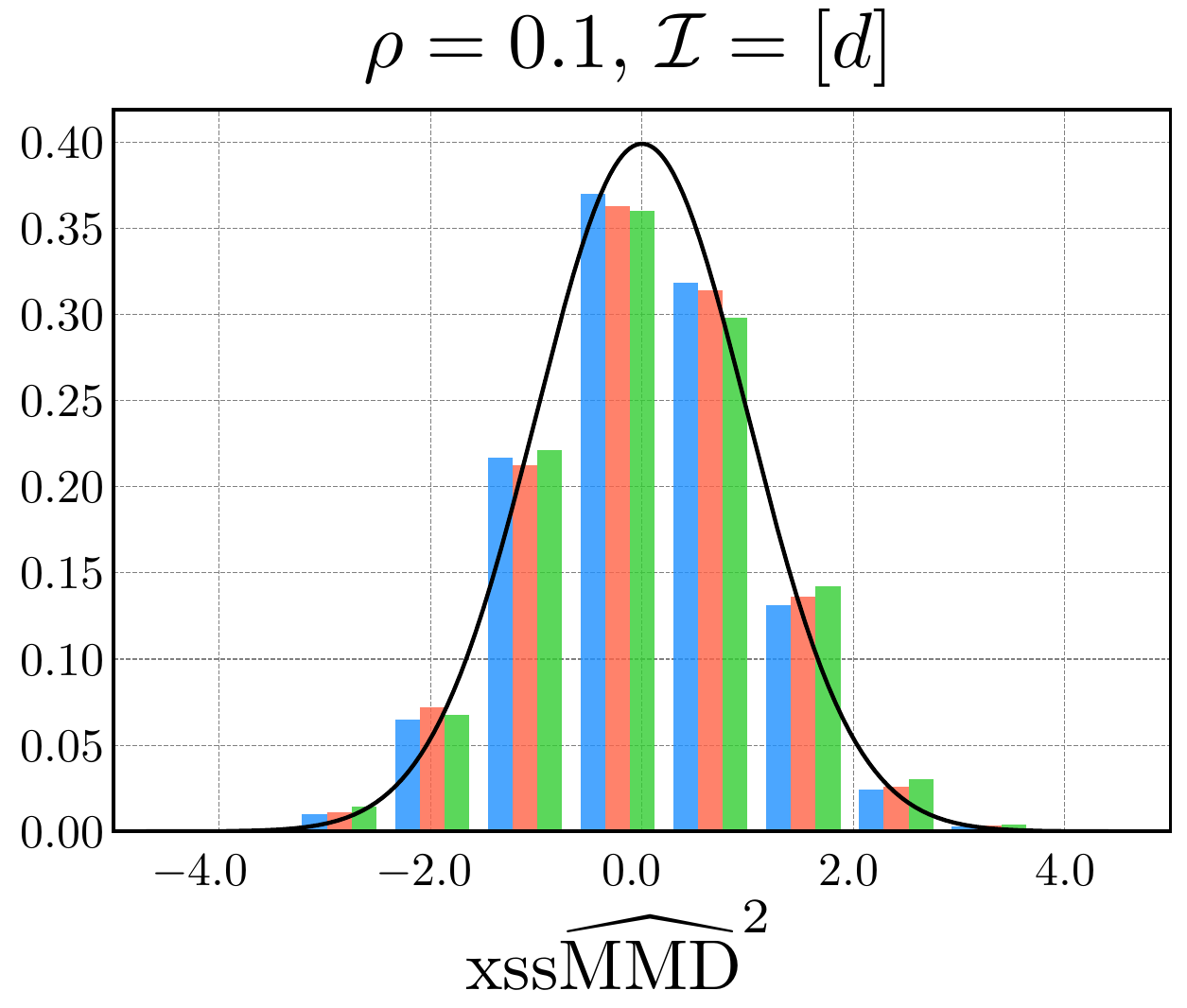}
    \end{subfigure}
    \caption{Experimental results for the distribution of $\xssMMD$ under the null across the scenarios in \Cref{Section: Experiments}, particularly from Scenario 1 (Alt) to Scenario 4 (Alt). These settings introduce dependencies between $X$ and $V$ or $Y$ and $W$, deviating from the standard independence assumption. The plots illustrate that even in the presence of such dependencies, the test statistic continues to follow a $N(0,1)$ under the null.}
    \label{fig:additional limiting null distributions 3}
\end{figure*}

These findings confirm the robustness of the asymptotic normality of $\xssMMD$ across a wide range of conditions. This further affirms the validity of our theoretical results that the xssMMD test is asymptotically level $\alpha$ under the null, as shown in \Cref{Theorem: xssMMD}.

\subsection{Power Curve with Different Settings}\label{appendix: experiments under the alternative}
In this subsection, we investigate the power curves of the xssMMD test statistic in different settings.

\begin{figure*}[htb!]
    \centering
    \begin{subfigure}[t]{0.22\textwidth} 
        \centering
        \includegraphics[width=\linewidth]{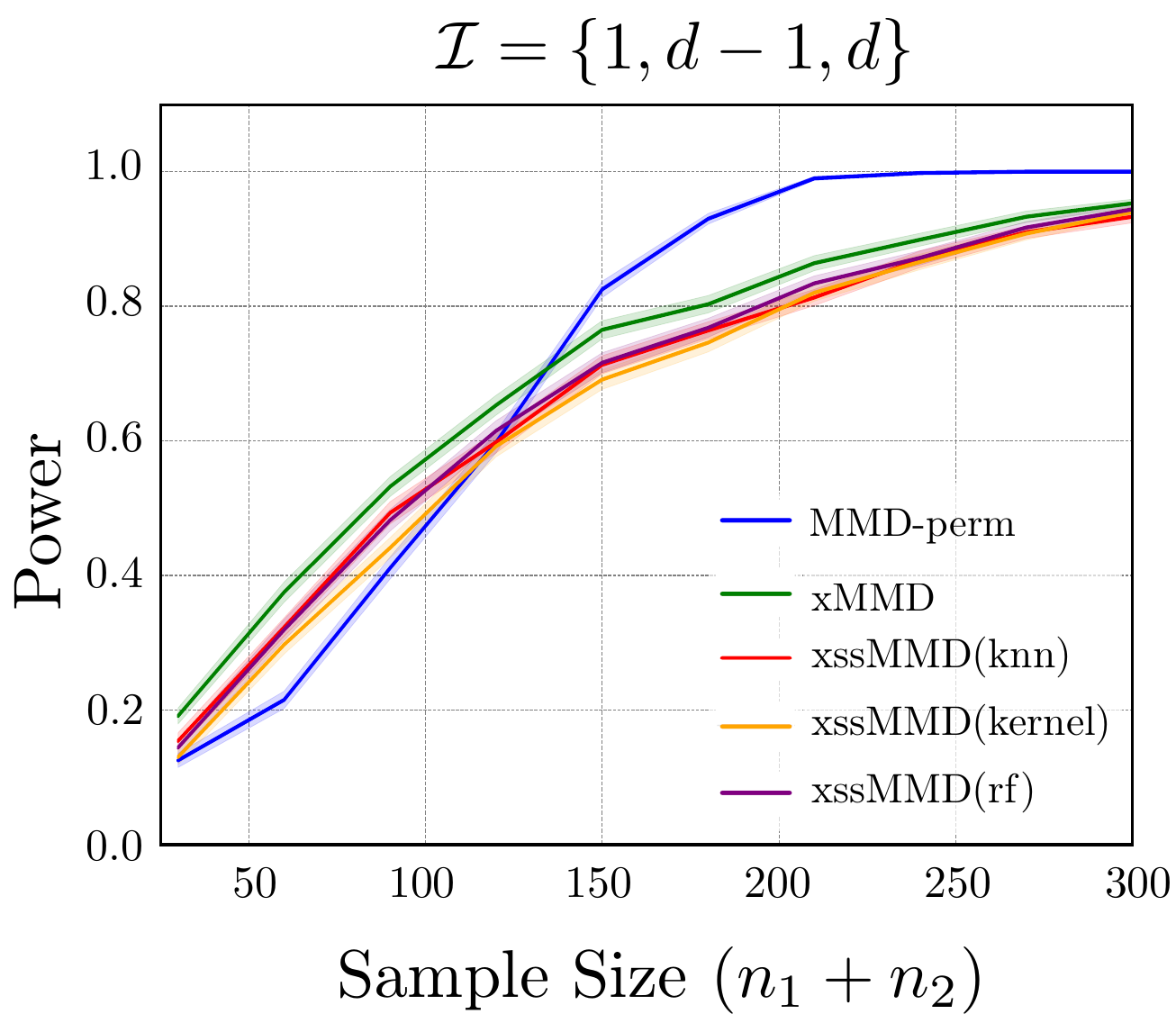}
    \end{subfigure}
    \hspace{0.01\textwidth}
    \begin{subfigure}[t]{0.21\textwidth}
        \centering
        \includegraphics[width=\linewidth]{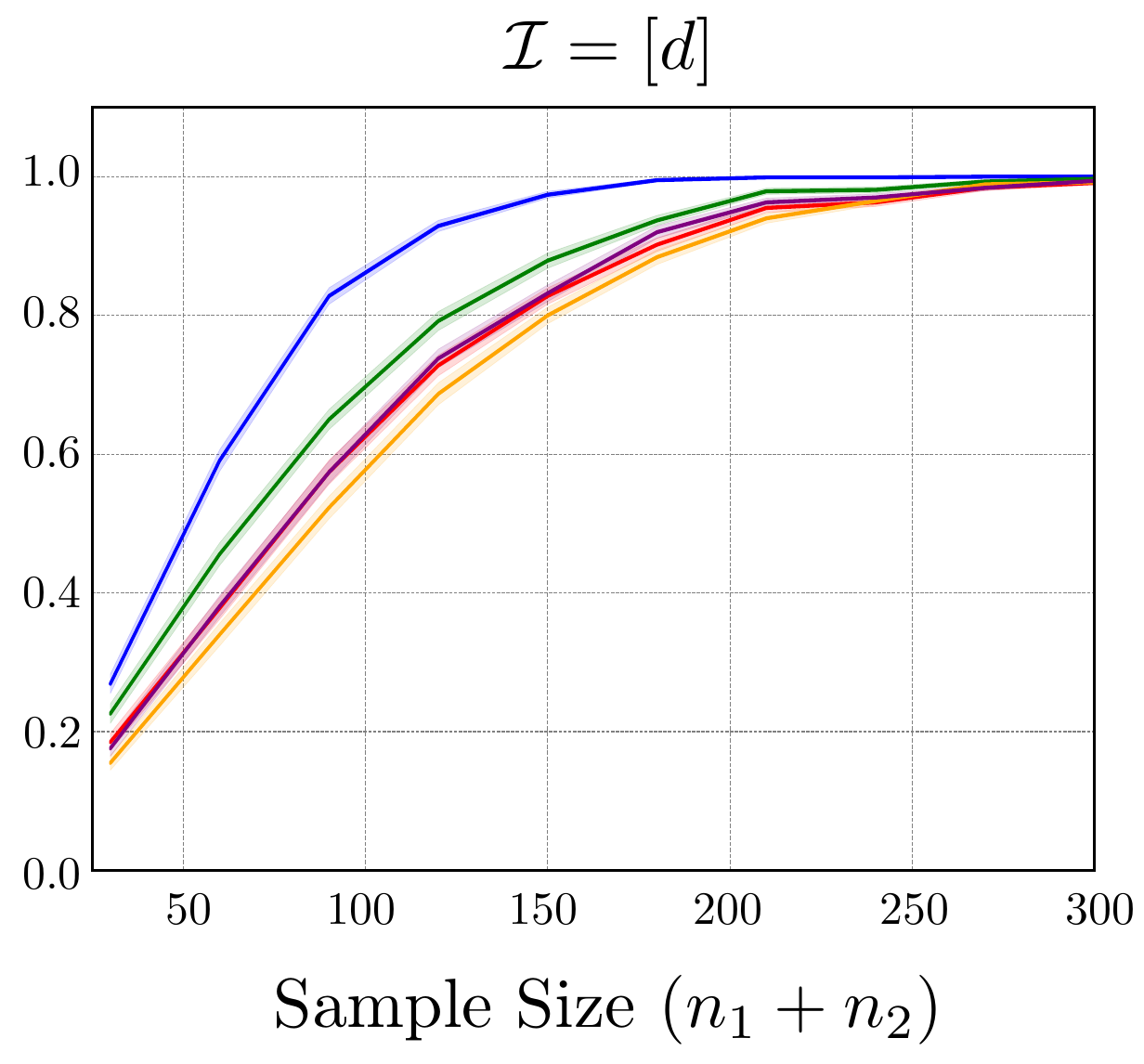}
    \end{subfigure}
    \hspace{0.01\textwidth}
    \begin{subfigure}[t]{0.21\textwidth} 
        \centering
        \includegraphics[width=\linewidth]{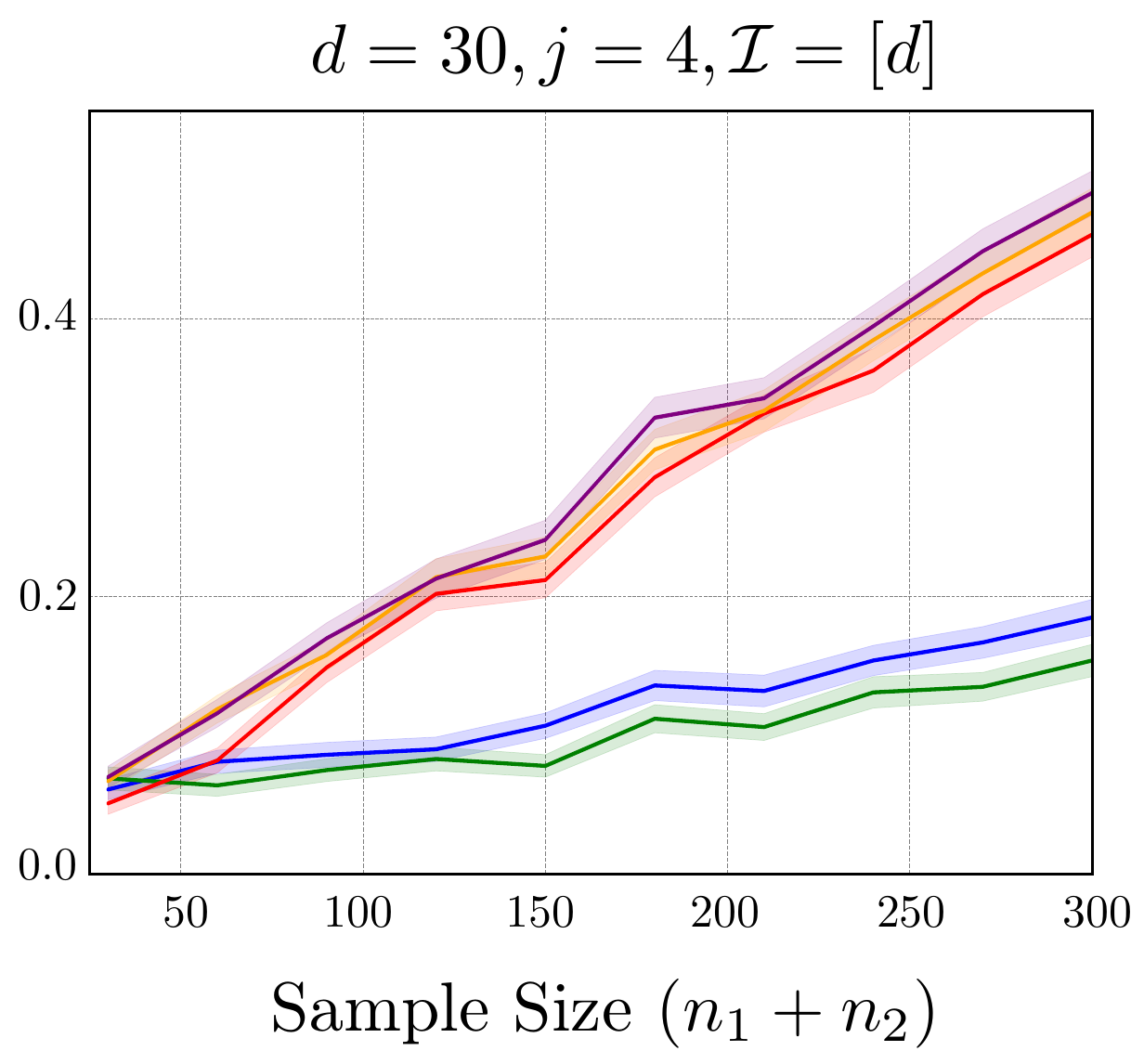}
    \end{subfigure}
    \hspace{0.01\textwidth}
    \begin{subfigure}[t]{0.21\textwidth} 
        \centering
        \includegraphics[width=\linewidth]{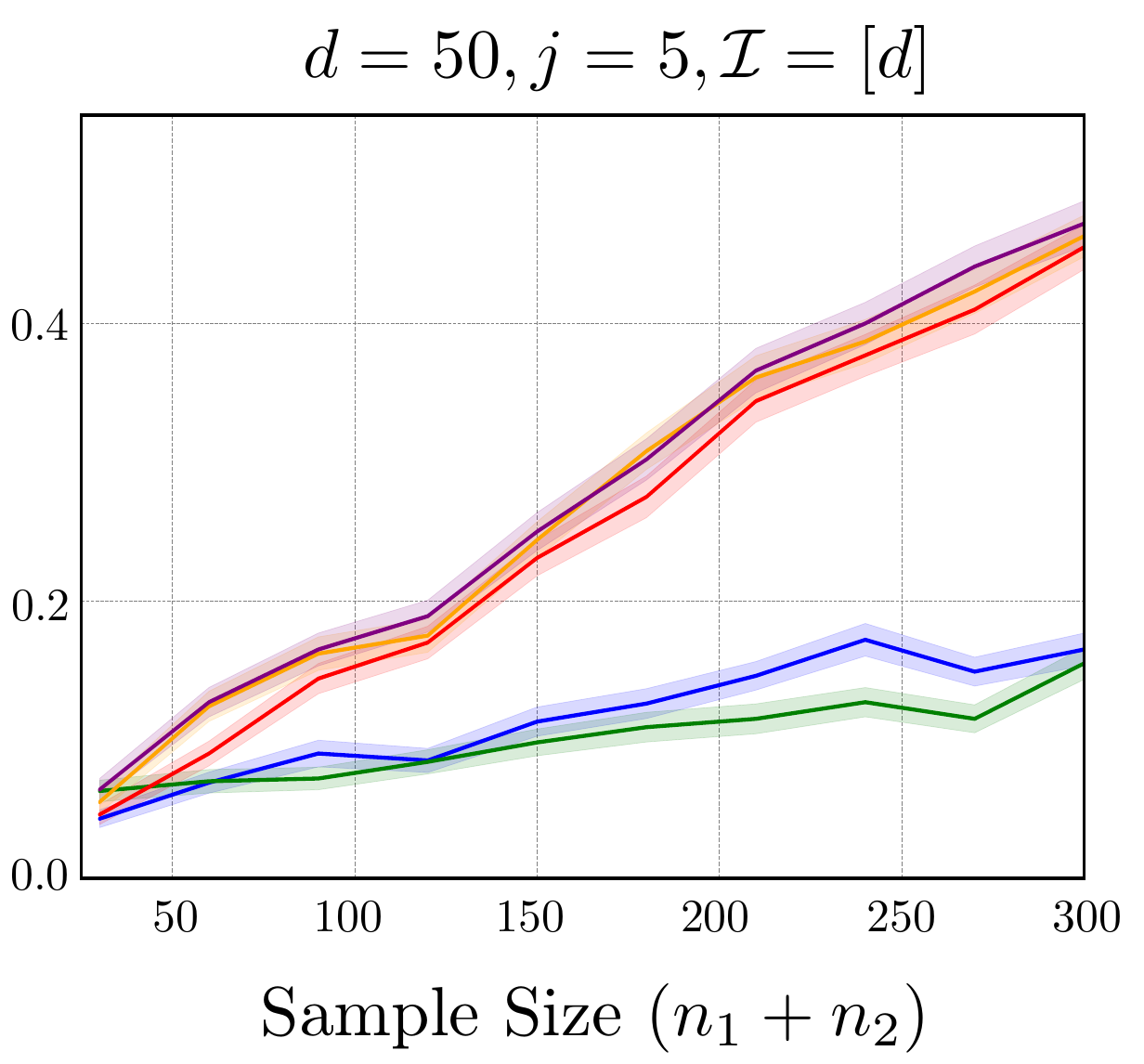}
    \end{subfigure}
    \caption{Power analysis of the xssMMD test in various settings. The first two subfigures depict scenarios in which additional covariates are independent of the labeled data. This result confirms that independent covariates do not enhance the performance of the xssMMD test. The last two subfigures illustrate the impact of varying the dimension $d$, demonstrating that the xssMMD test maintains superior power when $X$ and $Y$ show strong dependence on $V$ and $W$, even as the dimension increases.}
    \label{fig:additional power curves}
\end{figure*}
First, we evaluate the performance of the xssMMD test when additional covariates are independent of the labeled data. Unlike cases where unlabeled data provides valuable information to enhance power, this experiment demonstrates that the xssMMD test does not outperform other tests when the additional covariates contain no useful information. The first two subfigures in \Cref{fig:additional power curves} illustrate each case corresponding to Scenario 1 (Alt) and Scenario 2 (Alt), respectively, but with $X$ and $Y$ sampled independently from $P_X$ and $P_Y$, both following $N(\mathbf{0}_{d}, I_{d}).$ In such scenarios, the power curves of the xssMMD test closely align with those of the xMMD test and fall below those of the MMD-perm test, consistent with our conclusions about the impact of independent covariates.

Additionally, we conduct experiments to verify that the theoretical results on the power derived in \Cref{Theorem: xssMMD} hold across various dimensions. Using the same construction method outlined in \Cref{Section: Experiments} for Scenario 1 (Alt) through Scenario 4 (Alt), we vary the dimension $d$ and the parameter $j$, which represents the difference between the mean vectors of $V$ and $W$. Recall that both are sampled from a Gaussian distribution with mean vectors $\mathbf{0}_{d}$ and $\mathbf{a}_{\epsilon, j}$ where the first $j$ entries of $\mathbf{a}_{\epsilon, j}$ are equal to $\epsilon$ and the remaining entries are equal to $0$. The last two subfigures in \Cref{fig:additional power curves} correspond to the cases where the dimension is set to $d=30$ and $d=50$, while the mean vector difference is set to $j=4$ and $j=5$, respectively. As illustrated in the last two subfigures in \Cref{fig:additional power curves}, even as the dimension increases, the xssMMD test remains robust and consistently surpasses both the MMD-perm and xMMD tests, confirming its superior power in settings where the unlabeled data are informative.

Furthermore, we investigate how the amount of unlabeled data influences the test power. We conducted additional simulations adopting the settings of Scenario 1 (Alt) through Scenario 4 (Alt) from \Cref{Section: Experiments}. Fixing the labeled sample sizes at $n_1=n_2=100$, we varied the unlabeled sample sizes $m_1=m_2$ from 0 to 2000. As shown in Table \ref{tab:unlabeled_size}, the power of the xssMMD test increases monotonically as the size of the unlabeled data grows, confirming that our method effectively leverages additional information. In contrast, the power of MMD-perm and xMMD remains constant. Notably, when $m_1=m_2=0$, xssMMD is mathematically identical to xMMD, yielding the same power.

\begin{table}[!htbp]
\centering
\footnotesize
\setlength{\tabcolsep}{3pt}
\caption{Estimated test power for Scenario 1 (Alt) through Scenario 4 (Alt) with fixed $n_1=n_2=100$ and varying unlabeled sample sizes $m_1=m_2$.}
\label{tab:unlabeled_size}
\begin{tabular}{llrrrrrrrrrr}
\toprule
& & \multicolumn{10}{c}{$m_1=m_2$} \\
\cmidrule(lr){3-12}
Scenario & Test & $0$ & $222$ & $444$ & $666$ & $888$ & $1111$ & $1333$ & $1555$ & $1777$ & $2000$ \\
\midrule
\begin{tabular}{l}Scenario \\ 1 (Alt) \end{tabular} & 
\begin{tabular}{l}  MMD-perm        \\ xMMD     \\ xssMMD(knn)  \\ xssMMD(ker)      \\ xssMMD(rf)\end{tabular} & 
\begin{tabular}{r}  \textbf{0.136}  \\ 0.097    \\ 0.097        \\ 0.097            \\ 0.097 \end{tabular} & 
\begin{tabular}{r}  0.147           \\ 0.105    \\ 0.179        \\ \textbf{0.215}   \\ 0.193 \end{tabular} & 
\begin{tabular}{r}  0.129           \\ 0.107    \\ 0.242        \\ 0.251            \\ \textbf{0.254} \end{tabular} & 
\begin{tabular}{r}  0.157           \\ 0.125    \\ 0.279        \\ 0.286 \\\textbf{0.301} \end{tabular} & 
\begin{tabular}{r}  0.126           \\ 0.117    \\ 0.319        \\ 0.315 \\ \textbf{0.343} \end{tabular} & 
\begin{tabular}{r}  0.125           \\0.106     \\0.340         \\0.344\\\textbf{0.362}\end{tabular} & 
\begin{tabular}{r}  0.116           \\0.092     \\0.350         \\0.348\\\textbf{0.366}\end{tabular} & 
\begin{tabular}{r}  0.128           \\0.090     \\0.366         \\0.365\\\textbf{0.405}\end{tabular} & 
\begin{tabular}{r}  0.151           \\0.122     \\0.403         \\0.375\\\textbf{0.434}\end{tabular} & 
\begin{tabular}{r}  0.120           \\0.114     \\0.388         \\0.360\\\textbf{0.421}\end{tabular} \\
\midrule
\begin{tabular}{l}Scenario \\ 2 (Alt)\end{tabular} & 
\begin{tabular}{l}MMD-perm \\ xMMD \\ xssMMD(knn) \\ xssMMD(ker) \\ xssMMD(rf)\end{tabular} & 
\begin{tabular}{r}\textbf{0.129} \\ 0.088 \\ 0.088 \\ 0.088 \\ 0.088 \end{tabular} & 
\begin{tabular}{r}0.099  \\ 0.088 \\ 0.159 \\ \textbf{0.176} \\ 0.161 \end{tabular} & 
\begin{tabular}{r}0.155\\0.116\\0.215\\ \textbf{0.244}\\0.242\end{tabular} & 
\begin{tabular}{r}0.129\\0.109\\0.261\\0.261\\ \textbf{0.279} \end{tabular} & 
\begin{tabular}{r}0.126\\0.092\\0.286\\0.312\\ \textbf{0.317}\end{tabular} & 
\begin{tabular}{r}0.112\\0.092\\0.309\\0.316\\ \textbf{0.326}\end{tabular} & 
\begin{tabular}{r}0.121\\0.091\\0.331\\0.320\\ \textbf{0.358}\end{tabular} & 
\begin{tabular}{r}0.113\\0.088\\0.342\\0.325\\ \textbf{0.361}\end{tabular} & 
\begin{tabular}{r}0.107\\0.093\\0.372\\0.351\\ \textbf{0.399}\end{tabular} & 
\begin{tabular}{r}0.108\\0.089\\0.387\\0.374\\ \textbf{0.424}\end{tabular} \\ 
\midrule
\begin{tabular}{l}Scenario \\ 3 (Alt)\end{tabular} & 
\begin{tabular}{l}MMD-perm \\ xMMD \\ xssMMD(knn) \\ xssMMD(ker) \\ xssMMD(rf)\end{tabular} & 
\begin{tabular}{r}0.264\\0.208\\0.208\\0.208\\0.208
\end{tabular} & 
\begin{tabular}{r}0.257\\0.197\\0.249\\0.261\\ \textbf{0.271}\end{tabular} & 
\begin{tabular}{r}0.231\\0.193\\0.260\\0.263\\ \textbf{0.292}\end{tabular} & 
\begin{tabular}{r}0.253\\0.189\\0.265\\0.297\\ \textbf{0.317}\end{tabular} & 
\begin{tabular}{r}0.267\\0.200\\0.290\\0.293\\ \textbf{0.322}\end{tabular} &  
\begin{tabular}{r}0.230\\0.179\\0.256\\0.248\\ \textbf{0.282}\end{tabular} & 
\begin{tabular}{r}0.272\\0.205\\0.292\\0.307\\ \textbf{0.337}\end{tabular} & 
\begin{tabular}{r}0.261\\0.188\\0.279\\0.298\\ \textbf{0.322}\end{tabular} & 
\begin{tabular}{r}0.247\\0.177\\0.275\\0.292\\ \textbf{0.315}\end{tabular} & 
\begin{tabular}{r}0.251\\0.206\\0.266\\0.272\\ \textbf{0.309}\end{tabular} \\ 
\midrule
\begin{tabular}{l}Scenario \\ 4 (Alt)\end{tabular} & 
\begin{tabular}{l}MMD-perm \\ xMMD \\ xssMMD(knn) \\ xssMMD(ker) \\ xssMMD(rf)\end{tabular} & 
\begin{tabular}{r}\textbf{0.406}\\0.286\\0.286\\0.286\\0.286\end{tabular} &
\begin{tabular}{r}0.394\\0.284\\0.441\\ \textbf{0.456}\\0.385\end{tabular} & 
\begin{tabular}{r}0.389\\0.285\\0.500\\ \textbf{0.511}\\0.413\end{tabular} & 
\begin{tabular}{r}0.394\\0.283\\0.508\\ \textbf{0.540}\\0.407\end{tabular} & 
\begin{tabular}{r}0.388\\0.281\\0.557\\ \textbf{0.564}\\0.400\end{tabular} & 
\begin{tabular}{r}0.435\\0.302\\0.549\\ \textbf{0.564}\\0.453\end{tabular} & 
\begin{tabular}{r}0.406\\0.278\\0.532\\ \textbf{0.544}\\0.439\end{tabular} & 
\begin{tabular}{r}0.405\\0.295\\0.544\\ \textbf{0.559}\\0.429\end{tabular} & 
\begin{tabular}{r}0.416\\0.289\\0.533\\ \textbf{0.547}\\0.445\end{tabular} & 
\begin{tabular}{r}0.414\\0.278\\0.538\\ \textbf{0.575}\\0.448\end{tabular} \\
\bottomrule
\end{tabular}
\end{table}

\subsection{Comparison with Joint Two-Sample Tests}\label{appendix: comparison with joint test}

In this section, we highlight the specific advantages of our semi-supervised test by comparing it against a standard two-sample test applied directly to the joint distributions, $P_{XV}$ vs.\ $P_{YW}$. To systematically evaluate this, we generate $10$-dimensional Gaussian vectors with a strong correlation of $\rho=0.95$ and induce distributional differences using a mean shift of $\epsilon=0.3$ and $j=1$.

Under the null ($P_X = P_Y$), we deliberately construct a shift in the auxiliary covariates ($P_V \neq P_W$). Specifically, both $X$ and $Y$ contain the mean shift in their first coordinate. However, $V$ is extracted from the shifted first two coordinates of $X$, while $W$ is extracted from the unshifted last two coordinates of $Y$, creating a pure auxiliary shift. As shown in \Cref{tab:joint_test_null}, the joint MMD test exhibits a Type-I error rate approaching $1.0$, as it incorrectly flags the irrelevant shift as a discrepancy of interest. In contrast, xssMMD effectively ignores this shift in the auxiliary space and reliably maintains nominal control of $\alpha=0.05$. To explicitly demonstrate this robustness, \Cref{tab:type1_varying_eps} shows the effect of varying the shift magnitude $\epsilon$. As the auxiliary shift intensifies from $0.0$ to $1.0$, the joint tests rapidly collapse, whereas all variants of xssMMD consistently maintain the Type-I error near the nominal level, remaining unaffected by the auxiliary noise.

\begin{table}[!htbp]
\centering
\footnotesize 
\setlength{\tabcolsep}{3pt}
\caption{Estimated Type-I error of the joint and marginal tests under the null when $P_X = P_Y$ and $P_V \neq P_W$. The joint MMD-perm test incorrectly rejects the null, while xssMMD maintains nominal control of $\alpha=0.05$.}
\label{tab:joint_test_null}
\begin{tabular}{lrrrrrrrrrr}
\toprule
& \multicolumn{10}{c}{$n_1+n_2$} \\
\cmidrule(lr){2-11}
Test &  20 &  40 &  60 &  80 &  100 &  120 &  140 &  160 &  180 &  200 \\
\midrule
 MMD-perm (Joint) & 0.143 & 0.272 & 0.459 & 0.670 & 0.837 & 0.929 & 0.969 & 0.996 & 1.000 & 1.000 \\
 xMMD & 0.170 & 0.246 & 0.407 & 0.546 & 0.851 & 0.769 & 0.711 & 0.916 & 0.953 & 0.973 \\
\midrule
 xssMMD(knn) & \textbf{0.037} & \textbf{0.046} & 0.052 & \textbf{0.050} & 0.051 & 0.049 & 0.044 & 0.049 & \textbf{0.048} & \textbf{0.042} \\
 xssMMD(ker) & 0.046 & 0.048 & \textbf{0.043} & 0.051 & \textbf{0.047} & \textbf{0.046} & 0.053 & \textbf{0.048} & 0.051 & 0.050 \\
 xssMMD(rf) & 0.046 & 0.055 & 0.051 & 0.055 & 0.056 & \textbf{0.046} & \textbf{0.043} & 0.049 & 0.053 & 0.054 \\
\bottomrule
\end{tabular}
\end{table}

\begin{table}[!htbp]
\centering
\footnotesize
\setlength{\tabcolsep}{3pt}
\caption{Estimated Type-I error of the joint and marginal tests under the null when $P_X=P_Y$ and $P_V \neq P_W$ with varying shift magnitude $\epsilon$.}
\label{tab:type1_varying_eps}
\begin{tabular}{lrrrrrr}
\toprule
& \multicolumn{6}{c}{$\epsilon$} \\
\cmidrule(lr){2-7}
Test & $0.0$ & $0.2$ & $0.4$ & $0.6$ & $0.8$ & $1.0$ \\
\midrule
MMD-perm        & 0.079 & 0.225 & 0.800 & 1.000 & 1.000 & 1.000 \\
xMMD            & 0.178 & 0.457 & 0.736 & 0.925 & 0.984 & 1.000 \\
\midrule
xssMMD(knn)     & \textbf{0.047} & \textbf{0.038} & \textbf{0.042} & 0.057 & 0.047 & 0.055 \\
xssMMD(ker)     & 0.061 & 0.049 & 0.045 & \textbf{0.056} & \textbf{0.044} & \textbf{0.054} \\
xssMMD(rf)      & 0.056 & 0.053 & 0.054 & 0.061 & 0.045 & 0.056 \\
\bottomrule
\end{tabular}
\end{table}

Under the alternative ($P_X \neq P_Y$), we configure the environment such that the auxiliary covariates are completely identical ($P_V = P_W$). Here, only $X$ contains the mean shift, but both $V$ and $W$ are extracted strictly from the unshifted last two coordinates of $X$ and $Y$, respectively. In this scenario, the joint test yields lower power as shown in \Cref{tab:joint_test_alt} since identical auxiliary marginal distributions act as high-dimensional noise, diluting the overall signal. In contrast, xssMMD leverages the underlying correlation between $X$ and $V$ to reduce variance, substantially outperforming both MMD-perm and xMMD. To further validate the utility of our approach, we examine the test power across different correlation strengths $\rho$ in \Cref{tab:power_varying_rho}. When the auxiliary data are uninformative ($\rho=0.0$), xssMMD retains the baseline power of the standard xMMD test, avoiding any negative transfer. As the correlation increases to $0.95$, xssMMD efficiently exploits the dependency structure to achieve higher power, whereas the power of the joint tests remains low.

\begin{table}[!htbp]
\centering
\footnotesize
\setlength{\tabcolsep}{3pt}
\caption{Estimated test power of the joint and marginal tests under the alternative when $P_X \neq P_Y$ and $P_V = P_W$. xssMMD achieves higher power, whereas joint tests struggle to capture the marginal difference.}
\label{tab:joint_test_alt}
\begin{tabular}{lrrrrrrrrrr}
\toprule
& \multicolumn{10}{c}{$n_1+n_2$} \\
\cmidrule(lr){2-11}
Test &  20 &  40 &  60 &  80 &  100 &  120 &  140 &  160 &  180 &  200 \\
\midrule
MMD-perm (Joint) & 0.078 & 0.089 & 0.121 & 0.180 & 0.244 & 0.303 & 0.385 & 0.495 & 0.633 & 0.737 \\
xMMD & 0.117 & 0.175 & 0.274 & 0.360 & 0.420 & 0.496 & 0.553 & 0.637 & 0.680 & 0.716 \\
\midrule
xssMMD(knn) & 0.097 & 0.234 & 0.522 & 0.696 & 0.830 & 0.905 & 0.935 & 0.948 & \textbf{0.975} & 0.985 \\
xssMMD(ker) & \textbf{0.152} & 0.344 & 0.552 & 0.681 & 0.787 & 0.875 & 0.906 & 0.935 & 0.962 & 0.969 \\
xssMMD(rf) & 0.151 & \textbf{0.380} & \textbf{0.604} & \textbf{0.763} & \textbf{0.858} & \textbf{0.919} & \textbf{0.946} & \textbf{0.958} & \textbf{0.975} & \textbf{0.987} \\
\bottomrule
\end{tabular}
\end{table}

\begin{table}[!htbp]
\centering
\footnotesize
\setlength{\tabcolsep}{3pt}
\caption{Estimated test power of the joint and marginal tests under the alternative when $P_X \neq P_Y$ and $P_V = P_W$ across varying correlation strengths $\rho$ between target and auxiliary covariates.}
\label{tab:power_varying_rho}
\begin{tabular}{lrrrrrr}
\toprule
& \multicolumn{6}{c}{$\rho$} \\
\cmidrule(lr){2-7}
Test & $0.00$ & $0.20$ & $0.40$ & $0.60$ & $0.80$ & $0.95$ \\
\midrule
MMD-perm        & \textbf{0.185} & 0.156 & 0.132 & 0.090 & 0.097 & 0.080 \\
xMMD            & 0.146 & 0.150 & 0.176 & 0.140 & 0.165 & 0.154 \\
\midrule
xssMMD(knn)     & 0.159 & \textbf{0.165} & 0.210 & 0.195 & \textbf{0.298} & 0.414 \\
xssMMD(ker)     & 0.165 & 0.159 & \textbf{0.221} & 0.202 & 0.297 & 0.369 \\
xssMMD(rf)      & 0.155 & 0.162 & 0.201 & \textbf{0.203} & 0.290 & \textbf{0.462} \\
\bottomrule
\end{tabular}
\end{table}

\subsection{Running-Time Comparison} \label{appendix: running time}
While permutation-based MMD tests provide strong finite-sample guarantees, they are often computationally prohibitive for large datasets. To demonstrate the computational efficiency of our proposed framework, we compared the execution time (in seconds) of xssMMD against xMMD and MMD-perm. We adopted the setting from Scenario 1 (Alt) to Scenario 4 (Alt), simultaneously varying the labeled sample size $n_1=n_2$ from 10 to 100 and the unlabeled sample size $m_1=m_2$ from 100 to 1000. 

The results, presented in \Cref{tab:running_time}, show a consistent efficiency ranking across most of the settings: xMMD requires the least amount of time, followed closely by xssMMD, while MMD-perm is the most computationally intensive. Notably, there is a substantial time difference between xssMMD and MMD-perm as the sample size increases. For instance, at $n_1+n_2=200$, xssMMD(knn) is approximately 13 times faster than MMD-perm in every scenarios. These results indicate that xssMMD provides a computationally efficient semi-supervised approach. It mitigates the computational burden of standard permutation tests without sacrificing the statistical benefits of incorporating unlabeled data.

\begin{table}[!htbp]
\centering
\footnotesize
\setlength{\tabcolsep}{2pt}
\caption{Running-time comparison (in seconds) across varying sample sizes. The total labeled sample size $n_1+n_2$ is shown, with the unlabeled sample size fixed at ten times the labeled size ($m_1+m_2 = 10(n_1+n_2)$).}
\label{tab:running_time}
\begin{tabular}{llrrrrrrrrrr}
\toprule
& & \multicolumn{10}{c}{$n_1+n_2$} \\
\cmidrule(lr){3-12}
Scenario & Method & $20$ & $40$ & $60$ & $80$ & $100$ & $120$ & $140$ & $160$ & $180$ & $200$ \\
\midrule
\multirow{5}{*}{Scenario 1 (Alt)} & MMD-perm & 32.594 & 55.570 & 354.557 & 440.820 & 503.935 & 500.720 & 548.205 & 566.169 & 575.530 & 865.501 \\
& xMMD & 0.709 & 0.650 & 1.061 & 1.008 & 3.121 & 3.489 & 4.312 & 3.565 & 4.147 & 3.464 \\
& xssMMD(knn) & 9.130 & 13.794 & 16.985 & 21.670 & 28.756 & 35.236 & 45.231 & 49.880 & 57.319 & 64.845 \\
& xssMMD(ker) & 10.337 & 16.008 & 20.623 & 26.023 & 34.358 & 41.907 & 55.003 & 57.790 & 68.878 & 78.862 \\
& xssMMD(rf) & 278.681 & 270.142 & 254.706 & 256.005 & 268.618 & 281.739 & 297.154 & 308.513 & 333.594 & 321.440 \\
\midrule
\multirow{5}{*}{Scenario 2 (Alt)} 
& MMD-perm & 29.597 & 58.656 & 366.912 & 448.648 & 500.716 & 503.202 & 547.274 & 570.824 & 552.232 & 838.050 \\
& xMMD & 0.656 & 0.721 & 1.019 & 1.029 & 3.224 & 3.448 & 4.259 & 3.798 & 3.375 & 3.594 \\
& xssMMD(knn) & 8.169 & 14.346 & 17.180 & 22.723 & 30.735 & 37.411 & 49.342 & 52.368 & 56.562 & 66.373 \\
& xssMMD(ker) & 9.164 & 16.864 & 20.907 & 27.751 & 37.016 & 44.399 & 60.935 & 59.687 & 68.209 & 81.675 \\
& xssMMD(rf) & 277.267 & 270.677 & 250.552 & 259.682 & 272.121 & 286.572 & 314.231 & 317.082 & 329.089 & 325.893 \\
\midrule
\multirow{5}{*}{Scenario 3 (Alt)} 
& MMD-perm & 29.407 & 58.977 & 371.145 & 442.269 & 506.101 & 501.124 & 553.109 & 577.414 & 547.026 & 824.445 \\
& xMMD & 0.660 & 0.675 & 0.994 & 1.015 & 3.229 & 3.003 & 3.908 & 3.685 & 3.076 & 3.356 \\
& xssMMD(knn) & 7.972 & 14.525 & 16.849 & 22.682 & 30.519 & 36.164 & 47.830 & 51.671 & 55.769 & 65.830 \\
& xssMMD(ker) & 9.031 & 17.077 & 20.441 & 27.328 & 36.228 & 42.917 & 59.162 & 57.778 & 65.760 & 81.043 \\
& xssMMD(rf) & 279.430 & 273.422 & 250.447 & 261.256 & 272.815 & 287.031 & 315.216 & 317.947 & 332.128 & 331.039 \\
\midrule
\multirow{5}{*}{Scenario 4 (Alt)} 
& MMD-perm & 30.948 & 59.501 & 352.966 & 434.063 & 486.332 & 480.628 & 565.353 & 573.585 & 540.560 & 856.321 \\
& xMMD & 0.622 & 0.646 & 1.066 & 1.052 & 3.176 & 3.600 & 4.570 & 3.366 & 3.637 & 3.894 \\
& xssMMD(knn) & 7.725 & 13.961 & 17.659 & 23.076 & 30.688 & 37.273 & 49.264 & 51.885 & 58.866 & 67.369 \\
& xssMMD(ker) & 9.842 & 16.055 & 21.594 & 27.440 & 36.015 & 44.364 & 60.609 & 59.432 & 69.667 & 80.912 \\
& xssMMD(rf) & 272.036 & 265.660 & 256.013 & 258.103 & 269.625 & 292.752 & 316.483 & 321.325 & 338.474 & 339.847 \\
\bottomrule
\end{tabular}
\end{table}

\subsection{Details of the Real-World Experiment: HTRU2 Pulsar dataset} \label{appendix: HTRU2 dataset}

This subsection provides a detailed description of our experiments on real-world data using the HTRU2 pulsar dataset. The HTRU2 dataset consists of measurements from radio astronomy, with each sample characterized by eight continuous features derived from the integrated pulse profile (IP) and the DM-SNR curve (DM). The dataset contained 1639 pulsar and 16259 non-pulsar observations and we constructed a testing problem to determine whether the distribution of features differed between the two classes.

\begin{table}[tb!]
\centering
\footnotesize
\setlength{\tabcolsep}{3pt}
\caption{Estimated test power under different scenarios and increasing noise levels (independent Gaussian noise with standard deviation $\sigma \in \{0, 0.1, 0.3, 0.5, 0.7, 1.0\}$). Each value corresponds to the average test power across $1000$ trials. Our proposed method, xssMMD, was implemented using random forest for the conditional expectation estimation.}
\label{table: pulsar experiment}
\begin{tabular}{llrrrrrr}
\toprule
Labeled Data & Test & $\sigma = 0$ & $\sigma = 0.1$ & $\sigma = 0.3$ & $\sigma = 0.5$ & $\sigma = 0.7$ & $\sigma = 1.0$ \\
\midrule
\begin{tabular}{l}IP(Mean), \\ DM(Mean) \end{tabular} & 
\begin{tabular}{l}MMD-perm \\ xMMD \\ xssMMD\end{tabular} & 
\begin{tabular}{r}\textbf{1.000} \\ 0.964 \\ 0.998\end{tabular} & 
\begin{tabular}{r}\textbf{1.000} \\ 0.962 \\ 0.997\end{tabular} & 
\begin{tabular}{r}\textbf{0.999} \\ 0.891 \\ 0.983\end{tabular} & 
\begin{tabular}{r}0.859 \\ 0.690 \\ \textbf{0.869}\end{tabular} & 
\begin{tabular}{r}0.442 \\ 0.398 \\ \textbf{0.567}\end{tabular} & 
\begin{tabular}{r}0.120 \\ 0.156 \\ \textbf{0.232}\end{tabular} \\
\midrule
\begin{tabular}{l}IP(Mean, SD)\end{tabular}  & 
\begin{tabular}{l}MMD-perm \\ xMMD \\ xssMMD\end{tabular} & 
\begin{tabular}{r}0.106 \\ 0.205 \\ \textbf{0.560}\end{tabular} & 
\begin{tabular}{r}0.095 \\ 0.185 \\ \textbf{0.531}\end{tabular} & 
\begin{tabular}{r}0.052 \\ 0.123 \\ \textbf{0.342}\end{tabular} & 
\begin{tabular}{r}0.015 \\ 0.062 \\ \textbf{0.181}\end{tabular} & 
\begin{tabular}{r}0.090 \\ 0.038 \\ \textbf{0.105}\end{tabular} & 
\begin{tabular}{r}0.013 \\ 0.030 \\ \textbf{0.061}\end{tabular} \\
\midrule
\begin{tabular}{l}IP(Mean, SD, EK, Skew) \end{tabular}   & 
\begin{tabular}{l}MMD-perm \\ xMMD \\ xssMMD\end{tabular} & 
\begin{tabular}{r}0.262 \\ \textbf{0.402} \\ 0.367\end{tabular} & 
\begin{tabular}{r}0.250 \\ \textbf{0.388} \\ 0.361\end{tabular} & 
\begin{tabular}{r}0.173 \\ \textbf{0.282} \\ 0.271\end{tabular} & 
\begin{tabular}{r}0.084 \\ \textbf{0.173} \\ 0.170\end{tabular} & 
\begin{tabular}{r}0.054 \\ \textbf{0.069} \\ 0.064\end{tabular} & 
\begin{tabular}{r}0.014 \\ 0.053 \\ \textbf{0.064}\end{tabular} \\
\midrule
\begin{tabular}{l}IP(Mean, SD), \\DM(Mean, SD)\end{tabular}  & 
\begin{tabular}{l}MMD-perm \\ xMMD \\ xssMMD\end{tabular} & 
\begin{tabular}{r}\textbf{1.000} \\ 0.929 \\ 0.999\end{tabular} & 
\begin{tabular}{r}\textbf{0.999} \\ 0.927 \\ 0.998\end{tabular} & 
\begin{tabular}{r}\textbf{0.985} \\ 0.832 \\ 0.982\end{tabular} & 
\begin{tabular}{r}0.762 \\ 0.605 \\ \textbf{0.849}\end{tabular} & 
\begin{tabular}{r}0.360 \\ 0.256 \\ \textbf{0.535}\end{tabular} & 
\begin{tabular}{r}0.072 \\ 0.135 \\ \textbf{0.222}\end{tabular} \\
\midrule
\begin{tabular}{l}IP(EK, Skew), \\ DM(EK, Skew)\end{tabular} & 
\begin{tabular}{l}MMD-perm \\ xMMD \\ xssMMD\end{tabular} & 
\begin{tabular}{r}0.758 \\ 0.694 \\ \textbf{0.941}\end{tabular} & 
\begin{tabular}{r}0.726 \\ 0.682 \\ \textbf{0.944}\end{tabular} & 
\begin{tabular}{r}0.552 \\ 0.608 \\ \textbf{0.880}\end{tabular} & 
\begin{tabular}{r}0.284 \\ 0.390 \\ \textbf{0.674}\end{tabular} & 
\begin{tabular}{r}0.109 \\ 0.218 \\ \textbf{0.405}\end{tabular} & 
\begin{tabular}{r}0.043 \\ 0.085 \\ \textbf{0.183}\end{tabular} \\
\midrule
\begin{tabular}{l}IP(Mean, SD, EK), \\ DM(Mean, SD, EK) \end{tabular}& 
\begin{tabular}{l}MMD-perm \\ xMMD \\ xssMMD\end{tabular}  & 
\begin{tabular}{r}0.985 \\ 0.860 \\ \textbf{0.994}\end{tabular} & 
\begin{tabular}{r}0.983 \\ 0.845 \\ \textbf{0.987}\end{tabular} & 
\begin{tabular}{r}0.932 \\ 0.740 \\ \textbf{0.944}\end{tabular} & 
\begin{tabular}{r}0.636 \\ 0.537 \\ \textbf{0.812}\end{tabular} & 
\begin{tabular}{r}0.251 \\ 0.296 \\ \textbf{0.528}\end{tabular} & 
\begin{tabular}{r}0.057 \\ 0.118 \\ \textbf{0.208}\end{tabular} \\
\midrule
\begin{tabular}{l}IP(SD, EK, Skew), \\ DM(SD, EK, Skew) \end{tabular}& 
\begin{tabular}{l}MMD-perm \\ xMMD \\ xssMMD\end{tabular}  & 
\begin{tabular}{r}0.811 \\ 0.719 \\ \textbf{0.934}\end{tabular} & 
\begin{tabular}{r}0.786 \\ 0.713 \\ \textbf{0.932}\end{tabular} & 
\begin{tabular}{r}0.612 \\ 0.621 \\ \textbf{0.865}\end{tabular} & 
\begin{tabular}{r}0.307 \\ 0.432 \\ \textbf{0.681}\end{tabular} & 
\begin{tabular}{r}0.106 \\ 0.232 \\ \textbf{0.417}\end{tabular} & 
\begin{tabular}{r}0.022 \\ 0.089 \\ \textbf{0.165}\end{tabular} \\
\bottomrule
\end{tabular}
\end{table}

\begin{table}[tb!]
\centering
\footnotesize
\setlength{\tabcolsep}{3pt}
\caption{Estimated test power under different scenarios with different unlabeled data and increasing noise levels (independent Gaussian noise with standard deviation $\sigma \in \{0, 0.1, 0.3, 0.5, 0.7, 1.0\}$). Each value corresponds to the average test power across $1000$ trials.}
\label{table: pulsar experiment2}
\begin{tabular}{lllrrrrrr}
\toprule
\multicolumn{2}{c}{Data Features} & \multirow{2}{*}{Test} & \multirow{2}{*}{$\sigma = 0$} & \multirow{2}{*}{$\sigma = 0.1$} & \multirow{2}{*}{$\sigma = 0.3$} & \multirow{2}{*}{$\sigma = 0.5$} & \multirow{2}{*}{$\sigma = 0.7$} & \multirow{2}{*}{$\sigma = 1.0$} \\
\cmidrule(lr){1-2}
Labeled Data & Unlabeled Data & & & & & & & \\
\midrule
\multirow{6}{*}{\begin{tabular}[l]{@{}l@{}}IP(Mean, SD), \\ DM(Mean, SD)\end{tabular}} & \multirow{3}{*}{\begin{tabular}[l]{@{}l@{}}$V$: IP(EK, Skew) \\$W$: DM(EK, Skew)\end{tabular}} & MMD-perm & \textbf{1.000} & \textbf{1.000} & \textbf{0.985} & \textbf{0.757} & 0.290 & 0.072 \\
& & xMMD & 0.935 & 0.932 & 0.837 & 0.622 & 0.345 & 0.138 \\
& & xssMMD & 0.968 & 0.961 & 0.889 & 0.687 & \textbf{0.370} & \textbf{0.145} \\
\cmidrule{2-9}
& \multirow{3}{*}{\begin{tabular}[l]{@{}l@{}}$V$: IP(EK) \\ $W$: DM(EK)\end{tabular}} & MMD-perm & \textbf{1.000} & \textbf{1.000} & \textbf{0.985} & \textbf{0.757} & 0.290 & 0.072 \\
& & xMMD & 0.935 & 0.932 & 0.837 & 0.622 & 0.345 & 0.138 \\
& & xssMMD & 0.962 & 0.956 & 0.875 & 0.663 & \textbf{0.349} & \textbf{0.145} \\
\midrule
\multirow{6}{*}{\begin{tabular}[l]{@{}l@{}}IP(EK, Skew), \\DM(EK, Skew) \end{tabular}} & \multirow{3}{*}{\begin{tabular}[l]{@{}l@{}}$V$: IP(Mean, SD) \\$W$: DM(Mean, SD)\end{tabular}} & MMD-perm & 0.753 & 0.730 & 0.549 & 0.392 & 0.130 & 0.050 \\
& & xMMD & 0.729 & 0.728 & 0.640 & 0.431 & 0.259 & 0.098 \\
& & xssMMD & \textbf{0.878} & \textbf{0.872} & \textbf{0.776} & \textbf{0.569} & \textbf{0.323} & \textbf{0.135} \\
\cmidrule{2-9}
& \multirow{3}{*}{\begin{tabular}[l]{@{}l@{}}$V$: IP(Mean) \\$W$: DM(Mean)\end{tabular}} & MMD-perm & 0.753 & 0.730 & 0.549 & 0.392 & 0.130 & 0.050 \\
& & xMMD & 0.729 & 0.728 & 0.640 & 0.431 & 0.259 & 0.098 \\
& & xssMMD & \textbf{0.84} & \textbf{0.833} & \textbf{0.750} & \textbf{0.547} & \textbf{0.321} & \textbf{0.133} \\
\midrule
\multirow{6}{*}{\begin{tabular}[l]{@{}l@{}}IP(SD, EK, Skew), \\ DM(SD, EK, Skew)\end{tabular}} & \multirow{3}{*}{\begin{tabular}[l]{@{}l@{}}$V$: IP(Mean), \\ \hspace{0.5cm}DM(Mean) \\$W$: IP(Mean)\end{tabular}} & MMD-perm & 0.811 & 0.786 & 0.612 & 0.307 & 0.106 & 0.022 \\
& & xMMD & 0.719 & 0.713 & 0.621 & 0.432 & 0.232 & 0.089 \\
& & xssMMD & \textbf{0.851} & \textbf{0.836} & \textbf{0.760} & \textbf{0.544} & \textbf{0.306} & \textbf{0.122} \\
\cmidrule{2-9}
& \multirow{3}{*}{\begin{tabular}[l]{@{}l@{}}$V$: IP(Mean) \\ $W$: DM(Mean)\end{tabular}} & MMD-perm & 0.811 & 0.786 & 0.612 & 0.307 & 0.106 & 0.022 \\
& & xMMD & 0.719 & 0.713 & 0.621 & 0.432 & 0.232 & 0.089 \\
& & xssMMD & \textbf{0.833} & \textbf{0.819} & \textbf{0.728} & \textbf{0.516} & \textbf{0.274} & \textbf{0.100} \\
\bottomrule
\end{tabular}
\end{table}

For each trial, we randomly selected $n_1=100$ pulsar and $n_2=100$ non-pulsar samples as labeled data and the remaining dataset formed the unlabeled data. In detail, $X$ comprised some covariates of random pulsar samples, $V$ comprised other covariates of the remaining pulsar observations, and the same applied to $Y$ and $W$. Input features were standardized before testing.

To simulate meaningful and challenging scenarios, we chose feature subsets as the labeled data with varying information and added noise. We considered six experimental settings, each defined by combining IP and DM statistics: (1) the means of IP and DM; (2) the mean and standard deviation (SD) of the IP; (3) the mean, SD, EK, and Skew of IP; (4) the mean and SD of both IP and DM; (5) the EK and Skew of both IP and DM; and (6) the mean, SD, and EK from both IP and DM. We used the remaining covariates as the unlabeled data for each scenario. For each setting, we added independent Gaussian noise with a standard deviation from 0 to 1 to the labeled data, thereby gradually reducing their discriminative power. This process enabled an examination of how feature groups responded to increasing corruption and how effectively the auxiliary covariates could be leveraged when the primary covariates were severely degraded by noise. Finally, each method used 50 random splits with 20 repetitions per split, and we reported the average test power.

The result is summarized in \Cref{table: pulsar experiment}. Our method, xssMMD, achieved performance comparable to or significantly exceeding that of both the standard kernel test (MMD-perm) and xMMD across most of the feature settings and noise levels, demonstrating higher power, particularly under higher noise or with weakly informative features. By using unlabeled data effectively, xssMMD is more sensitive in detecting distributional differences. Incorporating auxiliary covariates is especially beneficial when the main features are weak, as shown in this setup. Using additional information clearly enhanced test sensitivity and robustness, as our results demonstrate.

We further investigated the robustness of xssMMD in a more challenging scenario. In this case, the unlabeled data for the pulsar and non-pulsar classes contained different covariates so they could not be used to test the two-sample problem alone. In other words, $V$ and $W$ came from non-comparable feature spaces. The result is summarized in \Cref{table: pulsar experiment2}. Even in this case, xssMMD consistently maintained a test power comparable to or higher than MMD-perm and xMMD across most of the levels of Gaussian noise. This result highlights the key advantage of xssMMD. It successfully integrates information from the auxiliary sets $V$ and $W$ for each class, even though $V$ and $W$ come from different feature spaces and cannot be directly compared. This integration improves the effectiveness of the primary two-sample test on labeled data.

\subsection{Details of the Real-World Experiment: Caltech-UCSD Bird dataset} \label{appendix: CUB dataset}
\begin{figure}
\centering
    \begin{subfigure}[t]{0.45\textwidth} 
        \centering
        \includegraphics[width=\linewidth]{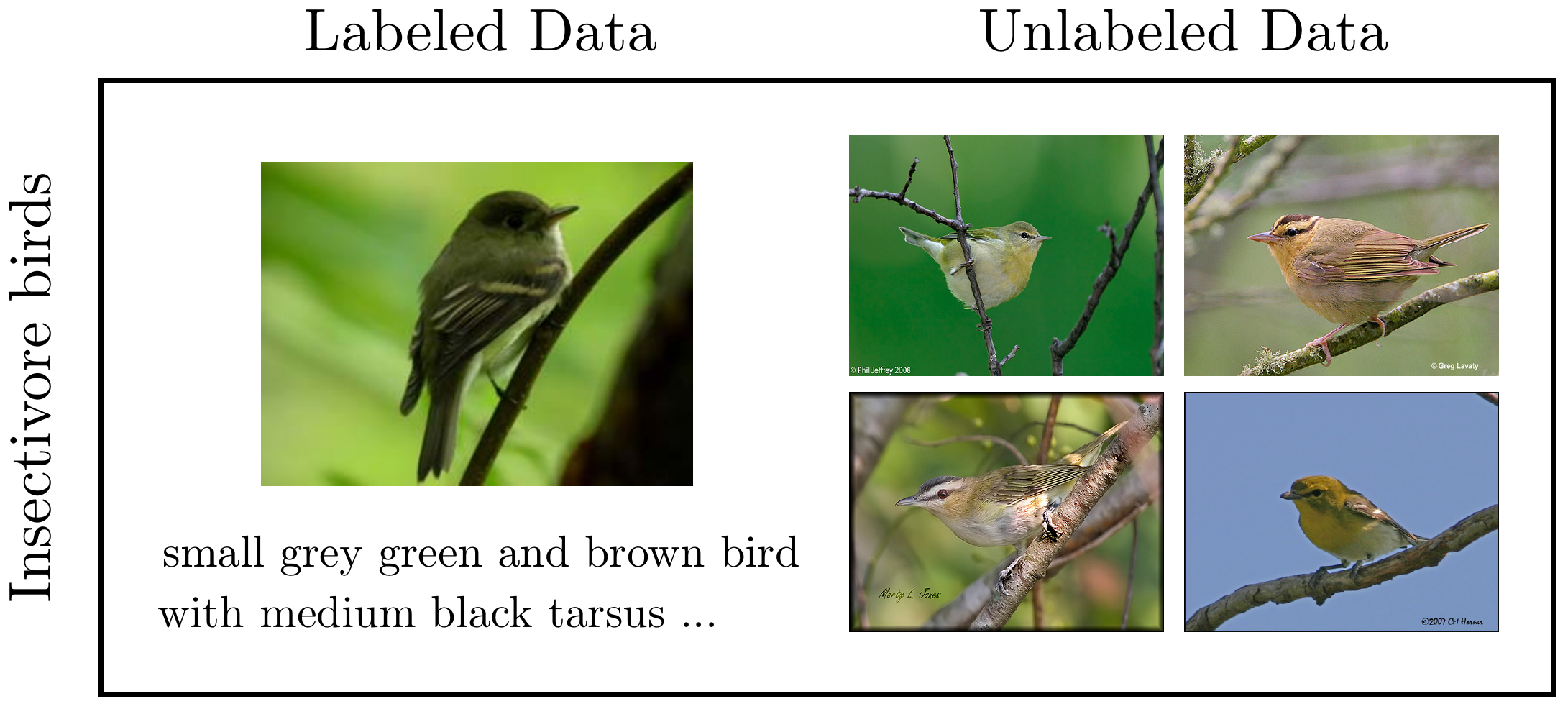}
    \end{subfigure}
    \hspace{0.02\textwidth}
    \begin{subfigure}[t]{0.45\textwidth}
        \centering
        \includegraphics[width=\linewidth]{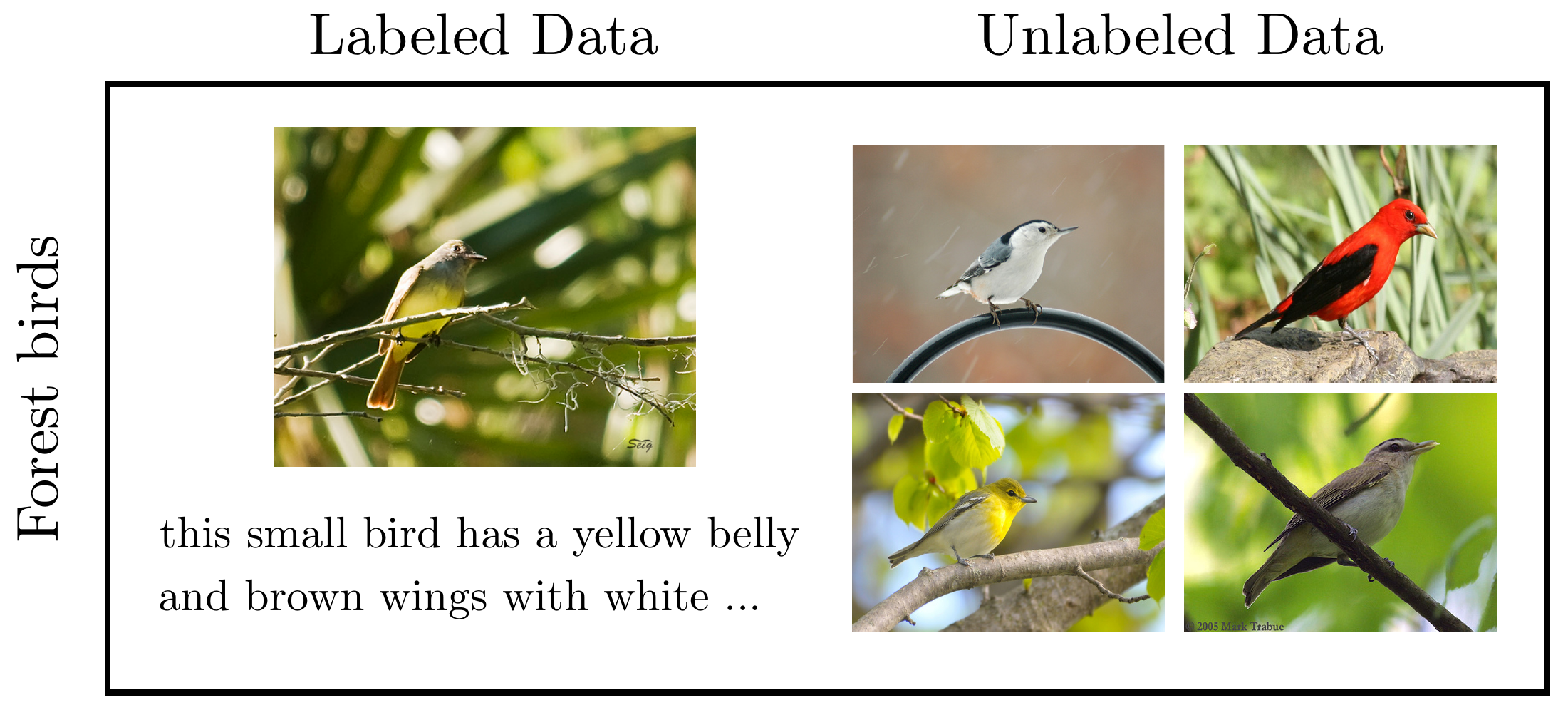}
    \end{subfigure}
    \caption{An example of data construction when testing coastal birds against grassland birds. Labeled data consists of both text and corresponding images, while unlabeled data consists only of images.}
    \label{fig:description of cub data}
\end{figure}
This subsection provides a detailed description of our experiments on real-world data using the Caltech-UCSD Bird dataset (CUB-200-2011). The dataset includes both images and text descriptions for each bird species. This setup allows us to model a two-sample testing problem using multi-modal data. There are 11,788 images and 2,000 sentences of 200 bird species. We extracted text features (primary covariates) by obtaining embeddings with a pre-trained BERT base model (`bert-base-uncased'). The CLS token embedding from BERT's last hidden state was then passed through a single-layer MLP (hidden dimension 128, output dimension 4, dropout 0.2, ReLU activation) for dimensionality reduction. For image features (auxiliary covariates), a pre-trained ResNet-152 was used, with standard preprocessing (resize to 64×64 pixels, normalization). These features were further reduced with a separate MLP (hidden dimensions 1024 and 256, output 32, dropout 0.2, ReLU). Batch normalization followed each linear layer in the text and image MLPs. Following \citet{biggs2024mmd}, these text and image embeddings were used within the two-sample testing framework.

We grouped bird species by diet (Insect, Seed, Fish) and habitat (Forest, Scrub, Wetland), forming three distinct comparison pairs. This grouping tested whether primary covariates (text) alone could distinguish species, while auxiliary covariates (image) provided complementary information. This classification was chosen specifically to create a scenario where textual descriptions, which often emphasize species-specific attributes, would be less sufficient for accurate group differentiation, whereas the image backgrounds would provide crucial contextual information about the habitat, making the auxiliary data more significant. The three pairs were Insect (12 species) vs. Forest (9), Seed (7) vs. Scrub (7), and Fish (8) vs. Wetland (7); in every comparison, 6 species overlapped, simulating realistic conditions with significant species overlap. To further investigate this phenomenon, we conducted additional experiments focusing on specific species comparisons: we tested 15 species of sparrow against 18 species of ground picker, compared 10 species of cuckoo with 13 species of foliage gleaner, and also examined 14 species of warbler against 12 species of canopy explorer. This data setup is illustrated in \Cref{fig:description of cub data}. For each trial, we randomly selected $n_1=150$ Group 1 and $n_2=150$ Group 2 labeled samples, then chose 200 samples as unlabeled data. For example, $X$ contained random text samples from insectivorous species, $V$ contained images of corresponding species, including those not in $X$; the same approach applied to $Y$ and $W$. We applied the standard kernel MMD (MMD-perm) and xMMD tests using text embeddings only, while our proposed test, xssMMD, incorporated both text and image embeddings. Each method used 50 random splits with 20 repetitions per split, and we reported average test power.

The estimated power for detecting distributional differences between bird groups is shown in \Cref{table: cub dataset} and \Cref{table: cub dataset2}. Across all testing scenarios, the permutation-based method MMD-perm generally achieved higher power than the permutation-free method xMMD. Importantly, our proposed method, xssMMD, consistently showed the highest test power in all comparisons. This underscores the advantage of our method, which leverages image embeddings to boost test power. The consistently strong performance of xssMMD across all pairs highlights the benefit of using complementary information from different modalities.

\begin{table}[t]
\footnotesize
\setlength{\tabcolsep}{3pt}
	\caption{Additional results of the estimated test power for detecting the difference between two bird groups with test level $\alpha= 0.05$. Our proposed method, xssMMD, was implemented using knn for the conditional expectation estimation.}
	\label{table: cub dataset2}
	\begin{center}
    \small
	\begin{tabular}{lllr}
	\toprule
	Group 1         & Group 2     & Test        & Power \\
	\midrule
	                &             & MMD-perm    & 0.783 \\
	Sparrow          & Ground picker      & xMMD        & 0.712 \\
                    &             & xssMMD      & \textbf{0.978} \\
	\hline
	                &               & MMD-perm    & 0.766 \\
	Cuckoo			& Foliage gleaner       & xMMD        & 0.697 \\
					&             & xssMMD      & \textbf{0.882} \\
	\hline
    	           &             & MMD-perm    & 0.883 \\
	Warbler		    & Canopy explorer      & xMMD        & 0.650 \\
					&             & xssMMD      & \textbf{0.991} \\
	\bottomrule
	\end{tabular}
	\end{center}
    \vspace{-0.4cm}
\end{table}

\subsection{Details of the Real-World Experiment: MNIST dataset} \label{appendix: MNIST dataset}

This subsection provides a detailed description of our experiments on real-world data using the MNIST digit dataset, a widely used benchmark for visual recognition. The MNIST dataset consists of $28\times 28$ grayscale images of handwritten digits ($0$ through $9$). In this experiment, we constructed a two-sample testing problem by partitioning the dataset into two distinct classes ($\mathcal{D}_1$ vs. $\mathcal{D}_2$) to detect distributional differences between the classes. The experimental design was motivated by the work of \cite{schrab2023mmd} and \cite{chatterjee2025boosting}, aiming to simulate a challenging scenario where the primary information is easily corrupted but supplemented by abundant auxiliary covariates.

For the main test, the labeled data $X$ and $Y$ consisted of pixel data from clear, original images. The unlabeled auxiliary covariates $V$ and $W$ consisted of the full pixel data of the images, into which we systematically injected independent Gaussian noise $\epsilon$ with increasing standard deviation $\sigma$. This process was motivated by examining the method's robustness against data corruption. Specifically, for each observation and each pixel entry $i,j$, the noise $\epsilon_{i,j}$ was sampled independently from a univariate normal distribution with zero mean and variance $\sigma^2$. The final noisy covariates were generated by adding this noise $\epsilon$ to the normalized original images, $V$ and $ W$. Subsequently, the resulting pixel values were constrained to remain within the valid range of $[0, 1]$. The operation for each entry of the noisy matrix is precisely defined as: 

$$ (V_{\text{noisy}})_{i,j} = \begin{cases} 0, & \text{if } V_{i,j} + \epsilon_{i,j} < 0 \\ 1, & \text{if } V_{i,j} + \epsilon_{i,j} > 1 \\ V_{i,j} + \epsilon_{i,j}, & \text{otherwise} \end{cases}$$ 

This matrix-wise operation ensures that the added noise $\epsilon$ is independent across all pixels and observations, and the resulting pixel values remain within the valid range. This setup allows us to examine how our method utilizes auxiliary information under increasing levels of corruption. 

We considered three experimental settings for partitioning the data into two classes: (1) $\mathcal{D}_1 = \{0,1,2,3,4,5,9\}$ vs $\mathcal{D}_2 = \{0,1,2,3,4,5,8\}$; (2) $\mathcal{D}_1 = \{0,1,2,3,9\}$ vs $\mathcal{D}_2 = \{0,1,2,3,6\}$; and (3) $\mathcal{D}_1 = \{0,1,2,3,5,8\}$ vs $\mathcal{D}_2 = \{0,1,2,3,5,9\}.$ For each setting, the dataset was divided by randomly sampling $n_1 = 200$ images from $\mathcal{D}_1$ and $n_2 = 200$ images from $\mathcal{D}_2$ to create the labeled data for each group. From the remaining images of each class in $\mathcal{D}_1$ and $\mathcal{D}_2$, $2000$ samples per class were randomly selected and used as the unlabeled auxiliary covariates. Each method was then evaluated on $100$ random partitions of the data, with $10$ repetitions per partition, and average test power was reported. 

The results are summarized in \Cref{table: mnist experiment}. Across all feature settings and noise levels, the xssMMD test consistently outperformed the standard kernel test (MMD-perm) and xMMD test. When the auxiliary data was clear ($\sigma=0$), xssMMD showed superior power because the additional clear images effectively increased the sample size of the informative pixel features, confirming that leveraging the full, uncorrupted image data significantly enhances test power. More importantly, even when substantial Gaussian noise was present, xssMMD maintained a significant advantage over MMD-perm and xMMD. Notably, in the highest noise scenario ($\sigma=1$), severely degrading the visual quality of the auxiliary covariates, xssMMD consistently demonstrated superior performance compared to xMMD. These results confirm the crucial utility of the auxiliary covariates and demonstrate the robustness of xssMMD in utilizing noisy information; our method successfully extracts the underlying structural differences between the two digit distributions from the corrupted auxiliary covariates, confirming that the utility of incorporating additional information persists, even when the discriminative signal in the primary labeled features is weak and the auxiliary data is corrupted.

We further investigated the scenario where the amount of unlabeled data was significantly reduced to $200$ samples per class, matching the number of labeled samples ($n_1=n_2=200$). With the same partitioning of (1), (2), and (3), we compared the performance of xssMMD against the baseline tests performed solely on the labeled data (MMD-perm($X$,$Y$), xMMD) and the unlabeled data (MMD-perm($V$,$W$)). The results are summarized in \Cref{table: mnist experiment2}. When the noise level was minimal ($\sigma=0$), testing on the unlabeled data yielded high performance since this setting is the same as testing on the labeled data. However, the performance of MMD-perm on the unlabeled data drastically decreased as the noise level $\sigma$ increased. On the other hand, xssMMD demonstrated superior power compared to the MMD-perm and xMMD tests in most scenarios when the noise is not extremely large. As the noise increases, the power of xssMMD test decreases gradually, while the power of the MMD-perm test on the unlabeled data decreases drastically, falling below that of xssMMD in high-noise scenarios. These results indicate that xssMMD effectively leverages the structural information of the limited noisy auxiliary covariates to enhance test sensitivity, even when the auxiliary covariates themselves are not sufficiently distinct for successful permutation testing.

\begin{figure}[htb!]	\centering
	\includegraphics[width=\columnwidth]{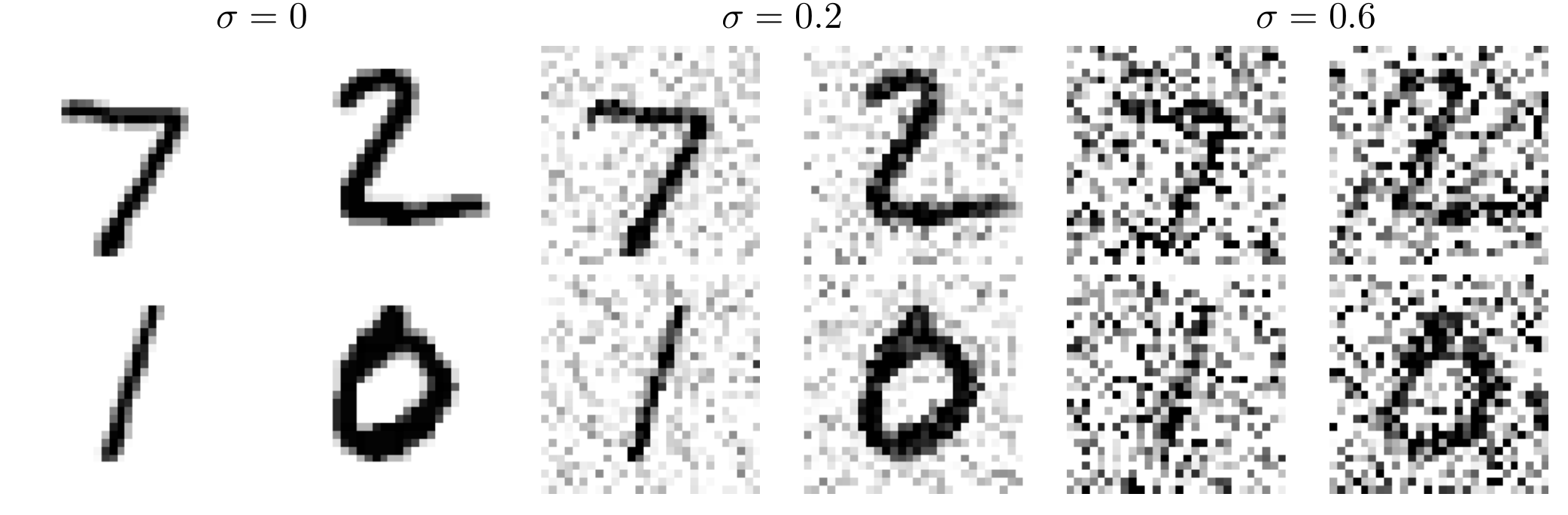}
	\caption{An example of data construction of images with Gaussian noise of $\sigma=0,\; 0.2,\;0.6$. Labeled data consists of both clear image without any noise and corresponding image with noise, while unlabeled data consists only of images with Gaussian noise.}
    \label{fig:description of mnist data} 
    \label{fig:mnist_fig}
    \vspace{-0.2cm}
\end{figure}

\begin{table}[htb!]
\centering
\footnotesize
\setlength{\tabcolsep}{3pt}
\caption{Estimated test power under different scenarios and increasing noise levels (independent Gaussian noise with standard deviation $\sigma \in \{0, 0.2, 0.4, 0.6, 0.8, 1.0\}$). Each value corresponds to the average test power across 1000 trials. Our proposed method, xssMMD, was implemented using knn for the conditional expectation estimation.}
\label{table: mnist experiment}
\begin{tabular}{lrrrrrrrr}
\toprule
\multirow{2}{*}{Scenario} & \multirow{2}{*}{MMD-perm} & \multirow{2}{*}{xMMD} & \multicolumn{6}{c}{xssMMD} \\
\cmidrule(lr){4-9}
& & & $\sigma = 0$ & $\sigma = 0.2$ & $\sigma = 0.4$ & $\sigma = 0.6$ & $\sigma = 0.8$ & $\sigma = 1.0$ \\
\midrule
\shortstack{\parbox{2.5cm}{
    $\{0,1,2,3,5,8\}$ \\ 
    \text{vs }$\{0,1,2,3,5,9\}$}} &
\begin{tabular}{r}0.731 \end{tabular} & 
\begin{tabular}{r}0.601 \end{tabular} & 
\begin{tabular}{r}\textbf{0.85} \end{tabular} & 
\begin{tabular}{r}\textbf{0.852} \end{tabular} & 
\begin{tabular}{r}\textbf{0.862} \end{tabular} & 
\begin{tabular}{r}\textbf{0.837} \end{tabular} & 
\begin{tabular}{r}\textbf{0.776} \end{tabular} & 
\begin{tabular}{r}0.721 \end{tabular} \\
\midrule
\shortstack{\parbox{2.5cm}{
    $\{0,1,2,3,9\}$ \\ 
    \text{vs }$\{0,1,2,3,6\}$}} &
\begin{tabular}{r}0.985 \end{tabular} & 
\begin{tabular}{r}0.915 \end{tabular} & 
\begin{tabular}{r}\textbf{0.999} \end{tabular} & 
\begin{tabular}{r}\textbf{0.999}  \end{tabular} & 
\begin{tabular}{r}\textbf{0.999} \end{tabular} & 
\begin{tabular}{r}\textbf{0.999}  \end{tabular} & 
\begin{tabular}{r}\textbf{0.998} \end{tabular} & 
\begin{tabular}{r}\textbf{0.997}  \end{tabular}\\
\midrule
\shortstack{\parbox{2.5cm}{
    $\{0,1,2,3,4,5,9\}$ \\ 
    \text{vs }$\{0,1,2,3,5,8\}$}} &
\begin{tabular}{r}0.55 \end{tabular} & 
\begin{tabular}{r}0.402 \end{tabular} & 
\begin{tabular}{r}\textbf{0.666} \end{tabular} & 
\begin{tabular}{r}\textbf{0.679} \end{tabular} & 
\begin{tabular}{r}\textbf{0.694} \end{tabular} & 
\begin{tabular}{r}\textbf{0.659} \end{tabular} & 
\begin{tabular}{r}\textbf{0.577} \end{tabular} & 
\begin{tabular}{r}0.496 \end{tabular}\\
\bottomrule
\end{tabular}
\end{table}

\begin{table}[tb!]
\centering
\footnotesize
\setlength{\tabcolsep}{3pt}
\caption{Estimated test power under different scenarios including kernel test on the unlabeled data only and increasing noise levels (independent Gaussian noise with standard deviation $\sigma \in \{0, 0.5, 1.0, 1.5, 2.0\}$). Each value corresponds to the average test power across $1000$ trials. Our proposed method, xssMMD, was implemented using knn for the conditional expectation estimation.}
\label{table: mnist experiment2}
\begin{tabular}{llrrrrr}
\toprule
Labeled Data & Test & $\sigma = 0$ & $\sigma = 0.5$ & $\sigma = 1.0$ & $\sigma = 1.5$ & $\sigma = 2.0$  \\
\midrule
\shortstack{\parbox{2.5cm}{
    $\{0,1,2,3,5,8\}$ \\ 
    \text{vs }$\{0,1,2,3,5,9\}$}} &
\begin{tabular}{l}MMD-perm($X$,$Y$) \\ MMD-perm($V$,$W$)\\ xMMD \\ xssMMD\end{tabular} & 
\begin{tabular}{r}0.724 \\ 0.721 \\ 0.581 \\ \textbf{0.769}\end{tabular} & 
\begin{tabular}{r} 0.724 \\ 0.662 \\ 0.581 \\ \textbf{0.779}\end{tabular} & 
\begin{tabular}{r}\textbf{0.724} \\ 0.330 \\ 0.581 \\ 0.651\end{tabular} & 
\begin{tabular}{r}\textbf{0.724} \\ 0.179 \\ 0.581 \\ 0.572\end{tabular} & 
\begin{tabular}{r}\textbf{0.724} \\ 0.118 \\ 0.581 \\ 0.545\end{tabular} \\
\midrule
\shortstack{\parbox{2.5cm}{
    $\{0,1,2,3,9\}$ \\ 
    \text{vs }$\{0,1,2,3,6\}$}} &
\begin{tabular}{l}MMD-perm($X$,$Y$) \\ MMD-perm($V$,$W$)\\ xMMD \\ xssMMD\end{tabular} & 
\begin{tabular}{r}0.988 \\ 0.982 \\ 0.912 \\ \textbf{0.999}\end{tabular} & 
\begin{tabular}{r}0.988 \\ 0.963 \\ 0.912 \\ \textbf{0.999}\end{tabular} & 
\begin{tabular}{r}0.988 \\ 0.703 \\ 0.912 \\ \textbf{0.993}\end{tabular} & 
\begin{tabular}{r}\textbf{0.988} \\ 0.352 \\ 0.912 \\ 0.958\end{tabular} & 
\begin{tabular}{r}\textbf{0.988} \\ 0.214 \\ 0.912 \\ 0.937\end{tabular} \\
\midrule
\shortstack{\parbox{2.5cm}{
    $\{0,1,2,3,4,5,9\}$ \\ 
    \text{vs }$\{0,1,2,3,5,8\}$}} &
\begin{tabular}{l}MMD-perm($X$,$Y$) \\ MMD-perm($V$,$W$)\\ xMMD \\ xssMMD\end{tabular} & 
\begin{tabular}{r}0.550 \\ \textbf{0.836} \\ 0.419 \\ 0.580\end{tabular} & 
\begin{tabular}{r}0.550 \\ \textbf{0.698} \\ 0.419 \\ 0.611\end{tabular} &
\begin{tabular}{r}\textbf{0.550} \\ 0.277 \\ 0.419 \\ 0.450\end{tabular} & 
\begin{tabular}{r}\textbf{0.550} \\ 0.114 \\ 0.419 \\ 0.390\end{tabular} & 
\begin{tabular}{r}\textbf{0.550} \\ 0.080 \\ 0.419 \\ 0.369\end{tabular} \\
\bottomrule
\end{tabular}
\end{table}

\section{Proof of Main Results}\label{appendix: proof of main results}
\subsection{Proof of \Cref{Theorem: General Power Expression}}\label{section: proof of main theorem - general power expression}

Consider the centered oracle statistic with the population variance
\begin{align}\label{eq: oracle statistic with the population variance}
	\overline{T}_{\mathrm{oracle}} = \frac{\widehat{\mu}_{X,f} - \widehat{\mu}_{Y,f} - \mE[f(X)] + \mE[f(Y)]}{\sqrt{{\sigma}^2_{X,f} + {\sigma}^2_{Y,f} }}.
\end{align}
To establish the desired result, it suffices to prove that $\overline{T}_{\mathrm{oracle}}$ is asymptotically $N(0,1)$ as $n = n_1 \wedge n_2 \to \infty$, and that the empirical variance estimates are ratio-consistent. We will prove these statements in order. Throughout this proof, we write $\mE[f(X)]$ as $\mE[f(X) \given f]$ (and similarly for other quantities) to highlight that we condition on the randomness inherent in $f$.

\noindent \textbf{Step 1: Asymptotic Normality of $\overline{T}_{\mathrm{oracle}}$.}
Note that the numerator of $\overline{T}_{\mathrm{oracle}}$ is $\widehat{\mu}_{X,f} -\widehat{\mu}_{Y,f} - \mE[f(X)] + \mE[f(Y)] = \sum_{i=1}^{n_1+m_1} G_i - \sum_{i=1}^{n_2+m_2} H_i$ where
\begin{align*}
	G_i = \begin{cases}
		\frac{1}{n_1} \big\{f(X_i) - \mE[f(X_i) \given f] \big\}  - \frac{m_1}{n_1(n_1+m_1)} \big\{\mE[f(X_i) \given V_i, f] - \mE[f(X_i) \given f ] \big\} &  \text{if $1 \leq i \leq n_1$,} \\[.5em]
		\frac{1}{n_1+m_1} \big\{ \mE[f(X_i) \given V_i, f]  - \mE[f(X_i) \given f ] \big\} \quad & \text{if $n_1+1 \leq i \leq n_1+m_1$,}
	\end{cases}
\end{align*}
which are centered and conditionally independent given $f$. Similarly, we have 
\begin{align*}
	H_i = \begin{cases}
		\frac{1}{n_2} \big\{f(Y_i) - \mE[f(Y_i) \given f] \big\}  - \frac{m_2}{n_2(n_2+m_2)} \big\{\mE[f(Y_i) \given W_i, f] - \mE[f(Y_i) \given f] \big\} \quad & \text{if $1 \leq i \leq n_2$,} \\[.5em]
		\frac{1}{n_2+m_2} \big\{ \mE[f(Y_i) \given W_i, f]  - \mE[f(Y_i) \given f ] \big\} \quad  & \text{if $n_2+1 \leq i \leq n_2+m_2$}.
	\end{cases}
\end{align*}
Since the distributions of $G_i$ and $H_i$ may vary with the sample sizes depending on the choice of $f$, we use Lyapunov central limit theorem (CLT) to prove the desired statement. To apply Lyapunov CLT, we further define the variance as  
\begin{align*}
	s_{n+m}^2 & := \mV(\widehat{\mu}_{X,f} -\widehat{\mu}_{Y,f} \given f ) \\ 
	& = \sum_{i=1}^{n_1+m_1} \mV(G_i  \given f) + \sum_{i=1}^{n_2+m_2} \mV(H_i \given f) \\
	& = \sigma^2_{X,f}+\sigma^2_{Y,f}  \\
	& = \frac{1}{n_1} \sigma_{1,X,f}^2 + \frac{1}{n_1+m_1} \sigma_{2,X,f}^2 +  \frac{1}{n_2} \sigma_{1,Y,f}^2 + \frac{1}{n_2+m_2} \sigma_{2,Y,f}^2.
\end{align*}
For some $\delta > 0$, (conditional)~Lyapunov's condition~\citep[][Lemma S8]{lundborg2024projected} stated as
\begin{align}\label{condition: Lyapunov}
	\sum_{i=1}^{n_1+m_1} \mE\biggl[ \bigg|\frac{G_i}{s_{n+m}} \bigg|^{2+\delta} \,\bigg| \, f \biggr] + \sum_{i=1}^{n_2+m_2} \mE\biggl[ \bigg|\frac{H_i}{s_{n+m}} \bigg|^{2+\delta} \,\bigg| \, f \biggr] = o_P(1)
\end{align}
ensures the asymptotic normality of $\overline{T}_{\mathrm{oracle}}$ unconditional on $f$. 

Now, we show that the above condition (\ref{condition: Lyapunov}) is satisfied. Letting $r_1 = \frac{m_1}{n_1+m_1}$ and $r_2 = \frac{m_2}{n_2 + m_2}$, the proportions of the unlabeled data in the total dataset for $X$ and $Y$, respectively, we have an upper bound for the first term with $G_i$ in (\ref{condition: Lyapunov}) as
\begin{align*}
	\sum_{i=1}^{n_1+m_1} \mE\biggl[ \bigg|\frac{G_i}{s_{n+m}} \bigg|^{2+\delta} \,\bigg| \, f \biggr] ~ \lesssim ~ & \frac{1}{n_1^{1+\delta}} \mE\biggl[ \frac{|f(X) - \mE[f(X) \given f ] |^{2+\delta}}{s_{n+m}^{2+\delta}} \,\bigg|\, f \biggr] \\
	+ ~& \frac{r_1^{2+\delta}}{n_1^{1+\delta}} \mE\biggl[ \frac{| \mE[f(X) \given V, f] - \mE[f(X) \given f]|^{2+\delta}}{s_{n+m}^{2+\delta}} \,\bigg|\, f \biggr] \\
	+ ~& \frac{r_1}{(n_1 + m_1)^{1+\delta}} \mE\biggl[ \frac{| \mE[f(X) \given V, f] - \mE[f(X) \given f]|^{2+\delta}}{s_{n+m}^{2+\delta}} \,\bigg|\, f \biggr]. 
\end{align*}
Similarly, for the second term of (\ref{condition: Lyapunov}) with $H_i,$ we have an upper bound as
\begin{align*}
	\sum_{i=1}^{n_2+m_2} \mE\biggl[ \bigg|\frac{H_i}{s_{n+m}} \bigg|^{2+\delta} \,\bigg| \, f \biggr]  ~ \lesssim ~ & \frac{1}{n_2^{1+\delta}} \mE\biggl[ \frac{|f(Y) - \mE[f(Y) \given  f ] |^{2+\delta}}{s_{n+m}^{2+\delta}} \,\bigg|\, f \biggr] \\
	+ ~& \frac{r_2^{2+\delta}}{n_2^{1+\delta}} \mE\biggl[ \frac{| \mE[f(Y) \given W, f] - \mE[f(Y) \given f]|^{2+\delta}}{s_{n+m}^{2+\delta}} \,\bigg|\, f \biggr] \\
	+ ~& \frac{r_2}{(n_2 + m_2)^{1+\delta}} \mE\biggl[ \frac{| \mE[f(Y) \given W, f] - \mE[f(Y) \given f]|^{2+\delta}}{s_{n+m}^{2+\delta}} \,\bigg|\, f \biggr]. 
\end{align*}
By conditional Jensen's inequality, we have an upper bound
\begin{align*}
	\mE \bigl[\bigl|\mE[f(X) \given V, f] - \mE[f(X) \given f]\big|^{2+\delta} \bigr] ~\leq~ \mE \bigl[\bigl|f(X) - \mE[f(X) \given f]\big|^{2+\delta} \bigr], 
\end{align*}
and similarly
\begin{align*}
	\mE \bigl[\bigl|\mE[f(Y) \given W, f] - \mE[f(Y) \given f]\big|^{2+\delta} \bigr] \leq~ \mE \bigl[\bigl|f(Y) - \mE[f(Y) \given f]\big|^{2+\delta} \bigr]. 
\end{align*}
Next, we observe that the term $s^2_{n+m}$, representing the combined sample variance, satisfies the lower bound: 
\begin{align*}
	s^2_{n+m} & \, \geq \,  \sigma^2_{X,f} = \frac{1}{n_1}\sigma^2_{1,X,f}+\frac{1}{n_1+m_1}\sigma^2_{2,X,f} \\
	&\, \geq \,  \biggl(\frac{1}{n_1}+\frac{1}{n_1+m_1}\biggr)\times \bigl(\sigma^2_{1,X,f} \wedge \sigma^2_{2,X,f}\bigr).
\end{align*} 
Similarly, the combined variance also satisfies: 
\begin{align*}
	s^2_{n+m} \geq \biggl(\frac{1}{n_2}+\frac{1}{n_2+m_2}\biggr)\times \bigl(\sigma^2_{1,Y,f} \wedge \sigma^2_{2,Y,f}\bigr).
\end{align*}
These bounds, combined with the earlier results, ensure that the terms in (\ref{condition: Lyapunov}) decay appropriately as 
\begin{align*}
	\sum_{i=1}^{n_1+m_1} \mE\biggl[ & \bigg|\frac{G_i}{s_{n+m}} \bigg|^{2+\delta}  \,\bigg| \, f \biggr]   ~\lesssim~  \frac{1}{n_1^{\delta/2}}  \frac{\mE \bigl[\bigl\{f(X) - \mE[f(X) \given f]\big\}^{2+\delta} \bigr]}{\sigma_{1,X,f}^{2+\delta} \wedge \sigma_{2,X,f}^{2+\delta}} .
\end{align*}

A similar upper bound can be obtained for 
\begin{align*}
	\sum_{i=1}^{n_2+m_2} \mE\biggl[ & \bigg|\frac{H_i}{s_{n+m}} \bigg|^{2+\delta}  \,\bigg| \, f \biggr]   ~\lesssim~  \frac{1}{n_2^{\delta/2}}  \frac{\mE \bigl[\bigl\{f(Y) - \mE[f(Y) \given f]\big\}^{2+\delta} \bigr]}{\sigma_{1,Y,f}^{2+\delta} \wedge \sigma_{2,Y,f}^{2+\delta}}.
\end{align*}
Consequently, under the moment condition (\ref{Assumption: moment condition}), Lyapunov's condition is fulfilled, which implies the asymptotic normality of $\overline{T}_{\mathrm{oracle}}$.

\noindent \textbf{Step 2: Asymptotic Normality with Sample Variance}
In this step, we aim to show that the ratio of the sample variance to the population variance converges to $1$ in probability. This result in conjunction with Slutsky's theorem and continuous mapping theorem in turn confirms that $\overline{T}_{\mathrm{oracle}}$ and $T_{\mathrm{oracle}}$ share the same asymptotic distribution.

We first formally define the estimated variance of $\widehat{\sigma}^2_{X,f} = n_1^{-1}\widehat{\sigma}_{1,X,f}^2 + (n_1+m_1)^{-1} \widehat{\sigma}_{2,X,f}^2$ where each term is defined as 
\begin{equation}
    \begin{aligned}\label{eq: semi-supervised variance estimate}
	& \widehat{\sigma}_{1,X,f}^2 = \frac{1}{n_1} \sum_{i=1}^{n_1} \big\{  f(X_i) - \mE[f(X_i) \given V_i] \big\}^2 \quad \text{and} \\
	&  \widehat{\sigma}_{2,X,f}^2 = \frac{1}{n_1+m_1} \sum_{i=1}^{n_1+m_1} \biggl\{ \mE[f(X_i) \given V_i] - \frac{1}{n_1+m_1} \sum_{j=1}^{n_1+m_1} \mE[f(X_j) \given V_j]  \biggr\}^2.
    \end{aligned} 
\end{equation}

We similarly define the variance of $\widehat{\mu}_{Y,f}$ as $\widehat{\sigma}^2_{Y,f} = n_2^{-1} \widehat{\sigma}_{1,Y,f}^2 + (n_2+m_2)^{-1} \widehat{\sigma}_{2,Y,f}^2.$

Based on these definitions, we prove the following convergence:
\begin{align*}
	\frac{\widehat{\sigma}_{X,f}^2 + \widehat{\sigma}_{Y,f}^2}{\sigma_{X,f}^2 + \sigma_{Y,f}^2} - 1 = o_P(1) \quad \Longleftrightarrow \quad \frac{\widehat{\sigma}_{X,f}^2  - \sigma_{X,f}^2 + \widehat{\sigma}_{Y,f}^2 - \sigma_{Y,f}^2}{\sigma_{X,f}^2 + \sigma_{Y,f}^2}  = o_P(1).
\end{align*}
By the triangle inequality, it suffices to show that 
\begin{align}\label{eq: convergence to prove step2 in thm 1}
	\frac{\widehat{\sigma}_{X,f}^2  - \sigma_{X,f}^2 }{\sigma_{X,f}^2} = o_P(1) \quad \text{and} \quad \frac{\widehat{\sigma}_{Y,f}^2  - \sigma_{Y,f}^2 }{\sigma_{Y,f}^2} = o_P(1).
\end{align}
Without loss of generality, we focus on the first convergence result. Using the lower bound for $\sigma^2_{X,f}$:
\begin{align}\label{lower bound for the population variance}
	\sigma^2_{X,f} = \frac{1}{n_1} \sigma^2_{1,X,f} + \frac{1}{n_1+m_1} \sigma^2_{2,X,f} \geq  \biggl(\frac{1}{n_1}+\frac{1}{n_1+m_1}\biggr)\times \bigl(\sigma^2_{1,X,f} \wedge \sigma^2_{2,X,f}\bigr),
\end{align}
as well as the definition of $\widehat{\sigma}_{X,f}^2 = n_1^{-1} \widehat{\sigma}_{1,X,f}^2 + (n_1+m_1)^{-1} \widehat{\sigma}_{2,X,f}^2$, we have
\begin{align*}
	\frac{\big|\widehat{\sigma}_{X,f}^2  - \sigma_{X,f}^2\big|}{\sigma_{X,f}^2}  \leq \underbrace{\frac{\big|\widehat{\sigma}_{1,X,f}^2 - \sigma_{1,X,f}^2\big|}{\sigma^2_{1,X,f} \wedge \sigma^2_{2,X,f}}}_{(\mathrm{I})} + \underbrace{\frac{\big|\widehat{\sigma}_{2,X,f}^2 - \sigma_{2,X,f}^2\big|}{\sigma^2_{1,X,f} \wedge \sigma^2_{2,X,f}}}_{(\mathrm{II})}.
\end{align*}
A conditional version of the weak law of large numbers~\citep[][Lemma S9]{lundborg2024projected} under \Cref{Assumption: moment condition} guarantees that the first term $(\mathrm{I})$ is $o_P(1)$. For the second term $(\mathrm{II})$, we rewrite $\widehat{\sigma}_{2,X,f}^2$ as
\begin{align*}
	 \underbrace{\frac{1}{n_1+m_1} \sum_{i=1}^{n_1+m_1}\big\{   \mE[f(X_i) \given V_i, f] - \mE[f(X_i) \given f] \big\}^2}_{(\mathrm{II})_1} - \biggl( \underbrace{\frac{1}{n_1+m_1} \sum_{i=1}^{n_1+m_1} \big\{\mE[f(X_i) \given V_i, f] -  \mE[f(X_i) \given f] \big\}}_{(\mathrm{II})_2^2}  \biggr)^2.
\end{align*}
By the weak law of large numbers, again, under \Cref{Assumption: moment condition}, it can be seen that 
\begin{align*}
	\frac{|(\mathrm{II})_1 - \sigma_{2,X,f}^2|}{\sigma^2_{1,X,f} \wedge \sigma^2_{2,X,f}} = o_P(1).
\end{align*}  
For $(\mathrm{II})_2$, we first note by Jensen's inequality that
\begin{align*}
	\mE \biggl[ \bigg|\frac{\big\{\mE[f(X_i) \given V_i, f] -  \mE[f(X_i) \given f] \big\}}{\sigma_{1,X,f} \wedge \sigma_{2,X,f}}\bigg|^{1+\delta} \,\bigg|\, f \biggr] &~\leq \sqrt{\mE \biggl[ \bigg|\frac{\big\{\mE[f(X_i) \given V_i, f] -  \mE[f(X_i) \given f] \big\}}{\sigma_{1,X,f} \wedge \sigma_{2,X,f}}\bigg|^{2+2\delta} \,\bigg|\, f \biggr]} \\
	&~=  o_P(n^{\delta}),
\end{align*}
where the last approximation holds under \Cref{Assumption: moment condition}. Hence by \citet[][Lemma S9]{lundborg2024projected}, it holds that $(\mathrm{II})_2^2 = o_P(1)$, which in turn implies $(\mathrm{II}) = o_P(1)$ as required. Combining the results for $(\mathrm{I})$ and $(\mathrm{II})$, we conclude 
\begin{align*}
	\frac{\widehat{\sigma}_{X,f}^2  - \sigma_{X,f}^2 }{\sigma_{X,f}^2} = o_P(1).
\end{align*}
A similar argument applies to $\widehat{\sigma}_{Y,f}^2$, which verifies the ratio consistency of the sample variance.

\subsection{Proof of \Cref{Corollary: power expression for cross-fit test}}
As mentioned in the main text, we assume that the sample sizes $n_1,n_2,m_1,m_2$ are even for simplicity. The other cases can be proven similarly by minor modifications. Similarly to \Cref{Theorem: General Power Expression}, we consider the centered cross-fitted statistic with the empirical variance 
\begin{align*}
	\overline{T}_{\mathrm{cross}} = \frac{\widehat{\mu}^{\dagger}_{X,f} - \widehat{\mu}^{\dagger}_{Y,f} - \mE[f(X) \given f] + \mE[f(Y) \given f]}{\sqrt{{\widehat{\sigma}}^{\dagger 2}_{X,f} + {\widehat{\sigma}}^{\dagger 2}_{Y,f} }}.
\end{align*} 
Above, we define $\widehat{\mu}^{\dagger}_{X,f}$ and $\widehat{\mu}^{\dagger}_{Y,f}$ as the counterparts of $\widehat{\mu}_{X,f}$ and $\widehat{\mu}_{Y,f}$, replacing $\mE[f(X_i) \given V_i, f]$ and $\mE[f(Y_i) \given W_i, f]$ with their estimates using $\widehat{\mE}[f(X_i) \given V_i, f]$ and $\widehat{\mE}[f(Y_i) \given W_i, f]$, respectively. Similarly, we define $\widehat{\sigma}_{1,X,f}^{\dagger 2}$ and $\widehat{\sigma}_{2,X,f}^{\dagger 2}$ as the counterparts of $\widehat{\sigma}_{1,X,f}^2$ and $\widehat{\sigma}_{2,X,f}^2$ with the estimated conditional expectations. 

Since we have already established that $\overline{T}_{\mathrm{oracle}}$ is asymptotically $N(0,1)$ as $n = n_1 \wedge n_2 \to \infty,$ it suffices to prove 
\begin{align*}
    \overline{T}_{\mathrm{cross}}-\overline{T}_{\mathrm{oracle}}=o_P(1).
\end{align*}
Once this convergence is verified, $\overline{T}_{\mathrm{cross}}$ will also converge to $N(0,1)$ by Slutsky's theorem. To this end, denote $\overline{T}_{\mathrm{cross}} = \frac{N_{\mathrm{cross}}}{D_{\mathrm{cross}}}$ and $\overline{T}_{\mathrm{oracle}} = \frac{N_{\mathrm{oracle}}}{D_\mathrm{oracle}}$. Then 
\begin{align*}
	\overline{T}_{\mathrm{cross}} - \overline{T}_{\mathrm{oracle}} & = \frac{N_{\mathrm{cross}}}{D_{\mathrm{cross}}} - \frac{N_{\mathrm{cross}}}{D_{\mathrm{oracle}}} + \frac{N_{\mathrm{cross}}}{D_{\mathrm{oracle}}} - \frac{N_{\mathrm{oracle}}}{D_{\mathrm{oracle}}} \\
	& = \frac{N_{\mathrm{cross}} - N_{\mathrm{oracle}}}{D_{\mathrm{oracle}}} \biggl( \frac{D_{\mathrm{oracle}}}{D_{\mathrm{cross}}} - 1 \biggr) + \underbrace{\frac{N_{\mathrm{oracle}}}{D_{\mathrm{oracle}}}}_{=O_P(1)} \biggl( \frac{D_{\mathrm{oracle}}}{D_{\mathrm{cross}}} - 1 \biggr) + \frac{N_{\mathrm{cross}} - N_{\mathrm{oracle}}}{D_{\mathrm{oracle}}}.
\end{align*}
Hence it suffices to show the following two claims hold:
\begin{align*}
	& \mathrm{(i)} \ \, \frac{N_{\mathrm{cross}} - N_{\mathrm{oracle}}}{D_{\mathrm{oracle}}} = o_P(1) \quad \text{and} \quad \mathrm{(ii)}~\,  \frac{D_{\mathrm{oracle}}}{D_{\mathrm{cross}}} - 1 = o_P(1),
\end{align*}
which are proved below.

\textit{Proof of claim (i).} 
We define $D_{\mathrm{oracle}}^{\star}$ as the population standard deviation. Note that $(D_{\mathrm{oracle}})^2 = \widehat{\sigma}_{X,f}^2 + \widehat{\sigma}_{Y,f}^2$ and let $(D_{\mathrm{oracle}}^{\star})^2 = \sigma_{X,f}^2 + \sigma_{Y,f}^2$ where $\sigma_{X,f}^2 = \frac{1}{n_1} \sigma_{1,X}^2 + \frac{1}{n_1 + m_1} \sigma_{2,X}^2$ and $\sigma_{Y,f}^2 = \frac{1}{n_2} \sigma_{1,Y}^2 + \frac{1}{n_2 + m_2} \sigma_{2,Y}^2$. From our previous result in \Cref{Theorem: General Power Expression}, we have
\begin{align*}
	\biggl(\frac{D_{\mathrm{oracle}}}{D_{\mathrm{oracle}}^\star}\biggr)^2 -1 = \frac{\widehat{\sigma}_{X,f}^2 + \widehat{\sigma}_{Y,f}^2}{\sigma_{X,f}^2 + \sigma_{Y,f}^2} -1 = o_P(1).
\end{align*}
By the continuous mapping theorem, we have	
\begin{align*}
	\frac{D_{\mathrm{oracle}}^\star}{D_{\mathrm{oracle}}} - 1 = o_P(1)
\end{align*}
and consequantly,
\begin{align*}
	\frac{N_{\mathrm{cross}} - N_{\mathrm{oracle}}}{D_{\mathrm{oracle}}}  = \frac{N_{\mathrm{cross}} - N_{\mathrm{oracle}}}{D_{\mathrm{oracle}}^\star} \{1 + o_P(1)\}.
\end{align*}
On the other hand, 
\begin{align*}
	\bigg|  \frac{N_{\mathrm{cross}} - N_{\mathrm{oracle}}}{D_{\mathrm{oracle}}^\star}  \bigg| \leq \frac{|R_X|}{\sigma_{X,f}}  + \frac{|R_Y|}{\sigma_{Y,f}},
\end{align*}
where
\begin{align*}
	& R_X = \frac{1}{n_1} \sum_{i=1}^{n_1} \{\widehat{\mE}[f(X_i) \given V_i, f] - \mE[f(X_i) \given V_i, f]\} \\
	& \hskip 5em + \frac{1}{n_1+m_1} \sum_{i=1}^{n_1+m_1} \{\mE[f(X_i) \given V_i, f] - \widehat{\mE}[f(X_i) \given V_i, f]\},\quad\text{and} \\
	&  R_Y = \frac{1}{n_2} \sum_{i=1}^{n_2} \{\widehat{\mE}[f(Y_i) \given W_i, f] - \mE[f(Y_i) \given W_i, f]\} \\
	& \hskip 5em + \frac{1}{n_2+m_2} \sum_{i=1}^{n_2+m_2} \{\mE[f(Y_i) \given W_i, f] - \widehat{\mE}[f(Y_i) \given W_i, f]\}.
\end{align*}
Without loss of generality, we focus on the first term $|R_X|/\sigma_{X,f}$ and the other term $|R_Y|/\sigma_{Y,f}$ can be handled similarly. 

Observe that $R_X$ takes the form $\frac{1}{n} \sum_{i=1}^n g\left(V_i\right)-\frac{1}{n+m} \sum_{j=1}^{n+m} g\left(V_j\right)$ for some function $g$. This form is invariant under location shifts in the function $g$. Specifically, for any constant $c \in \mathbb{R}$, we have $\frac{1}{n} \sum_{i=1}^n\left\{g\left(V_i\right)+c\right\}-\frac{1}{n+m} \sum_{j=1}^{n+m}\left\{g\left(V_j\right)+c\right\}=\frac{1}{n} \sum_{i=1}^n g\left(V_i\right)-\frac{1}{n+m} \sum_{j=1}^{n+m} g\left(V_j\right)$. Then, we choose $c=-\mathbb{E}[g(V)]$ where $g\left(V_j\right)+c$ has expectation of $0$. Therefore, without loss of generality, we may assume that $R_X$ has zero mean, i.e., $\mE \bigl[ \mE[f(X_i) \given V_i, f] - \widehat{\mE}[f(X_i) \given V_i,f] \bigr] = 0$ for all $i$.
Then by Chebyshev's inequality, it can be seen that
\begin{align*}
	R_X^2 = O_P \Bigl(n_1^{-1} \mE\bigl[ \{ \widehat{\mE}[f(X) \given V, f] - \mE[f(X) \given V, f] \}^2 \given f \bigr] \Bigr).
\end{align*}

Combining this with the lower bound \eqref{lower bound for the population variance} for $\sigma^2_{X,f}$, 
\begin{align*}
	\frac{R_X^2}{\sigma_{X,f}^2} \leq \biggl(\frac{1}{n_1}+\frac{1}{n_1+m_1}\biggr)^{-1}\frac{R_X^2}{\sigma^2_{1,X,f} \wedge \sigma^2_{2,X,f}} = O_P\biggl( \frac{ \mE\bigl[ \{\widehat{\mE}[f(X) \given V, f] - \mE[f(X) \given V, f]  \}^2 \given f \bigr]}{\sigma^2_{1,X,f} \wedge \sigma^2_{2,X,f}} \biggr).
\end{align*}

Hence, under the condition \eqref{Eq: cross-fit condition}, we have
\begin{align*}
    \frac{R_X^2}{\sigma_{X,f}^2}=o_P(1).
\end{align*} A similar argument applies to $R_Y^2/\sigma_{Y,f}^2$, which proves that claim (i) holds.

\textit{Proof of claim (ii).} 
By the continuous mapping theorem, it is sufficient to prove that 
\begin{align*}
	\biggl(\frac{D_{\mathrm{cross}}}{D_{\mathrm{oracle}}}\biggr)^2 - 1  = o_P(1),
\end{align*}
which, in turn, is implied by
\begin{align*}
	& \frac{1}{n_1}\frac{|\widehat{\sigma}_{1,X,f}^{\dagger 2} - \widehat{\sigma}_{1,X,f}^2|}{\widehat{\sigma}_{X,f}^2} = o_P(1), \ \frac{1}{n_1+m_1}\frac{|\widehat{\sigma}_{2,X,f}^{\dagger 2} - \widehat{\sigma}_{2,X,f}^2|}{\widehat{\sigma}_{X,f}^2} = o_P(1),\quad\text{and} \\
	& \frac{1}{n_2}\frac{|\widehat{\sigma}_{1,Y,f}^{\dagger 2} - \widehat{\sigma}_{1,Y,f}^2|}{\widehat{\sigma}_{Y,f}^2} = o_P(1), \  \frac{1}{n_2+m_2}\frac{|\widehat{\sigma}_{2,Y,f}^{\dagger 2} - \widehat{\sigma}_{2,Y,f}^2|}{\widehat{\sigma}_{Y,f}^2} = o_P(1).
\end{align*}

Without loss of generality, we focus on $\widehat{\sigma}_{1,X,f}^{\dagger 2}$ and $\widehat{\sigma}_{2,X,f}^{\dagger 2}.$ Using the Cauchy--Schwarz inequality, we observe that
\begin{align*}
	\frac{1}{n_1}\frac{|\widehat{\sigma}_{1,X,f}^{\dagger 2} - \widehat{\sigma}_{1,X,f}^2|}{\widehat{\sigma}_{X,f}^2} ~\leq~ & \frac{1}{n_1}\frac{\frac{1}{n_1} \sum_{i=1}^{n_1} \big\{ \mE[f(X_i) \given V_i, f] - \widehat{\mE}[f(X_i) \given V_i, f] \big\}^2}{\widehat{\sigma}_{X,f}^2} \\
	& + \frac{2}{n_1}\sqrt{\frac{\frac{1}{n_1} \sum_{i=1}^{n_1} \big\{ \mE[f(X_i) \given V_i, f] - \widehat{\mE}[f(X_i) \given V_i, f] \big\}^2}{\widehat{\sigma}_{X,f}^2}}.
\end{align*}
This becomes $o_P(1)$ when
\begin{align*}
	\frac{1}{n_1}\frac{ \mE\bigl[ \{\mE[f(X) \given V, f] - \widehat{\mE}[f(X) \given V, f]\}^2 \given f \bigr]}{\sigma_{X,f}^2} &~\leq \biggl(\frac{1}{n_1}+\frac{1}{n_1+m_1}\biggr)^{-1}\frac{1}{n_1}\frac{ \mE\bigl[ \{\mE[f(X) \given V, f] - \widehat{\mE}[f(X) \given V, f]\}^2 \given f \bigr]}{\sigma_{1,X,f}^{2} \wedge \sigma_{2,X,f}^{2}} \\&~= o_P(1),
\end{align*} which is satisfied under the condition \eqref{Eq: cross-fit condition}. 
On the other hand, we again use the Cauchy--Schwarz inequality to observe that
\begin{align*}
	\frac{1}{n_1+m_1}\frac{|\widehat{\sigma}_{2,X}^{\dagger 2} - \widehat{\sigma}_{2,X}^2|}{\widehat{\sigma}_{X,f}^2} ~\leq~& \frac{1}{n_1+m_1}\frac{\frac{1}{n_1+m_1} \sum_{i=1}^{n_1+m_1}A_i^2}{\widehat{\sigma}_{X,f}^2} \\
	& + 2 \sqrt{\frac{1}{n_1+m_1}\frac{\frac{1}{n_1+m_1} \sum_{i=1}^{n_1+m_1} A_i^2}{\widehat{\sigma}_{X,f}^2}} \underbrace{\sqrt{\frac{\frac{1}{n_1+m_1} \sum_{i=1}^{n_1+m_1} B_i^2}{\widehat{\sigma}_{2,X}^2}}}_{=1}
\end{align*}
where
\begin{align*}
	& A_i =  \{\mE[f(X_i) \given V_i, f] - \widehat{\mE}[f(X_i) \given V_i, f]\} - \frac{1}{n_1+m_1} \sum_{j=1}^{n_1+m_1} \{ \mE[f(X_j) \given V_j, f] - \widehat{\mE}[f(X_j) \given V_j, f]\}, \\
	& B_i = \mE[f(X_i) \given V_i, f] - \frac{1}{n_1 + m_1} \sum_{j=1}^{n_1+m_1} \mE[f(X_j) \given V_j, f]. 
\end{align*}
The above upper bound becomes $o_P(1)$ when
\begin{align*}
	& \frac{1}{n_1+m_1}\frac{ \mE\bigl[ \{\mE[f(X) \given V, f] - \widehat{\mE}[f(X) \given V, f]\}^2 \given f \bigr]}{\sigma_{X,f}^2} \\
	& \leq \biggl(\frac{1}{n_1}+\frac{1}{n_1+m_1}\biggr)^{-1}\frac{1}{n_1+m_1}\frac{ \mE\bigl[ \{\mE[f(X) \given V, f] - \widehat{\mE}[f(X) \given V, f]\}^2 \given f \bigr]}{\sigma_{1,X,f}^{2} \wedge \sigma_{2,X,f}^{2}} = o_P(1),
\end{align*} which is also satisfied under the condition \eqref{Eq: cross-fit condition}.
A similar argument applies to $\widehat{\sigma}_{1,Y,f}^{\dagger 2}$ and $\widehat{\sigma}_{2,Y,f}^{\dagger 2}$, which proves that claim (ii) holds.
Therefore, under the condition \eqref{Eq: cross-fit condition}, we conclude that $\overline{T}_{\mathrm{cross}}$ is asymptotically $N(0,1)$ as $n = n_1 \wedge n_2 \to \infty.$

\subsection{Proof of \Cref{Theorem: xssMMD}}\label{appendix: proof of the main theorem}

Before presenting the formal proof of \Cref{Theorem: xssMMD}, we first provide the explicit mathematical formulation of the cross-fitted test statistic $\xssMMD$ as defined in \eqref{Eq: xssMMD}.

Recall that to compute the cross-fitted estimators, we partition the labeled dataset $\mathcal{L}_{XV}$ into two disjoint subsets: 
$\mathcal{L}_{XV,a} = \{(X_i, V_i) : i \in \mathcal{I}_a\}$ and 
$\mathcal{L}_{XV,b} = \{(X_i, V_i) : i \in \mathcal{I}_b\}$, 
where $\mathcal{I}_a$ and $\mathcal{I}_b$ represent the odd and even indices of $\{1, \dots, n_1\}$, respectively. Similarly, we partition the unlabeled dataset $\mathcal{U}_V$ into $\mathcal{U}_{V,a} = \{V_i : i \in \mathcal{J}_a\}$ and $\mathcal{U}_{V,b} = \{V_i : i \in \mathcal{J}_b\}$, where $\mathcal{J}_a$ and $\mathcal{J}_b$ represent the odd and even indices of $\{n_1+1, \dots, n_1+m_1\}$, respectively. Let $\widehat{\mE}[\fhat(X_i) \given V_i]$ denote the conditional expectation estimator trained on $\mathcal{L}_{XV,a}$ if $i \in \mathcal{I}_b \cup \mathcal{J}_b$ (even indices), and on $\mathcal{L}_{XV,b}$ if $i \in \mathcal{I}_a \cup \mathcal{J}_a$ (odd indices). We apply an analogous partition to $\mathcal{L}_{YW}$ and $\mathcal{U}_W$ and define $\widehat{\mE}[\fhat(Y_i) \given W_i]$.

Based on the obtained estimator, the cross-fitted semi-supervised mean estimator for $X$, denoted as $\hat{\mu}_{X,\fhat}^{\dagger}$, is defined as
\begin{align*}
    \hat{\mu}_{X,\fhat}^{\dagger} = 
    &\frac{1}{n_1}\sum_{i = 1}^{n_1} \left\{ \fhat(X_i) - \widehat{\mE}[\fhat(X_i) \given V_i] \right\} + \frac{1}{n_1 + m_1}  \sum_{i = 1}^{n_1+m_1} \widehat{\mE}[\fhat(X_i) \given V_i].
\end{align*}
The estimator $\hat{\mu}_{Y,\fhat}^{\dagger}$ is defined analogously using $\mathcal{L}_{YW}$ and $\mathcal{U}_W$.

Similarly, the cross-fitted variance estimator $\widehat{\sigma}_{X,\fhat}^{\dagger 2}$ is defined as $\widehat{\sigma}^{\dagger 2}_{X,\fhat} = n_1^{-1}\widehat{\sigma}_{1,X,\fhat}^{\dagger 2} + (n_1+m_1)^{-1} \widehat{\sigma}_{2,X,\fhat}^{\dagger 2}$, where each term represents the empirical sample variance of the cross-fitted components:
\begin{equation}
    \begin{aligned}\label{eq: cross-fitted variance estimate}
    & \widehat{\sigma}_{1,X,\fhat}^{\dagger 2} = \frac{1}{n_1} \sum_{i=1}^{n_1} \left\{ \fhat(X_i) - \widehat{\mE}[\fhat(X_i) \given V_i] \right\}^2 - \left( \frac{1}{n_1} \sum_{i=1}^{n_1} \left\{ \fhat(X_i) - \widehat{\mE}[\fhat(X_i) \given V_i] \right\} \right)^2, \\
    & \widehat{\sigma}_{2,X,\fhat}^{\dagger 2} = \frac{1}{n_1+m_1} \sum_{i=1}^{n_1+m_1} \left\{ \widehat{\mE}[\fhat(X_i) \given V_i] \right\}^2 - \left( \frac{1}{n_1+m_1} \sum_{i=1}^{n_1+m_1} \widehat{\mE}[\fhat(X_i) \given V_i] \right)^2.
    \end{aligned} 
\end{equation}
The variance $\widehat{\sigma}_{Y,\fhat}^{\dagger 2}$ is computed symmetrically using $\mathcal{L}_{YW}$ and $\mathcal{U}_W$. 

The final cross-fitted semi-supervised MMD test statistic is then given by:
\begin{align*}
    \xssMMD = \frac{\hat{\mu}_{X,\fhat}^{\dagger} - \hat{\mu}_{Y,\fhat}^{\dagger}}{\sqrt{\widehat{\sigma}_{X,\fhat}^{\dagger 2} + \widehat{\sigma}_{Y,\fhat}^{\dagger 2}}}.
\end{align*}

With the precise formulation of the test statistic established, we now proceed to the main proof of \Cref{Theorem: xssMMD}. We first note that the tests are defined using the \((1-\alpha)\)-quantile of the standard normal distribution. Hence, it suffices to demonstrate that under \Cref{Assumption: xssMMD under H0} and \Cref{Assumption: consistency of conditional expectation}, the asymptotic normality of $\xMMD$ and $\xssMMD$ holds under the null, and it further holds under the alternative when \Cref{Assumption: xssMMD under H1} is satisfied. 

The asymptotic normality of \(\xMMD\) under \Cref{Assumption: xssMMD under H0} has already been established by \citet[Theorem 5,][]{shekhar2022permutation}. Thus, it remains to show that this result also holds under the alternative and that $\xssMMD$ asymptotically follows $N(0,1)$ as $n \to \infty$ under the both null and alternative given the considered conditions. 

To this end, we first prove that $\xMMD$ and the oracle statistic, $\xssMMD_{\circ} \coloneqq \overline{T}_{\mathrm{oracle}}$ are asymptotically $N(0,1)$. Then, we prove that $\xssMMD - \xssMMD_{\circ} = o_P(1)$ to conclude the asymptotic normality of $\xssMMD$. Finally, we compare their asymptotic power under the alternative to complete the proof of \Cref{Theorem: xssMMD}.

\noindent \textbf{Step 1: Asymptotic Normality of $\xssMMD_{\circ}$.} 
We proceed in similar steps as we have done in the proof of \Cref{Theorem: General Power Expression}. We first show the asymptotic normality when using the true variance, then show the same result is valid with the sample variance.

For the oracle version, we prove the asymptotic normality for general distributions $P_X$ and $P_Y$, which ensures that the result holds under both the null and alternative hypotheses.

Recall from the proof of \Cref{Theorem: General Power Expression} that
\begin{align*}
	\xssMMD_{\circ} = \frac{\widehat{\mu}_{X,\fhat} - \widehat{\mu}_{Y,\fhat}-\mE[\fhat(X)|\fhat]+\mE[\fhat(Y)|\fhat]}{s_{n+m}}=\frac{\sum_{i=1}^{n_1+m_1} G_i- \sum_{i=1}^{n_2+m_2} H_i}{\sqrt{{\sigma}^2_{X,\fhat} + {\sigma}^2_{Y,\fhat} }},
\end{align*}
where
\begin{align*}
	G_i = \begin{cases}
		\frac{1}{n_1} \big\{\fhat(X_i) - \mE[\fhat(X_i) \given \fhat] \big\}  - \frac{m_1}{n_1(n_1+m_1)} \big\{\mE[\fhat(X_i) \given V_i, \fhat] - \mE[\fhat(X_i) \given \fhat ] \big\} &  \text{if $1 \leq i \leq n_1$,} \\[.5em]
		\frac{1}{n_1+m_1} \big\{ \mE[\fhat(X_i) \given V_i, \fhat]  - \mE[\fhat(X_i) \given \fhat ] \big\} \quad & \text{if $n_1+1 \leq i \leq n_1+m_1$,}
	\end{cases}
\end{align*} and
\begin{align*}
	H_i = \begin{cases}
		\frac{1}{n_2} \big\{\fhat(Y_i) - \mE[\fhat(Y_i) \given \fhat ] \big\}  - \frac{m_2}{n_2(n_2+m_2)} \big\{\mE[\fhat(Y_i) \given W_i, \fhat] - \mE[\fhat(Y_i) \given \fhat] \big\} \quad & \text{if $1 \leq i \leq n_2$,} \\[.5em]
		\frac{1}{n_2+m_2} \big\{ \mE[\fhat(Y_i) \given W_i, \fhat]  - \mE[\fhat(Y_i) \given \fhat ] \big\} \quad  &  \text{if $n_2+1 \leq i \leq n_2+m_2$}.
	\end{cases}
\end{align*}
The denominator $s_{n+m}$ is recalled as 
\begin{align*}
	s_{n+m}^2 & = \mV(\widehat{\mu}_{X,\fhat} -\widehat{\mu}_{Y,\fhat} \given \fhat ) = \sum_{i=1}^{n_1+m_1} \mV(G_i  \given \fhat) + \sum_{i=1}^{n_2+m_2} \mV(H_i \given \fhat) \\
	& = \underbrace{\frac{1}{n_1} \sigma_{1,X,\fhat}^2 + \frac{1}{n_1+m_1} \sigma_{2,X,\fhat}^2}_{\sigma^2_{X,\fhat}} +  \underbrace{\frac{1}{n_2} \sigma_{1,Y,\fhat}^2 + \frac{1}{n_2+m_2} \sigma_{2,Y,\fhat}^2}_{\sigma^2_{Y,\fhat}}.
\end{align*}
To ensure that $\xssMMD_{\circ}$ is asymptotically $N(0,1)$ distributed, it suffices to show that Lyapunov's condition \eqref{condition: Lyapunov} is satisfied. For simplicity, we take $\delta = 2$ and show the following convergence holds under the given conditions:
\begin{align}\label{condition: Lyapunov when delta=2}
	\sum_{i=1}^{n_1+m_1} \mE\biggl[ \bigg|\frac{G_i}{s_{n+m}} \bigg|^{4} \,\bigg| \, \fhat \biggr] + \sum_{i=1}^{n_2+m_2} \mE\biggl[ \bigg|\frac{H_i}{s_{n+m}} \bigg|^{4} \,\bigg| \, \fhat \biggr] =o_P(1). 
\end{align}

By symmetry, we focus on the first term involving $G_i$ values. Letting $r_1 = \frac{m_1}{n_1+m_1}$, we obtain an upper bound for the first term as
\begin{align}\label{eq: upper bound for Lyapunov condition}
	\sum_{i=1}^{n_1+m_1} \mE\biggl[ \bigg|\frac{G_i}{s_{n+m}} \bigg|^{4} \,\bigg| \, \fhat \biggr] ~ \lesssim ~ & \frac{1}{n_1^3} \mE\biggl[ \frac{|\fhat(X) - \mE[\fhat(X) \given \fhat ] |^4}{s_{n+m}^4} \,|\, \fhat \biggr] \nonumber \\ 
    + ~& \frac{r_1^4}{n_1^3} \mE\biggl[ \frac{| \mE[\fhat(X) \given V, \fhat] - \mE[\fhat(X) \given \fhat]|^4}{s_{n+m}^4} \,|\, \fhat \biggr] \nonumber \\
	+ ~& \frac{r_1}{(n_1 + m_1)^3} \mE\biggl[ \frac{| \mE[\fhat(X) \given V, \fhat] - \mE[\fhat(X) \given \fhat]|^4}{s_{n+m}^4} \,|\, \fhat \biggr]\nonumber\\
    ~ \lesssim ~ &\biggl( \frac{1+r_1^4}{n_1^2}+\frac{r_1 n_1}{(n_1+m_1)^3}\biggr)\mE\biggl[ \frac{|\fhat(X) - \mE[\fhat(X) \given \fhat ] |^4}{n_1 s_{n+m}^4} \,|\, \fhat \biggr],
\end{align} 
where we used conditional Jensen's inequality for the last inequality.

On the other hand, by the law of total variance, $s_{n+m}^2$ term is lower bounded as
\begin{align*}
    s_{n+m}^2\geq \sigma^2_{X,\fhat}&=\frac{1}{n_1} \sigma_{1,X,\fhat}^2 + \frac{1}{n_1+m_1} \sigma_{2,X,\fhat}^2 \\
    &\geq \frac{1}{n_1+m_1} \sigma_{1,X,\fhat}^2 + \frac{1}{n_1+m_1} \sigma_{2,X,\fhat}^2 = \frac{1}{n_1+m_1}\mV(\fhat(X)\given \fhat)=\frac{1}{n_1+m_1}\sigma^2_{X,\fhat}.
\end{align*}
Hence in order to show that the first term in \eqref{condition: Lyapunov when delta=2} is $o_P(1)$ under $n_1 \asymp m_1$, it suffices to show that the two claims hold:
\begin{align*}
	 \mathrm{(i)} ~\, \frac{\mE [\{\fhat(X) - \mE[\fhat(X) \given \fhat]\}^4]}{n_1 \{\mE[\sigma_{X,\fhat}^2]\}^2}  = o_P(1) \quad \text{and} \quad  \mathrm{(ii)} ~  \, \frac{\mE[\sigma_{X,\fhat}^2]}{\sigma_{X,\fhat}^2} = O_P(1).
\end{align*}
We shall prove these two claims in order. 

\textit{Proof of claim (i) }
Denote $X \sim P_X$ and $Y \sim P_Y$. Given a kernel $k$ and its feature map $\psi$ so that $k(x,y) \coloneqq \langle \psi(x), \psi(y) \rangle_{\mathcal{H}_k}$ (we will drop the dependence on $\mathcal{H}_k$ for brevity), we define its centered version $\overline{k}_X$ with respect to $P_X$ as
\begin{align*}
    \overline{k}_X(x_1,x_2) & = k(x_1,x_2) - \mE[k(x_1,X)] - \mE[k(X,x_2)] + \mE[k(X_1,X_2)] \\
	& = \langle \psi(x_1) - \mE_{P_X}[\psi(X)], \psi(x_2) - \mE_{P_X}[\psi(X)] \rangle \\
	& = \sum_{i=1}^\infty \lambda_i \phi_i(x_1) \phi_i(x_2),
\end{align*} 
where we use spectral decomposition to denote the centered kernel $\overline{k}_X(x,y) = \sum_{i=1}^\infty \lambda_i \phi_i(x) \phi_i(y)$ with orthonormal basis $\{\phi_i\}_{i=1}^\infty$ and corresponding eigenvalues $\{\lambda_i\}_{i=1}^\infty$. Similarly, we define the centered kernel with respect to $P_Y$ as $\overline{k}_Y(x,y) \coloneqq \langle \psi(x) - \mE[\psi(Y)], \psi(y) - \mE[\psi(Y)] \rangle  = \sum_{i=1}^\infty \check{\lambda}_i \check{\phi}_i(x) \check{\phi}_i(y).$

We express the witness function in terms of the inner product of feature maps as follows:
\begin{align*}
	\fhat(x) = \frac{1}{n_1} \sum_{i=1}^{n_1} k(X_i',x) - \frac{1}{n_2} \sum_{i=1}^{n_2} k(Y_i',x) = \langle \bar{\psi}_X - \bar{\psi}_Y,  \psi(x) \rangle,
\end{align*}
where we denote the sample mean of the feature map $\bar{\psi}_X\coloneqq \frac{1}{n_1} \sum_{i=1}^{n_1} k(X_i,\cdot)$ and $\bar{\psi}_Y\coloneqq \frac{1}{n_1} \sum_{i=1}^{n_2} k(Y_i,\cdot)$ as $\bar{\psi}_X$ and $\bar{\psi}_Y$, respectively. 
We also let 
\begin{align*}
	\overline{k}_{Y,X}(y,x) = \langle \psi(y) - \mE_{P_Y}[\psi(Y)], \psi(x) - \mE_{P_X}[\psi(X)] \rangle
\end{align*}
from which we observe that
\begin{align*}
    \mathrm{MMD}^2 = \mE[\overline{k}_X(Y,Y')] = \mE[\overline{k}_Y(X,X')] = \sum_{i=1}^\infty \lambda_i \mE[\phi_i(Y)]^2 =  \sum_{i=1}^\infty \check{\lambda}_i \mE[\check{\phi}_i(X)]^2.
\end{align*}
Given the notation and denoting $\overline{\phi}_{i,X}$ and $\overline{\phi}_{i,X}$ as the sample mean of $\{\phi_{i}(X_j')\}_{j=1}^{n_1}$ and $\{\phi_{i}(Y_j')\}_{j=1}^{n_2}$, respectively, we obtain the upper bound for the numerator as
\begin{align*}
	 & \mE [\{\fhat(X) - \mE[\fhat(X) \given \fhat]\}^4]  = \mE\biggl[ \biggl\{ \sum_{i=1}^{\infty} \lambda_i (\overline{\phi}_{i,X} - \overline{\phi}_{i,Y}) \phi_i(X)   \biggr\}^4 \biggr] \\
	 & = \mE\biggl[  \biggl\{ \frac{1}{n_1} \sum_{i=1}^{n_1} \overline{k}_X(X_i,X)  -  \frac{1}{n_2} \sum_{i=1}^{n_2} \overline{k}_X(Y_i,X)  \bigg\}^4 \biggr] \\
	 & = \mE\biggl[  \biggl\{ \frac{1}{n_1} \sum_{i=1}^{n_1} \overline{k}_X(X_i,X)  -  \frac{1}{n_2} \sum_{i=1}^{n_2} \overline{k}_{Y,X}(Y_i,X) + \langle \mE_{P_X}[\psi(X)] - \mE_{P_Y}[\psi(Y)] , \psi(X) - \mE_{P_X}[\psi(X)] \rangle  \bigg\}^4  \biggr] \\
	 & \lesssim \mE\biggl[ \biggl\{  \frac{1}{n_1} \sum_{i=1}^{n_1} \overline{k}_X(X_i,X) \bigg\}^4 \biggr] +  \mE\biggl[ \biggl\{  \frac{1}{n_2} \sum_{i=1}^{n_2} \overline{k}_{Y,X}(Y_i,X) \bigg\}^4 \biggr] \\ 
     & \hskip 18em + \mE\bigl[\langle \mE_{P_X}[\psi(X)] - \mE_{P_Y}[\psi(Y)] , \psi(X) - \mE_{P_X}[\psi(X)] \rangle^4  \bigr] \\
	 & \lesssim \frac{1}{n_1^3} \mE[\overline{k}_X(X_1,X_2)^4] + \frac{1}{n_1^2}\mE[\overline{k}_X(X_1,X_2)^2 \overline{k}_X(X_1,X_3)^2] +  \frac{1}{n_2^3} \mE[\overline{k}_{Y,X}(Y,X)^4]  \\
	 & \hskip 15em + \frac{1}{n_2^2}\mE[\overline{k}_{Y,X}(Y_1,X)^2 \overline{k}_{Y,X}(Y_2,X)^2] + \mathrm{MMD}^4 \times \mE[\overline{k}_{X}(X,X)^2].
\end{align*}
Next, we compute the conditional variance of $\fhat(X)$ as
\begin{align*}
	\mV[\fhat(X) \given \fhat] & = \mE[ \langle \bar{\psi}_X - \bar{\psi}_Y,  \psi(X) - \mE_P[\psi(X)] \rangle^2 \given \bar{\psi}_X, \bar{\psi}_Y] \\
	& = \mE[ \langle \bar{\psi}_X - \mE[\psi(X)] + \mE[\psi(X)] - \bar{\psi}_Y,  \psi(X) - \mE_P[\psi(X)] \rangle^2 \given \bar{\psi}_X, \bar{\psi}_Y] \\
	& = \mE\biggl[ \biggl\{ \sum_{i=1}^{\infty} \lambda_i  \biggl( \frac{1}{n_1} \sum_{j=1}^{n_1} \phi_i(X_j') \biggr) \phi_i(X)  - \sum_{i=1}^{\infty} \lambda_i  \biggl( \frac{1}{n_2} \sum_{j=1}^{n_2} \phi_i(Y_j') \biggr) \phi_i(X)  \biggr\}^2 \,\bigg| \, (X_j'),(Y_j') \biggr] \\
	& = \mE\biggl[ \biggl\{ \sum_{i=1}^{\infty} \lambda_i (\overline{\phi}_{i,X} - \overline{\phi}_{i,Y}) \phi_i(X)   \biggr\}^2 \,\bigg| \, (X_j'),(Y_j') \biggr] \\
	& = \sum_{i=1}^\infty \lambda_i^2 (\overline{\phi}_{i,X} - \overline{\phi}_{i,Y})^2.
\end{align*}

Moreover, we denote 
\begin{align*}
     & \overline{g}_X(x,y) = \mE[\overline{k}_X(x,X)\overline{k}_X(y,X)] = \sum_{i=1}^{\infty} \lambda_i^2 \phi_i(x) \phi_i(y) \quad \text{and} \\
     & \overline{g}_Y(x,y) = \mE[\overline{k}_Y(x,Y)\overline{k}_Y(y,Y)]= \sum_{i=1}^{\infty} \check{\lambda}_i^2 \check{\phi}_i(x) \check{\phi}_i(y),
\end{align*}
and compute the lower bound for the denominator as
\begin{align*}
	\mE[\sigma_{X,\fhat}^2] = \mE \{\mV[\fhat(X) \given \fhat]\} &= \sum_{i=1}^\infty \lambda_i^2 \mE \bigl[ (\overline{\phi}_{i,X} - \overline{\phi}_{i,Y})^2 \bigr] = \sum_{i=1}^\infty \lambda_i^2 \biggl( \frac{1}{n_1} + \mE\bigl[\overline{\phi}_{i,Y}^2\bigr] \biggr) \\
	& = \sum_{i=1}^\infty \lambda_i^2 \biggl( \frac{1}{n_1} + \frac{\mV[\phi_i(Y)]}{n_2}  + \mE[\phi_i(Y)]^2 \biggr) \\
	& = \frac{\mE[\overline{k}_X(X_1,X_2)^2]}{n_1} + \frac{\mE[\overline{g}_X(Y,Y)] - \mE[\overline{g}_X(Y_1,Y_2)]}{n_2} +\mE[\overline{g}_X(Y_1,Y_2)] \\
	& \gtrsim \frac{\mE[\overline{g}_X(X,X)]}{n_1} + \frac{\mE[\overline{g}_X(Y,Y)]}{n_2} + \mE[\overline{g}_X(Y_1,Y_2)].
\end{align*}

Combining these and letting $n_1 \leq n_2,$ it suffices to show that the following convergence results hold:

\begin{equation}
    \begin{aligned}\label{condition: condition for the claim (i) to prove asymptotic normality of xssMMD}
        \text{(a)} ~\, & \frac{\mathrm{MMD}^4\mE[\overline{k}_X(X,X)^2] }{n_1\{(n_1^{-1} + n_2^{-1})\mE[\overline{g}_X(X,X)] + \mE[\overline{g}_X(Y_1,Y_2)]\}^2} =o_P(1),\\[.5em]
        \text{(b)} ~\,& \frac{\mE[\overline{k}_{Y,X}(Y,X)^4]}{n_1n_2^3\{n_1^{-1}\mE[\overline{g}_X(X,X)] + n_2^{-1}\mE[\overline{g}_X(Y,Y)] + \mE[\overline{g}_X(Y_1,Y_2)]\}^2} =o_P(1) \\[.5em]
	   \text{(c)} ~\, & \frac{\mE[\overline{k}_{Y,X}(Y_1,X)^2 \overline{k}_{Y,X}(Y_2,X)^2]}{n_1n_2^2\{n_1^{-1}\mE[\overline{g}_X(X,X)] + n_2^{-1}\mE[\overline{g}_X(Y,Y)] + \mE[\overline{g}_X(Y_1,Y_2)]\}^2} =o_P(1),\\
        \text{(d)} ~\,& \frac{\mE[\overline{k}_X(X_1,X_2)^4]}{n_1^2\{\mE[\overline{g}_X(X,X)]\}^2} =o_P(1), \quad \text{and}\quad \text{(e)} ~\, \frac{\mE[\overline{k}_X(X_1,X_2)^2 \overline{k}_X(X_1,X_3)^2]}{n_1\{\mE[\overline{g}_X(X,X)]\}^2} =o_P(1).
    \end{aligned}
\end{equation}
Let us verify that these convergence results hold. With $n_1 \leq n_2,$ we obtain from \Cref{Assumption: xssMMD under H1}
\begin{align*}
	\frac{\mathrm{MMD}^4\mE[\overline{k}_X(X,X)^2] }{n_1\{(n_1^{-1} + n_2^{-1})\mE[\overline{g}_X(X,X)] + \mE[\overline{g}_X(Y_1,Y_2)]\}^2}  \leq \frac{\mathrm{MMD}^4\mE[\overline{k}_X(X,X)^2] }{\{n_1\mE[\overline{g}_X(X,X)] + n_1^2\mE[\overline{g}_X(Y_1,Y_2)]\}^2} =o_P(1),
\end{align*} which implies that (a) holds. 

Since we assume that $P_X$ and $P_Y$ have density functions $p$ and $q$ and $\|p/q\|_{L_{\infty}} \vee \|q/p\|_{L_{\infty}} \leq C,$ 
\begin{align*}
	\mE[\overline{g}_X(Y,Y)] \asymp \mE[\overline{g}_X(X,X)] \quad \text{and} \quad \mE[\overline{k}_{Y,X}(Y,X)^4] \lesssim \mathrm{MMD}^4 \mE[\overline{k}_X(X,X)^2].
\end{align*}

Combining these results with \Cref{Assumption: xssMMD under H1}, we show that (b) and (c) hold. Lastly, from \Cref{Assumption: xssMMD under H0}, (d) and (e) are satisfied. Hence we prove that the claim $(\mathrm{i})$ holds. 

\textit{Proof of claim (ii)} We show that the ratio converges to one in probability  
\begin{align*}
 		\frac{\mE[\sigma_{X,\fhat}^2]}{\sigma_{X,\fhat}^2} = 1 + o_P(1).
\end{align*} which directly shows that the claim $(\mathrm{ii})$ holds.

Recall 
\begin{align*}
	& \sigma_{X,\fhat}^2 = \sum_{i=1}^\infty \lambda_i^2 (\overline{\phi}_{i,X} - \overline{\phi}_{i,Y})^2 \quad \text{and} \\
	& \mE[\sigma_{X,\fhat}^2] \gtrsim \frac{\mE[\overline{g}_X(X,X)]}{n_1} + \frac{\mE[\overline{g}_X(Y,Y)]}{n_2} + \mE[\overline{g}_X(Y_1,Y_2)].
\end{align*}
Letting $Z \sim N(0,1)$, we have for any $\epsilon >0$ that 
\begin{align*}
	& \mP\Biggl( \frac{\mE[\sigma_{X,\fhat}^2]}{\sigma_{X,\fhat}^2} \geq \epsilon \Biggr) = \mP\Biggl( \frac{\sigma_{X,\fhat}^2}{\mE[\sigma_{X,\fhat}^2]} \leq \epsilon^{-1} \Biggr) \\
	& \leq  \mP\Biggl( \frac{\lambda_1^2(\overline{\phi}_{1,X} - \overline{\phi}_{1,Y})^2}{\mE[\sigma_{X,\fhat}^2]} \leq \epsilon^{-1} \Biggr) = \mP\Biggl( (\overline{\phi}_{1,X} - \overline{\phi}_{1,Y})^2 \leq \frac{\mE[\sigma_{X,\fhat}^2]}{\epsilon \lambda_1^2} \Biggr) \\
	& = \mP \Biggl( - \sqrt{\frac{\mE[\sigma_{X,\fhat}^2]}{\epsilon \lambda_1^2\mV[\overline{\phi}_{1,X} - \overline{\phi}_{1,Y}]}} \leq \frac{\overline{\phi}_{1,X} - \overline{\phi}_{1,Y}}{\sqrt{\mV[\overline{\phi}_{1,X} - \overline{\phi}_{1,Y}]}} \leq \sqrt{\frac{\mE[\sigma_{X,\fhat}^2]}{\epsilon \lambda_1^2 \mV[\overline{\phi}_{1,X} - \overline{\phi}_{1,Y}] }}  \Biggr) \\
	& \overset{(\text{a})}{\leq}  \mP \Biggl( - \sqrt{\frac{\mE[\sigma_{X,\fhat}^2]}{\epsilon \lambda_1^2\mV[\overline{\phi}_{1,X} - \overline{\phi}_{1,Y}]}} \leq Z - \frac{\mu_1}{\sqrt{\mV[\overline{\phi}_{1,X} - \overline{\phi}_{1,Y}]}} \leq \sqrt{\frac{\mE[\sigma_{X,\fhat}^2]}{\epsilon \lambda_1^2 \mV[\overline{\phi}_{1,X} - \overline{\phi}_{1,Y}] }}  \Biggr) + C \sqrt{\frac{\mE[\phi_1(X)^4]}{n_1}} \\
	& \overset{(\text{b})}{\lesssim}  \sqrt{\frac{1}{\epsilon} \times \frac{\sum_{i=1}^\infty \lambda_i^2(1 + n_1 \mu_i^2)}{\lambda_1^2}} + C \sqrt{\frac{\mE[\phi_1(X)^4]}{n_1}} \\
	& \overset{(\text{c})}{\lesssim} \sqrt{\frac{1}{\epsilon} \times \frac{\sum_{i=1}^\infty \lambda_i^2(1 + n_1 \mu_i^2)}{\lambda_1^2}} + \frac{\sum_{i=1}^\infty \lambda_i^2}{\lambda_1^2} \times o(1),
\end{align*}
where step (a) uses the Berry--Esseen bound and step (b) and (c) hold by the following reasoning: First of all, we use the observation that for $b \geq 0$
\begin{align*}
	\sup_{a \in \mathbb{R}}\int_{a-b}^{a+b} \frac{1}{\sqrt{2\pi}}e^{-\frac{x^2}{2}} dx = \int_{-b}^{b}  \frac{1}{\sqrt{2\pi}}e^{-\frac{x^2}{2}} dx \leq \frac{2b}{\sqrt{2\pi}},
\end{align*}
which can be verified by calculus. Hence, the first term in step (a) can be bounded above by 
\begin{align*}
    \sqrt{\frac{\mE[\sigma_{X,\fhat}^2]}{\epsilon \lambda_1^2 \mV[\overline{\phi}_{1,X} - \overline{\phi}_{1,Y}]}} \quad \text{up to a constant.}
\end{align*}
Hence step (b) follows since
\begin{align*}
	\frac{\mE[\sigma_{X,\fhat}^2]}{\epsilon \lambda_1^2 \mV[\overline{\phi}_{1,X} - \overline{\phi}_{1,Y}]} & \lesssim \frac{n_1^{-1} \mE[\overline{g}_X(X,X)] + \mE[\overline{g}_X(Y_1,Y_2)]}{\epsilon \lambda_1^2 (n_1^{-1} \mV[\phi_1(X)] + n_2^{-1}\mV[\phi_1(Y)])} \\
	& \lesssim \frac{ \mE[\overline{g}_X(X,X)] + n_1\mE[\overline{g}_X(Y_1,Y_2)]}{\epsilon \lambda_1^2} = \frac{1}{\epsilon} \times \frac{\sum_{i=1}^\infty \lambda_i^2(1 + n_1 \mu_i^2)}{\lambda_1^2}
\end{align*} 
where $\mu_i = \mE[\phi_i(Y)]$. Step (c) uses the observation that
\begin{align*}
	\biggl(\frac{\lambda_1^2}{\sum_{i=1}^\infty \lambda_i^2}\biggr)^2 \frac{\mE[\phi_1(X)^4]}{n_1} \leq \frac{\mE[\overline{k}_X(X_1,X_2)^2 \overline{k}_X(X_1,X_3)^2]}{n_1\{\mE[\overline{g}_X(X,X)]\}^2} =o_P(1) ,
\end{align*}
and thus
\begin{align*}
	\sqrt{\frac{\mE[\phi_1(X)^4]}{n_1}} = \frac{\sum_{i=1}^\infty \lambda_i^2}{\lambda_1^2} o(1),
\end{align*} where the first convergence is derived from \Cref{Assumption: xssMMD under H0}. Combining these results, we prove that the claim $(\mathrm{ii})$ is valid.

A similar argument applies to the second term of each equation in (\ref{condition: Lyapunov when delta=2}). Therefore, assuming that \Cref{Assumption: xssMMD under H0} and \Cref{Assumption: xssMMD under H1} hold, we prove that $\xssMMD_{\circ}$ is asymptotically $N(0,1)$ under the alternative.


Based on the previous results, we now focus on the null hypothesis and verify the asymptotic normality of $\xssMMD_\circ$ under the null when \Cref{Assumption: xssMMD under H0} holds. Through the same reasoning, it suffices to prove that claims ($\mathrm{i}$) and ($\mathrm{ii}$) are valid.

To show that claim ($\mathrm{i}$) holds, we prove that condition \eqref{condition: condition for the claim (i) to prove asymptotic normality of xssMMD} is satisfied. Under the null, $\textrm{MMD}=0$ which satisfies (a), (b), and (c). From \Cref{Assumption: xssMMD under H0}, (d) and (e) are satisfied. Therefore, we prove that the claim ($\mathrm{i}$) holds.

Next, we show that the claim ($\mathrm{ii}$) holds. Under the null, we obtain the upper bound
\begin{align*}
	\frac{\mE[\sigma_{X,\fhat}^2]}{\epsilon \lambda_1^2 \mV[\overline{\phi}_{1,X} - \overline{\phi}_{1,Y}]}   \lesssim \frac{ \mE[\overline{g}_X(X,X)] }{\epsilon \lambda_1^2} = \frac{1}{\epsilon} \times \frac{\sum_{i=1}^\infty \lambda_i^2}{\lambda_1^2}
\end{align*}
which leads to 
\begin{align*}
    \mP\Biggl( \frac{\mE[\sigma_{X,\fhat}^2]}{\sigma_{X,\fhat}^2} \geq \epsilon \Biggr) \lesssim \sqrt{\frac{1}{\epsilon} \times \frac{\sum_{i=1}^\infty \lambda_i^2}{\lambda_1^2}} + \frac{\sum_{i=1}^\infty \lambda_i^2}{\lambda_1^2} o(1) \quad \text{for sufficiently large $n_1$.}
\end{align*}
Hence we show that the claim ($\mathrm{ii}$) is valid. Therefore, we conclude that $\xssMMD$ is asymptotically normal under the null hypothesis as well.

A similar argument applies to the second term of each equation in (\ref{condition: Lyapunov when delta=2}). Therefore, assuming that \Cref{Assumption: xssMMD under H0} hold, we prove that $\xssMMD_{\circ}$ is asymptotically $N(0,1)$ under the null.

In addition, we discuss the asymptotic normality of $\xMMD$ under the alternative. Recall that in \Cref{Section: Consistency in Power}, $\xMMD$ with the same witness function $\fhat$ is defined as \begin{align*}
    \xMMD = \frac{\widetilde{\mu}_{X,\fhat} - \widetilde{\mu}_{Y,\fhat}}{\sqrt{\widetilde{\sigma}^2_{X,\fhat} + \widetilde{\sigma}^2_{Y,\fhat}}}.
\end{align*} 
 Similar to the previous proof, we show that the Lyapunov condition (\ref{condition: Lyapunov}) is satisfied. For simplicity, we take $\delta=2$ and show
\begin{align*}
	\frac{1}{\sigma_{n_1,n_2}^4} \Biggl[ \frac{1}{n_1^4}\sum_{i=1}^{n_1}  \mE [\{\fhat(X_i) - \mE[\fhat(X) \given \fhat]\}^4 \given \fhat] + \frac{1}{n_2^4} \sum_{i=1}^{n_2} \mE [\{\fhat(Y_i) - \mE[\fhat(Y) \given \fhat]\}^4 \given \fhat] \Biggr] =o_P(1)
\end{align*} where we denote \begin{align*}
	\sigma_{n_1,n_2}^2 = \widetilde{\sigma}^2_{X,\fhat} + \widetilde{\sigma}^2_{Y,\fhat}. 
 \end{align*}

In the previous proof, we have already shown that the following convergence holds under \Cref{Assumption: xssMMD under H0} and \Cref{Assumption: xssMMD under H1}:
\begin{align*}
	\frac{\mE [\{\fhat(X) - \mE[\fhat(X) \given \fhat]\}^4]}{n_1 \{\mE[\sigma_{X,\fhat}^2]\}^2} + \frac{\mE [\{\fhat(Y) - \mE[\fhat(Y) \given \fhat]\}^4]}{n_2 \{\mE[\sigma_{Y,\fhat}^2]\}^2} = o_P(1) \quad \text{and} \quad \frac{\mE[\sigma_{X,\fhat}^2]}{\sigma_{X,\fhat}^2} + \frac{\mE[\sigma_{Y,\fhat}^2]}{\sigma_{Y,\fhat}^2} = O_P(1).
\end{align*}

Hence the asymptotic normality of $\xMMD$ also follows under the alternative.

\noindent \textbf{Step 2: Asymptotic Normality with Sample Variance.} 
In this step, we show that the ratio of the sample variance to the population variance converges to 1 in probability. Following the same approach as in Step 2 of the proof of \Cref{Theorem: General Power Expression}, it suffices to show that \eqref{eq: convergence to prove step2 in thm 1} holds.

Without loss of generality, we focus on the first convergence result. From the definition of $\widehat{\sigma}_{X,\fhat}^2$ and $\widehat{\sigma}_{Y,\fhat}^2$, we obtain 
\begin{align*}
	\frac{\big|\widehat{\sigma}_{X,\fhat}^2  - \sigma_{X,\fhat}^2\big|}{\sigma_{X,\fhat}^2}  \leq \underbrace{\frac{1}{n_1}\frac{\big|\widehat{\sigma}_{1,X,\fhat}^2 - \sigma_{1,X,\fhat}^2\big|}{\sigma_{X,\fhat}^2}}_{(\mathrm{I})} + \underbrace{\frac{1}{n_1 + m_1} \frac{\big|\widehat{\sigma}_{2,X,\fhat}^2 - \sigma_{2,X,\fhat}^2\big|}{\sigma_{X,\fhat}^2}}_{(\mathrm{II})}.
\end{align*}
Since $\mE[\widehat{\sigma}_{1,X,\fhat}^2 \given \fhat] = \sigma_{1,X,\fhat}^2 ,$ the first term above satisfies $(\mathrm{I}) = o_P(1)$ if 
\begin{align*}
	\frac{1}{n^2_1}\frac{\mE[\{\widehat{\sigma}_{1,X,\fhat}^2 - \sigma_{1,X,\fhat}^2\}^2 \given \fhat]}{\sigma_{X,\fhat}^4} = o_P(1).
\end{align*}
This can be seen using the pieces established before in step 1. When showing the asymptotic normality of $\xssMMD_{\circ},$ we have proved the the both claims ($\mathrm{i}$) and ($\mathrm{ii}$) hold. From this, we obtain that
\begin{align*}
	\frac{1}{n^2_1}\frac{\mE[\{\widehat{\sigma}_{1,X,\fhat}^2 - \sigma_{1,X,\fhat}^2\}^2 \given \fhat]}{\sigma_{X,\fhat}^4} \lesssim  \underbrace{\frac{1}{n^3_1}\frac{\mE[\{\fhat(X) - \mE[\fhat(X) \given \fhat] \}^4]}{\{\mE[\overline{k}^2(X_1,X_2)]\}^2}}_{o_P(1)} \times \underbrace{\frac{\{\mE[\overline{k}^2(X_1,X_2)]\}^2}{\sigma_{X,\fhat}^4}}_{O_P(1)}= o_P(1).
\end{align*}
For the second term $(\mathrm{II})$, we may similarly proceed using conditional Jensen's inequality as
\begin{align*}
	(\mathrm{II}) \lesssim \, & \frac{1}{n^3_1}\frac{\mE[\{\mE[\fhat(X) \given V, \fhat] - \mE[\fhat(X) \given \fhat] \}^4\given \fhat]}{\{\mE[\overline{k}^2(X_1,X_2)]\}^2} \times \frac{\{\mE[\overline{k}^2(X_1,X_2)]\}^2}{\sigma_{X,\fhat}^4} \\
	\lesssim \, &  \frac{1}{n^3_1}\frac{\mE[\{\fhat(X) - \mE[\fhat(X) \given \fhat] \}^4]}{\{\mE[\overline{k}^2(X_1,X_2)]\}^2} \times \frac{\{\mE[\overline{k}^2(X_1,X_2)]\}^2}{\sigma_{X,\fhat}^4}= o_P(1).
\end{align*}
Combining the results, we use Slutsky's theorem to conclude that
\begin{align*}
	\xssMMD_{\circ} = \frac{\widehat{\mu}_{X,\fhat} - \widehat{\mu}_{Y,\fhat}}{\sqrt{\widehat{\sigma}_{X,\fhat}^2 + \widehat{\sigma}_{Y,\fhat}^2}}=\frac{\widehat{\mu}_{X,\fhat} - \widehat{\mu}_{Y,\fhat}}{\sqrt{{\sigma}^2_{X,\fhat} + {\sigma}^2_{Y,\fhat} }}\times \frac{\sqrt{\sigma_{X,\fhat}^2 + \sigma_{Y,\fhat}^2}}{\sqrt{\widehat{\sigma}_{X,\fhat}^2 + \widehat{\sigma}_{Y,\fhat}^2}}\convD N(0,1).
\end{align*}

\noindent \textbf{Step 3: Asymptotic Normality of $\xssMMD$.}

The aim of this subsection is to identify condition on these estimators under which 
\begin{align*}
	\xssMMD - \xssMMD_{\circ} = o_P(1).
\end{align*}
Once this condition is fulfilled, $\xssMMD$ converges to $N(0,1)$ by Slutsky's theorem. Denote $\xssMMD = \frac{\widehat{N}}{\widehat{D}}$ and $\xssMMD_{\circ} = \frac{N_{\circ}}{D_{\circ}}$. Then 
\begin{align*}
	\xssMMD - \xssMMD_{\circ} & = \frac{\widehat{N}}{\widehat{D}} - \frac{\widehat{N}}{D_{\circ}} + \frac{\widehat{N}}{D_{\circ}} - \frac{N_{\circ}}{D_{\circ}} \\
	& = \frac{\widehat{N} - N_{\circ}}{D_{\circ}} \biggl( \frac{D_{\circ}}{\widehat{D}} - 1 \biggr) + \underbrace{\frac{N_{\circ}}{D_{\circ}}}_{=O_P(1)} \biggl( \frac{D_{\circ}}{\widehat{D}} - 1 \biggr) + \frac{\widehat{N} - N_{\circ}}{D_{\circ}}
\end{align*} where we have proved that $\xssMMD_{\circ} = N_{\circ}/D_{\circ} = O_P(1)$ in the previous step.

Hence it suffices to show that two claims hold:
\begin{align*}
	& \mathrm{(i^\prime)} ~ \frac{\widehat{N} - N_{\circ}}{D_{\circ}} = o_P(1) \quad \text{and} \quad \mathrm{(ii^\prime)}~  \frac{D_{\circ}}{\widehat{D}} - 1 = o_P(1).
\end{align*}

\textit{Proof of claim ($i^\prime$).} 
Note that $D_{\circ}^2 = \widehat{\sigma}_{X,\fhat}^2 + \widehat{\sigma}_{Y,\fhat}^2$ using the sample variance and let $D_{\star}^2 = \sigma_{X,\fhat}^2 + \sigma_{Y,\fhat}^2$ using the population variance where $\sigma_{X,\fhat}^2 = \frac{1}{n_1} \sigma_{1,X}^2 + \frac{1}{n_1 + m_1} \sigma_{2,X}^2$ and $\sigma_{Y,\fhat}^2 = \frac{1}{n_2} \sigma_{1,Y}^2 + \frac{1}{n_2 + m_2} \sigma_{2,Y}^2$. From our previous result obtained in Step 2, 
\begin{align*}
	\frac{D_{\circ}^2}{D^2_\star} = \frac{\widehat{\sigma}_{X,\fhat}^2 + \widehat{\sigma}_{Y,\fhat}^2}{\sigma_{X,\fhat}^2 + \sigma_{Y,\fhat}^2} \convP 1.
\end{align*}
By the continuous mapping theorem, 
\begin{align*}
	\frac{D_\star}{D_{\circ}} \convP 1,
\end{align*}
and thus
\begin{align*}
	\frac{\widehat{N} - N_{\circ}}{D_{\circ}}  = \frac{\widehat{N} - N_{\circ}}{D_\star} \{1 + o_P(1)\}.
\end{align*}
On the other hand, 
\begin{align*}
	\bigg|  \frac{\widehat{N} - N_{\circ}}{D_\star}  \bigg| \leq \frac{|R_X|}{\sigma_{X,\fhat}}  + \frac{|R_Y|}{\sigma_{Y,\fhat}}
\end{align*}
where
\begin{align*}
	& R_X = \frac{1}{n_1} \sum_{i=1}^{n_1} \{\widehat{\mE}[\fhat(X_i) \given V_i, \fhat] - \mE[\fhat(X_i) \given V_i, \fhat]\} \\
	& \hskip 5em + \frac{1}{n_1+m_1} \sum_{i=1}^{n_1+m_1} \{\mE[\fhat(X_i) \given V_i, \fhat] - \widehat{\mE}[\fhat(X_i) \given V_i, \fhat]\}, \\
	&  R_Y = \frac{1}{n_2} \sum_{i=1}^{n_2} \{\widehat{\mE}[\fhat(Y_i) \given W_i, \fhat] - \mE[\fhat(Y_i) \given W_i, \fhat]\} \\
	& \hskip 5em + \frac{1}{n_2+m_2} \sum_{i=1}^{n_2+m_2} \{\mE[\fhat(Y_i) \given W_i, \fhat] - \widehat{\mE}[\fhat(Y_i) \given W_i, \fhat]\}.
\end{align*}
Using the fact that $R_X$ and $R_Y$ are location-shift invariant
, it can be seen that
\begin{align*}
	& R_X^2 = O_P \Bigl(n_1^{-1} \mE\bigl[ \{\mE[\fhat(X) \given V, \fhat] - \widehat{\mE}[\fhat(X) \given V, \fhat]\}^2 \given \fhat \bigr] \Bigr), \\
	& R_Y^2 = O_P \Bigl(n_2^{-1} \mE\bigl[ \{\mE[\fhat(Y) \given W, \fhat] - \widehat{\mE}[\fhat(Y) \given W, \fhat]\}^2 \given \fhat \bigr] \Bigr).
\end{align*}
Thus
\begin{align*}
	\frac{R_X^2}{\sigma_{X,\fhat}^2} = O_P\biggl( \frac{ \mE\bigl[ \{\mE[\fhat(X) \given V, \fhat] - \widehat{\mE}[\fhat(X) \given V, \fhat]\}^2 \given \fhat \bigr]}{\mV\{\fhat(X) \given \fhat\}} \biggr),  \\
	\frac{R_Y^2}{\sigma_{Y,\fhat}^2} = O_P\biggl( \frac{ \mE\bigl[ \{\mE[\fhat(Y) \given W, \fhat] - \widehat{\mE}[\fhat(Y) \given W, \fhat]\}^2 \given \fhat \bigr]}{\mV\{\fhat(Y) \given \fhat\}} \biggr).
\end{align*}

Since \Cref{Assumption: consistency of conditional expectation} holds, both terms converges to zero with in probability, which proves that the claim ($\mathrm{i}^\prime$) holds.

\textit{Proof of claim ($ii^\prime$).} 
Note that $\widehat{D}^2 = \widehat{\sigma}^{\dagger 2}_{X,\fhat} + \widehat{\sigma}^{\dagger 2}_{Y,\fhat}$. By the continuous mapping theorem, it is sufficient to prove that 
\begin{align*}
	\frac{\widehat{D}^2}{D_{\circ}^2} - 1  = o_P(1),
\end{align*}
which, in turn, is implied by
\begin{align*}
	& \frac{1}{n_1}\frac{|\widehat{\sigma}^{\dagger 2}_{1,X} - \widehat{\sigma}_{1,X}^2|}{\widehat{\sigma}_{X,\fhat}^2} = o_P(1), \ \frac{1}{n_1+m_1}\frac{|\widehat{\sigma}^{\dagger 2}_{2,X} - \widehat{\sigma}_{2,X}^2|}{\widehat{\sigma}_{X,\fhat}^2} = o_P(1), \\
	& \frac{1}{n_2}\frac{|\widehat{\sigma}^{\dagger 2}_{1,Y} - \widehat{\sigma}_{1,Y}^2|}{\widehat{\sigma}_{Y,\fhat}^2} = o_P(1), \  \frac{1}{n_2+m_2}\frac{|\widehat{\sigma}^{\dagger 2}_{2,Y} - \widehat{\sigma}_{2,Y}^2|}{\widehat{\sigma}_{Y,\fhat}^2} = o_P(1).
\end{align*}
Without loss of generality, we focus on the terms on the first row with $\widehat{\sigma}^{\dagger 2}_{1,X}$ and $\widehat{\sigma}^{\dagger 2}_{2,X}.$

Observe that by the Cauchy--Schwarz inequality,
\begin{align*}
	\frac{1}{n_1}\frac{|\widehat{\sigma}^{\dagger 2}_{1,X} - \widehat{\sigma}_{1,X}^2|}{\widehat{\sigma}_{X,\fhat}^2} ~\leq~& \frac{1}{n_1}\frac{\frac{1}{n_1} \sum_{i=1}^{n_1}A_i^2}{\widehat{\sigma}_{X,\fhat}^2} \\
	& + 2 \sqrt{\frac{1}{n_1}\frac{\frac{1}{n_1} \sum_{i=1}^{n_1} A_i^2}{\widehat{\sigma}_{X,\fhat}^2}} \underbrace{\sqrt{\frac{\frac{1}{n_1} \sum_{i=1}^{n_1} B_i^2}{\widehat{\sigma}_{1,X}^2}}}_{=1}
\end{align*}
where $A_i$ and $B_i$ are defined as
\begin{align*}
	& A_i =  \{\mE[\fhat(X_i) \given V_i, \fhat] - \widehat{\mE}[\fhat(X_i) \given V_i, \fhat]\} - \frac{1}{n_1} \sum_{j=1}^{n_1} \{ \mE[\fhat(X_j) \given V_j, \fhat] - \widehat{\mE}[\fhat(X_j) \given V_j, \fhat]\}, \\
	& B_i = \{\fhat(X_i) - \mE[\fhat(X_i) \given V_i, \fhat]\} - \frac{1}{n_1} \sum_{j=1}^{n_1} \{\fhat(X_j) - \mE[\fhat(X_j) \given V_j, \fhat]\}. 
\end{align*}
Note that $\frac{1}{n_1} \sum_{i=1}^{n_1} A_i^2 \leq \frac{1}{n_1} \sum_{i=1}^{n_1} \{\mE[\fhat(X_i) \given V_i, \fhat] - \widehat{\mE}[\fhat(X_i) \given V_i, \fhat]\}^2$. Thus, this term becomes $o_P(1)$ when
\begin{align*}
	\frac{ \mE\bigl[ \{\mE[\fhat(X) \given V, \fhat] - \widehat{\mE}[\fhat(X) \given V, \fhat]\}^2 \given \fhat \bigr]}{\mV\{\fhat(X) \given \fhat\} } = o_P(1).
\end{align*}

Similarly, for the second term, we have
\begin{align*}
	\frac{1}{n_1+m_1}\frac{|\widehat{\sigma}^{\dagger 2}_{2,X} - \widehat{\sigma}_{2,X}^2|}{\widehat{\sigma}_{X,\fhat}^2} ~\leq~& \frac{1}{n_1+m_1}\frac{\frac{1}{n_1+m_1} \sum_{i=1}^{n_1+m_1}\widetilde{A}_i^2}{\widehat{\sigma}_{X,\fhat}^2} \\
	& + 2 \sqrt{\frac{1}{n_1+m_1}\frac{\frac{1}{n_1+m_1} \sum_{i=1}^{n_1+m_1} \widetilde{A}_i^2}{\widehat{\sigma}_{X,\fhat}^2}} \underbrace{\sqrt{\frac{\frac{1}{n_1+m_1} \sum_{i=1}^{n_1+m_1} \widetilde{B}_i^2}{\widehat{\sigma}_{2,X}^2}}}_{=1}
\end{align*}
where $\widetilde{A}_i$ and $\widetilde{B}_i$ are analogously defined as
\begin{align*}
	& \widetilde{A}_i =  \{\mE[\fhat(X_i) \given V_i, \fhat] - \widehat{\mE}[\fhat(X_i) \given V_i, \fhat]\} - \frac{1}{n_1+m_1} \sum_{j=1}^{n_1+m_1} \{ \mE[\fhat(X_j) \given V_j, \fhat] - \widehat{\mE}[\fhat(X_j) \given V_j, \fhat]\}, \\
	& \widetilde{B}_i = \mE[\fhat(X_i) \given V_i, \fhat] - \frac{1}{n_1 + m_1} \sum_{j=1}^{n_1+m_1} \mE[\fhat(X_j) \given V_j, \fhat]. 
\end{align*}
By the same logic, this also becomes $o_P(1)$ when 
\begin{align*}
	\frac{ \mE\bigl[ \{\mE[\fhat(X) \given V, \fhat] - \widehat{\mE}[\fhat(X) \given V, \fhat]\}^2 \given \fhat \bigr]}{\mV\{\fhat(X) \given \fhat\} } = o_P(1).
\end{align*}

A similar argument applies to $\widehat{\sigma}^{\dagger 2}_{1,Y}$ and $\widehat{\sigma}^{\dagger 2}_{2,Y},$ which proves the claim (ii) holds. Thus, assuming \Cref{Assumption: consistency of conditional expectation} holds, we conclude that $\xssMMD - \xssMMD_{\circ} = o_p(1)$, implying the asymptotic normality of $\xssMMD.$

\noindent \textbf{Step 4: Power Comparison.}

Assuming that \Cref{Assumption: xssMMD under H0}, \Cref{Assumption: consistency of conditional expectation}, and \Cref{Assumption: xssMMD under H1} hold, we showed that both $\xMMD$ and $\xssMMD$ converges to a normal distribution under the null and alternative. This derives the explicit expression of the  asymptotic power of each test statistics.
Recall the definition of each test statistic. The power function of the xMMD test approximates that
\begin{align*}
		\Phi \biggl( z_{\alpha} + \frac{\mE[\fhat(X)] - \mE[\fhat(Y)]}{\sqrt{\smash[b]{n_1^{-1}\mV(\fhat(X))+n_2^{-1}\mV(\fhat(Y))}}} \biggr) \quad \text{as $n \rightarrow \infty$}.
	\end{align*} 
On the other hand, the power function of the xssMMD test approximates
\begin{align*}
		\Phi \biggl( z_{\alpha} + \frac{\mE[\fhat(X)] - \mE[\fhat(Y)]}{\sqrt{\smash[b]{{\sigma}_{X,\fhat}^{ 2} + {\sigma}_{Y,\fhat}^{2}}}} \biggr) \quad \text{as $n \rightarrow \infty$}.
	\end{align*} 
Observe that the only difference results in the denominator which consists of the variance. Hence it suffices to show that 
\begin{align*}
    {\sigma}_{X,\fhat}^{ 2} + {\sigma}_{Y,\fhat}^{2} \leq \mV(\fhat(X))+\mV(\fhat(Y)).
\end{align*}
Using the total law of variance, we show 
\begin{align*}
    {\sigma}_{X,\fhat}^{2} &~ =\frac{1}{n_1} \mE [\mV\{\fhat(X) \given V,\fhat \} \given \fhat]+\frac{1}{n_1+m_1} \mV[\mE\{\fhat(X) \given V, \fhat\} \given \fhat]\\
    &~ \leq \frac{1}{n_1} \mE [\mV\{\fhat(X) \given V,\fhat \} \given \fhat]+\frac{1}{n_1} \mV[\mE\{\fhat(X) \given V, \fhat\} \given \fhat]\\
    &~  = \frac{1}{n_1} \mV(\fhat(X)).
\end{align*}
A similar computation applies to ${\sigma}_{X,\fhat}^{2}$ and $\frac{1}{n_2} \mV(\fhat(X)).$ Therefore, we conclude that the power of the xssMMD test is asymptotically greater than or equal to that of the xMMD test.

\subsection{Proof of \Cref{Lemma: consistency}}
The proof of \Cref{Lemma: consistency} follows the same lines of argument as in the proof of \citet[][Theorem 8]{shekhar2022permutation} with the additional observation that the numerator of $\xssMMD$ has an expectation equal to $\mathrm{MMD}(P_X,P_Y)^2$. Hence we omit the proof here.

\subsection{Proof of \Cref{Theorem: consistency condition}} \label{Section: proof of Theorem consistency condition}
We will prove that under the condition \eqref{Eq: consistency condition}, the following two inequalities hold:
\begin{align} \label{Eq: consistency inequality 1}
	& \mE_{P_{X,n},P_{Y,n}}\bigl[\widehat{\sigma}^{\dagger 2}_{X,\fhat} + \widehat{\sigma}^{\dagger 2}_{Y,\fhat}\bigr] \, \lesssim \,  \mE_{P_{X,n},P_{Y,n}}\bigl[\widetilde{\sigma}^2_{X,\fhat} + \widetilde{\sigma}^2_{Y,\fhat}\bigr] \quad \text{and} \\[.5em]
	& \mathrm{Var}_{P_{X,n},P_{Y,n}}\bigl[\widehat{\mu}_{X,\fhat}^\dagger - \widehat{\mu}_{Y,\fhat}^\dagger \bigr] \, \lesssim \, \mathrm{Var}_{P_{X,n},P_{Y,n}}\bigl[\widetilde{\mu}_{X,\fhat} - \widetilde{\mu}_{Y,\fhat}\bigr], \label{Eq: consistency inequality 2}
\end{align}
which directly implies the claim of \Cref{Theorem: consistency condition}. For simplicity, we will omit the dependence on $P_{X,n}$ and $P_{Y,n}$ in subsequent expressions. As mentioned in the main text, we focus on the case where $n_1, n_2, m_1, m_2$ are even for simplicity and the general case can be handled similarly with minor modifcations. 

\noindent \textbf{Verification of \eqref{Eq: consistency inequality 1}.} Starting with the first inequality \eqref{Eq: consistency inequality 1}, it suffices to prove that $\mE[\widehat{\sigma}^{\dagger 2}_{X,\fhat}] \lesssim \mE[\widetilde{\sigma}^2_{X,\fhat}]$ or equivalently,
\begin{align*}
	& \mE\biggl[ \frac{1}{n_1^2} \sum_{i=1}^{n_1} \big\{ \fhat(X_i) - \widehat{u}_X(V_i) \big\}^2 + \frac{1}{(n_1+m_1)^2} \sum_{i=1}^{n_1+m_1}\biggl\{ \widehat{u}_X(V_i) - \frac{1}{n_1+m_1}\sum_{j=1}^{n_1+m_1} \widehat{u}_X(V_j) \biggr\}^2 \biggr]\\
	\lesssim \  & \mE\biggl[\frac{1}{n_1^2} \sum_{i=1}^{n_1}\biggl\{\fhat(X_i) - \frac{1}{n_1} \sum_{j=1}^{n_1} \fhat(X_j)\biggr\}^2 \biggr] = \frac{n_1-1}{n_1^2}\mE\bigl[\mV\{\fhat(X) \given \fhat\} \bigr] \lesssim  \frac{1}{n_1}\mE\bigl[\mV\{\fhat(X) \given \fhat\} \bigr].
\end{align*}
Under the condition \eqref{Eq: consistency condition}, we have 
\begin{align*}
	\mE\biggl[ \frac{1}{n_1^2} \sum_{i=1}^{n_1} \big\{ \fhat(X_i) - \widehat{u}_X(V_i) \big\}^2\biggr] \lesssim \frac{1}{n_1}\mE\bigl[\mV\bigl\{\fhat(X) \given \fhat \bigr\} \bigr]
\end{align*}
and  
\begin{align*}
	& \mE\biggl[ \frac{1}{(n_1+m_1)^2} \sum_{i=1}^{n_1+m_1}\biggl\{ \widehat{u}_X(V_i) - \frac{1}{n_1+m_1}\sum_{j=1}^{n_1+m_1} \widehat{u}_X(V_j) \biggr\}^2 \biggr] \lesssim \frac{1}{n_1+m_1}\mE\bigl[\mV\bigl\{\widehat{u}_X(V) \given \widehat{u}_X \bigr\} \bigr] \\
	\lesssim \ & \frac{1}{n_1}\mE\bigl[\mV\bigl\{\widehat{u}_X(V) \given \widehat{u}_X \bigr\} \bigr] = \frac{1}{n_1}\mE\bigl[\mV\bigl\{\widehat{u}_X(V) - \fhat(X) + \fhat(X) \given \fhat, \widehat{u}_X \bigr\} \bigr] \\
	\lesssim \ & \frac{1}{n_1}\mE\bigl[\mV\bigl\{\widehat{u}_X(V) - \fhat(X) \given \fhat, \widehat{u}_X \bigr\} \bigr] + \frac{1}{n_1}\mE\bigl[\mV\bigl\{\fhat(X) \given \fhat \bigr\} \bigr] \lesssim  \frac{1}{n_1}\mE\bigl[\mV\bigl\{\fhat(X) \given \fhat \bigr\} \bigr].
\end{align*}
This completes the proof of the first inequality \eqref{Eq: consistency inequality 1}.

\noindent \textbf{Verification of \eqref{Eq: consistency inequality 2}.} For the second inequality \eqref{Eq: consistency inequality 2}, the law of total variance gives
\begin{align*}
	\mV\bigl[\widehat{\mu}_{X,\fhat}^\dagger - \widehat{\mu}_{Y,\fhat}^\dagger\bigr] = \mE \bigl[ \mV\bigl\{\widehat{\mu}_{X,\fhat}^\dagger - \widehat{\mu}_{Y,\fhat}^\dagger \given \fhat \bigr\} \bigr] + \mV \bigl[ \underbrace{\mE\bigl\{\widehat{\mu}_{X,\fhat}^\dagger - \widehat{\mu}_{Y,\fhat}^\dagger \given \fhat \bigr\}}_{= \mE\bigl\{\widetilde{\mu}_{X,\fhat} - \widetilde{\mu}_{Y,\fhat} \given \fhat \bigr\}} \bigr].
\end{align*}
Therefore it suffices to show that
\begin{align*}
	\mE\bigl[ \mV\bigl\{\widehat{\mu}_{X,\fhat}^\dagger - \widehat{\mu}_{Y,\fhat}^\dagger \given \fhat \bigr\} \bigr]  \lesssim  \mE\bigl[\mV\bigl\{\widetilde{\mu}_{X,\fhat} - \widetilde{\mu}_{Y,\fhat} \given \fhat \bigr\}\bigr].
\end{align*}
Using independence between $\widehat{\mu}_{X,\fhat}^\dagger$ and $\widehat{\mu}_{Y,\fhat}^\dagger$ (also $\widetilde{\mu}_{X,\fhat}$ and $\widetilde{\mu}_{Y,\fhat}$) conditional on $\fhat$, we have
\begin{align*}
	& \mV\bigl\{\widehat{\mu}_{X,\fhat}^\dagger - \widehat{\mu}_{Y,\fhat}^\dagger \given \fhat \bigr\} = \mV\bigl\{\widehat{\mu}_{X,\fhat}^\dagger \given \fhat \bigr\} + \mV\bigl\{\widehat{\mu}_{Y,\fhat}^\dagger \given \fhat \bigr\} \quad \text{and} \\
	& \mV\bigl\{\widetilde{\mu}_{X,\fhat} - \widetilde{\mu}_{Y,\fhat} \given \fhat \bigr\} = \mV\bigl\{\widetilde{\mu}_{X,\fhat} \given \fhat \bigr\} + \mV\bigl\{\widetilde{\mu}_{Y,\fhat} \given \fhat \bigr\}.
\end{align*}
Without loss of generality, we focus on $\mV\bigl\{\widehat{\mu}_{X,\fhat}^\dagger \given \fhat \bigr\}$ and $\mV\bigl\{\widetilde{\mu}_{X,\fhat} \given \fhat \bigr\}$, and show that $\mE\bigl[ \mV\bigl\{\widehat{\mu}_{X,\fhat}^\dagger \given \fhat \bigr\}\bigr] \lesssim \mE\bigl[\mV\bigl\{\widetilde{\mu}_{X,\fhat} \given \fhat \bigr\}\bigr]$. The other terms can be handled similarly. Since the cross-fit estimator has smaller variance than the single-split estimator and a constant factor is not of interest, we may assume that $\widehat{\mE}[\fhat(X) \given V, \fhat] \coloneqq \widehat{u}_X(V)$ is trained on an auxiliary dataset. Under this simplification, another application of the law of total variance gives
\begin{align*}
	\mV\bigl\{\widehat{\mu}_{X,\fhat}^\dagger \given \fhat \bigr\} = \mE\bigl[\mV\bigl\{\widehat{\mu}_{X,\fhat}^\dagger \given \fhat, \widehat{u}_X \bigr\} \given \fhat \bigr] + \mV\bigl[\underbrace{\mE\bigl\{\widehat{\mu}_{X,\fhat}^\dagger \given \fhat, \widehat{u}_X \bigr\}}_{= \mE \bigl\{ \widetilde{\mu}_{X,\fhat} \given \fhat \bigr\} } \given \fhat \bigr]
\end{align*}
and thus we focus on the expectation of the conditional variance above. A direct calculation yields
\begin{align*}
	\mE\bigl[\mV\bigl\{\widehat{\mu}_{X,\fhat}^\dagger \given \fhat, \widehat{u}_X \bigr\} \given \fhat \bigr] =  \frac{1}{n_1} \mV\bigl\{\fhat(X) \given \fhat\bigr\} & + \frac{m_1}{n_1(n_1+m_1)} \mE\bigl[\mV\bigl\{\widehat{u}_X(V) \given \fhat, \widehat{u}_X \bigr\} \given \fhat \bigr] \\ & - \frac{2m_1}{n_1(n_1+m_1)}\mE \bigl[ \mathrm{Cov}\bigl\{\fhat(X), \widehat{u}_X(V) \given \fhat, \widehat{u}_X \bigr\} \given \fhat \bigr].
\end{align*}
Now, in order to prove the second inequality \eqref{Eq: consistency inequality 2}, we need to ensure that
\begin{align*}
	\mE\bigl[\mV\bigl\{\widehat{u}_X(V) \given \fhat, \widehat{u}_X \bigr\} \bigr] - 2\mE \bigl[ \mathrm{Cov}\bigl\{\fhat(X), \widehat{u}_X(V) \given \fhat, \widehat{u}_X \bigr\} \bigr] \lesssim \mE\bigl[\mV\bigl\{\fhat(X) \given \fhat\bigr\}\bigr].
\end{align*}
This follows from the condition \eqref{Eq: consistency condition} as we have
\begin{align*}
	& \mE\bigl[\mV\bigl\{\widehat{u}_X(V) \given \fhat, \widehat{u}_X \bigr\} - 2 \mathrm{Cov}\bigl\{\fhat(X), \widehat{u}_X(V) \given \fhat, \widehat{u}_X \bigr\} \bigr] \\
	= \ & \mE\bigl[\mV\bigl\{\fhat(X) - \widehat{u}_X(V) \given \fhat, \widehat{u}_X \bigr\} \bigr] - \mE\bigl[\mV\bigl\{\fhat(X) \given \fhat \bigr\}\bigr] \\
	\leq \ & \mE\bigl[\bigl\{\fhat(X) - \widehat{u}_X(V) \bigr\}^2 \bigr] - \mE\bigl[\mV\bigl\{\fhat(X) \given \fhat \bigr\}\bigr] \\
	\lesssim \ & \mE\bigl[\mV\bigl\{\fhat(X) \given \fhat \bigr\}\bigr].
\end{align*}
Thus, the proof of \Cref{Theorem: consistency condition} is complete.
\subsection{Proof of~\Cref{corollary:fixed-alternative}} 
\label{proof:fixed-alternative}
    To prove this result, we will show that in the case of fixed alternative with $P_{X_n}=P_X$ and $P_{Y_n} = P_Y$ with $\mathrm{MMD}(P_X, P_Y)=\gamma>0$ for all $n \geq 1$, we have 
    \begin{align}
        \max \left\{ \mathbb{E}_{P_X}[k(X_1, X_1)],\; \mathbb{E}_{P_Y}[k(Y_1, Y_1)]  \right\} < \infty \quad \implies \quad \text{Equation~\eqref{Eq: xMMD consistency}}.  \label{eq:proof-fixed-alternative-1}
    \end{align}
    This fact, along with~\Cref{Theorem: consistency condition} implies~\Cref{corollary:fixed-alternative}. The proof of the above implication is essentially the same as that of \citet[Theorem 7]{shekhar2022permutation}. However, we include the details for completeness and also because the fourth moment condition required by~\citet[Theorem~7]{shekhar2022permutation} is unnecessary. 

    Since $\gamma>0$ for all $n$, the conditions in~\eqref{Eq: xMMD consistency} are equivalent to the following (we drop the subscripts $P_{X,n}$ and $P_{Y,n}$ from $\mE$ and $\mV$ to simplify the notation)
    \begin{align*}
        \lim_{n_1 \to \infty} \mathbb{E}\left[  \widetilde{\sigma}^2_{X, \fhat} \right] = 0, \quad 
        \lim_{n_2 \to \infty} \mathbb{E}\left[  \widetilde{\sigma}^2_{Y, \fhat} \right] = 0,\quad \text{and} \quad 
        \lim_{n_1 \wedge n_2 \to \infty} \mV\left[  \widetilde{\mu}_{X, \fhat} - \widetilde{\mu}_{Y, \fhat} \right] = 0. 
    \end{align*}
    We will first show that $\lim_{n_1 \to \infty} \mE[\widetilde{\sigma}^2_{X, \fhat}]=0$. To do this, we introduce the notation $\ghat_X = \frac{1}{n_1} \sum_{i=1}^{n_1} k(X_i, \cdot)$, and observe the following: 
    \begin{align}
        \mE\left[ \widetilde{\sigma}_{X, \fhat}^2 \right] & = \mE \left[ \frac{1}{n_1^2} \sum_{i=1}^{n_1} (\fhat(X_i) - \widetilde{\mu}_{X, f})^2 \right] 
        = \mathbb{E}\lb \frac{1}{n_1^2} \sum_{i=1}^{n_1} (\langle k(X_i, \cdot), \fhat \rangle_{\mathcal{H}_k} - \langle \ghat_X, \, \fhat \rangle_{\mathcal{H}_k} )^2 \rb \nonumber \\
        & = \frac{1}{n_1} \mE\lb \lp \langle k(X_1, \cdot) - \ghat_X, \, \fhat \rangle_{\mathcal{H}_k} \rp^2\rb \stackrel{(i)}{\leq} \frac{1}{n_1} \mE\lb \|k(X_1, \cdot) - \ghat_X\|_{\mathcal{H}_k}^2 \|\fhat\|_{\mathcal{H}_k}^2 \rb \nonumber \\
        & \stackrel{(ii)}{\lesssim} \frac{1}{n_1} \mE \lb \|k(X_1, \cdot)\|_{\mathcal{H}_k}^2  + \|\ghat_X\|_{\mathcal{H}_k}^2 \rb \mE \lb \|\fhat\|_{\mathcal{H}_k}^2 \rb.  \label{eq:proof-fixed-alternative-2}
    \end{align}
    Here $(i)$ uses the Cauchy--Schwarz inequality, and $(ii)$ follows from the independence of $k(X_1, \cdot)-\ghat_X$ and $\fhat$.  Now, $\|k(X_1, \cdot)\|_{\mathcal{H}_k}^2 = k(X_1, X_1)$ by the reproducing property, and $\|\ghat_X\|_{\mathcal{H}_k}^2 \leq \frac{1}{n_1}\sum_{i=1}^{n_1} k(X_i, X_i)$ by Jensen's inequality and the convexity of $\|\cdot\|_{\mathcal{H}_k}^2$. Thus, $\mE[\|k(X_1, \cdot)\|_{\mathcal{H}_k}^2] + \mE[\|\ghat_X\|_{\mathcal{H}_k}^2] \lesssim \mathbb{E}[k(X_1, X_1)]$. Now, consider the term $\|\fhat\|_{\mathcal{H}_k}^2$, and observe that 
    \begin{align*} 
        \|\fhat\|_{\mathcal{H}_k}^2 &= \left\| \frac{1}{n_1} \sum_{i=1}^{n_1} k(\widetilde{X}_i, \cdot) - \frac{1}{n_2} \sum_{i=1}^{n_2} k(\widetilde{Y}_i, \cdot)\right\|_{\mathcal{H}_k}^2 \lesssim \left\| \frac{1}{n_1} \sum_{i=1}^{n_1} k(\widetilde{X}_i, \cdot)\right\|_{\mathcal{H}_k}^2 + \left\| \frac{1}{n_2} \sum_{i=1}^{n_2} k(\widetilde{Y}_i, \cdot)\right\|_{\mathcal{H}_k}^2 \\
        & \leq \frac{1}{n_1} \sum_{i=1}^n k(\widetilde{X}_i, \widetilde{X}_i)  + \frac{1}{n_2} \sum_{i=1}^{n_2} k(\widetilde{Y}_i, \widetilde{Y}_i), 
    \end{align*}
    where the last inequality again uses Jensen's inequality along with the convexity of the mapping $x \mapsto \|x\|_{\mathcal{H}_k}^2$. This implies $\mE[\|\fhat\|_{\mathcal{H}_k}^2] \lesssim \mE[k(X_1, X_1)] + \mE[k(Y_1, Y_1)]$.  Plugging this back into~\eqref{eq:proof-fixed-alternative-2}, we get 
    \begin{align}
        \mE\lb \widetilde{\sigma}_{X, \fhat}^2 \rb \lesssim \frac{1}{n_1} \lp \mE[k(X_1, X_1)] \rp \lp \mE[k(X_1, X_1)] + \mE[k(Y_1, Y_1)]\rp. \label{eq:proof-fixed-alternative-3}
    \end{align}
    Thus, under the assumption that $\mE[k(X_1, X_1)]<\infty$ and $\mE[k(Y_1, Y_1)]<\infty$, the above inequality implies that $\lim_{n_1 \to \infty} \mE[\widetilde{\sigma}_{X, \fhat}^2] = 0$. An exactly analogous argument implies that $\lim_{n_2 \to \infty} \mE[\widetilde{\sigma}_{Y, \fhat}^2] = 0$.

    We now show that $\lim_{n_1 \wedge n_2 \to \infty} \mV\bigl[ \widetilde{\mu}_{X, \fhat} - \widetilde{\mu}_{Y, \fhat} \bigr] = 0$. To show this, we introduce some new notation: let $\mu_X = \mE[k(X, \cdot)]$ and $\mu_Y = \mE[k(Y, \cdot)]$ denote the kernel mean embeddings corresponding to $P_X$ and $P_Y$, and let $\ghat_Y = \frac{1}{n_2}\sum_{i=1}^{n_2} k(Y_i, \cdot)$. Then, we have the following: 
    \begin{align*}
        \mV\lb \widetilde{\mu}_{X, \fhat} - \widetilde{\mu}_{Y, \fhat} \rb & = \mE \lb \lp \widetilde{\mu}_{X, \fhat} - \widetilde{\mu}_{Y, \fhat} - \langle \mu_X - \mu_Y, \mu_X - \mu_Y 
 - \fhat + \fhat \rangle_{\mathcal{H}_k} \rp^2 \rb  \\
    & = \mE \lb \lp  \langle \ghat_X - \mu_X, \fhat\rangle_{\mathcal{H}_k} - \langle \ghat_Y-\mu_Y, \fhat\rangle_{\mathcal{H}_k} + \langle \fhat - (\mu_X-\mu_Y), \mu_X - \mu_Y \rangle_{\mathcal{H}_k} \rp^2 \rb  \\
    & \lesssim \mE \lb\langle \ghat_X - \mu_X, \fhat \rangle_{\mathcal{H}_k}^2 \rb +    \mE \lb\langle \ghat_Y - \mu_Y, \fhat \rangle_{\mathcal{H}_k}^2 \rb  + \mE \lb\langle \fhat - (\mu_X-\mu_Y), \mu_X - \mu_Y \rangle_{\mathcal{H}_k}^2 \rb. 
    \end{align*}
    Applying the Cauchy--Schwarz inequality on all the three terms, we get 
    \begin{align*}
        \mV\lb \widetilde{\mu}_{X, \fhat} - \widetilde{\mu}_{Y, \fhat} \rb &\lesssim \mE[\|\fhat\|_{\mathcal{H}_k}^2] \lp \mE[\|\ghat_X - \mu_X\|_{\mathcal{H}_k}^2] + \mE[\|\ghat_Y - \mu_Y\|_{\mathcal{H}_k}^2] \rp \\
        & \hskip 14em + \|\mu_X-\mu_Y\|_{\mathcal{H}_k}^2 \mE\lb \|\fhat - (\mu_X - \mu_Y)\|_{\mathcal{H}_k}^2 \rb.
    \end{align*}
    Now, we can break up $\fhat$ into $\gtilde_X - \gtilde_Y$, with $\gtilde_X = \frac{1}{n_1} \sum_{i=1}^{n_1} k(\widetilde{X}_i, \cdot)$ and $\gtilde_Y = \frac{1}{n_2} \sum_{i=1}^{n_2} k(\widetilde{Y}_i, \cdot)$, and get the following bound $\mE\lb \|\fhat - (\mu_X - \mu_Y)\|_{\mathcal{H}_k}^2 \rb \lesssim \mE \lb \|\gtilde_X - \mu_X\|_{\mathcal{H}_k}^2 \rb + \mE \lb \|\gtilde_Y - \mu_Y\|_{\mathcal{H}_k}^2 \rb$. To summarize, we have proved that  
    \begin{align*}
        \mV\lb \widetilde{\mu}_{X, \fhat} - \widetilde{\mu}_{Y, \fhat} \rb &\lesssim \mE[\|\fhat\|_{\mathcal{H}_k}^2] \lp \mE[\|\ghat_X - \mu_X\|_{\mathcal{H}_k}^2] + \mE[\|\ghat_Y - \mu_Y\|_{\mathcal{H}_k}^2] \rp \\
        & \hskip 13em + \gamma^2 \lp \mE[\|\gtilde_X - \mu_X\|_{\mathcal{H}_k}^2] + \mE[\|\gtilde_Y - \mu_Y\|_{\mathcal{H}_k}^2]\rp. 
    \end{align*}
    We have already proved that $\mE[\|\fhat\|_{\mathcal{H}_k}^2] \lesssim \mE[k(X_1, X_1)] + \mE[k(Y_1, Y_1)]<\infty$ under the assumptions of this corollary. Thus, to complete the proof, we need to show that 
    \begin{align}
        &\lim_{n_1\to \infty} \mE[\|\ghat_X - \mu_X\|_{\mathcal{H}_k}^2] = 0, \; \lim_{n_2 \to \infty} \mE[\|\ghat_Y - \mu_Y\|_{\mathcal{H}_k}^2],  \; 
        \lim_{n_1\to \infty} \mE[\|\gtilde_X - \mu_X\|_{\mathcal{H}_k}^2] = 0, \; \text{and}\; \lim_{n_2 \to \infty} \mE[\|\gtilde_Y - \mu_Y\|_{\mathcal{H}_k}^2]. \label{eq:proof-fixed-alternative-4}
    \end{align}
    We present the details of the first of these four conditions, since the steps for proving the other three are exactly the same. 
    \begin{align}
        \mE\lb \|\ghat_X - \mu_X\|_{\mathcal{H}_k}^2 \rb & = \mE \lb \frac{1}{n_1^2} \sum_{i=1}^{n_1} \sum_{j=1}^{n_1} \langle k(X_i, \cdot) - \mu_X, \, k(X_j, \cdot) - \mu_X \rangle_{\mathcal{H}_k} \rb \nonumber \\
        &= \frac{1}{n_1^2} \lp \sum_{i=1}^{n_1} \mE[\langle k(X_i, \cdot) - \mu_X, k(X_i, \cdot) - \mu_X \rangle_{\mathcal{H}_k}] + \sum_{i\neq j} \mE \langle k(X_i, \cdot)-\mu_X, k(X_j, \cdot)-\mu_X \rangle_{\mathcal{H}_k}] \rp. \label{eq:proof-fixed-alternative-5}
    \end{align}
    Observe that by the Cauchy--Schwarz inequality, we have $\mE [\langle k(X_i, \cdot) - \mu_X, k(X_i, \cdot) - \mu_X \rangle_{\mathcal{H}_k}] \lesssim \mE[k(X_1, X_1)] + \gamma^2$. Furthermore, for any $i \neq j$, we have $\mE \langle k(X_i, \cdot)-\mu_X, k(X_j, \cdot)-\mu_X \rangle_{\mathcal{H}_k}] = 0$. Plugging these back into~\eqref{eq:proof-fixed-alternative-5}, we get 
    \begin{align*}
        \lim_{n_1 \to \infty} \mE\lb \|\ghat_X - \mu_X\|_{\mathcal{H}_k}^2 \rb & \leq \lim_{n_1 \to \infty} \frac{1}{n_1} \lp \mE[k(X_1, X_1)] + \gamma^2\rp = 0, 
    \end{align*}
    under the assumption that $\mE[k(X_1, X_1)]<\infty$. The remaining three terms in~\eqref{eq:proof-fixed-alternative-4} can also be shown to go to zero similarly. This completes the proof of~\Cref{corollary:fixed-alternative}. 
\subsection{Proof of \Cref{Corollary: cross-fitting using linear operator}}\label{Section: proof of Corollary linear operator}

We show that using a linear operator, the condition (\ref{Eq: linear operator}) results in \Cref{Assumption: consistency of conditional expectation}. Without loss of generality, we focus on the first convergence result of (\ref{Eq: cross-fitting condition}). Leveraging the linearity of the estimators of conditional expectations, combined with the spectral decomposition of the centered kernel $\overline{k}$, yields:
\begin{align*}
	& \mE[\fhat(X) \given V, \fhat] - \widehat{\mE}[\fhat(X) \given V, \fhat] = \sum_{i=1}^\infty \lambda_i (\overline{\phi}_{i,X} - \overline{\phi}_{i,Y}) \{\mE[\phi_i(X) \given V]  - \widehat{\mE}[\phi_i(X) \given V]\}, \\
	&  \mV[\fhat(X) \given \fhat] = \sum_{i=1}^\infty \lambda_i^2  (\overline{\phi}_{i,X} - \overline{\phi}_{i,Y})^2.
\end{align*}
Write 
\begin{align*}
	a_i = \lambda_i (\overline{\phi}_{i,X} - \overline{\phi}_{i,Y}), \ b_i = \overline{\phi}_{i,X} - \overline{\phi}_{i,Y}  \quad \text{and} \quad  \Delta_{X,i} = \mE[\phi_i(X) \given V]  - \widehat{\mE}[\phi_i(X) \given V].
\end{align*}
Then the first convergence condition in (\ref{Eq: cross-fitting condition}) is equivalent to
\begin{align*}
	\frac{\mE[\{\sum_{i=1}^\infty a_i \Delta_{X,i} \}^2 \given (a_i)_{i=1}^\infty]}{\sum_{i=1}^\infty a_i^2} = o_P(1).
\end{align*}
We decompose the above ratio into two terms as
\begin{align*}
	\frac{\mE[\{\sum_{i=1}^\infty a_i \Delta_{X,i} \}^2 \given (a_i)_{i=1}^\infty]}{\sum_{i=1}^\infty a_i^2} = \frac{\sum_{i=1}^\infty a_i^2 \mE[\Delta_{X,i}^2]}{\sum_{i=1}^\infty a_i^2}  + \frac{\sum_{i \neq j}a_ia_j \mE[\Delta_{X,i} \Delta_{X,j}]}{\sum_{i=1}^\infty a_i^2},
\end{align*} and show that each term converges to zero in probability. For the first term, we have
\begin{align*}
	& \frac{\sum_{i=1}^\infty a_i^2 \mE[\Delta_{X,i}^2]}{\sum_{i=1}^\infty a_i^2} \leq \sup_{i \geq 1} \mE[\Delta_{X,i}^2] \times  \frac{\sum_{i=1}^\infty a_i^2}{\sum_{i=1}^\infty a_i^2}  = \sup_{i \geq 1} \mE[\Delta_{X,i}^2] = o_P(1),
\end{align*}
where the last equality follows by $\sup_{i \geq 1} \mE[\Delta_{X,i}^2] = o(1)$. 


Next we decompose the second term into 
\begin{align}
    \label{equation used to show corollary c.4}
	\frac{\sum_{i \neq j}a_ia_j \mE[\Delta_{X,i} \Delta_{X,j}]}{\sum_{i=1}^\infty a_i^2} = \frac{n_1n_2}{n_1 + n_2} \frac{\sum_{i \neq j}a_ia_j \mE[\Delta_{X,i} \Delta_{X,j}]}{\sum_{i=1}^\infty \lambda_i^2} \times \frac{n_1+n_2}{n_1n_2} \frac{\sum_{i=1}^\infty \lambda_i^2}{\sum_{i=1}^\infty a_i^2}.
\end{align}

We first show that the first term of \eqref{equation used to show corollary c.4} converges to zero in probability
\begin{align*}
	 \frac{n_1n_2}{n_1 + n_2} \frac{\sum_{i \neq j}a_ia_j \mE[\Delta_{X,i} \Delta_{X,j}]}{\sum_{i=1}^\infty \lambda_i^2} = \frac{n_1n_2}{n_1 + n_2} \frac{\sum_{i \neq j} \lambda_i \lambda_j b_i b_j \mE[\Delta_{X,i} \Delta_{X,j}]}{\sum_{i=1}^\infty \lambda_i^2} = o_P(1).
\end{align*}
By including the constant function 1 as an eigenfunction corresponding to the eigenvalue zero, it can be shown that the expectation of the above expression is zero:
\begin{align*}
	\mE\biggl[  \frac{n_1n_2}{n_1 + n_2} \frac{\sum_{i \neq j} \lambda_{X,i} \lambda_{X,j} b_i b_j \mE[\Delta_{X,i} \Delta_{X,j}]}{\sum_{i=1}^\infty \lambda_i^2} \biggr] =  \frac{n_1n_2}{n_1 + n_2} \frac{\sum_{i \neq j} \lambda_{X,i} \lambda_{X,j} \mE[b_i b_j] \mE[\Delta_{X,i} \Delta_{X,j}]}{\sum_{i=1}^\infty \lambda_i^2}  = 0.
\end{align*}
On the other hand, the variance satisfies
\begin{align*}
	\mE\biggl[  \biggl\{\frac{n_1n_2}{n_1 + n_2} \frac{\sum_{i \neq j} \lambda_i \lambda_j b_i b_j \mE[\Delta_{X,i} \Delta_{X,j}]}{\sum_{i=1}^\infty \lambda_i^2} \biggr\}^2 \biggr] & = \frac{n_1^2n_2^2}{(n_1 + n_2)^2} \frac{\sum_{i \neq j} \lambda_i^2 \lambda_j^2 \mE[b_i^2 b_j^2] \{\mE[\Delta_{X,i} \Delta_{X,j}]\}^2}{(\sum_{i=1}^\infty \lambda_i^2)^2} \\
	& \leq \frac{n_1^2n_2^2}{(n_1 + n_2)^2} \frac{\sum_{i \neq j} \lambda_i^2 \lambda_j^2 \mE[b_i^2 b_j^2] }{(\sum_{i=1}^\infty \lambda_i^2)^2} \times \sup_{i \geq 1} \mE[\Delta_{X,i}^2].
\end{align*}
The expectation of the product of $b_i^2$ and $b_j^2$ can be bounded as
\begin{align*}
	 \mE[b_i^2 b_j^2] & = \mE[(\overline{\phi}_{i,X} - \overline{\phi}_{i,Y})^2(\overline{\phi}_{j,X} - \overline{\phi}_{j,Y})^2] \\
	 & \lesssim \, \mE[\overline{\phi}_{i,X}^2 \overline{\phi}_{j,X}^2] + \mE[\overline{\phi}_{i,Y}^2 \overline{\phi}_{j,Y}^2] + \mE[\overline{\phi}_{i,X}^2] \mE[\overline{\phi}_{j,Y}^2] + \mE[\overline{\phi}_{j,X}^2] \mE[\overline{\phi}_{i,Y}^2] \\
	 & \lesssim \, \biggl(\frac{1}{n_1^3} + \frac{1}{n_2^3} \biggr) \mE[\phi_i^2(X)\phi_j^2(X)] + \frac{1}{n_1^2} + \frac{1}{n_2^2} + \frac{1}{n_1n_2}.
\end{align*}
Thus the variance is bounded above by
\begin{align*}
	& \mE\biggl[  \biggl\{\frac{n_1n_2}{n_1 + n_2} \frac{\sum_{i \neq j} \lambda_i \lambda_j b_i b_j \mE[\Delta_i \Delta_j]}{\sum_{i=1}^\infty \lambda_i^2} \biggr\}^2 \biggr]  \\
	\lesssim ~ & \frac{n_1^2n_2^2}{(n_1 + n_2)^2} \biggl(\frac{1}{n_1^3} + \frac{1}{n_2^3} \biggr) \frac{\sum_{i \neq j} \lambda_i^2 \lambda_j^2 \mE[\phi_i^2(X)\phi_j^2(X)] }{(\sum_{i=1}^\infty \lambda_i^2)^2} \times \sup_{i \geq 1} \mE[\Delta_{X,i}^2] \\
	+ ~ & \frac{n_1^2n_2^2}{(n_1 + n_2)^2} \biggl( \frac{1}{n_1^2} + \frac{1}{n_2^2} + \frac{1}{n_1n_2} \biggr) \frac{\sum_{i \neq j} \lambda_i^2 \lambda_j^2 }{(\sum_{i=1}^\infty \lambda_i^2)^2} \times \sup_{i \geq 1} \mE[\Delta_{X,i}^2] \\
	\lesssim ~ & \biggl[ \biggl(\frac{1}{n_1} + \frac{1}{n_2} \biggr) \frac{\sum_{i \neq j} \lambda_i^2 \lambda_j^2 \mE[\phi_i^2(X)\phi_j^2(X)] }{(\sum_{i=1}^\infty \lambda_i^2)^2} + 1 \biggr] \times \sup_{i \geq 1} \mE[\Delta_{X,i}^2].
\end{align*}
Moreover, under \Cref{Assumption: xssMMD under H0}, we have
\begin{align*}
	 \biggl(\frac{1}{n_1} + \frac{1}{n_2} \biggr) \frac{\sum_{i \neq j} \lambda_i^2 \lambda_j^2 \mE[\phi_i^2(X)\phi_j^2(X)] }{(\sum_{i=1}^\infty \lambda_i^2)^2}  \leq \biggl(\frac{1}{n_1} + \frac{1}{n_2} \biggr) \frac{\mE[\overline{k}^2(X_1,X_3) \overline{k}^2(X_2,X_3)]}{\{\mE[\overline{k}(X_1,X_2)]\}^2} \to 0,
\end{align*}
which implies that the variance term converges to zero. Thus, the second term is also $o_P(1)$ by Chebyshev's inequality. 

Next we show the second term of \eqref{equation used to show corollary c.4} is bounded in probability
\begin{align*}
    \frac{n_1+n_2}{n_1n_2} \frac{\sum_{i=1}^\infty \lambda_i^2}{\sum_{i=1}^\infty a_i^2}=O_P(1).
\end{align*}
As in \citet[][page 55]{kim2024dimension}, let $\widetilde{\lambda}_i = \lambda_i^2 / \sum_{i'=1}^\infty \lambda_{i'}^2 \geq 0$, the reciprocal of the above is
\begin{align*}
	\sum_{i=1}^\infty \widetilde{\lambda}_i \frac{n_1n_2}{n_1+n_2}(\overline{\phi}_{i,X} - \overline{\phi}_{i,Y})^2,
\end{align*}
whose expectation is
\begin{align*}
	\sum_{i=1}^\infty \widetilde{\lambda}_i \frac{n_1n_2}{n_1+n_2}\mE[(\overline{\phi}_{i,X} - \overline{\phi}_{i,Y})^2] = \sum_{i=1}^\infty \widetilde{\lambda}_i  = 1.
\end{align*}
On the other hand, its variance satisfies
\begin{align*}
	\mV \Biggl( \sum_{i=1}^\infty \widetilde{\lambda}_i \frac{n_1n_2}{n_1+n_2}(\overline{\phi}_{i,X} - \overline{\phi}_{i,Y})^2 \Biggr) ~\lesssim~ \frac{n_1^2n_2^2}{(n_1+n_2)^2} \Biggl[ \mV\Biggl( \sum_{i=1}^\infty \widetilde{\lambda}_i \overline{\phi}_{i,X}^2 \Biggr) +  \mV\Biggl( \sum_{i=1}^\infty \widetilde{\lambda}_i \overline{\phi}_{i,Y}^2 \Biggr) \Biggr]
\end{align*}
Letting $g(x,y) = \mE[\overline{k}(x,X)\overline{k}(y,X)],$
we may see
\begin{align*}
	 \mV\Biggl( \sum_{i=1}^\infty \widetilde{\lambda}_i \overline{\phi}_{i,X}^2 \Biggr) + \mV\Biggl( \sum_{i=1}^\infty \widetilde{\lambda}_i \overline{\phi}_{i,Y}^2 \Biggr) ~ \lesssim ~&  \frac{\mE[\overline{k}^2(X_1,X_3)\overline{k}^2(X_2,X_3)]}{\{\mE[\overline{k}^2(X_1,X_2)]\}^2} \biggl(\frac{1}{n_1^3} + \frac{1}{n_2^3}\biggr) \\
	 & + \frac{\mE[g^2(X_1,X_2)]}{\{\mE[\overline{k}^2(X_1,X_2)]\}^2}\biggl(\frac{1}{n_1^2} + \frac{1}{n_2^2}\biggr).
\end{align*}
Hence, we obtain the upper bound of the variance as
\begin{align*}
	& \mV \Biggl( \sum_{i=1}^\infty \widetilde{\lambda}_i \frac{n_1n_2}{n_1+n_2}(\overline{\phi}_{i,X} - \overline{\phi}_{i,Y})^2 \Biggr) \\
	\lesssim ~ & \underbrace{\frac{\mE[\overline{k}^2(X_1,X_3)\overline{k}^2(X_2,X_3)]}{\{\mE[\overline{k}^2(X_1,X_2)]\}^2} \biggl(\frac{n_1+n_2}{n_1n_2}\biggr)}_{\rightarrow 0} + \underbrace{\frac{\sum_{i=1}^\infty \lambda_i^4}{\{\sum_{i=1}^\infty \lambda_i^2\}^2}}_{\leq \tilde{\lambda}_1}.
\end{align*}

On the other hand, 
\begin{align*}
	\frac{\sum_{i=1}^\infty \lambda_i^2}{\sum_{i=1}^\infty \lambda_i^2 \frac{n_1n_2}{n_1+n_2}(\overline{\phi}_{i,X} - \overline{\phi}_{i,Y})^2} \leq \frac{1}{\widetilde{\lambda}_1  \frac{n_1n_2}{n_1+n_2}(\overline{\phi}_{1,X} - \overline{\phi}_{1,Y})^2}.
\end{align*}
Using this inequality, for any $t>0$, 
\begin{align*}
	\mP(T \geq t) ~ \leq ~ & \mP\biggl( \frac{n_1n_2}{n_1+n_2}(\overline{\phi}_{1,X} - \overline{\phi}_{1,Y})^2 \leq  \widetilde{\lambda}_1^{-1} t^{-1} \biggr).
\end{align*}
Letting
\begin{align*}
	\sqrt{\frac{n_1n_2}{n_1+n_2}} (\overline{\phi}_{1,X} - \overline{\phi}_{1,Y}) = \sum_{i=1}^{n_1+n_2} Z_i,
\end{align*}
where the summands are mutually independent, given as
\begin{align*}
	Z_ i = \begin{cases}
		\sqrt{\frac{n_2}{n_1(n_1+n_2)}} \phi_1(X_i) \quad & \text{if $1 \leq i \leq n_1$,} \\[.5em]
		\sqrt{\frac{n_1}{n_2(n_1+n_2)}} \phi_1(Y_{i - n_1}) \quad & \text{if $n_1+1 \leq i \leq n_1+n_2$},
	\end{cases}
\end{align*}
we may expect that it converges to a normal distribution. In particular, using the Berry--Esseen bound for independent but not identically distributed summands~\citep{Berry1941}, we have
\begin{align}\label{eq: upper bound for thm 2 using BE bound}
	 & \mP\biggl( \frac{n_1n_2}{n_1+n_2}(\overline{\phi}_{1,X} - \overline{\phi}_{1,Y})^2 \leq  \widetilde{\lambda}_1^{-1} t^{-1} \biggr) \nonumber\\[.5em]
	 \lesssim ~ & \mP(\xi^2 \leq \widetilde{\lambda}^{-1} t^{-1}) + \frac{\mE[|Z_1|^3]}{\sqrt{n_1}\{\mV[Z_1]\}^{3/2}}  + \frac{\mE[|Z_{n_1+1}|^3]}{\sqrt{n_2}\{\mV[Z_{n_1+1}]\}} \nonumber\\[.5em]
	 \lesssim ~ & \mP(\xi^2 \leq \widetilde{\lambda}^{-1} t^{-1}) + \mE[|\phi_1(X)|^3] \sqrt{\frac{1}{n^2_1} + \frac{1}{n^2_2}}
\end{align}
where $\xi \sim N(0,1)$. Using the fact that
\begin{align*}
	& \widetilde{\lambda}_1^2 \mE[\phi_1^4(X)] \biggl(\frac{1}{n_1} + \frac{1}{n_2} \biggr) \rightarrow 0 \quad \text{and} \\
	& \mE[|\phi_1(X)|^3] = \mE[|\phi_1(X)||\phi_1(X)|^2] \leq \{\mE[\phi_1^2(X)]\}^{1/2} \{ \mE[\phi_1^4(X)] \}^{1/2} =  \{ \mE[\phi_1^4(X)] \}^{1/2},
\end{align*}
we observe the convergence rate of the upper bound of the second term of \eqref{eq: upper bound for thm 2 using BE bound}
\begin{align*}
	 \mE[|\phi_1(X)|^3] \sqrt{\frac{1}{n^2_1} + \frac{1}{n^2_2}} \leq \sqrt{\mE[|\phi_1(X)|^4]\biggl( \frac{1}{n^2_1} + \frac{1}{n^2_2} \biggr)} \leq \sqrt{\mE[|\phi_1(X)|^4]\biggl( \frac{1}{n_1} + \frac{1}{n_2} \biggr)} = o(\widetilde{\lambda}_1^{-1}).
\end{align*}
We then follow the proof of \citet[][Theorem 4.2]{kim2024dimension} and show that 
\begin{align*}
	\frac{n_1+n_2}{n_1n_2}\frac{\sum_{i=1}^\infty \lambda_i^2}{\sum_{i=1}^\infty \lambda_i^2 (\overline{\phi}_{i,X} - \overline{\phi}_{i,Y})^2}  = O_P(1).
\end{align*}

Consequently, the first convergence condition in (\ref{Eq: cross-fitting condition}) is satisfied. A similar argument applies to $Y$ and $W$, demonstrating that the second convergence condition is also satisfied. Thus, we establish that if condition (\ref{Eq: linear operator}) and \Cref{Assumption: xssMMD under H0} hold, then \Cref{Assumption: consistency of conditional expectation} is valid as claimed.

\subsection{Proof of \Cref{thm: asymptotic power expression}}\label{Section: proof of power using linear kernel}

Using the test statistic, power function could be written as 
\begin{align*}
    \mP(\xssMMD>z_{1-\alpha})
    &=\mP\bigg(\frac{\widehat{\mu}^{\dagger}_{X,\fhat} - \widehat{\mu}^{\dagger}_{Y,\fhat}}{\sqrt{\widehat{\sigma}^{\dagger 2}_{X,\fhat} + \widehat{\sigma}^{\dagger 2}_{Y,\fhat} }}>z_{1-\alpha}\bigg)\\
    &=\mP\bigg(\frac{\big(\widehat{\mu}^{\dagger}_{X,\fhat}-\mE[\fhat(X)|\fhat]\big) - \big(\widehat{\mu}^{\dagger}_{Y,\fhat}-\mE[\fhat(Y)|\fhat]\big)}{\sqrt{\widehat{\sigma}^{\dagger 2}_{X,\fhat} + \widehat{\sigma}^{\dagger 2}_{Y,\fhat} }}>z_{1-\alpha}-\frac{\mE[\fhat(X)|\fhat]-\mE[\fhat(Y)|\fhat]}{\sqrt{\widehat{\sigma}^{\dagger 2}_{X,\fhat} + \widehat{\sigma}^{\dagger 2}_{Y,\fhat} }}\bigg)
\end{align*}

Assume that we have bilinear kernel $k(x,y)=x^\top y.$ We first show 
\begin{align*}
T=\frac{\big(\widehat{\mu}^{\dagger}_{X,\fhat}-\mE[\fhat(X)|\fhat]\big) - \big(\widehat{\mu}^{\dagger}_{Y,\fhat}-\mE[\fhat(Y)|\fhat]\big)}{\sqrt{\widehat{\sigma}^{\dagger 2}_{X,\fhat} + \widehat{\sigma}^{\dagger 2}_{Y,\fhat} }} 
\end{align*} follows asymptotically normal distribution. It suffices to show that the above satisfies \Cref{Assumption: xssMMD under H0} and \Cref{Assumption: xssMMD under H1}.

From the Gaussianity of the data and the independence of $X_1, X_2, X_3$, we compute the key moments of the centralized kernel: 
\begin{align*} 
&\mE[\overline{g}_X(X,X)]=\mathbb{E}[\overline{k}(X,X_1)^2] = \mathrm{tr}(\Sigma_{11}^2) \asymp d, \\  
&\mathbb{E}[\overline{k}(X_1, X_2)^4] = 2\mathrm{tr}(\Sigma_{11}^4) + (\mathrm{tr}(\Sigma_{11}^2))^2\asymp d^2, \\  
&\mathbb{E}[\overline{k}(X_1, X_2)^2 \overline{k}(X_1, X_3)^2] = 2\mathrm{tr}(\Sigma_{11}^4) + 2 (\mathrm{tr}(\Sigma_{11}^2))^2\asymp d^2,  
\end{align*} since the eigenvalues of $\Sigma_{11}$ are bounded from \Cref{Assumption: assumptions for asymptotic power expression}.

We substitute these results into the following term
\begin{align*} 
	\frac{\mathbb{E}[\overline{k}(X_1,X_2)^4] + n_1\mathbb{E}[\overline{k}(X_1,X_2)^2 \overline{k}(X_1,X_3)^2]}{n_1^2\{\mathbb{E}[\overline{g}_X(X,X)]\}^2}\asymp\frac{1}{n^2}+\frac{1}{n}=o_P(1),
\end{align*} which implies that \Cref{Assumption: xssMMD under H0} is satisfied.

Similarly, we compute other moments:
\begin{align*} 
&\mathrm{MMD}(P_X, P_Y)^2 = 2\mathrm{tr}(\Sigma_{11})\asymp d, \\  
&\mE[\overline{g}_X(Y_1,Y_2)]=\mathbb{E}[\overline{k}(X,Y_1)\overline{k}(X,Y_2)] = \mathrm{tr}(\Sigma^2_{11})\asymp d.  
\end{align*}
Combining these results with the previous computation, we obtain the following convergence
\begin{align*} 
	\frac{\mathrm{MMD}(P_X,P_Y)^4\times \mathbb{E}[\overline{k}_X(X,X)^2] }{\{n_1\mathbb{E}[\overline{g}_X(X,X)] + n_1^2\mathbb{E}[\overline{g}_X(Y_1,Y_2)]\}^2}\asymp \frac{d^4}{(nd^2+n^2 d)^2}=\frac{\tau^2}{\tau^2+2\tau+1}\frac{1}{n^2}= o_P(1),
\end{align*}
which shows that \Cref{Assumption: xssMMD under H1} holds.

Therefore, together with \Cref{Assumption: consistency of conditional expectation}, we conclude that the test statistic $T$ follows asymptotically $ N(0,1) $.

Next, we analyze asymptotic behavior of $$\frac{\mE[\fhat(X)|\fhat]-\mE[\fhat(Y)|\fhat]}{\sqrt{\widehat{\sigma}^{\dagger 2}_{X,\fhat} + \widehat{\sigma}^{\dagger 2}_{Y,\fhat} }}.$$

Since \Cref{Assumption: xssMMD under H0}, \ref{Assumption: consistency of conditional expectation}, and \ref{Assumption: xssMMD under H1} holds, we know that \begin{align*}
    \frac{\widehat{\sigma}_{X,\fhat}^2 + \widehat{\sigma}_{Y,\fhat}^2}{\sigma_{X,\fhat}^2 + \sigma_{Y,\fhat}^2} \convP 1,\quad\text{and}\quad\frac{\widehat{\sigma}_{X,\fhat}^2 + \widehat{\sigma}_{Y,\fhat}^2}{\widehat{\sigma}_{X,\fhat}^{\dagger 2} + \widehat{\sigma}_{Y,\fhat}^{\dagger 2}} \convP 1,
\end{align*} from the proof of Step 2 and Step 3, claim ($\mathrm{ii}^\prime$) in \Cref{Theorem: xssMMD}, respectively.
From these facts, it follows that 
\begin{align*}
    \frac{\mE[\fhat(X)|\fhat]-\mE[\fhat(Y)|\fhat]}{\sqrt{\widehat{\sigma}^{\dagger 2}_{X,\fhat} + \widehat{\sigma}^{\dagger 2}_{Y,\fhat} }} &~= \frac{\mE[\fhat(X)|\fhat]-\mE[\fhat(Y)|\fhat]}{\sqrt{\sigma^{2}_{X,\fhat} + \sigma^{2}_{Y,\fhat} }} \times \frac{\sqrt{\sigma_{X,\fhat}^2 + \sigma_{Y,\fhat}^2}}{\sqrt{\widehat{\sigma}_{X,\fhat}^2 + \widehat{\sigma}_{Y,\fhat}^2}} \times \frac{\sqrt{\widehat{\sigma}_{X,\fhat}^2 + \widehat{\sigma}_{Y,\fhat}^2}}{\sqrt{\widehat{\sigma}_{X,\fhat}^{\dagger 2} + \widehat{\sigma}_{Y,\fhat}^{\dagger 2}}}\\
    &~=\frac{\mE[\fhat(X)|\fhat]-\mE[\fhat(Y)|\fhat]}{\sqrt{\sigma^{2}_{X,\fhat} + \sigma^{2}_{Y,\fhat} }} \times \{1+o_P(1)\}.
\end{align*}
Note that with normal distribution assumption, $\overline{X}-\overline{Y} \sim N(\mu_X-\mu_Y, 2n^{-1}\Sigma_{11})$ where $\overline{X}=\frac{1}{n}\sum_{i=1}^{n}X_i$ and $\overline{Y}=\frac{1}{n}\sum_{i=1}^{n}Y_i.$
We compute \begin{align*}
    \mE[\fhat(X)|\fhat]-\mE[\fhat(Y)|\fhat]
    =~&(\overline{X}-\overline{Y})^\top(\mu_X-\mu_Y)\\
    =~& (\mu_X-\mu_Y)^\top(\mu_X-\mu_Y) + O_P\big(\sqrt{(\mu_X-\mu_Y)^\top \Sigma (\mu_X-\mu_Y)}\big).
\end{align*} 
On the other hand, we compute $\sigma^{2}_{X,\fhat}$
: \begin{align*}
    \sigma^{2}_{X,\fhat}=&\frac{1}{n} \mE [\mV\{\fhat(X) \given V,\fhat \} \given \fhat] + \frac{1}{2n} \mV[\mE\{\fhat(X) \given V, \fhat\} \given \fhat]\\
    =~&\frac{1}{n} \mE [(\overline{X}-\overline{Y})^\top\mV(X\given V)(\overline{X}-\overline{Y})] + \frac{1}{2n} \mV((\overline{X}-\overline{Y})^\top\mE[X\given V])\\
    =~& (\overline{X}-\overline{Y})^\top\Lambda(\overline{X}-\overline{Y})\\
    =~& 2n^{-1}\mathrm{tr}(\Lambda\Sigma_{11})+(\mu_X-\mu_Y)^\top\Lambda(\mu_X-\mu_Y) \\
    & \hskip 6em+ O_P\big(\sqrt{8n^{-2}\mathrm{tr}(\Lambda\Sigma_{11}\Lambda\Sigma_{11})+8n^{-1}(\mu_X-\mu_Y)^\top\Lambda\Sigma_{11}\Lambda(\mu_X-\mu_Y)}\big)
\end{align*}
where $\Lambda=n^{-1}\Sigma_{11}-(2n)^{-1}\Sigma_{12}\Sigma^{-1}_{22}\Sigma_{21}$.

Assuming that $(X,V)^\top$ and $(V,W)^\top$ have equal sample sizes and equal covariancce matrices, a similar computation derives the same result for $\sigma^2_{Y,\fhat}$ :
\begin{align*}
    \sigma^2_{Y,\fhat} &= 2n^{-1}\mathrm{tr}(\Lambda\Sigma_{11})+(\mu_X-\mu_Y)^\top\Lambda(\mu_X-\mu_Y) \\
    & \hskip 6em+ O_P\big(\sqrt{8n^{-2}\mathrm{tr}(\Lambda\Sigma_{11}\Lambda\Sigma_{11})+8n^{-1}(\mu_X-\mu_Y)^\top\Lambda\Sigma_{11}\Lambda(\mu_X-\mu_Y)}\big).
\end{align*}
Hence we obtain 
\begin{align*}
    \frac{\mE[\fhat(X_i)|\fhat]-\mE[\fhat(Y_i)|\fhat]}{\sqrt{\sigma^2_{X,\fhat} + \sigma^2_{Y,\fhat} }}
    =~&\frac{(\mu_X-\mu_Y)^\top(\mu_X-\mu_Y) + O_P(n^{-3/4})}{\sqrt{4n^{-1}\mathrm{tr}(\Lambda\Sigma_{11})+O_P(n^{-3/2})}}\{1+o_P(1)\}\\
    =~&\frac{(\mu_X-\mu_Y)^\top(\mu_X-\mu_Y)}{\sqrt{4n^{-1}\mathrm{tr}(\Lambda\Sigma_{11})}}+o_P(1),
\end{align*}
using the fact that $(\mu_X-\mu_Y)^\top(\mu_X-\mu_Y)=O_P(n^{-1/2})$ and $\tr(\Lambda\Sigma_{11})=O_P(1)$ from \Cref{Assumption: assumptions for asymptotic power expression}.

Combining with the normal approximation, we conclude that
\begin{align*}
    \mP(T^\star>z_{1-\alpha})
    &=\Phi\Biggl(z_\alpha+\frac{(\mu_X-\mu_Y)^\top(\mu_X-\mu_Y)}{\sqrt{4n^{-1}\mathrm{tr}(\Lambda\Sigma_{11})}}\Biggr)+o(1)\\
    &=\Phi\Biggl(z_\alpha + \frac{n(\mu_X-\mu_Y)^\top(\mu_X-\mu_Y)}{\sqrt{4\tr(\Sigma_{11}^2)-2\tr(\Sigma_{12}\Sigma^{-1}_{22}\Sigma_{21}\Sigma_{11})}}\Biggr)+o(1).
\end{align*}

\end{document}